\newcommand{\mem}{\rvz}
\colorlet{MyRed}{Crimson!75!Black}
\colorlet{MyGreen}{DarkGreen!80!Black}
\colorlet{MyBlue}{MediumBlue}
\newcommand{\debug}[1]{{#1}}		%
\newcounter{algsubstate}
\renewcommand{\thealgsubstate}{\alph{algsubstate}}
\crefname{assumption}{Assumption}{Assumptions}
\newtheorem{theorem}{Theorem}		%
\newtheorem{corollary}{Corollary}		%
\newtheorem{lemma}{Lemma}		%
\newtheorem{remark}{Remark}
\newtheorem*{corollary*}{Corollary}		%
\newtheorem*{theorem*}{Theorem}
\newtheorem{definition}{Definition}		%
\newtheorem*{definition*}{Definition}		%
\newtheorem*{assumption*}{Assumptions}		%
\newtheorem*{example*}{Example}		%
\DeclarePairedDelimiter{\braces}{\{}{\}}		%
\DeclarePairedDelimiter{\bracks}{[}{]}		%
\DeclarePairedDelimiter{\parens}{(}{)}		%
\DeclarePairedDelimiter{\abs}{\lvert}{\rvert}		%
\DeclarePairedDelimiterX{\setdef}[2]{\{}{\}}{#1:#2}		%
\DeclarePairedDelimiterXPP{\exclude}[1]{\mathopen{}\setminus}{\{}{\}}{}{#1}
\newcommand{\newmacro}[2]{\newcommand{#1}{\debug{#2}}}		%
\newcommand{\R}{\mathbb{R}}		%
\DeclareMathOperator{\ex}{\mathbb{E}}		%
\DeclareMathOperator{\prob}{\mathbb{P}}		%
\DeclarePairedDelimiterXPP{\exof}[1]{\ex}{[}{]}{}{%
 #1}
\DeclarePairedDelimiterXPP{\probof}[1]{\prob}{(}{)}{}{%
 #1}
\DeclarePairedDelimiterXPP{\oneof}[1]{\one}{\{}{\}}{}{%
 #1}
\DeclarePairedDelimiter{\norm}{\lVert}{\rVert}		%
\DeclarePairedDelimiterXPP{\dnorm}[1]{}{\lVert}{\rVert}{_{\ast}}{#1}		%
\DeclarePairedDelimiterXPP{\onenorm}[1]{}{\lVert}{\rVert}{_{1}}{#1}		%
\DeclarePairedDelimiterXPP{\twonorm}[1]{}{\lVert}{\rVert}{_{2}}{#1}		%
\DeclarePairedDelimiterXPP{\supnorm}[1]{}{\lVert}{\rVert}{_{\infty}}{#1}		%
\DeclarePairedDelimiterX{\braket}[2]{\langle}{\rangle}{#1\mathopen{}\delimsize\vert\mathopen{}#2}
\DeclarePairedDelimiterX{\inner}[2]{\langle}{\rangle}{#1,#2}		%
\newmacro{\mat}{\R} %
\newmacro{\field}{\mathbf{F}} %
\newcommand{\id}{\rmI} %
\newcommand{\zero}{\vZero}
\newcommand{\one}{\mathds{1}}
\newcommand{\eg}{\emph{e.g.},\xspace}		%
\newcommand{\ie}{\emph{i.e.},\xspace}		%
\newmacro{\dep}{l} %
\newmacro{\wid}{d} %
\newmacro{\weights}{\rmW} %
\newmacro{\weight}{\rvw} %
\newmacro{\altweights}{\rmV} %
\newmacro{\altweight}{v} %
\newmacro{\bias}{\rvb}
\newmacro{\iw}{w^*}
\newmacro{\data}{\rvx} %
\newmacro{\altdata}{\rvz}
\newmacro{\batchscale}{\gamma} %
\newmacro{\batchshift}{\beta} %
\newmacro{\mscale}{\rmGamma} %
\newmacro{\mshift}{\bm{\beta}} %
\newmacro{\neuron}{i} %
\newmacro{\altneuron}{j} %
\newmacro{\relu}{\textsf{ReLU}} %
\newmacro{\sigmoid}{\sigma}
\newmacro{\tf}{\mathrm{TF}}
\newmacro{\att}{\mathrm{Attn}}
\newmacro{\val}{\rmV}
\newmacro{\key}{\rmK}
\newmacro{\query}{\rmQ}
\newmacro{\head}{i}
\newmacro{\out}{\rmW_o}
\newmacro{\heads}{H}
\newmacro{\embed}{\rmP}
\newmacro{\emb}{\rvp}
\newmacro{\normal}{\matcal{N}} %
\newmacro{\uni}{\mathcal{U}} %
\def\1{\bm{1}}
\def\eps{{\epsilon}}
\def\rva{{\mathbf{a}}}
\def\rvb{{\mathbf{b}}}
\def\rvc{{\mathbf{c}}}
\def\rve{{\mathbf{e}}}
\def\rvg{{\mathbf{g}}}
\def\rvp{{\mathbf{p}}}
\def\rvw{{\mathbf{w}}}
\def\rvx{{\mathbf{x}}}
\def\rvz{{\mathbf{z}}}
\def\rmA{{\mathbf{A}}}
\def\rmB{{\mathbf{B}}}
\def\rmC{{\mathbf{C}}}
\def\rmD{{\mathbf{D}}}
\def\rmE{{\mathbf{E}}}
\def\rmI{{\mathbf{I}}}
\def\rmK{{\mathbf{K}}}
\def\rmM{{\mathbf{M}}}
\def\rmP{{\mathbf{P}}}
\def\rmQ{{\mathbf{Q}}}
\def\rmS{{\mathbf{S}}}
\def\rmV{{\mathbf{V}}}
\def\rmW{{\mathbf{W}}}
\def\rmX{{\mathbf{X}}}
\def\rmY{{\mathbf{Y}}}
\def\rmGamma{{\mathbf{\Gamma}}}
\def\vZero{{\bm{0}}}
\def\vOne{{\bm{1}}}
\def\va{{\bm{a}}}
\def\vb{{\bm{b}}}
\def\vc{{\bm{c}}}
\def\vd{{\bm{d}}}
\def\ve{{\bm{e}}}
\def\vv{{\bm{v}}}
\def\vx{{\bm{x}}}
\def\vz{{\bm{z}}}
\DeclareMathAlphabet{\mathsfit}{\encodingdefault}{\sfdefault}{m}{sl}
\SetMathAlphabet{\mathsfit}{bold}{\encodingdefault}{\sfdefault}{bx}{n}
\def\cD{{\mathcal{D}}}
\def\cT{{\mathcal{T}}}
\def\bR{{\mathbb{R}}}
\newcommand{\softmax}{\sigma_{\text{S}}}
\newcommand\y{\cellcolor{clight2}}
\definecolor{clight2}{rgb}{0.54, 0.81, 0.94}
\newcommand\tikznode[3][]%
\definecolor{applegreen}{rgb}{0.8, 0.6, 0.8}
\newcommand\yy{\cellcolor{applegreen}} %
\definecolor{teal}{rgb}{0.86, 0.44, 0.58}
\definecolor{oldlavander}{rgb}{0.8, 0.8, 1.0}
\newcommand\ol{\cellcolor{oldlavander}} %
\definecolor{bluebell}{rgb}{0.64, 0.64, 0.82}
\newcommand\lo{\cellcolor{bluebell}}  %
\definecolor{bluegray}{rgb}{0.4, 0.6, 0.8}
\definecolor{ceil}{rgb}{0.57, 0.63, 0.81}
\definecolor{darkpastelblue}{rgb}{0.47, 0.62, 0.8}
\definecolor{babypink}{rgb}{0.96, 0.76, 0.76}
\definecolor{bubblegum}{rgb}{1, 0.76, 0.85}
\newcommand\scr{\cellcolor{magen}}
\definecolor{magen}{rgb}{0.6, 0.4, 0.8} %
\definecolor{olive}{rgb}{0.69, 0.61, 0.85}
\newcommand\pc{\cellcolor{olivene}}
\definecolor{olivene}{rgb}{0.86, 0.82, 1.0} %
\tikzset{>=stealth}
\newmacro{\gar}{\rvg}
\newmacro{\Input}{\rmX}
\newmacro{\dpoints}{\rmD} %
\newmacro{\dpoint}{\rvx}    %
\newcolumntype{s}{>{\columncolor[HTML]{AAACED}} p{3cm}}
\newcommand{\rvOne}{\mathbf{1}}
\newmacro{\commands}{\rmC} %
\newmacro{\command}{\rvc} %
\newmacro{\pointers}{\rmC}  %
\newmacro{\pointer}{\texttt{PC}}   %
\newmacro{\per}{\mathrm{per}}
\newmacro{\loss}{l}
\newmacro{\ssize}{\eta}
\title{\bf Looped Transformers as Programmable Computers}
\author{
Angeliki Giannou$^{w}$\thanks{Equal contribution. The title of this paper was not created by a transformer, but we can't guarantee the same for this footnote.}\;, 
Shashank Rajput$^{w*}$,
Jy-yong Sohn$^{w}$,
\\
Kangwook Lee$^{w}$,
Jason D. Lee$^{p}$,
 Dimitris Papailiopoulos$^{w}$ \vspace{0.5cm}\\
\normalsize $^p$ Princeton University \\
\normalsize$^w$ University of Wisconsin-Madison \\ 
}
\begin{document}

\maketitle

\begin{abstract}
\noindent We present a framework for using transformer networks as universal computers by programming them with specific weights and placing them in a loop. Our input sequence acts as a punchcard, consisting of instructions and memory for data read/writes. We demonstrate that a constant number of encoder layers can emulate basic computing blocks, including embedding edit operations, non-linear functions, function calls, program counters, and conditional branches.
Using these building blocks, we emulate a small instruction-set computer. This allows us to map iterative algorithms  to programs that   can be executed by a looped, 13-layer transformer. We show how this transformer, instructed by its input, can  emulate a basic calculator, a basic linear algebra library, and in-context learning algorithms that employ backpropagation. Our work highlights the versatility of the attention mechanism, and demonstrates that even shallow transformers can execute full-fledged, general-purpose programs.
\end{abstract}
\section{Introduction}
Transformers (TFs) have become a popular choice for a wide range of machine learning tasks, achieving state-of-the-art results in fields such as natural language processing and computer vision~\citep{vaswani2017attention,khan2022transformers,yuan2021tokens,dosovitskiy2020image}. One key reason for their success is their ability to capture higher-order relationships and long-range dependencies across tokens, through attention. This allows TFs to model contextual information and  makes them effective in tasks such as machine translation and language modeling, where they have consistently outperformed other methods \citep{vaswani2017attention,kenton2019bert}.

Language models with billions of parameters, such as GPT-3 (175B parameters \cite{brown2020language}) and PaLM (540B parameters \cite{chowdhery2022palm}), have achieved state-of-the-art performance on many natural language processing tasks. Interestingly, some of these large language models (LLMs) can also perform in-context learning, adapting to and performing a specific task, {\it on-the-fly}, based on a brief prompt and a few examples.
The ability to perform in-context learning (ICL) arises without explicit training for it, and allows these large models to efficiently perform new tasks without requiring weight updates.

Surprisingly, through in-context learning LLMs can perform algorithmic tasks and reasoning, as demonstrated in several works including \cite{nye2021show,wei2022chain,lewkowycz2022solving,wei2022emergent,zhou2022teaching,dasgupta2022language,chung2022scaling}. For example, \cite{zhou2022teaching} showed that LLMs can successfully perform addition on unseen examples when prompted with a multidigit addition algorithm and a few examples of addition. These results suggest that LLMs can apply algorithmic principles and perform pre-instructed commands on a given input at inference time, {\it as if interpreting natural language as code}.

Constructive arguments have demonstrated that Transformers can simulate Turing Machines with enough depth or recursive links between attention layers \cite{perez2022attention, Perez2019turing, wei2021statistically}. This demonstrates the potential of transformer networks to precisely follow algorithmic instructions specified by the input. Yet, these constructions are more generalized and do not provide insight into how to create Transformers that can carry out particular algorithmic tasks, or compile programs in a higher-level programming language. 

More specialized designs can however allow TFs to execute higher level programs. For example, in~\cite{weiss2021thinking}, the authors design a computational model and a programming language that maps simple selection and aggregation commands on indexed input tokens. This language can be used to create several interesting algorithms, such as counting tokens, sorting, creating histograms, and recognizing Dyck-$k$ languages. Programs written in Restricted Access Sequence Processing Language (RASP) can then be mapped into transformer networks, which typically scale in size with the size of the program.
 
Another line of research has demonstrated methods for selecting the weights of a Transformer model to function as an optimization algorithm for learning linear regression models on-the-fly, performing implicit training at inference time when given training data as input~\citep{akyurek2022learning,von2022transformers}. These methods typically require a number of layers proportional to the number of iterations of the learning algorithm and are limited to a small set of loss functions and models. 

The ability to program transformer models to emulate the abstract computation of a Turing Machine, the specialized commands of languages like RASP, and the specific algorithms of in-context learning, highlights the potential for transformer networks as versatile programmable computers. Our research aims to explore this promising prospect, uncovering how the mechanics of attention can enable the emulation of a general-purpose computer inspired by instruction-set architectures.

 \paragraph{Our Contributions:}
In this paper, we demonstrate that transformer networks can simulate complex algorithms and programs by hardcoding them with specific weights and placing them in a loop. We do this by reverse engineering attention to emulate basic computing blocks, such as edit operations on the input sequence, nonlinear functions, function calls, program counters and conditional branches. Our paper demonstrates the importance of using a single loop or recursion to connect the transformer's output sequence back to its input, avoiding the need for a deep model.

We accomplish this by designing a transformer that can execute programs written in a generalized version of a single instruction, known as \texttt{SUBLEQ}(A,B,C), \ie  \texttt{SUB}tract and branch if \texttt{L}ess-than or \texttt{EQ}ual to zero. 
\texttt{SUBLEQ} is a single instruction language, defining a one-instruction set computer (OISC, pronounced ``whisk''). \texttt{SUBLEQ} consists of 3 memory address operands and when executed it subtracts the value at memory address A from the value at memory address B, and stores the result in B. If the result in B is less than or equal to zero, the execution jumps to address C, otherwise it proceeds to the next instruction.
Programs written in \texttt{SUBLEQ} language use only this command, yet this single instruction is capable of defining a universal computer~\citep{mavaddat1988urisc,subleq}.

\begin{wrapfigure}{r}{0.5\textwidth}
\vspace{-0.5cm}
\centering
\includegraphics[width=0.5\textwidth]{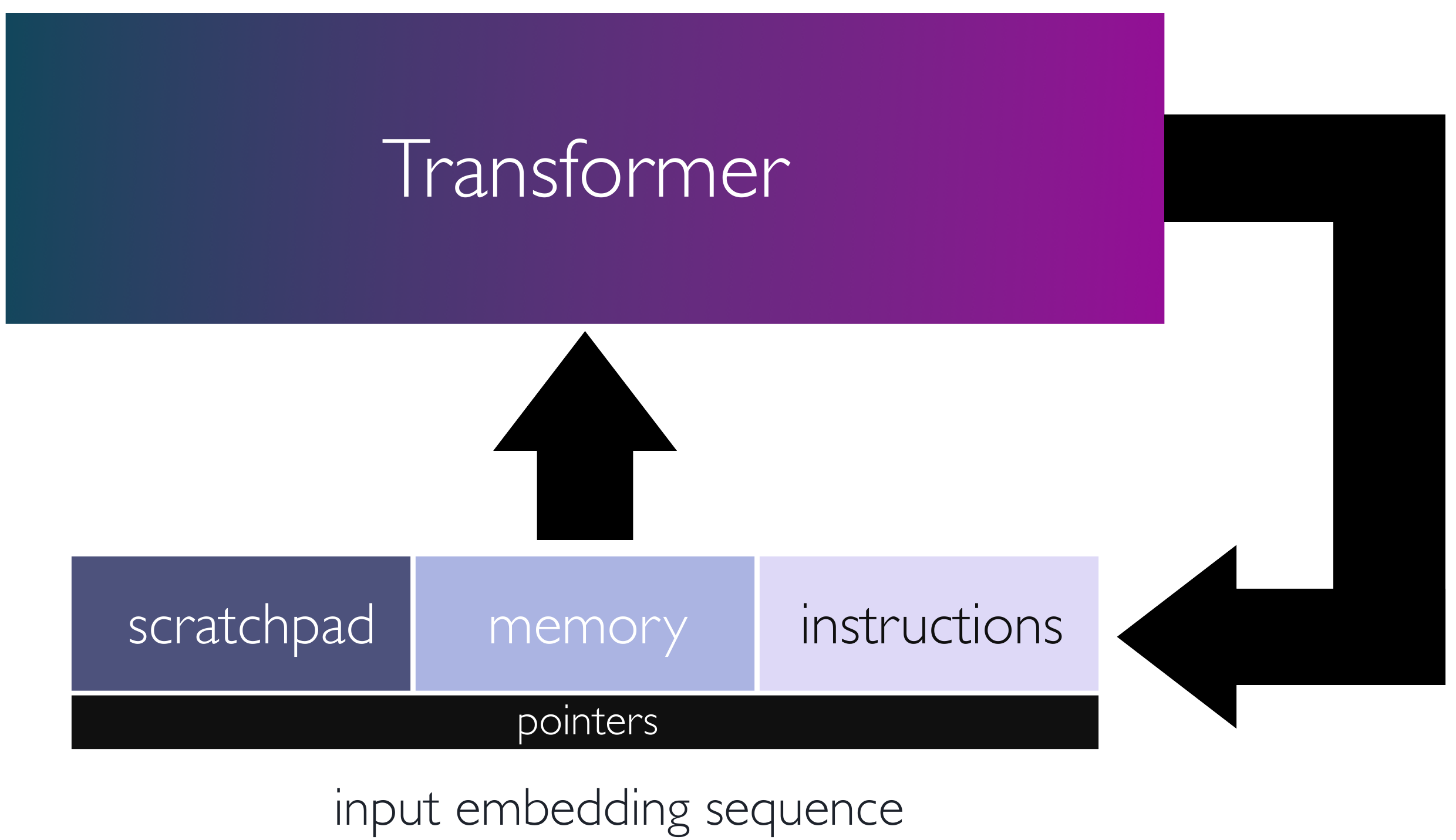}
\caption{A sketch of the looped transformer architecture, where the input sequence stores the commands, memory where the data is read/written from, and a scratchpad where intermediate results are stored. The input is processed by the network and the output is used as the new input, allowing the network to iteratively update an implicit state and perform complex computations. }
\label{fig:input}
\vspace{-1.5cm}
\end{wrapfigure}

We construct explicit transformers that implement \texttt{SUBLEQ}-like programs, of a more flexible single instruction which we call \texttt{FLEQ} which takes the form 
 \begin{align}
        &\texttt{mem}[c] = f_m (\texttt{mem}[a], \text{mem}[b])\nonumber \\ &\texttt{if }\texttt{mem}[\text{flag}]\leq 0\nonumber \\
        &\quad \texttt{ goto}\text{ instruction }p\nonumber
\end{align}
where $f_m$ can be selected from a set of functions (matrix multiplication/non-linear functions/polynomials/etc), which we can hardcode into the network.
The depth of a looped transformer that can execute \texttt{FLEQ} programs is not dependent on the depth of the program or the number of lines of code, but rather on the depth required to implement a single \texttt{FLEQ} instruction, which is constant. This is achieved by running the transformer in cycles over the input sequence, similar to how a CPU operates.

Using this framework, we demonstrate the ability to emulate a variety of functions at inference time, including a basic calculator, a basic linear algebra library (matrix transpose, multiplication, inversion, power iteration) and an in-context learning algorithm that implements backpropagation on implicit fully-connected networks. The input sequence, or the prompt, acts as a punchcard that includes the program in the form of instructions that the transformer needs to execute, while providing space for storing and processing the variables used in the program. {\it The transformer networks used to execute these programs are all of depth smaller or equal to thirteen, and the exact weight matrices for all these models are provided.} The following informal theorem summarizes our main findings:
\begin{theorem}[Informal]
There exists a looped transformer with less than 13 layers that can emulate a general purpose computer (see Sec.~\ref{ss:oneLine}), a basic calculator (see Sec.~\ref{sec:calc}),  numerical linear algebra methods, such as approximate matrix inverse and power iteration (see Sec.~\ref{sec:linearalg}), and in-context learning algorithms, such as SGD, on neural networks (See Sec.~\ref{sec:SGD}).
\end{theorem}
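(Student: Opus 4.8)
The plan is to prove the theorem constructively, by exhibiting the weight matrices of a single looped transformer block together with a compilation scheme that maps each target computation onto an input sequence (a ``punchcard'') which the block executes in cycles. The argument decomposes into three layers of abstraction. First, I would establish a library of \emph{primitive transformer gadgets}, each realized by a constant number of attention and MLP layers: (i) a positional-encoding/pointer block that lets a token at position $i$ attend to the token whose address is \emph{stored} in $i$ (indirect addressing), built by running the attention softmax at large temperature so it approximates a hard $\argmax$ over ``one-hot in position'' keys; (ii) read/write/edit gadgets that copy a designated block of the sequence into a scratchpad region and back; (iii) nonlinear-function, polynomial, and matrix-multiplication evaluation gadgets, obtained by composing the ReLU MLP and attention with the identity-plus-residual structure; and (iv) an iteration/program-counter gadget that stores the current instruction index in a fixed column and either increments it or overwrites it. Each gadget's correctness reduces to a short computation showing the attention pattern concentrates where intended and the residual stream carries the claimed contents.

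Second, I would assemble these gadgets into a transformer that executes one \texttt{FLEQ} instruction per loop iteration: decode the current line $(a,b,c,\mathrm{flag},p,m)$ from the program region; fetch $\texttt{mem}[a]$ and $\texttt{mem}[b]$ by indirect addressing; apply the selected $f_m$ from the hardcoded function menu; write the result into $\texttt{mem}[c]$; read $\texttt{mem}[\mathrm{flag}]$; and conditionally set the program counter to $p$ or to the next line, where the branch is implemented by gating on the sign of a scalar (itself a ReLU expression). Because each step uses a bounded number of layers independent of the program length, the assembled block has constant depth, and I would verify the count is below thirteen by tallying the gadgets used. Combining this with the equivalence between \texttt{FLEQ} and \texttt{SUBLEQ} and the known universality of \texttt{SUBLEQ}~\citep{mavaddat1988urisc,subleq} yields the general-purpose-computer claim.

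Third, for each named application I would supply an explicit \texttt{FLEQ} program and check it computes the stated object: a calculator is a sequence of arithmetic $f_m$'s with branching; matrix transpose, multiplication, power iteration, and approximate inverse (via a Newton-type iteration $X_{k+1}=X_k(2I-AX_k)$) are loops of matrix-multiply and scalar-update instructions, with convergence of the inverse and the dominant eigenvector following from standard spectral estimates; and in-context SGD on a fully connected network is an alternation of forward-pass matrix multiplications, elementwise nonlinearities, and backpropagated gradient updates, each expressible with the linear-algebra $f_m$'s. These reductions carry the bulk of the bookkeeping but are routine once the instruction set is fixed.

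I expect the main obstacle to be gadget (i): realizing \emph{exact} indirect addressing and the conditional jump with a softmax attention that is only approximately hard, so that errors do not accumulate over the (unboundedly many) loop iterations. The remedy is to work with a discretized, bounded representation of memory contents, so the approximation becomes exact after rounding (equivalently, the temperature can be chosen large relative to the finite value set), and then to show the write-back and program-counter updates are self-correcting, snapping small perturbations back to the intended lattice each cycle. Making this rounding argument uniform across all of the programs above is the delicate point, and it is also what pins down the precise constant behind ``less than $13$ layers.''
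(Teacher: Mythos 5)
Your proposal matches the paper's construction essentially step for step: the same gadget library (pointer-based \texttt{read}/\texttt{write} via near-hardmax attention with binary positional encodings, a program counter incremented by a ReLU adder, and a conditional branch gated on a sign flag), the same assembly into a constant-depth looped block executing one \texttt{FLEQ} instruction per cycle on top of \texttt{SUBLEQ} universality (the paper additionally re-proves Turing completeness for its restricted \texttt{SUBLEQ} variant via Minsky machines), and the same explicit programs for the calculator, the Newton iteration $X_{k+1}=X_k(2I-AX_k)$, power iteration, and backpropagation. The one substantive divergence is in error control: your snap-to-lattice remedy is exactly the paper's error-correction layer for the bit-valued OISC, but for the real-valued linear-algebra and SGD programs there is no lattice to round to, and the paper instead bounds the accumulated error over $T$ cycles by driving each step's error below $\epsilon/T$ through the softmax temperature and the softmax-linearization constants.
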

The precise size of the transformers constructed in this paper is also summarized in \cref{tab:sizeTFsummary}.

\begin{table}[H]

\centering
\small
\begin{tabular}{c||c|c|c}
     &  \# Layers &  \# Heads & Formal Statement \\
     \hline
    SUBLEQ & 9 & 2 & Lemma.~\ref{lem:OISC}\\
    Matrix Inversion & 13 & 1 & Lemma.~\ref{lem:matrix_inversion}\\
    Power Iteration & 13 & 1 & Lemma.~\ref{lem:power_iteration}\\
    SGD & 13 & 1 & Lemma.~\ref{lem:SGD}
\end{tabular}\label{tab:sizeTFsummary}
\caption{Looped transformer sizes required to successfully emulate the functionalities of a one instruction set computer (OISC), perform basic calculations, run numerical linear algebra algorithms, and in-context learning using Stochastic Gradient Descent on a neural network. The width of these networks depends on the complexity of the functions implemented, and typically range from $O(\log(\textsf{length\_input})+\textsf{embedding\_dimension})$ to at most polynomial in the approximation error required when implementing arbitrary loss functions for in-context learning.}
\end{table}

Our research highlights the flexibility of the attention mechanism and the importance of even a single loop making it possible to design models that can emulate complex iterative algorithms and execute general programs. It further  demonstrates the ability of transformer models to efficiently perform complex mathematical and algorithmic tasks. It is conceivable that modern transformers, such as GPT-3, utilize similar internal subroutines when performing various tasks. In a way, these models may possess the ability to elicit a specific skill or algorithm, akin to a function call, when given in-context examples and instructions.   However, this hypothesis should be taken with caution, as the way we design our constructions shares no similarities with how real-world language models are trained.

We hope that our study will encourage further research into the potential of attention mechanisms, and the ability of language models to execute algorithmic instructions. 
Our proposed designs can aid in determining the minimal transformer network size required to perform specific algorithmic tasks.
Additionally, we hope that our findings will contribute to the development of methods to enhance the capabilities of trained language models by utilizing smaller, reverse-engineered transformer networks for specific algorithmic tasks

\section{Prior Work}

Our work is inspired by the recent results on the expressive power of Transformer networks and their in-context learning capabilities.

 In 
 \citep{perez2022attention,Perez2019turing,wei2021statistically}
the authors explore the computational properties of Transformers  establishing that they are Turing complete, meaning that they can simulate a Turing machine. 
 The constructions typically require high/infinite precision (apart from that of \cite{wei2021statistically}), and recursion around attention layers.
  In \cite{yun2019transformers}, the authors prove that given access to sufficient width/depth TFs can act as universal sequence to sequence approximators.

In \cite{weiss2021thinking}, the authors propose a computational model for the transformer-encoder in the form of a  domain-specific  language called the Restricted Access Sequence Processing Language (RASP). The model maps the basic components of a TF encoder into simple primitives. Examples of tasks that could be learned by a Transformer are provided, and the maximum number of heads and layers necessary to encode a task in a transformer are analyzed.

In a recent and related work, \cite{lindner2023tracr} suggests using transformer networks as programmable units and introduces a compiler called Tracr which utilizes RASP. However, the expressivity limitations and unclear Turing completeness of the language are discussed in \cite{weiss2021thinking, merrill2022saturated, lindner2023tracr}. Our approach, in contrast, demonstrates the potential of transformer networks to serve as universal computers, enabling the implementation of arbitrary nonlinear functions and emulating iterative, non-linear algorithms. Furthermore, our framework allows the depth of our transformers {\it to not} scale in proportion to the lines of code that they execute, allowing the implementation of iterative algorithms, expanding the potential applications.

In \cite{gargcan} the authors demonstrate that standard Transformers (\eg GPT-2) can be trained from scratch to perform in-context learning of linear functions and more complex model classes, such as two-layer neural networks, with performance that matches or exceeds task-specific learning algorithms. A useful element of their analysis is the fact that language is completely removed from the picture, and they perform all operations on the level of vector embeddings. This allows a higher abstraction level than using language as an input, and in fact is what also allows us to obtain our derivations.

Motivated by the above experimental work, in \cite{akyurek2022learning},
the authors investigate the hypothesis that TF-based in-context learners emulate standard learning algorithms implicitly at inference time. The authors provide evidence for this hypothesis by constructing transformers that implement SGD for linear models, showing that trained in-context learners closely match the predictors computed by these algorithms.

In a similar vein, \cite{von2022transformers} argues that training Transformers on auto-regressive tasks is closely related to gradient-based meta-learning formulations. The authors also provide a hard-coded weight construction showing the equivalence between data transformations induced by a single linear self-attention layer and gradient descent on a regression loss. The authors empirically show that when training linear attention TFs on simple regression tasks, the models learned by GD and Transformers have intriguing similarities. 

In \cite{liu2022transformers}, the authors 
test the hypothesis that TFs can perform algorithmic reasoning using fewer layers than the number of reasoning steps, in the context of finite automata. The authors characterized ``shortcut solutions'' that allow shallow Transformer models to exactly replicate the computation of an automaton on an input sequence, and showed that these solutions can be learned through standard training methods. As is expected this hypothesis is only true for a certain family of automata, as the general existence of shortcut solutions would imply the collapse of complexity classes that are widely believed not to be identical.

Other experimental studies have utilized recursion in transformer architectures in a similar manner to our constructions, although in our case we only utilize a single recursive link that feeds the output of the transformer back as an input~\citep{hutchins2022block,
shen2022sliced,
dehghani2018universal}.

\section{Preliminaries}\label{sec:prelim}

\paragraph{The transformer architecture.}
Our work follows a similar problem setting as previous studies (e.g. \cite{yun2019transformers,gargcan,akyurek2022learning,von2022transformers}) in which the input sequence consists of $d$-dimensional embedding vectors rather than tokens. This simplifies our results without sacrificing generality, as an embedding layer can map tokens to the desired vector constructions.

The input to each layer, $\rmX \in \bR^{d\times n}$, is a vector representation of a sequence of $n$ tokens, where each token is a $d$-dimensional column. In this paper, the terms ``token'' and ``column'' may be used interchangeably.

A  transformer layer outputs $f(\rmX)$, where $f$ is defined as follows:%
\begin{subequations}
\label{eq:TF}
\begin{align}
    \att({\rmX}) &= \rmX + \sum_{i=1}^\heads\val^\head\rmX\softmax(\rmX^\top\key^{\head \top}\query^\head\rmX)\label{eq:att}\\
    f(\rmX) &= \att(\rmX) + \weights_2\relu(\weights_1\att(\rmX) + \bias_1\vOne^\top_n) + \bias_2\vOne^\top_n\label{eq:relulayer}
\end{align}
\end{subequations}
where $\softmax$ is the softmax function applied on the columns of the input matrix, \ie 
$$[\softmax(\rmX,\lambda)]_{i,j} = \frac{e^{\lambda X_{i,j}}}{\sum_{k=1}^{n} e^{\lambda X_{k,j}}},$$ 
where $\lambda\geq 0$ is the temperature parameter, $\relu(x)=x\cdot { 1}_{x>0}$ is the ReLU activation, and $\vOne_n$ is the all ones vector of length $n$.
We refer to the $\key,\query,$ and $\val$ matrices as the key, query, and value matrices respectively\footnote{We'd like to note that typically the weight matrices are denoted as $\rmW_Q, \rmW_K, \rmW_V$ but to make notation cleaner, we use instead $\rmQ,\rmK,\rmV$.}
; the superscript $i$ that appears on the weight matrices indicates those corresponding to the $i$-th attention head.%
 Consistent with previous literature, the first equation \cref{eq:att} represents the attention layer. We refer to the combination of attention and ReLU layers as a single transformer layer.

\paragraph{Iterative computation through a simple loop.}

In the following sections, we utilize TF networks with multiple transformer layers. Let us refer to the output of such a multilayer TF as $\mathsf{TF}(\rmW;\rmX)$, where for simplicity $\rmW$ is the collection of all weight matrices required to define such a multi-layer TF.
\begin{wrapfigure}{r}{0.35\textwidth}
    \begin{minipage}{0.35\textwidth}
      \begin{algorithm}[H]
        \caption{\\ Looped Transformer}
        \begin{algorithmic}[1]
          \For{$i=1:T$}
         \State $\rmX \gets \mathsf{TF}(\rmW;\rmX)$
  \EndFor
        \end{algorithmic}      
      \end{algorithm}
      \vspace{-0.7cm}
    \end{minipage}
  \end{wrapfigure}

We use our constructions recursively, and feed the output back as an input sequence, allowing the network to perform iterative computation through a simple fixed-point like iteration. This recursive transformer is similar to past work on adding recursion to TF networks. We refer to these simple recursive TFs as {\it Looped Transformers}. 

Feeding the output back to its input is similar to how a traditional computer processes machine code, where it continually reads/writes data in memory, by executing one instruction at a time. The input sequence $\rmX$ includes the instructions and memory. Similar to how a CPU processes each line of code in a program, the transformer network processes parts of the input sequence to perform complex computations.
Like a CPU, the TF acts as a self-contained computational unit. The use of loops in this process is analogous to how CPUs operate using cycles.

While the analogy between TFs and CPUs can be entertaining, there are also many differences in implementation. 
It is important to keep these differences in mind and not rely too heavily on the analogy. The results obtained from using TFs as computational units do not require the analogy to be valid.

To be able to build compute boxes out of a TF network, it is crucial to format the input sequence $\rmX$ in a way that separates memory, a cache-like scratchpad, and commands.

\paragraph{Input sequence format.} The input to our transformer network has the following abstract form:
\begin{equation}\label{eq:input}
\Input =   \left[
\begin{array}{ccc|ccc|ccc}
 &\rmS && &\rmM & &&\rmC&\\
\emb_1 &\dots &\emb_s&\emb_{s+1} &\dots &\emb_{s+m} &\emb_{s+m+1}& \dots&\emb_{n}
\end{array},
\right]    
\end{equation}where $\rmS$ represents the portion of the input that serves as a ``scratchpad,'' $\rmM$ represents the portion that acts as memory that can be read from and written to, and $\rmC$ represents the portion that contains the commands provided by the user. The $\emb_1,\dots, \emb_n$ are positional encodings for the $n$ columns, which will be described in more detail in the following paragraph, and will be used as pointers to data and instructions. The structure of our input sequence bares similarities to that of \cite{wei2021statistically, akyurek2022learning} that also use scratchspace, and have a separate part for the input data.

\paragraph{Scratchpad.} The scratchpad is a crucial component of our constructions. This is the central location where the inputs and outputs of all computation are recorded. It is perhaps useful to think of this as an analogue to a CPU's cache memory. 
 It functions as a temporary workspace where data is copied, transformed, and manipulated in order to perform a wide variety of operations, ranging from simple arithmetic to more complex tasks such as matrix inversion.
 Regardless of the specific computation that is performed, the data necessary for the operation is always transferred from the memory to the scratchpad, and once the computation is completed, the data is transferred back to the memory. This allows the TF to perform the necessary calculations in a designated area, separate from other parts of the input sequence.

\paragraph{Memory.}
All the compute boxes we create require memory to perform specific actions. The memory component of the input sequence serves as a storage location for data. This data can take various forms, including scalars, vectors, and matrices, and is subject to manipulation through various operations. When computation is needed, the data is first copied from the memory to the scratchpad, where it is updated and transformed as necessary. Once the computation is complete, the updated data is then returned and copied back to the memory for future use or reference. In this way, the memory serves as a central repository for all relevant data, allowing it to be accessed and manipulated as needed.

\paragraph{Commands.}
Our framework implements a set of commands within a transformer network; these serve as instructions that guide the internal functioning of the transformer, similar to a low-level programming language. These commands include indicators for memory locations and operation directives, allowing the TF to execute complex computations and tasks in a consecutive and organized manner.

\section{Building Transformer Blocks towards General Computation}

\begin{wrapfigure}{r}{0.55\textwidth}
\centering
\vspace{-1.2em}
\includegraphics[width=0.55\textwidth]{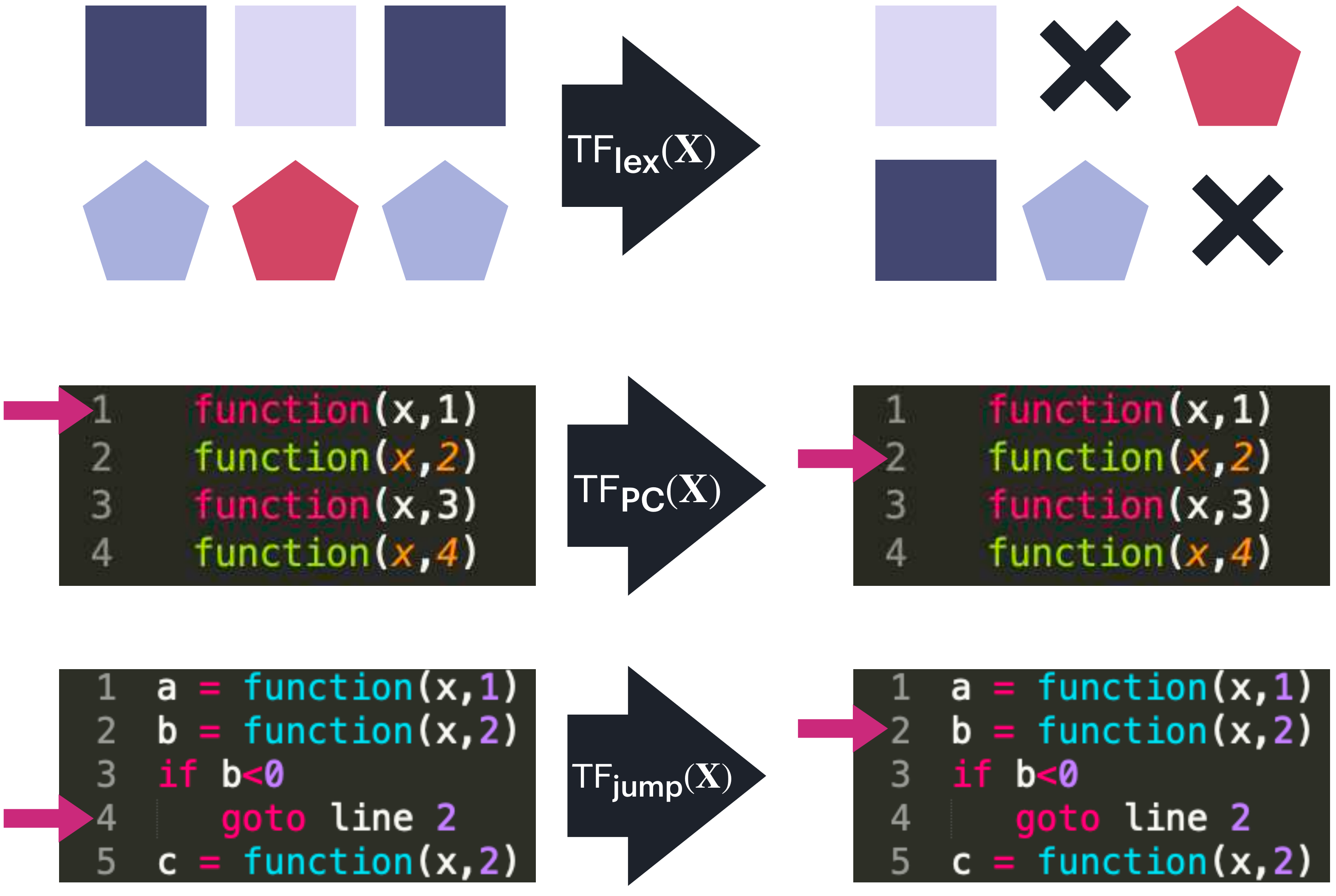}
\caption{A sketch of the three transformer blocks used as building blocks to implement a small instruction-set computer. These blocks handle edits in the input sequence (such as moving or copying from one block to another), keep track of the program counter, and execute a program counter jump if a specified condition is met.}
\label{fig:legoblocks}
\vspace{-1.2em}
\end{wrapfigure}

To build general compute boxes using transformer networks, specialized compute blocks are required. These blocks will be assembled to create the desired end functionality.
In this section, we highlight various operations that transformer layers can perform. These operations will serve the building blocks to create more complex routines and algorithms. These operations are designed to be interoperable with each other, leveraging the ability of attention to perform various tasks, such as producing approximate permutation matrices and approximating general functions through sigmoid activations. 

In the following sections, we focus on the fundamental components necessary to emulate a general-purpose computer, reserving the examination of how attention can replicate sigmoid-based functions in the sections that follow.

   \subsection{Positional Encodings, Program Counter, and Data Pointers}\label{ss:increaseCounter} 

To aid the transformer in locating the position of each token, each column of $\rmX$ is appended with positional encodings that is based on the column index. In this case, similar to \cite{wei2021statistically}, the positional encodings is the binary representation of the column index, which is appended to each column to keep the encoding dimension low, i.e., logarithmic in the sequence length. This approach to using positional encodings is slightly different from the typical method of adding them to the encodings of the input sequence. However, in this case, appending them as suffixes to the encodings allows for cleaner arguments and constructions.

In particular, the encoding for token/column indexed by $i$ is a $\log(n)$-dimensional $\pm1$ binary vector $\emb_i\in{\pm1}^{\log(n)}$, where $n$ is the length of the input sequence. Using the standard binary representation of an integer $i$, meaning $i=\sum_{k=0}^{\log(n)-1} 2^k \cdot b_k$, the positional encoding vector $\emb_i$ is set to $-1$ at index $j$ if the binary representation of $i$ has $0$ at the $j$-th index, \ie $b_i=0$, otherwise it is $+1$.
As a result, we have $\emb_i^T\emb_i = \log(n)$ and by Cauchy-Schwarz inequality, $\emb_i^T\emb_j < |\emb_i||\emb_j| = \sqrt{\log(n)}\sqrt{\log(n)}=\log(n)$ whenever $i\neq j$, since $\emb_i, \emb_j$ differ in at least one coordinate.

In the applications presented, the transformer often needs to execute iterative algorithms or go through a sequence of commands. To achieve this, we utilize a program counter that iterates through the commands. The counter contains the encoding of the location where the next command is stored. Additionally, a command may have data pointers that point to the location of the data the command needs to read and write to. Both the program counter and data pointers utilize the same positional encodings as discussed in the previous paragraph.
Using binary vectors as positional encodings allows us to easily increment the program counter by 1 (or any other amount) using the feed forward ReLU layers in the transformer architecture \eqref{eq:TF}. This is formalized in the following lemma, for the proof see \cref{app:increase}. %

\begin{lemma}
   Given two $d$-dimensional binary vectors representing two non-negative integers, there exists a 1-hidden layer feedforward network with ReLU activation, containing $8d$ activations in the hidden layer and $d$ neurons in the output layer, that can output the binary vector representation of their sum, as long as the sum is less than $2^{d+1}$.
\end{lemma}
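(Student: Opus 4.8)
The plan is to implement binary addition with carry propagation directly as a ReLU network, mimicking a ripple-carry adder. First I would set up notation: write the two input integers as $a = \sum_{k=0}^{d-1} 2^k a_k$ and $b = \sum_{k=0}^{d-1} 2^k b_k$ with $a_k, b_k \in \{0,1\}$ (I will work with $0/1$ bits internally; the conversion to and from the $\pm 1$ encoding used elsewhere in the paper is an affine map and costs only a constant blow-up in width, which I would absorb). The key observation is that the $k$-th output bit is $s_k = a_k \oplus b_k \oplus c_k$ where $c_k$ is the incoming carry, and $c_0 = 0$, $c_{k+1} = \mathrm{maj}(a_k, b_k, c_k)$. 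So the whole computation is determined by the sequence of carries $c_0, c_1, \dots, c_d$, and since the sum is assumed to be less than $2^{d+1}$ there is no overflow beyond bit $d$ — actually the final carry $c_d$ may be the top bit of a $(d+1)$-bit answer, but the hypothesis $\text{sum} < 2^{d+1}$ guarantees we never need bit $d+1$, so $d$ output neurons (plus possibly one for $c_d$) suffice; I'd state this carefully to match the ``$d$ neurons in the output layer'' claim, treating the output width as $d$ and the inputs as $d$-bit with the understanding that the stated bound $2^{d+1}$ is what makes a $d$-bit answer representable... in fact I think the cleanest reading is: inputs are $d$-bit, output is $d$-bit, and the promise is that the true sum fits, i.e. is $< 2^d$ — but the lemma says $2^{d+1}$, so I will allow the $(d{+}1)$-st bit $c_d$ as an extra output coordinate and note the hidden-layer count $8d$ still covers it. I'd pin this down in the writeup.

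The main technical content is expressing each carry bit $c_k$ as a ReLU circuit of bounded size. Here is the approach I'd take: rather than chaining $d$ sequential majority gates (which would need depth $d$), I would use the standard closed form for carry-lookahead. Writing $g_j = a_j b_j$ (generate) and $p_j = a_j + b_j - 2 a_j b_j = a_j \oplus b_j$ (propagate), we have $c_k = \bigvee_{j < k}\bigl(g_j \wedge p_{j+1} \wedge \cdots \wedge p_{k-1}\bigr)$, and equivalently $c_k = 1$ iff $a_{k-1} + b_{k-1} + (\text{lower-order sum carried}) \ge $ something. The slickest route that stays in one hidden layer: note $c_k = \mathbb{1}\{ (a+b) \bmod 2^{k+1} \ge 2^k \}$ is not linear, but $c_k$ equals $1$ iff $\sum_{j=0}^{k-1} 2^j(a_j + b_j) \ge 2^k$, i.e. $c_k = \mathbb{1}\bigl\{ \sum_{j=0}^{k-1} 2^j (a_j+b_j) - 2^k \ge 0 \bigr\}$; a threshold of an integer-valued affine function of the bits can be written as a difference of two ReLUs, $\mathbb{1}\{t \ge 0\} = \mathrm{ReLU}(t+1) - \mathrm{ReLU}(t)$ for integer $t$. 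That gives each $c_k$ using $2$ hidden ReLU units (with large integer weights), so all carries $c_1,\dots,c_d$ cost $2d$ hidden units. Then $s_k = a_k + b_k + c_k - 2\cdot\mathbb{1}\{a_k + b_k + c_k \ge 2\}$, and the indicator here is again a threshold of an integer affine function, costing $2$ more ReLUs per bit, i.e. another $2d$. Adding a constant-factor slack for the $\pm1\leftrightarrow 0/1$ affine re-encoding at input and output and for realizing the linear combinations, the hidden layer fits within $8d$ units, and the output layer has $d$ (or $d{+}1$) neurons computing the $s_k$ as affine combinations of hidden activations and the passed-through inputs.

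I'd then just verify correctness: plug in, confirm $a + b = \sum_k 2^k s_k$ (with $c_d$ as the top bit if needed) by induction on $k$ using $a_k + b_k + c_k = s_k + 2 c_{k+1}$, and check that the ``no blank line'' and brace discipline of the exposition hold. The main obstacle — really the only subtle point — is the bookkeeping of the exact width constant $8d$ versus the $2d + 2d = 4d$ I get from the naive count: the slack is eaten by (i) the affine change of variables from the $\pm1$ encoding, which for a ReLU layer that can only pass nonnegative quantities forward may require representing each signed input coordinate as a pair $\mathrm{ReLU}(\cdot), \mathrm{ReLU}(-\cdot)$, doubling things, and (ii) possibly needing to recompute or carry forward $a_k, b_k$ through the hidden layer since the output layer reads only hidden activations. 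I expect that a careful but routine accounting closes the gap comfortably under $8d$; if not, I would note that the constant can be taken slightly larger without affecting any downstream use, but I anticipate $8d$ is chosen precisely to leave room for exactly this re-encoding overhead. The existence claim itself is not in doubt — ripple-carry addition is manifestly computable by a bounded-fan-in Boolean circuit of constant depth per bit, and ReLU networks simulate Boolean threshold gates — so the proof is really an explicit-construction-and-count exercise.
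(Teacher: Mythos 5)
Your proposal is correct and is essentially the paper's argument: both reduce each output bit to integer thresholds of the partial weighted sum $\sum_{j<k}2^{j}(a_j+b_j)$ of the raw input bits (so no composition of thresholds across layers is needed), realize each threshold as a difference of two ReLUs, and absorb the $\pm1\leftrightarrow 0/1$ re-encoding and the pass-through of $a_k,b_k$ into the loose $8d$ count. The only cosmetic difference is that the paper tests membership of the partial sum in a union of two intervals (six ReLUs per output bit), whereas you use the carry identity $s_k=a_k+b_k+c_k-2c_{k+1}$ with each carry a single threshold of the inputs.
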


Our positional encoding scheme can also be used to point to specific data locations for reading or writing, as discussed in the following section. This is achieved by using the same binary vectors as positional encodings for both the program counter and data pointers. 
Furthermore, this technique for pointing to specific data locations enables the transformer to effectively read and write from/to data during the execution of the algorithm or sequence of commands that is build to implement.

\subsection{\texttt{read} / \texttt{write}: Copying Data/Instructions to/from the Scratchpad}\label{ss:copy}

\begin{figure}[H]
  \centering
\includegraphics[scale =0.35%
]{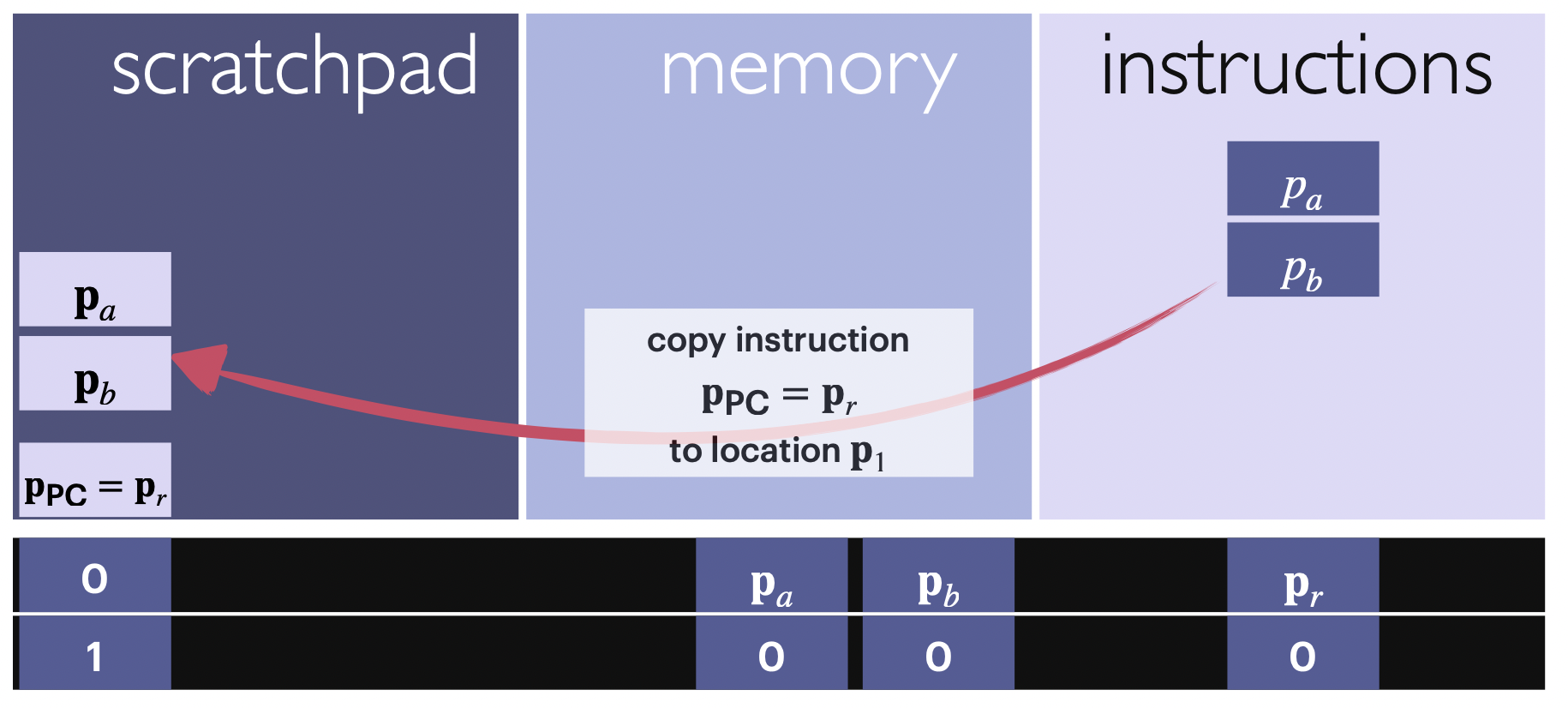}
  \caption{A sketch of the read operation. Arrows show command blocks being copied from the part of the input that is allocated to commands to the scratchpad. Typically an instruction is another set of pointers. Positional encodings and counters are used for tracking what is copied where.}
  \label{fig:copy}
\end{figure}

As previously stated, the scratchpad serves as a temporary memory for storing all information needed for computation. This includes copying commands and data to it, performing computation, and writing results back to memory. This process has similarities with the copy/write mechanism developed in \cite{akyurek2022learning}.

The following lemma states that the command pointed to by the program counter or the data from a location specified in the current command can be copied to the scratchpad for further computation. The location of the program counter is conventionally placed right below the contents of the scratchpad, but it can be changed arbitrarily. Keeping it in a specific location throughout the entire computation helps retain a good organization of the construction.

\begin{restatable}[\texttt{read}]{lemma}{read}\label{lem:read}
A transformer with one layer, one head, and width of $O(\log n +d)$, where $d$ is the dimension of the data vectors and $n$ is the length of the input, can read data/command vectors from the input to the scratchpad from the location pointed to by the position embedding vector in the scratchpad.
\end{restatable}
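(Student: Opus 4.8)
\textbf{Proof plan for Lemma~\ref{lem:read}.}

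The plan is to implement the \texttt{read} operation using a single attention head whose softmax produces an (approximate) one-hot selection vector picking out exactly the column whose positional encoding matches the pointer stored in the scratchpad, followed by a value matrix that writes the selected content into the designated scratchpad slot. First I would fix the layout: the scratchpad column that will receive the data carries, in a dedicated sub-block of coordinates, the target position embedding $\emb_j$ (the pointer), while every column $i$ of the input carries its own positional encoding $\emb_i$ in another fixed sub-block of coordinates. The key–query product is designed so that $\rmX^\top \key^{\top}\query \rmX$ has entry $(i,k)$ equal to $\emb_i^\top \emb_{j(k)}$, where $j(k)$ is the pointer read off column $k$; concretely $\query$ extracts the pointer sub-block from each column and $\key$ extracts the positional-encoding sub-block, so that the inner product is $\emb_i^\top \emb_j$.

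Next I would invoke the separation property established just above in the excerpt: $\emb_i^\top \emb_j = \log n$ when $i=j$ and $\emb_i^\top\emb_j \le \log n - 2$ when $i\neq j$ (they differ in at least one of the $\pm1$ coordinates, so the inner product drops by at least $2$). Feeding this through $\softmax$ with a sufficiently large temperature $\lambda$ (polylogarithmic in $n$, or simply ``large enough'' if one is content with exponentially small error, or exact in the hardmax idealization the paper works with) makes the softmax column for the receiving scratchpad slot concentrate all its mass on $i=j$, i.e.\ it becomes $\approx \ve_j$. Then $\val \rmX \softmax(\cdots)$ copies (a linear image of) column $j$ of $\rmX$ — namely its data/command sub-block — into the output at the receiving column. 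I would choose $\val$ to read the data sub-block of the source and place it into the data sub-block of the scratchpad column, and to act as zero on all other columns (achieved by also gating on a fixed ``is-this-the-read-target'' coordinate so that only the intended scratchpad column is modified). The residual connection in \eqref{eq:att} preserves everything else, and the ReLU layer \eqref{eq:relulayer} can be set to the identity (zero weights) or used to clean up small softmax errors via a threshold; widthwise, the construction only ever manipulates the $\log n$ positional coordinates plus the $d$ data coordinates, giving width $O(\log n + d)$, one layer, one head, as claimed.

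The main obstacle is making the selection \emph{clean} in the presence of the softmax rather than a true hardmax: with $n$ columns and a gap of only $2$ in the exponent, one needs $\lambda = \Omega(\log n)$ to push the total mass on the $n-1$ wrong columns below any target error $\epsilon$, and then one must argue the resulting small perturbation in the copied value does not corrupt downstream discrete operations (pointer increments, branch tests). I would handle this either by adopting the paper's convention of exact attention / large-$\lambda$ limit, or by carrying an explicit error budget $\epsilon$ and noting that a single ReLU threshold at the end of the layer rounds the $\pm1$-valued positional coordinates and the (assumed bounded, well-separated) data back to their intended values — the same rounding trick used implicitly throughout these constructions. A secondary bookkeeping point is ensuring exactly \emph{one} scratchpad column is written: this is arranged by having the query be zero (hence a uniform, ignored softmax whose value contribution is annihilated by $\val$ restricted to a marker coordinate) on every column except the designated read-target column, so no other column is disturbed.
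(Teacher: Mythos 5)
Your construction is essentially the paper's: the key/query matrices expose the positional-encoding and pointer sub-blocks so that the attention scores are the inner products $\emb_i^\top\emb_j$, the gap between $\log n$ on the match and at most $\log n - 1$ elsewhere is amplified by a temperature $\lambda=\Omega(\log(n/\epsilon))$ into a near-selection vector, the value matrix deposits the selected data into a scratch row-block, and the feedforward layer finalizes the write and absorbs the softmax error. The paper uses the same error convention you describe (temperature $\lambda \ge \log(n^3/\epsilon)$, with the residual $\epsilon\rmM$ ignored or removed by a later rounding step).

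The one point where your mechanism as stated would fail is the claim that the value matrix can act ``as zero on all other columns'' by gating on an is-this-the-read-target coordinate. In $\val\rmX\,\softmax(\cdot)$ the value matrix is applied to the \emph{source} columns before the softmax-weighted combination, so it cannot depend on the destination column; with your zero-query convention every non-target column receives the uniform average $\frac{1}{n}\sum_j \val\rvx_j$, which is nonzero whenever memory holds data. The paper's resolution --- which your ``cleanup'' remark nearly reaches --- is to have $\val$ write only into a temporary row-block $\vv_{\text{new}}$ that is zero in the initial input, and then let the per-column feedforward gate the commit on the destination's scratchpad-indicator bit $b$ via $\relu(C(b-1)\vOne+2\vv_{\text{new}}-2\vv_{\text{orig}})-\relu(C(b-1)\vOne-2\vv_{\text{new}}+2\vv_{\text{orig}})$ and reset $\vv_{\text{new}}$ to zero everywhere: destination-dependent gating must live in the feedforward, not in $\val$. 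A cosmetic difference: the paper takes $\key=\query$, so the receiving column also attends to itself and its softmax column is $\frac{1}{2}(\ve_1+\ve_i)$ rather than your clean $\ve_i$; the factor of $2$ in the feedforward update compensates for this halving.
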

\begin{proof}

Consider a simplified input where the scratchpad only has one column, and we have positional encodings, denoted as $\emb_i$, that point to the location where data or commands should be copied from. In this case, the operation we want to perform is as follows:

\begin{align*}
\Input = 
     \left[\begin{array}{c|cccc}
        \zero&\vv_2&\cdots&\vv_i&\cdots\\
        \vv_1&\zero&\cdots&\zero&\cdots\\
        \emb_i& \zero&\cdots &\zero &\cdots\\
        \zero &\emb_2&\cdots & \emb_i&\cdots\\
        \zero&\zero&\cdots&\zero&\cdots\\
        1&0&\dots&0&\dots
    \end{array}\right] \xrightarrow{}  \left[\begin{array}{c|cccc}             \zero&\vv_2&\cdots&\vv_i&\cdots\\
        \vv_i&\zero&\cdots&\zero&\cdots\\
        \emb_i& \zero&\cdots &\zero &\cdots\\
        \zero &\emb_2&\cdots & \emb_i&\cdots\\
        \zero&\zero&\cdots&\zero&\cdots\\
        1&0&\dots&0&\dots
    \end{array}\right],
\end{align*}
which moves data/command embedding vector $\vv_i$ from the memory/command part of the input to the scratchpad. 
The first row contains the data to be read, the second row has the data written in the scratchpad, the third row contains the program counter, the fourth row contains the positional encodings, the fifth row is used by for temporary storage 
and the last row is just a bit that indicates whether the column is in the scratchpad or not. 

We use the following key and query matrices:
$
    \key =  \query= \begin{bmatrix}
          \zero &\zero &\id &\id &\zero&0
         \end{bmatrix},
$
so that the key and query become equal to
$
    \key\Input =\query\Input =\left[\begin{array}{ccccc}
             \emb_i &\emb_2&\cdots & \emb_i&\cdots
    \end{array}\right] \nonumber ,
$
and hence,
\begin{align}
(\key\Input)^\top \query\Input  
    &= \left[\begin{array}{cccc}
    \emb_i^\top\emb_i&\emb_i^\top\emb_2&\dots\\
    \emb_2^\top\emb_i&\emb_2^\top\emb_2&\dots\\
    \vdots&\vdots&\vdots\\
    \emb_i^\top\emb_i&\emb_i^\top\emb_2&\dots\\
    \vdots&\vdots&\vdots
    \end{array}\right]\nonumber
\end{align}
Recall that $\emb_i$ is a $\log(n)$-dimensional $\pm 1$ vector such that $\emb_i^T\emb_i = \log(n)$ and each $\emb_i^T\emb_j \leq \log(n) - 1$ for $j\neq i$. 
We show in the appendix 
that if we apply
the softmax with temperature $\lambda \geq  \log \frac{n^3}{\epsilon}$, we have $\softmax((\key\Input)^\top \query\Input)$ to be an $n\times n$ matrix of the following form
\begin{align*}
    \begin{bmatrix}
        \frac{1}{2}&0&0&\cdots&\frac{1}{2}&\cdots &0\\
        0&1&0&\cdots&0&\cdots &0\\
        0&0&1&\cdots&0&\cdots &0\\
        \vdots&\vdots&\vdots&\ddots&\vdots&\ddots&\vdots\\
        \frac{1}{2}&0&0&\cdots&\frac{1}{2}&\cdots &0\\
        \vdots&\vdots&\vdots&\ddots&\vdots&\ddots&\vdots\\
         0&0&0&\cdots&0&\cdots &1\\
    \end{bmatrix}  +\epsilon \rmM = \begin{bmatrix}
       \frac{\ve_1+\ve_i}{2}&\ve_2&\ve_3&\cdots& \frac{\ve_1+\ve_i}{2}&\cdots
    \end{bmatrix} + \epsilon \rmM,
\end{align*}
where $\ve_i$ is the $i$th column of the identity matrix, $\|\rmM\|\leq 1$, and $\epsilon$ is as defined in \cref{app:error}. 
For the purpose of the proof, we ignore the error term $\epsilon \rmM$, because it can be reduced arbitrarily by increasing the temperature (it can be made precisely equal to $0$, if we consider hardmax instead of softmax), and  overall does not limit us from deriving arbitrarily small error bounds. 

Next we set the output and value weight matrices as follows
\begin{equation}
    \val = \begin{bmatrix}
\zero&\zero&\zero&\zero&\zero&0\\
\zero&\zero&\zero&\zero&\zero&0\\
\zero&\zero&\zero&\zero&\zero&0\\
    \zero&\zero&\zero&\zero&\zero&0\\
    \id&\id&\zero&\zero&\zero&0\\
    \zero&\zero&\zero&\zero&\zero&0
    \end{bmatrix}.\nonumber
\end{equation}
 
Using this, the output of the head is 
\begin{equation}
\Input+\val\Input \softmax((\key\Input)^\top \query\Input)= \left[\begin{array}{c|cccc}
        \zero&\vv_2&\cdots&\vv_i&\cdots\\
        \vv_1&\zero&\cdots&\zero&\cdots\\
        \emb_i& \zero&\cdots &\zero &\cdots\\
        \zero &\emb_2&\cdots & \emb_i&\cdots\\
        \frac{\vv_1+\vv_i}{2}&\vv_2&\cdots&\frac{\vv_1+\vv_i}{2}&\cdots\\
        1&0&\dots&0&\dots
    \end{array}\right] \nonumber
\end{equation}
Each column above has the following form:
\begin{align*}
    \begin{bmatrix}
        \vv_{\text{orig}}^{0}\\
                \vv_{\text{orig}}^{1}\\
        \vv_{\text{orig}}\\
        \emb^{(0)}\\
        \emb^{(1)}\\
        \vv_{\text{new}}\\
        b
    \end{bmatrix},
\end{align*}
where $\vv_{\text{orig}}^{(0)}$ and $\vv_{\text{orig}}^{(1)}$ are the original value vectors (present in the top two row blocks) contained in that column, $\emb^{(0)}$ and $\emb^{(1)}$ are the corresponding embeddings of each column, $\vv_{\text{new}}$ is the new value, and $b$ is the bit indicating whether the column is part of the scratchpad or not.

The feedforward layers have the following form:
\begin{align*}
   \vv_{\text{orig}}^{(1)}&:=\vv_{\text{orig}}^{(1)} +\relu(C(b-1)\vOne +2\vv_{\text{new}}-2\vv_{\text{orig}}^{(1)})-\relu(C(b-1)\vOne -2\vv_{\text{new}}+2\vv_{\text{orig}}^{(1)})\\
   \vv_{\text{new}}&:=\vv_{\text{new}} - \relu(\vv_{\text{new}})+\relu(- \vv_{\text{new}})=\vZero,
\end{align*}
where $C$ is a large positive constant.
The first equation  is performing the operation of subtracting $\vv_{\text{new}}$ from $\vv_{\text{orig}}$ but only when the sum and difference of $C(b-1)\vOne$ and $\vv_{\text{new}}$ are positive, otherwise the subtraction does not occur.
The second equation  is resetting the value of $\vv_{\text{new}}$ to zero after it has been copied to $\vv_{\text{orig}}$, where $\relu(- \vv_{\text{new}})$ is the rectified linear unit (ReLU) applied to the negative of $\vv_{\text{new}}$.

It can be verified that the output of the feedforward layers would then be the desired result
\begin{align*}
    \Input =  \left[\begin{array}{c|cccc}             
    \zero&\vv_2&\cdots&\vv_i&\cdots\\
        \vv_i&\zero&\cdots&\zero&\cdots\\
        \emb_i& \zero&\cdots &\zero &\cdots\\
        \zero &\emb_2&\cdots & \emb_i&\cdots\\
        \zero&\zero&\cdots&\zero&\cdots\\
        1&0&\dots&0&\dots
    \end{array}\right].
\end{align*}
\end{proof}

The next lemma explains that the vector $\vv$ stored in the scratchpad can be copied to a designated location in memory, as specified within the scratchpad itself. This allows for the transfer of data from the scratchpad to a specific location in memory for further use or storage.

\begin{figure}[H]
  \centering
\includegraphics[scale =0.35]{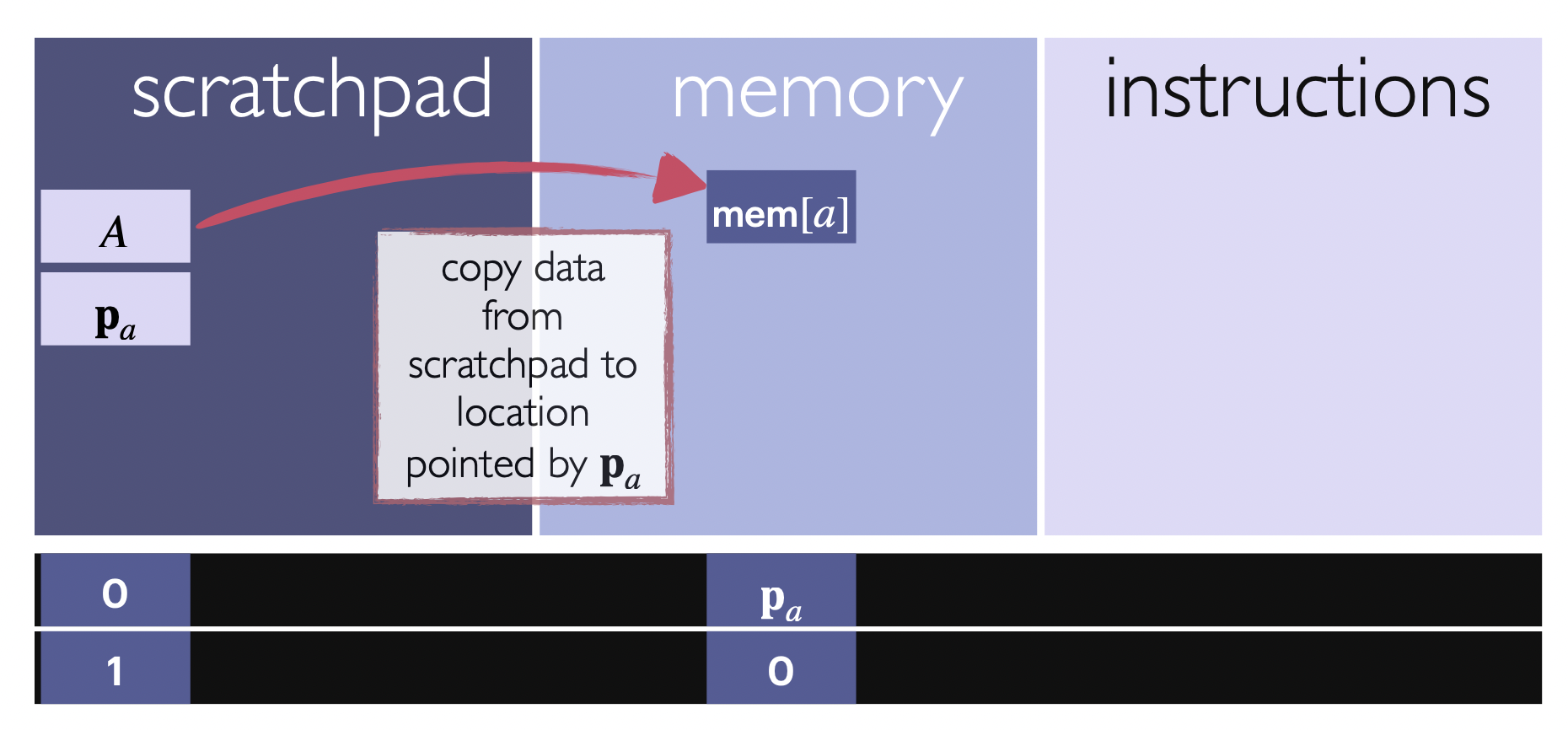}
  \caption{A sketch of the write operation. Arrows show data blocks being copied from the scratchpad to a designated location in the part of the input allocated for memory. Positional encodings are used for tracking the destination location and ensuring data is written at the correct memory location.}
  \label{fig:writeback}
\end{figure}

\begin{restatable}[\texttt{write}]{lemma}{paste}\label{lem:write}
A transformer network with a single layer, one head, and width $O(\log n + d)$, where $d$ is the dimension of the data vectors and $n$ is the length of the input, can effectively write a data vector stored in the scratchpad to a specific location in the input, as designated by a positional encoding vector in the scratchpad.
\end{restatable}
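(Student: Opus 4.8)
The plan is to mirror the construction in the proof of \cref{lem:read}, essentially swapping the roles of ``source'' and ``destination'': instead of the scratchpad column pulling in a value from the location it points to, the memory column whose address is stored in the scratchpad will pull in the value that the scratchpad holds, and a gated feed-forward step will commit that value only at that one column while leaving every other column untouched.

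Concretely, I would lay out the input in the same block form as in the \texttt{read} proof, taking the scratchpad to be a single column (the general statement follows by acting on one scratchpad column at a time). That column carries, in separate row-blocks, the data vector $\vv$ to be written, the target positional encoding $\emb_i$, a temporary block initialized to $\zero$, and the scratchpad-indicator bit $b=1$; each memory/command column $j$ carries its current data $\vv_j$, its own positional encoding $\emb_j$, a temporary block equal to $\zero$, and bit $b=0$. Using the $\pm1$ binary encodings of \cref{ss:increaseCounter} we have $\emb_i^\top\emb_i=\log n$ and $\emb_i^\top\emb_j\le\log n-2$ for $i\ne j$. I would choose $\key$ and $\query$ to read out the ``address'' block of each column, so that the scratchpad column's key/query is $\emb_i$ and memory column $j$'s is $\emb_j$. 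Then every column of $(\key\Input)^\top\query\Input$ has a unique maximal entry $\log n$ except column $i$ of the memory region and the scratchpad column, which each tie on exactly two entries (their own and each other's). Applying $\softmax$ with temperature $\lambda\ge\log(n^3/\epsilon)$ gives, up to an error term $\epsilon\rmM$ with $\|\rmM\|\le1$ exactly as in \cref{lem:read} (which I again drop, since it is arbitrarily small, and is exactly $0$ with hardmax), the attention matrix that is the identity on every column except $i$ and the scratchpad, where it equals $\tfrac12(\ve_1+\ve_i)$. Picking $\val$ to copy each column's data block into its temporary block, the attention sublayer plus residual then leaves every memory column $j\ne i$ with data $\vv_j$ unchanged, puts $\tfrac12(\vv+\vv_i)$ in the temporary block of column $i$ and of the scratchpad column, and puts $\vv_j$ in the temporary block of each column $j$.

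Finally, one ReLU feed-forward sublayer finishes the write. For the data block I would apply $\vv_{\mathrm{orig}}\leftarrow\vv_{\mathrm{orig}}+\relu\!\big(-Cb\,\vOne+2\vv_{\mathrm{tmp}}-2\vv_{\mathrm{orig}}\big)-\relu\!\big(-Cb\,\vOne-2\vv_{\mathrm{tmp}}+2\vv_{\mathrm{orig}}\big)$ for a large constant $C$ (with $C$ exceeding the coordinatewise bound on $2\vv_{\mathrm{tmp}}-2\vv_{\mathrm{orig}}$, which is finite since the data is bounded). On the scratchpad column ($b=1$) the gate $-Cb\,\vOne$ is hugely negative and both ReLUs vanish, so nothing changes; on every memory column ($b=0$) the gate is $\zero$ and the update is exactly $2\vv_{\mathrm{tmp}}-2\vv_{\mathrm{orig}}$, which at column $i$ yields $\vv_i+\big(2\cdot\tfrac12(\vv+\vv_i)-2\vv_i\big)=\vv$ and at every other memory column $j$ yields $\vv_j+(2\vv_j-2\vv_j)=\vv_j$, i.e.\ no change. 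A second coordinate of the feed-forward block resets every temporary row by $\vv_{\mathrm{tmp}}\leftarrow\vv_{\mathrm{tmp}}-\relu(\vv_{\mathrm{tmp}})+\relu(-\vv_{\mathrm{tmp}})=\zero$. Tallying up, this uses one transformer layer, one head, and width $O(\log n+d)$, as claimed.

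I expect the only genuine subtlety — and it is the same one already dealt with in \cref{lem:read} — to be the ``half-attention'' artifact at the addressed column, handled via the factor-of-two correction in the feed-forward update, together with controlling the $\epsilon\rmM$ softmax error. The new ingredient, flipping the gate from $(b-1)\vOne$ to $-b\,\vOne$ so that memory (rather than scratchpad) columns are the ones edited, is routine once one observes that at every \emph{non}-addressed memory column the incoming temporary value already equals the stored data, so the gated update is identically $\zero$ there and no spurious writes occur.
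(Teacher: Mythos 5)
Your proposal is correct and follows essentially the same route as the paper: it reuses the \texttt{read} attention pattern so that column $i$ and the scratchpad column each receive $\tfrac12(\vv+\vv_i)$ in a temporary block, then gates the feed-forward update with $-Cb\vOne$ so that only memory columns are edited and the non-addressed ones are fixed points of the update. (Your sign on the second ReLU term, $-\relu(\cdots)$, is in fact the right one; the paper's displayed $+\relu(\cdots)$ appears to be a typo.)
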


\begin{proof}
We want to achieve the following operation 
\begin{align*}
\Input = 
     \left[\begin{array}{c|cccc}
        \zero&\vv_2&\cdots&\vv_i&\cdots\\
        \vv_1&\zero&\cdots&\zero&\cdots\\
        \emb_i& \zero&\cdots &\zero &\cdots\\
        \zero &\emb_2&\cdots & \emb_i&\cdots\\
        \zero&\zero&\cdots&\zero&\cdots\\
        1&0&\dots&0&\dots
    \end{array}\right] \xrightarrow{}  \left[\begin{array}{c|cccc}             \zero&\vv_2&\cdots&\vv_1&\cdots\\
        \vv_1&\zero&\cdots&\zero&\cdots\\
        \emb_i& \zero&\cdots &\zero &\cdots\\
        \zero &\emb_2&\cdots & \emb_i&\cdots\\
        \zero&\zero&\cdots&\zero&\cdots\\
        1&0&\dots&0&\dots
    \end{array}\right],
\end{align*}
The construction for this is identical to the one for \texttt{read} (see the proof of \cref{lem:read}), except that the feedforward layers are outputting the following:
\begin{align*}
      \vv_{\text{orig}}^{(0)}&:=\vv_{\text{orig}}^{(0)} +\relu(-Cb\vOne +2\vv_{\text{new}}-2\vv_{\text{orig}}^{(0)})+\relu(-Cb\vOne -2\vv_{\text{new}}+2\vv_{\text{orig}}^{(0)})\\
   \vv_{\text{new}}&:=\vv_{\text{new}} - \relu(\vv_{\text{new}})+\relu(- \vv_{\text{new}})=\vZero,
\end{align*}
where $C$ is a large positive constant.
The first equation updates the value of a vector $\vv_{\text{orig}}$ in memory with the value of a vector $\vv_{\text{new}}$ from the scratchpad. 
The second equation is resetting the new vector in the scratchpad to zero. 
It can be verified that the output of the feedforward layers would be 
\begin{align*}
 \Input =  \left[\begin{array}{c|cccc}             
    \zero&\vv_2&\cdots&\vv_1&\cdots\\
        \vv_1&\zero&\cdots&\zero&\cdots\\
        \emb_i& \zero&\cdots &\zero &\cdots\\
        \zero &\emb_2&\cdots & \emb_i&\cdots\\
        \zero&\zero&\cdots&\zero&\cdots\\
        1&0&\dots&0&\dots
    \end{array}\right].
\end{align*}
\end{proof}

\subsection{\texttt{if} $\langle condition\rangle$ \texttt{then goto} $\langle instruction\rangle$: Conditional branching}\label{ss:goto}

In this subsection, we will implement a conditional branching instruction that evaluates a condition and sets the program counter to a specified location if the condition is true, or increments the program counter by 1 if the condition is false. The form of the command is as follows: \texttt{if} mem$[a]\leq 0$, \texttt{then goto} $i$, where mem$[a]$ is a value of some location in the memory part of the input sequence.  This command has two parts: evaluating the inequality and modifying the program counter accordingly.

The first thing we do is read from mem$[a]$, as described in the previous subsection. Then, we evaluate the inequality. Let us say that ``flag'' is the truth value of the inequality. Since we assume that for such conditional branching command, mem$[a]$ contains an integer, the following ReLU network can be used to compute the flag:
\begin{align}
    \text{flag} = 1-\relu(\texttt{mem}[a]) + \relu(\texttt{mem}[a]-1).\label{eq:compute_flag}
\end{align}

In \cref{sec:oisc}, we consider $\texttt{mem}[a]$ to be vectors contain the binary $\pm 1$ representation of integers. There we use 2's complement convention to represent negative integers. Let the vector be $[b_N\;\dots\;b_1]$, where $b_N$ is the most significant bit and $b_1$ the least significant. As we explain in that section, the sign of $b_N$ indicates whether the integer is negative or positive (The number is negative if $b_N=+1$ and non-negative otherwise). Hence, the flag is 1 if $b_N=+1$ or if all the bits are $-1$ (which is the case when $\texttt{mem}[a]$ represents the integer 0).
\begin{align}\label{eq:compute_flag_oisc}
    \text{flag} = \relu(b_N) + \relu\left(1+N-\sum_{i=1}^N b_i\right)  
\end{align}

Let the current Program Counter be $\emb_\pointer$, which points to a given command. Thus, if flag is $1$, we want the program counter to ``jump'' and become $\emb_i$, else if flag is $0$ the program counter will be incremented by one, and set to be $\emb_{\pointer+1}$.

Consider that the simplified input currently has the following scratchpad
\begin{equation}
 \begin{bmatrix}
        * & * &\hdots &* &*\\
        \mathrm{flag} & \zero& \hdots&\zero  &\zero\\
       \emb_\pointer& \zero&\hdots &\zero  &\zero \\
        \emb_i& \zero& \hdots& \zero  &\zero\\
    \end{bmatrix}, \nonumber
\end{equation}
where $'*'$ are inconsequential values.
The incremented pointer, $\emb_{\pointer+1}$, %
can be computed using the pointer incrementing operation that we described in the Subsection \ref{ss:increaseCounter}, using one feedforward layer of \eqref{eq:relulayer}.Then, 
\begin{align*}
    \emb_{\text{next}} = 2\relu(\emb_{\pointer+1} - \vOne \mathrm{flag}) + 2\relu(\emb_{i} - \vOne(1 -\mathrm{flag}))-1, %
\end{align*} 
where $\vOne$ is the all ones vector.
Notice that we can implement this with just the feed forward layers of \cref{eq:relulayer}. To account for the residual connection we can add the  expression $ - \relu(\emb_\pointer) + \relu(-\emb_\pointer)$ in the equation above. 

Hence, this entire operation requires 3 feed forward layers of \cref{eq:relulayer}, and hence 2 transformer layers. Note that to ensure that the attention layer of the transformer do not modify the input, we simply set the $\val$  matrix to zero in \eqref{eq:att}.

\label{sec:lego}

\section{Emulating a Generalized One-instruction Set Computer}\label{ss:oneLine}

\subsection{A \texttt{SUBLEQ} Transformer}
\cite{mavaddat1988urisc} showed that there exists an instruction such that any computer program can be translated to a program consisting of instantiation of this single instructions. 
A variant of such an instruction is  \texttt{SUBLEQ}, where different registers, or memory locations are accessed. The way that \texttt{SUBLEQ} works is simple. It accesses two registers in memory, takes the difference of their contents and stores it back to one of the registers, and then if the result is negative it jumps to a different predefined line of code, or continues on the next instruction from the current line of code.\footnote{This version of the \texttt{SUBLEQ} instruction is a slightly restricted version of the original instruction; here we separate the memory / registers from the instructions. We show that this restriction does not make our version computationally less powerful by proving in \cref{app:subleq} that our version is also Turing Complete.}
A computer that is built to execute \texttt{SUBLEQ} programs is called an One-Instruction Set Computer, and is a universal computer, \ie it is {\it Turing Complete}, if given access to infinite memory.

\begin{algorithm}[h]
\caption{\texttt{SUBLEQ}($a$, $b$, $c$)}\label{alg:cap}
\begin{algorithmic}[1]
    \State {\texttt{mem}[$b$] = 
    \texttt{mem}[$b$] - \texttt{mem}[$a$]}   
    \If {\texttt{mem}[$b$] $\leq$ $0$} \State{\texttt{goto} instruction $c$}
    \Else { \texttt{goto} next instruction}
    \EndIf
\end{algorithmic}
\end{algorithm}

The following describes the construction of a looped transformer that can execute a program written in a specific set of instructions. The transformer keeps track of the lines of code, memory locations, and a program counter, using the memory part of the input as memory registers and the command part as lines of code/instructions. The scratchpad is used to record the additions and pointers involved in each instruction, and the read, write, and conditional branch operations are utilized.

\begin{figure}
    \centering
    \includegraphics[ scale = 0.4]{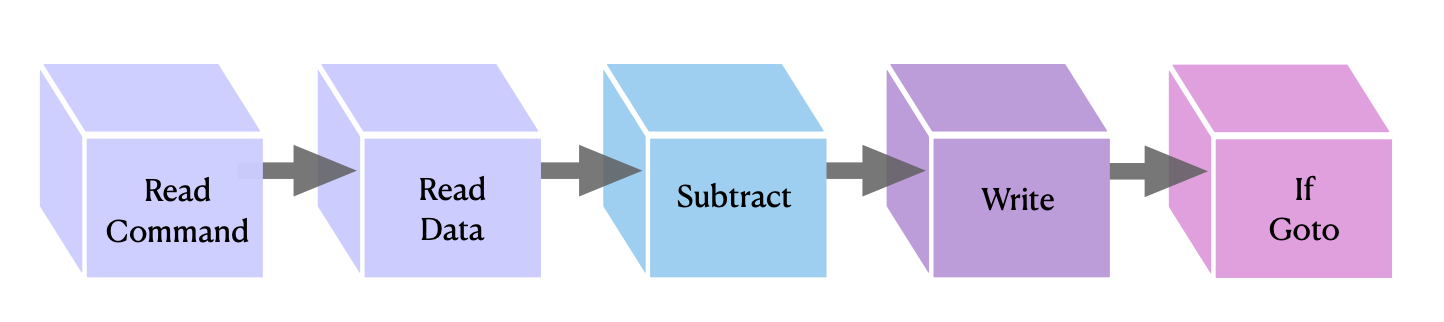}
    \caption{Graphical representation of the building blocks necessary to implement the OISC instruction. The first two blocks transfer the data/command to the scratchpad, the second and third implement the substraction and store the result, while the last one implements the if goto command that completes the instruction.}
    \label{fig:OISC}
\end{figure}
\begin{lemma}
   There exists a looped transformer architecture that can run SUBLEQ programs. This architecture has nine layers, two heads, and a width of $O(\log(n) + N)$, where $n$ is the length of the input sequence that is proportional to the length of the program and memory used by the emulated OISC, and $N$ is the number of bits we use to store each integer. The integers are considered to be in the range $[-2^{N-1}+1,2^{N-1}-1]$ 
   \label{lem:OISC}
\end{lemma}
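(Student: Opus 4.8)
The plan is to build the \texttt{SUBLEQ} transformer by composing the building blocks established earlier in the paper: the \texttt{read} operation (\cref{lem:read}), the \texttt{write} operation (\cref{lem:write}), the pointer-incrementing feedforward network (\cref{ss:increaseCounter}), and the conditional-branch construction (\cref{ss:goto}). First I would fix the input-sequence format: the scratchpad $\rmS$ holds a constant number of working columns (one for the current instruction/command vector, one for each of $\texttt{mem}[a]$ and $\texttt{mem}[b]$, one for the program counter $\emb_\pointer$, plus temporary rows), the memory block $\rmM$ holds the integer registers each encoded as an $N$-bit $\pm1$ vector using the 2's-complement convention, and the command block $\rmC$ holds one column per \texttt{SUBLEQ} instruction, each column storing the three pointers $\emb_a,\emb_b,\emb_c$ that make up that instruction. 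I would then trace one cycle of the loop through the following stages, counting layers as I go.

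\textbf{Stages of one \texttt{SUBLEQ} cycle.} (i) Use \texttt{read} to copy the instruction column pointed to by $\emb_\pointer$ into the scratchpad; this exposes $\emb_a,\emb_b,\emb_c$ (1 layer). (ii) Use \texttt{read} twice — or once with two heads, which is where the second head is used — to fetch $\texttt{mem}[a]$ and $\texttt{mem}[b]$ into the scratchpad using the pointers $\emb_a,\emb_b$ just obtained (1 layer). (iii) Apply a feedforward layer implementing $N$-bit two's-complement subtraction $\texttt{mem}[b]\leftarrow\texttt{mem}[b]-\texttt{mem}[a]$; since bitwise binary addition/subtraction with carry is computable by a constant-depth ReLU network (by the same mechanism as the pointer-increment lemma, applied to $N$-bit operands, which is why the width is $O(\log n + N)$), this costs a constant number of feedforward layers, hence at most two transformer layers. (iv) Use \texttt{write} to store the updated $\texttt{mem}[b]$ back to its memory location via $\emb_b$ (1 layer). (v) Compute the branch flag from the sign/zero pattern of the result using \eqref{eq:compute_flag_oisc}, and then run the conditional-branch update of \cref{ss:goto} to set $\emb_\pointer$ to either $\emb_c$ (flag $=1$) or $\emb_{\pointer+1}$ (flag $=0$); this is the 2–3 feedforward layers / 2 transformer layers accounted for in \cref{ss:goto}. (vi) Clear the scratchpad working rows so the next loop iteration starts clean (this can be folded into the ReLU layers already present, using the $\vv_{\text{new}}\leftarrow\vZero$ resets from the proofs of \cref{lem:read,lem:write}). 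Adding the layer counts — roughly $1+1+2+1+2$ plus one extra layer for reading the two memory words or bookkeeping — gives the claimed nine layers; I would then do the honest accounting to confirm the blocks can be packed into exactly nine transformer layers and two heads, reusing attention layers with zero value matrices wherever only feedforward computation is needed.

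\textbf{Correctness and the main obstacle.} To prove correctness I would argue that after one pass of the nine-layer network the contents of $\rmM$ and the program counter $\emb_\pointer$ exactly reflect one step of Algorithm~\ref{alg:cap}, so that $T$ loop iterations faithfully execute a $T$-step \texttt{SUBLEQ} program; combined with the Turing-completeness of this separated-memory variant of \texttt{SUBLEQ} (\cref{app:subleq}) this yields the lemma. The error terms $\epsilon\rmM$ from the softmax in \texttt{read}/\texttt{write} are controlled exactly as in those lemmas — made arbitrarily small by raising the temperature, or exactly zero with hardmax — and do not propagate because each cycle's arithmetic acts on clean $\pm1$ bit-vectors snapped back by ReLU thresholding. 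The main obstacle I anticipate is the careful \emph{layer budget bookkeeping}: each sub-block was stated in isolation with its own layer count, and the subtlety is that consecutive blocks share transformer layers (the feedforward part of one block's last layer can do the first ReLU computation of the next block, and attention layers can be reused when idle), so getting the total down to exactly nine — rather than the naive sum — requires interleaving the \texttt{read}/\texttt{write} attention steps with the arithmetic and branch feedforward computations and verifying no two attention operations collide in the same layer. A secondary technical point is ensuring the two's-complement subtraction and the flag computation \eqref{eq:compute_flag_oisc} behave correctly at the range boundary $\pm(2^{N-1}-1)$, which is why the integers are restricted to that range.
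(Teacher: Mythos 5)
Your proposal is correct and follows essentially the same route as the paper's proof: the same input layout, the same seven-stage cycle (read instruction, read the two operands with two heads in one layer, two's-complement subtraction in the feedforward sublayers, write-back, flag computation plus conditional branch), and the same folding of feedforward work into adjacent layers to hit the layer budget. The only pieces the paper makes explicit that you fold in implicitly are a dedicated error-correction layer that snaps every entry back to exact values in $\{-1,0,1\}$ before the next loop iteration (essential to prevent softmax error from accumulating across loop iterations) and the $\command_{\texttt{EOF}}$ self-pointing instruction used for program termination.
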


Before we present our construction some observations are in place. 

\paragraph{The importance of loops.} 
The use of a loop outside the transformer is crucial as it allows the computer to keep track of the program counter and execute the instructions in the correct order. Without this loop, the size of the transformer would have to scale with the number of lines of code, making the implementation impractical. Note that the overall complexity of running a \texttt{SUBLEQ} program is going to scale with the number of lines of code, which is to be expected given standard complexity theoretic assumptions on the circuit depth of functions. Note however that the depth of the looped transfromer itself does not scale with the size of the program. 

\paragraph{Can we avoid the logarithmic width scaling?} 
Finally note, that the width of the transformer scales logarithmically with the length of the program, and memory used. This is a side-effect of the bit-complexity of our positional encodings, and could be overcome by considering higher bit-complexity.

\paragraph{OISC as a basis for a more flexible attention-based computer.}
The following construction describes an implementation of a fully functioning one-instruction set computer (OISC) using a transformer architecture. The memory stores integers and the instructions are executed in a sequential manner. The key to this construction is the reverse engineering of the attention mechanism to perform read/write operations and taking full advantage of each piece of the transformer architecture, including the feedforward layers. This implementation serves as the foundation for a more general attention-based computer presented in the next subsection, where the subtraction of two contents of memory can be replaced with a general function, allowing for the implementation of arbitrary iterative algorithms.

\begin{proof}[Proof of Lemma~\ref{lem:OISC}]
Looking at \cref{alg:cap}, note that each instruction can be specified by just 3 indices, $a,b,$ and $c$. 
Since we use binary representation of indices to form positional encodings and pointers, each of these indices can be represented by a $\log n$ dimensional vector.
We represent each instruction by simply concatenating these embedding vectors to form a $3\log n$ dimensional vector as follows: $$\command = \begin{bmatrix}\emb_{a}\\ \emb_{b}\\ \emb_{c}\end{bmatrix}.$$
The input then takes the following form:
\vspace{1em}
\begin{equation}
\Input= \left[
\begin{array}{cc|c|ccccc}
\zero & \zero &\zero & \y\tikznode{com1}{$\command_{s+m+1}$}& \y\tikznode{com2}{$\command_{s+m+2}$}&\y{\hdots} &\y\tikznode{com3}{$\command_{n-1} $}&\y \tikznode{ceof}{$\command_{\texttt{EOF}}$}\\  
\zero &\zero &\yy \tikznode{mem}{$\rmM$} &\zero &\zero&\hdots&\zero&\zero \\
\scr\tikznode{scr}{$\zero$}&\scr{\zero}  &\zero &\zero &\zero&\hdots&\zero&\zero \\
\pc\tikznode{point}{$\emb_{\pointer}$}&\zero &\zero &\zero &\zero&\hdots&\zero&\zero\\
\rowcolor{oldlavander}\tikznode{ee}{$\zero$}&\emb_{2:s} &\emb_{s+1:s+m} & \emb_{s+m+1 } &\emb_{s+m+2}&\hdots &\emb_{n-1} &\emb_n\\ 
\rowcolor{oldlavander} \tikznode{ee1}{1}&1_{2:s}&0_{s+1:s+m}&0_{s+m+1}&0_{s+m+2}&\cdots&0_{n-1}&0_{n}
\end{array}
\right]
\end{equation}
\vspace{1em}

\begin{tikzpicture}[remember picture,overlay,cyan,rounded corners]
   \draw[<-] (com1)
    |- +(4.5,1.1)%
    coordinate (com)%
    node[right]{Commands};
  \draw[<-] (com2)
    |- (com)%
    coordinate (comm);
   \draw[<-] (com3)
    |- (comm);
    \draw[<-] (ceof)
    |- +(0.5,0.6)%
    node[right]{EOF};
     \draw[<-,color=teal] (mem)
    -- +(-0.4,0)%
    |- +(-1,+1)%
    node[left]{Block of memory};
     \draw[<-,color = violet] (scr)
     -- +(-0.4,0) %
    |- +(-1,+0.8)%
    node[left]{Scratchpad};
    \draw[<-,color=olive] (point)
     -- +(-0.4,0) %
    |- +(-0.8,+0.8)%
    node[left]{Program Counter};
    \draw[<-,color=gray] (ee)
     -- +(-0.4,0) %
    |- +(-1,+0.2)%
    coordinate (ee2)
    node[left]{Encodings};
     \draw[<-,color=gray] (ee1)
     -- +(-1,0)
    |- +(0,-1)
     node[right]{Indicator of the scratchpad};
\end{tikzpicture}

where $\command_i\in \R^{3\log(n)}$, $\rmM\in\R^{N\times m}$ and  $\Input\in\R^{(8\log(n)+3N+1)\times n}$. 
The first $s$ columns constitute the scratchpad, the next $m$ constitute the memory section, and the last $n-m-s$ columns contain the instructions. 

The program counter, $\emb_\pointer$ points to the next instruction that is to be executed, and hence it is initialized to the first instruction as $\emb_\pointer := \emb_{s+m+1}$.
The contents of the memory section are $N$ dimensional $\pm 1$ binary vectors which represent the corresponding integers. We follow the 2's complement convention to represent the integers, described as follows. Let's say the bits representing an integer are $b_{N},\dots,b_1$, with $b_N$ being the most significant bit. Then,
\begin{enumerate}
    \item If $b_N = -1$, then the integer is considered positive with the value $\sum_{i=1}^{N-1}2^{i-1}\frac{b_i+1}{2}$.
    \item If $b_N = +1$, then the integer is considered negative with the value $-2^{N-1}+\sum_{i=1}^{N-1}2^{i-1}\frac{b_i+1}{2}$.
\end{enumerate}

\paragraph{Step 1 - Read the instruction $\command_\pointer$.} The first thing to do is to read and copy the instruction pointed to by $\emb_\pointer$ in the scratchpad. The current instruction is located at column index $\pointer$, and is pointed to by the current program counter $\emb_\pointer$. The instruction, $\command_\pointer$ consists of three pointers, each of length $\log n$.
In particular we copy the elements at the location $(1:3\log(n),\pointer)$ to the location $(3\log(n)+4: 6\log(n)+3,1)$. %
This can be done using the \texttt{read} operation as described in \cref{ss:copy}. Hence, after this operation, the input looks as follows:
\begin{align*}
    \Input &=  \left[\begin{array}{cc|c|ccccc}
        \zero &\zero & \zero & \command_1 & \command_2&\hdots &\command_{n-m-s} &\command_{\texttt{EOF}}\\
        \zero &\zero &\rmM &\zero &\zero&\hdots&\zero &\zero \\
         \zero &\zero &\zero &\zero &\zero &\hdots &\zero &\zero \\
        \zero &\zero &\zero &\zero &\zero &\hdots &\zero &\zero \\
        \command_{\pointer}&\zero &\zero &\zero &\zero&\hdots&\zero &\zero \\
         \emb_{\pointer}&\zero  &\zero &\zero &\zero&\hdots&\zero &\zero\\
        \zero&\emb_{2:s} &\emb_{s+1:s+m} & \emb_{s+m+1 } &\emb_{s+m+2}&\hdots &\emb_{n-1} &\emb_n\\
        {1}&1_{2:s}&0_{s+1:s+m}&0_{s+m+1}&0_{s+m+2}&\hdots&0_{n-1}&0_{n}
        \end{array}\right]
\end{align*}
\begin{align*}
    &=  \left[\begin{array}{cc|c|ccccc}
      \zero &\zero & \zero & \command_1 & \command_2&\hdots &\command_{n-m-s-1}  &\command_{\texttt{EOF}}\\
        \zero &\zero &\rmM &\zero &\zero&\hdots&\zero &\zero\\
\zero &\zero &\zero &\zero &\zero &\hdots &\zero &\zero \\
        \zero &\zero &\zero &\zero &\zero &\hdots &\zero &\zero \\
        \emb_a&\zero &\zero &\zero &\zero&\hdots&\zero &\zero\\
        \emb_b&\zero &\zero &\zero &\zero&\hdots&\zero &\zero\\
        \emb_c&\zero &\zero &\zero &\zero&\hdots&\zero &\zero\\
        \emb_{\pointer}&\zero  &\zero &\zero &\zero&\hdots&\zero &\zero\\
        \zero&\emb_{2:s} &\emb_{s+1:s+m} & \emb_{s+m+1 } &\emb_{s+m+2}&\hdots &\emb_{n-1} &\emb_n\\        
        {1}&1_{2:s}&0_{s+1:s+m}&0_{s+m+1}&0_{s+m+2}&\hdots&0_{n-1}&0_{n}
    \end{array}\right]
\end{align*}
This step can be done in one layer.

\paragraph{Step 2 - Read the data required by the instruction.} We need to read the data  that  the columns $a,b$ contain. To do so, we again use the \texttt{read} operation on the pointers $\emb_a,\emb_b$. Note that we need two heads for this operation, one each for reading $a$ and $b$.
The resulting output sequence looks like
\begin{equation}\label{eq:input1}
    \Input =  \left[\begin{array}{cc|c|ccccc}
        \zero &\zero & \zero & \command_1 & \command_2&\hdots &\command_{n-m-s-1} &\command_{\texttt{EOF}}\\
        \zero &\zero &\rmM &\zero &\zero&\hdots&\zero&\zero\\
        \mathrm{mem}[a] &\zero &\zero &\zero &\zero &\hdots &\zero&\zero\\
         \mathrm{mem}[b] &\zero &\zero &\zero &\zero &\hdots &\zero&\zero\\
        \emb_a&\zero &\zero &\zero &\zero&\hdots&\zero&\zero\\
        \emb_b&\zero &\zero &\zero &\zero&\hdots&\zero&\zero\\
        \emb_c&\zero &\zero &\zero &\zero&\hdots&\zero&\zero\\        
        \emb_{\pointer}&\zero  &\zero &\zero &\zero&\hdots&\zero&\zero\\
        \zero&\emb_{2:s} &\emb_{s+1:s+m} & \emb_{s+m+1 } &\emb_{s+m+2}&\hdots &\emb_{n-1}&\emb_n\\
        {1}&1_{2:s}&0_{s+1:s+m}&0_{s+m+1}&0_{s+m+2}&\hdots&0_{n-1}&0_{n}
    \end{array}\right].
\end{equation}
This step can be done in one layer.

\paragraph{Step 3 - Perform subtraction.}
Let $\vx$ denote a column of the input $\Input$. Let it have the following structure:
\begin{align*}
    \vx = \begin{bmatrix}
        *\\
        *\\
        \vb_r\\
        \vb_s\\
        *\\
        *\\
        *\\
        *\\
        *
    \end{bmatrix},
\end{align*}
where each entry above represents the corresponding column element of the matrix $\Input$ in \eqref{eq:input1}. Thus, $\vb_r=\texttt{mem}[a], \vb_s = \texttt{mem}[b]$ for the first column, and $\vb_r=\vb_s=\zero$ otherwise. 

Hence, to perform $\vb_{s-r}$, we first need to compute the binary representation of $-r$, which is $\vb_{-r}$, and then simply add it to $\vb_s$. To compute $\vb_{-r}$, which is the 2's complement of $\vb_r$, we just need to flip the bits of $\vb_r$ and add 1.
Bit flipping a $\pm 1$ bit can be done with a neuron simply as $b_{\text{flipped}}=2*\relu(-b)-1$.
For adding 1, we can use \cref{app:increase}.
Hence, each of these operations can be done using 1 ReLU layer of width $O( N)$, and so we need 2 transformer layers to perform this (Here we make the intermediate attention layers become the identity mapping by setting their value matrices to $\zero$).
Finally, we need one more ReLU layer to add $\vb_s$ to $\vb_{-r}$, hence bringing the total to 3 transformer layers.

This results in the following:
\begin{equation*}
     \Input =  \left[\begin{array}{cc|c|ccccc}
        \zero &\zero & \zero & \command_1 & \command_2&\hdots &\command_{n-m-s-1}&\command_{\texttt{EOF}}\\
        \zero &\zero &\rmM &\zero &\zero&\hdots&\zero&\zero\\
         \zero&\zero &\zero &\zero &\zero&\hdots&\zero&\zero\\
         \mathrm{mem}[b]-\mathrm{mem}[a] &\zero &\zero &\zero &\zero&\hdots&\zero&\zero\\
        \emb_a&\zero &\zero &\zero &\zero&\hdots&\zero&\zero\\
        \emb_b&\zero &\zero &\zero &\zero&\hdots&\zero&\zero\\
        \emb_c&\zero &\zero &\zero &\zero&\hdots&\zero&\zero\\
        \emb_{\pointer}&\zero  &\zero &\zero &\zero&\hdots&\zero&\zero\\
        \zero&\emb_{2:s} &\emb_{s+1:s+m} & \emb_{s+m+1 } &\emb_{s+m+2}&\hdots &\emb_{n-1}&\emb_n\\
        {1}&1_{2:s}&0_{s+1:s+m}&0_{s+m+1}&0_{s+m+2}&\hdots&0_{n-1}&0_{n}
    \end{array}\right]%
\end{equation*} 
Note that since this can be done in the feedforward layers of the previous step, this does not require an additional layer.

 \paragraph{Step 4 -  Write the result back to memory.} 
 Writing $\mathrm{mem}[b] - \mathrm{mem}[a]$ back to location $b$ can be done using the pointer $\emb_b$ and the set of embeddings and applying the \texttt{write} operation described in  \cref{ss:copy}.
This operation requires one layer.

\paragraph{Step 5 - Conditional branching.} 
We first use \cref{eq:compute_flag_oisc} as described in \cref{ss:goto} to create the flag, which is $1$ if $\mathrm{mem}[b] - \mathrm{mem}[a] \leq 0$ and $0$ otherwise.
This can be done using the \cref{eq:relulayer} of the transformer. 
Thus, we have
\begin{equation}
     \Input =  \left[\begin{array}{cc|c|ccccc}
        \zero &\zero & \zero & \command_1 & \command_2&\hdots &\command_{n-m-s-1}&\command_{\texttt{EOF}}\\
        \zero &\zero &\rmM &\zero &\zero&\hdots&\zero&\zero\\
         \zero&\zero &\zero &\zero &\zero&\hdots&\zero&\zero\\
         \text{flag} &0 &0 &0 &0 &\hdots &0&0\\
        \emb_a&\zero &\zero &\zero &\zero&\hdots&\zero&\zero\\
        \emb_b&\zero &\zero &\zero &\zero&\hdots&\zero&\zero\\
        \emb_c&\zero &\zero &\zero &\zero&\hdots&\zero&\zero\\
        \emb_{\pointer}&\zero  &\zero &\zero &\zero&\hdots&\zero&\zero\\
        \zero&\emb_{2:s} &\emb_{s+1:s+m} & \emb_{s+m+1 } &\emb_{s+m+2}&\hdots &\emb_{n-1}&\emb_n\\
        {1}&1_{2:s}&0_{s+1:s+m}&0_{s+m+1}&0_{s+m+2}&\hdots&0_{n-1}&0_{n}
    \end{array}\right]\label{eq:proof_input_1}
\end{equation} 

This operation requires one layer.

Next we use the construction described in \cref{ss:goto} to choose, depending on the value of the flag, whether we want to increment the current program counter or we want to jump in the command $c$. Similar to \cref{ss:goto}, this step needs 2 layers of transformers.

\paragraph{Step 6 - Error Correction.} Note that some of the steps above we incur some error while reading and writing due to the fact that we are using softmax instead of hardmax. This error can be made arbitrarily small by increasing the temperature of the softmax. In this step, we push the error down to zero. Note that all the elements of $\Input$ can only be one of $\{-1,0,1\}$, with some additive error from reads and writes as explained before. Assume that the temperature is set high enough that the error is at most $\epsilon < 0.5$. Then, a noisy bit $b$ can be fixed using the following ReLU:
\begin{align*}
    b_{\text{noiseless}} &= \frac{1}{1-2\epsilon} (\relu(b+1-\epsilon) - \relu(b+\epsilon)) \\
    &\quad + \frac{1}{1-2\epsilon} (\relu(b-\epsilon) - \relu(b -1 +\epsilon)) - 1.
\end{align*}
This operation can be done with a single layer of transformer.

\paragraph{Step 7 - Program Termination.} The special command $\command_{\texttt{EOF}}$ is used to signal the end of a program to the transformer. This command is made up of three encodings: $\emb_{s+1}$, $\emb_{s+2}$, and $\emb_{n}$. The first encoding, $\emb_{s+1}$, points to the first entry in the memory, which we hard-code to contain the value $0$. The second encoding, $\emb_{s+2}$, points to the second entry in the memory, which is hard-codeded to contain the value $-1$. The third encoding, $\emb_{n}$, points to itself, signaling the end of the program and preventing further execution of commands. 
Hence, on executing this command, the next command pointer is set to point to this command again.
This ensures that the transformer maintains the final state of the input.
\begin{itemize}
    \item For this, we ensure that the last instruction in each program is $\command_{\texttt{EOF}}$, and that $\texttt{mem}[s+1]=0$ and $\texttt{mem}[s+2]=-1$.
    \item For this case $a=s+1$, $b=s+2$, and $c=n$.
    \item The memory is updated with the value $\mathrm{mem}[b] = \mathrm{mem}[b] - \mathrm{mem}[a]$. Since $\mathrm{mem}[a] = 0$ here, the memory remains unchanged.
    \item  Since $\mathrm{mem}[b]\leq  0$ here, the branch is always true and thus the pointer for the next instruction is again set to point to $\command_{\texttt{EOF}}$.
\end{itemize}
\end{proof}

\label{sec:oisc}

\subsection{\texttt{FLEQ}: A More Flexible Attention-based Computer}\label{sec:fleq}

In this section, we introduce \texttt{FLEQ}, a generalization of \texttt{SUBLEQ} that defines a more flexible reduced-instruction set computer. This implied set of additional instructions is based on a more advanced version of \texttt{SUBLEQ} that allows for the implementation of multiple functions within the same transformer network. This is achieved by generalizing the previous OISC construction to include not just addition of registers, but any function from a set of $M$ predefined functions implementable by a transformer network. In the following, we use the term \texttt{FLEQ} to refer interchangably to the instruction, the language, and the attention-based computer it defines. 

The design of \texttt{FLEQ} allows for the implementation of complex and sophisticated algorithms by generating more general functions beyond simple subtraction, such as matrix multiplication, computation of square roots, activation functions, etc. 
This not only increases the flexibility of the system, but  also makes it possible to implement %
nonlinear computations, linear algebra calculations, and  iterative optimization algorithms for in-context learning while containing the length of the corresponding programs. 

\begin{definition}\label{def:tf_func_block}
    Let $\cT_i$ be a transformer network of the form \eqref{eq:TF} with $l_i$-layers, $h_i$-heads and dimensionality $r$.  %
    We call this a \textbf{``transformer-based function block''} if it implements a function $f(\rmA, \rmB)$ where the input and output sequence format is assumed to be the following: $\rmA\in\bR^{d_h\times d_w}$ is assumed to be provided in the first set of $d$ columns (columns $1$ to $d$) and $\rmB\in\bR^{d_h\times d_w}$ the second set of $d$ columns (columns $d+1$ to $2d$); after passing the input through the $l_i$ layers, the output of $f(\rmA, \rmB)\in\bR^{d_h\times d_w}$ is stored in the third $d$ columns (columns $2d+1$ to $3d$), where $d$ is the maximum size that the input could have and it is a constant that we determine. Note that $d_h,d_w\leq d$. Finally, the sequence length of the block is $s\geq 3d$.  Similarly to $d$, $s$ is a predetermined constant.%
\end{definition}
The parameters $\rmA, \rmB$ %
can be scalars, vectors or matrices as long as they can fit within a $d\times d$ matrix. Hence, the above definition is minimally restrictive, with the only main constraint being the input and output locations. More details about the input and output requirements will be explained towards the end of this subsection.

\begin{theorem}\label{thm:unified}
    Given $M$ different transformer-based function blocks $\cT_1,\cdots, \cT_M$, there exists a transformer $\cT$ of the form \eqref{eq:TF} with number of layers $9+ \max \{l_1,\cdots, l_M\}$, a number of $\sum_{i=1}^M h_i$ heads , and dimensionality $O(Md + \log n)$ such that running it recurrently $T$ times can run $T$ instructions of any program where each instruction is {\normalfont $\texttt{FLEQ}(a,b,c, m, \text{flag}, p,d_h,d_w)$}, and executes the following:
    {\normalfont
    \begin{equation}
        \texttt{mem}[c] = f_m (\texttt{mem}[a], \texttt{mem}[b])\quad; \quad\text{if }\texttt{mem}[\text{flag}]\leq 0\text{ goto instruction }p
    \end{equation}
    }Here 
    $n$ is the total length of the program and we assume that $\texttt{mem}[\text{flag}]$ is an integer. The parameters $d_h,d_w$ are explained in \cref{rem:width} below. 
\end{theorem}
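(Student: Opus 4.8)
The plan is to assemble the \texttt{FLEQ} transformer by generalizing the nine-layer \texttt{SUBLEQ} construction of Lemma~\ref{lem:OISC}, replacing its hard-coded subtraction step by a ``function dispatch'' module built from the $M$ given function blocks, and then piecing everything together by a careful layout of the input columns. First I would fix the input format: beyond the scratchpad/memory/command partition of \eqref{eq:input}, I would reserve, inside the scratchpad, three working regions of $d$ columns each (for the two operands and the output, matching Definition~\ref{def:tf_func_block}), plus $M$ disjoint ``function workspaces'' so that $\cT_1,\dots,\cT_M$ can be run in parallel on the same prepared operands without interfering. Each instruction is now the longer tuple $\texttt{FLEQ}(a,b,c,m,\text{flag},p,d_h,d_w)$, which I encode by concatenating the corresponding binary positional encodings $\emb_a,\emb_b,\emb_c,\emb_m,\emb_{\text{flag}},\emb_p$ together with the small integers $d_h,d_w$ into one command column of width $O(\log n)$, exactly as the $3\log n$-dimensional $\command$ vector was formed in the proof of Lemma~\ref{lem:OISC}.

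The layer budget I would argue for is: one layer to \texttt{read} the current instruction $\command_\pointer$ into the scratchpad (Lemma~\ref{lem:read}); a block of layers to \texttt{read} the operand blocks $\texttt{mem}[a],\texttt{mem}[b]$ — since these may be $d_h\times d_w$ matrices rather than single columns, this is $d$ parallel invocations of the \texttt{read} construction acting on disjoint columns, still only a constant number of layers because the heads operate columnwise; then the function-block stage, in which all $M$ blocks $\cT_1,\dots,\cT_M$ run simultaneously (their heads are disjoint, giving the claimed $\sum_i h_i$ total heads, and their depth is $\max_i l_i$), each writing its result into its own workspace; then one feed-forward layer implementing a \emph{selection} that copies workspace $m$'s output into the designated output region, using the indicator trick $\relu(\cdot - C(1-\one\{m=j\}))$ of the kind already used in the \texttt{read}/\texttt{write} proofs to gate on whether $\emb_m=\emb_j$; then one layer to \texttt{write} the selected output block back to $\texttt{mem}[c]$ (Lemma~\ref{lem:write}); one layer to read $\texttt{mem}[\text{flag}]$ and compute the flag via \eqref{eq:compute_flag} or \eqref{eq:compute_flag_oisc}; two layers for the conditional program-counter update of \S\ref{ss:goto}; and one layer for the error-correction clean-up of Step~6 in the proof of Lemma~\ref{lem:OISC}. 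Tallying the fixed-cost layers gives the $9$, and the function blocks contribute the additive $\max\{l_1,\dots,l_M\}$, yielding the stated $9+\max_i l_i$. The width bookkeeping is: $O(\log n)$ for positional encodings/pointers, $O(d)$ for each of the $M$ function workspaces (Definition~\ref{def:tf_func_block} fixes $d,s$ as constants), hence $O(Md+\log n)$.

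I would then verify correctness of one recurrent pass by tracing the column contents through each stage exactly as in the \texttt{SUBLEQ} proof — confirming in particular that (i) columns outside the active scratchpad regions are untouched because the value matrices annihilate them and the residual connections preserve them; (ii) the softmax-vs-hardmax error is controlled by choosing temperature $\lambda\gtrsim\log(n^3/\epsilon)$ as in Lemma~\ref{lem:read}, and then killed by the final error-correction layer on the $\{-1,0,1\}$ entries (the function-block outputs that are genuine real numbers must be exempted from this clean-up, which I would handle by confining error-correction to the pointer/flag coordinates only); and (iii) after the pointer update the input is in exactly the same format, so $T$-fold recursion executes $T$ instructions. The main obstacle I anticipate is the function-dispatch/selection step: I must ensure that running all $M$ blocks in parallel really is non-interfering (disjoint column supports and disjoint head sets, with each block's attention confined to its own workspace — which requires that the $\softmax$ attention of $\cT_m$, acting on the full width-$O(Md+\log n)$ sequence, does not ``leak'' onto other workspaces; this can be enforced by padding the positional-encoding coordinates so that cross-workspace inner products are strictly dominated), and that the subsequent gated copy selects the correct one of $M$ candidate outputs using only $O(1)$ feed-forward layers of width $O(Md)$. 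The remaining steps are essentially the bookkeeping already carried out for \texttt{SUBLEQ}, adapted from scalars to $d_h\times d_w$ blocks.
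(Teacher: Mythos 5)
Your proposal matches the paper's proof in all essentials: the same input layout with a program counter, current-instruction block, scratchpad memory, and $M$ disjoint function workspaces contributing the $O(Md)$ dimensionality; the same read-instruction / read-operands / parallel-function-block / gated-selection / write-back / flag / conditional-branch pipeline with ReLU indicator gating on $\emb_m$; and the same accounting of $\sum_i h_i$ heads and $9+\max_i l_i$ layers. The only (immaterial) variation is that you feed the operands to all $M$ blocks and gate on the output, whereas the paper gates on the input side so that only block $m$ receives nonzero data — and your explicit attention to softmax normalization over the full sequence length and to exempting real-valued outputs from error correction is, if anything, slightly more careful than the paper's writeup.
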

\begin{remark}\label{rem:width}
Note that, the transformer $\mathcal{T}$ contains $M$ transformer-based function blocks and each one may use different input parameters. We thus define with $d$ the max length that each of the parameters $\rmA, \rmB, \rmC$ (stored in locations $a,b,c$) as in \cref{def:tf_func_block} can have; this is a global constant and it is fixed for all the different instances that we can create. Now,   $d_h,d_w$ refer to the maximum dimension that the parameters can have in a specific instance of the transformer $\mathcal{T}$; the rest of the columns $d-d_w$ and rows $d-d_h$ are set to zero. %
\end{remark}
The proof of this theorem can be found in \cref{app:unified}. Below we explain some of our design choices.

\paragraph{Execution cycle of the unified attention-based computer.} 
In each iteration of the looped transformer, one instruction is fetched from the set of instructions in the input according to the program counter. The instruction is then copied to the scratchpad. Depending on the function to be implemented, a different function block location is used to locally record the results of that function. Once the result is calculated, it is copied back to a specified memory location provided by the instruction. The execution cycle is similar to the one-instruction set computer (OISC) in the previous section, with the main difference being that for each instruction, we can choose from a pre-selected list of functions that take inputs in the form of arbitrary arrays of numbers, such as matrices, vectors, and scalars.

\paragraph{The format of the input sequence.} In Fig.~\ref{fig:unifiedinput}, we illustrate the input $\Input$ to our looped transformer, which can execute a program written as a series of FLEQ instructions. Note that $\Input$ is divided into three sections: Scratchpad, Memory, and Instructions. As in the left bottom part of Fig.~\ref{fig:unifiedinput}, we allocate a separate part of the scratchpad for each of the $M$ functions that are internally implemented by the transformer. For example, if we have matrix multiplication and element-wise square root as two functions, we would allocate a different function block for each one.

\begin{figure}[H]
    \centering
    \includegraphics[scale = 0.4]{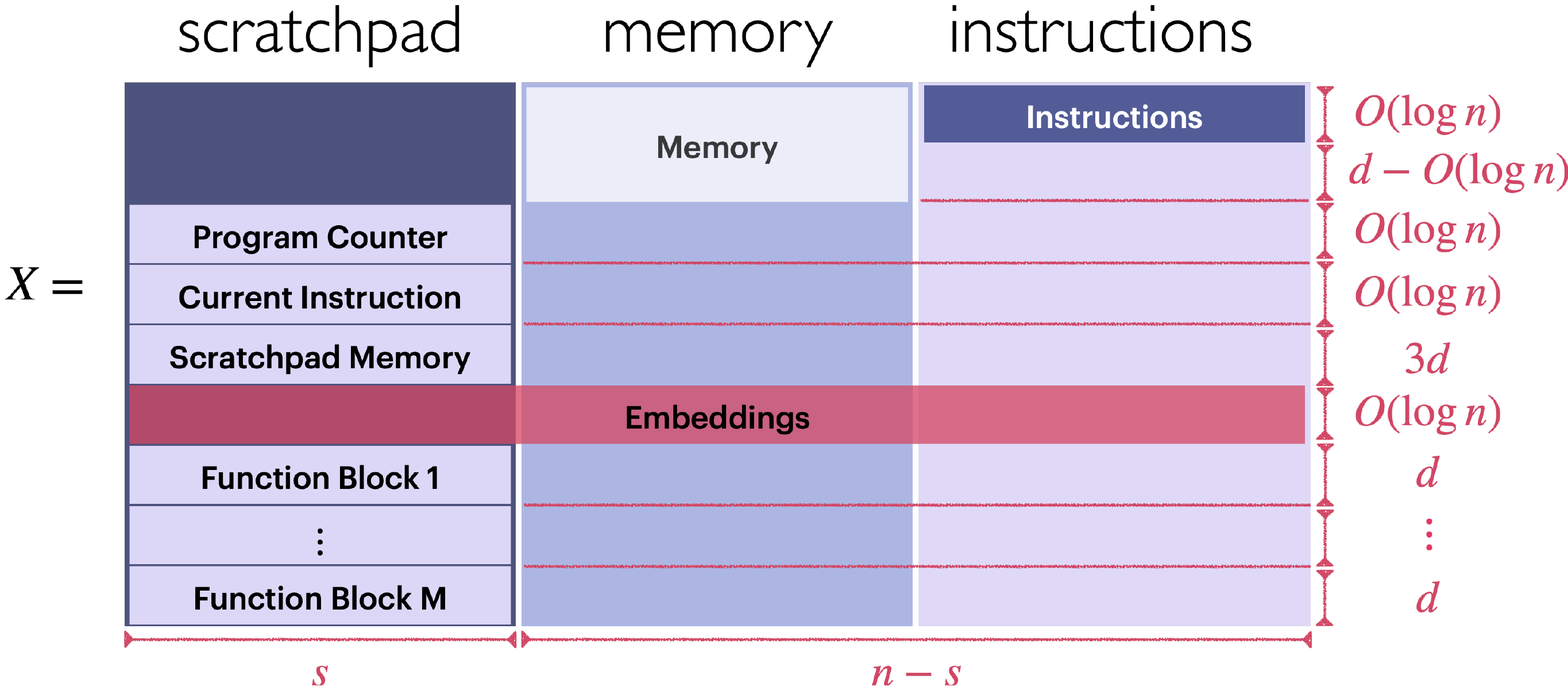}
    \caption{The structure of input $\Input$, to execute FLEQ commands.}
    \label{fig:unifiedinput}
\end{figure}%
 This design may not be the most efficient, but our goal is to demonstrate the possibilities of looped transformers. Additionally, since the number of different functions is typically small in the applications we have in mind, the design does not significantly increase in size. The choice to reserve different function blocks for each predefined function is for convenience, as it allows for separate treatment of functions without worrying about potentially overlapping results. We believe that a design with a single function block is feasible, but it would significantly complicate the rest of the transformer construction.

\paragraph{Instruction format.} 
The instruction in \cref{thm:unified} is essentially a composition of the following two components: %
the function call to $f_m$ and the conditional branching (if ... goto ...). 
The instruction, located at the top right side of Fig.~\ref{fig:unifiedinput} contains the following components:

\begin{equation}\label{eqn:fleq_instruction}
  \left[\begin{array}{c}
    \y{\emb_a} \\
    \y\tikznode{a2}{$\emb_b$} \\
    \yy\tikznode{out}{$\emb_c$} \\
    \scr\tikznode{function}{$\emb_m$} \\
    \ol\tikznode{flag}{$\emb_{\text{flag}}$} \\
    \pc\tikznode{p}{$\emb_p$}\\
        \lo\tikznode{dim}{$d_h$} \\
        \lo\tikznode{dim}{$d_w$} \\
    \end{array}\right]
\end{equation}

\begin{tikzpicture}[remember picture,overlay,cyan,rounded corners]
  \draw[<-] (a2)
    -- +(-1.5,0.0)
    node[left]{Pointers to parameters of $f_m$}; 
    \draw[<-,color = teal] (out)
    -- +(-1.5,0.0)
    node[left]{Position to write result};
    \draw[<-,color =violet] (function)
    -- +(-1.5,0.0)
    node[left]{Pointer to function block};
    \draw[<-,color = gray] (flag)
    -- +(1.0,+0.0)
    node[right]{Position of flag};
    \draw[<-,color = olive] (p)
    -- +(1.0,0.0)
    node[right]{Next instruction};
  \draw[<-,color = darkpastelblue] (dim)
    -- +(1.0,-0.0)
    node[right]{Dimensions of the inputs and the output};
\end{tikzpicture}

The goal of each positional encoding vector in~
\cref{eqn:fleq_instruction} is to point to the corresponding space of the input where each component required by the instruction is located. To be specific,
$\emb_a$ and $\emb_b$ point to the locations that the inputs $a$ and $b$ are located, $\emb_c$ points to the location to which we will record the final result of the function $f_m$. Similarly, $\emb_m$ points to the function block in the scratchpad that the intermediate computations required for $f_m$ are recording, $\emb_{\text{flag}}$ points to the variable that we check if it is non-positive (the result is used for conditional branching), and $\emb_p$ points to the address of the line of code that we would jump if the variable in pointed by $\emb_{\text{flag}}$ is non-positive. 

\paragraph{Execute a function; Jump to command.}
Recall that the first four parameters ($a,b,c,m$) of FLEQ, as well as the last two $(d_h,d_w)$ are related to the implementation of the function block, while the other two ($\text{flag}, p$) are related with the conditional branching. Since there is no overlap between the two components of each instruction,  it is
possible to use each of these components independently. %
 By having a fixed location $\text{flag}_0$ where $\texttt{mem}[\text{flag}_0]$ is always set to $1$, we can have the simpler command $\texttt{FLEQ}(a,b,c, m, \text{flag}_0, p,d_h,d_w)$ which implements $$\texttt{mem}[c] = f_m (\texttt{mem}[a], \texttt{mem}[b]).$$
Further, by having fixed locations $a_0,b_0, c_0$ which are not used elsewhere in the program, and hence inconsequential, we can have the simpler command $\texttt{FLEQ}(a_0,b_0,c_0, m, \text{flag}, p,d_h,d_w)$ which implements $$\text{if }\texttt{mem}[\text{flag}]\leq 0\text{ goto instruction }p.$$

Using this, we get the following corollary:
\begin{corollary}
The Unified Attention Based Computer presented in \cref{thm:unified} can run programs where each instruction can be \textbf{either} of the following two simple instructions:
\begin{itemize}
    \item {\normalfont
    $\texttt{mem}[c] = f_m (\texttt{mem}[a], \texttt{mem}[b])$
    }
    \item {\normalfont
    $\text{if }\texttt{mem}[\text{flag}]\leq 0\text{ goto instruction }p$
    }
\end{itemize}
\end{corollary}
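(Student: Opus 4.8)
The plan is to derive this directly from \cref{thm:unified} by observing that a single \texttt{FLEQ} instruction already performs \emph{both} a function evaluation and a conditional branch, and that these two halves read disjoint sets of operands: the function half uses $a,b,c,m,d_h,d_w$ while the branch half uses only $\text{flag}$ and $p$. Hence each half can be rendered inert by aiming its operands at reserved ``dummy'' locations in memory, leaving the other half as the sole visible effect of the instruction.

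First I would fix, once and for all, a memory layout in the input sequence that reserves: a cell $\text{flag}_0$ holding the constant $1$, which is written in the initial input and is never the target of any write during execution; a location $a_0,b_0$ holding (say) zero matrices; and a location $c_0$ that is never read by any instruction. To compile a line of the form ``$\texttt{mem}[c]=f_m(\texttt{mem}[a],\texttt{mem}[b])$'', emit $\texttt{FLEQ}(a,b,c,m,\text{flag}_0,p,d_h,d_w)$ where $p$ is the index of the following line. Since $\texttt{mem}[\text{flag}_0]=1>0$ throughout, the branch condition of \cref{thm:unified} is never met, the program counter simply advances by one, and the net effect is exactly the assignment $\texttt{mem}[c]=f_m(\texttt{mem}[a],\texttt{mem}[b])$.

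To compile a line of the form ``if $\texttt{mem}[\text{flag}]\le 0$ goto instruction $p$'', emit $\texttt{FLEQ}(a_0,b_0,c_0,m_0,\text{flag},p,1,1)$ for any fixed valid function index $m_0$. The function half writes the value $f_{m_0}(\texttt{mem}[a_0],\texttt{mem}[b_0])$ into $\texttt{mem}[c_0]$; this is harmless because $c_0$ is never read, and the intermediate scratch space used by the block $\cT_{m_0}$ is overwritten on the next cycle anyway. The branch half then behaves precisely as the desired conditional jump. Since by \cref{thm:unified} both of these are legitimate \texttt{FLEQ} instructions for the same looped transformer $\cT$, any program each of whose lines is one of the two simple instructions is just a particular \texttt{FLEQ} program, and is executed line by line by running $\cT$ recurrently $T$ times.

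The only point requiring care — and it is bookkeeping rather than a genuine obstacle — is verifying that the dummy side effects cannot corrupt the computation: the reserved cells $\text{flag}_0, a_0, b_0, c_0$ must be disjoint from the memory footprint of every ``real'' instruction and from the $M$ function-block regions of the scratchpad, and $\text{flag}_0$ must never be a write target. Because the memory layout is chosen by the programmer as part of the input, this is always arrangeable, so no new transformer construction is needed beyond \cref{thm:unified}.
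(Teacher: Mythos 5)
Your proposal is correct and follows essentially the same route as the paper: the paper likewise suppresses the branch via a reserved location $\text{flag}_0$ permanently holding the value $1$, and suppresses the function call via reserved, otherwise-unused locations $a_0,b_0,c_0$, relying on the fact that the function half and the branch half of a \texttt{FLEQ} instruction use disjoint operands. Your extra bookkeeping remark about keeping the dummy cells disjoint from the live memory footprint is a reasonable elaboration of what the paper states only implicitly.
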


\paragraph{Format of Transformer-Based Function Blocks.}
Recall that each function block is located at the bottom left part of the input $\Input$, as shown in Fig.~\ref{fig:unifiedinput}. Each transformer-based function block is expected to operate using the following format of the input: 
\begin{itemize}
    \item The number of rows in the input is $r$,  while the number of columns is $s$ and $s\geq 3d$. Here $s$ will dictate the total maximum number of columns that any transformer-based function block needs to operate. The reason that $s$ might be larger than $3d$ has to do with the fact that some blocks may need some extra scratchpad space to perform some calculations.
    \item The function block specifies the dimensions of input and output. Say they are $d_h \times d_w$, where $d_h, d_w \le d$
    . These will be part of the instruction which calls this function inside the FLEQ framework, as in~\eqref{eqn:fleq_instruction}.
    \item Suppose each function block has two inputs ($\rmA\in \bR^{d_h\times d_w}$ and $\rmB\in \bR^{d_h\times d_w}$) and one output $f(\rmA, \rmB) = \rmC\in \bR^{d_h\times d_w}$.
    As in~\eqref{eqn:fleq_instruction_partition}, the function block is divided into four parts: (1) the first input $\rmA$ is placed in the first $d_h$ rows and the first $d_w$ columns, (2) the second input $\rmB$ is placed in the first $d_h$ rows and the columns $d+1:d+d_w$, (3) %
    the output $f(\rmA, \rmB) = \rmC$ is in the first $d_h$ rows and the columns $2d+1:2d+d_w$ columns and 4) the rest $s-3d$ column used as scratchpad space for performing necessary calculations. %
    Note that the unused columns are set to zero.
    \item The last $r-d_h$ rows can be used by the transformer-based function block in any way, \eg to store any additional positional encodings. 
\end{itemize}
We put the format of the input of each \emph{transformer-based function block} in~\eqref{eqn:fleq_instruction_partition}. The first input $\rmA = [\vz_a^1, \cdots, \vz_a^{d_w}]$ of the function is zero padded and stored in the first $d$ columns. Similarly,  the second input $\rmB = [\vz_b^1, \cdots, \vz_b^{d_w}]$ is stored in the next $d$ columns. The output/result of the function block $\rmC = [\vz_c^1, \cdots, \vz_c^{d_w}]$ is located in the next $d$ columns while we have some extra $s-3d$ columns which can be used as scratchpad. 
\setcounter{MaxMatrixCols}{20}

\vspace{5mm}
\begin{equation}\label{eqn:fleq_instruction_partition}
\left[
\begin{array}{cccc|cccc|cccc|cc}
\pc\tikznode{sar}{$\mem_{a}^{1}$} &\pc{\hdots} & \pc\tikznode{sar}{$\mem_{a}^{d_w}$} & \zero 
& 
\yy\tikznode{sbr}{$\mem_{b}^{1}$} &\yy{\hdots} & \yy\tikznode{sbr}{$\mem_{b}^{d_w}$} & \zero 
&
\y\tikznode{scr}{$\mem_{c}^{1}$} &\y{\hdots} & \y\tikznode{scr}{$\mem_{c}^{d_w}$} & \zero 
& \hdots & \zero
\\  
* & \hdots &* &*&* & \hdots &* &*&* & \hdots &* &*& \hdots & *
\end{array}
\right]
\end{equation}

\begin{tikzpicture}[remember picture,overlay,cyan,rounded corners]
     \draw[<-,color = olive] (sar)
     |- +(+1,+0.8) %
    node[right]{Input $A$};
     \draw[<-,color = violet] (sbr)
     |- +(+1,+0.8) %
    node[right]{Input $B$};
     \draw[<-,color = cyan] (scr)
     |- +(+1,+0.8) %
    node[right]{Output $C=f(A,B)$};
\end{tikzpicture}

Let us consider the case where we wish to multiply a matrix $\rmA\in \R^{d\times d}$,with a vector $\rvb\in\R^{d\times 1}$. The resulting output  matrix would look as follows:
\begin{equation}
    \bracks*{\begin{array}{c|cc|cc|c}
    \rmA &\rvb &\zero & \rmA^\top\rvb &\zero &\zero
    \end{array}}. \nonumber
\end{equation}

\paragraph{Computational concerns: Do we need full attention?}
In our construction, the computational complexity of each layer depends on the number of embedding vectors that each part of the input has to attend to. Typically, this is quite sparse, as only a few of them need global attention. In our specific construction, only the columns within the scratchpad require global attention. By focusing only on these columns, we can reduce the computational complexity of the attention mechanism from $O(n^2d)$ to $O(nd)$, where n is the number of input sequences, $d$ is the dimension of the embedding vectors.

This reduction in computational complexity is achieved by limiting the attention mechanism to only the columns within the scratchpad, which helps to improve the overall efficiency of the model. Additionally, since the computational complexity grows linearly with the number of input sequences, rather than quadratically, it enables us to scale the model to handle larger input sequences.

\section{Functions in the Unified Template Form}\label{sec:function}

In this section, we demonstrate how to implement a variety of nonlinear functions and basic linear algebra operations using transformers. These techniques will be crucial in the construction of iterative algorithms in the following sections. %
Each transformer-based function block in this section fits in our unified template in terms of input/output parameters' locations. We note here that each transformer-based function block might have its own positional encodings used to transfer the output in the correct place or perform some \texttt{read/write} operations and they are part of the design of the block.

\subsection{Encoding Non-linear Functions within the Attention Mechanism}%
One key ingredient of our constructions %
is encoding various functions within the attention mechanism. We do this by forcing the softmax to act as a sigmoid function and by storing multiple coefficients in the query and value weight matrices. As far as we know, this is the first work that shows how general non-linear functions can be emulated by attention layers. This allows us to create linear combinations of sigmoids that can be accessed by an indicator vector in the input. Our analysis is based on the result of \cite{Barron93} which we present below. 

\begin{definition} Let $\Gamma_{C,B}$ be the set of functions defined in a bounded domain $B$, $f:B\to \mathbb{R}, B\subseteq \R^d$ with a proper extension to $\R^d$ such that they have $C$ bounded Fourier integral, \ie \\$\int \sup_{x \in B} | w \cdot x | \ F(dw) \le C$ holds where $F(dw)$ is the magnitude of the Fourier distribution.
\end{definition}
\begin{definition} Given $\tau >0, C>0$ and a bounded set $B$, 
    let 
    \begin{equation}
        G_{\sigmoid,\tau} = \braces{\gamma\sigmoid(\tau(\rva^T\rvx + b)): \abs{\gamma}\leq 2C,\norm{\rva}_B \leq 1, \abs{b} \leq 1} \nonumber
    \end{equation}
    where $\norm{\rva}_B = \sup_{\rvx\in B} \braces{\rvx^T\rva}$ and $\sigmoid$ is the sigmoid function, \ie $\sigmoid(x) =\frac{1}{1+e^{-x}}$.
\end{definition}
\begin{theorem}[Theorem 3 in \cite{Barron93}]\label{th:Barron}
Every function $f\in\Gamma_{C,B}$ with $f(0) =0 $ and  can be approximated by a linear  combination of sigmoids $f_i\in G_{\sigmoid,\tau}$, $i=1,\hdots m$. If $\tau \geq m^{1/2}\ln m$  the  error scales as
\begin{equation}
    \abs*{f(\rvx) - \sum_{i=1}^m f_i(\rvx)}\leq O\parens*{\dfrac{1}{m^{{1/2}}}} , \; \rvx\in B \nonumber
\end{equation}
\end{theorem}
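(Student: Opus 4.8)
The plan is to reproduce Barron's argument in four moves. First I would use the Fourier representation of $f$ to write it as a continuous mixture of ridge functions. Then I would decompose each ridge function into a superposition of step ridge functions, concluding that $f$ lies in the closed convex hull of a bounded family of step (hence, after a limiting step, sigmoidal) ridge functions. Next I would apply the Maurey--Jones--Barron sampling lemma to extract an $m$-term approximant with $L^2(\mu)$ error $O(C/\sqrt m)$ for an arbitrary probability measure $\mu$ supported on $B$. Finally I would replace each step by a sigmoid of slope $\tau$ and verify that $\tau\ge m^{1/2}\ln m$ leaves the extra error at the same order; the uniform-on-$B$ statement quoted above would then follow from the $L^2$ bound together with a covering/union-bound over $B$.

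\textbf{Fourier representation.} Because $f\in\Gamma_{C,B}$ extends to $\R^d$ with $\int\norm{w}_B\,F(dw)\le C$, where $\norm{w}_B=\sup_{\rvx\in B}\abs{w\cdot\rvx}$, Fourier inversion together with $f(0)=0$ gives, after taking real parts,
\begin{equation}
f(\rvx)=C\int g_w(\rvx)\,\Lambda(dw),\qquad g_w(\rvx):=\frac{\cos(w\cdot\rvx+\theta(w))-\cos\theta(w)}{\norm{w}_B}, \nonumber
\end{equation}
where $\theta(w)$ is the phase of the Fourier distribution and $\Lambda(dw):=\norm{w}_B\,F(dw)/C$ is a probability measure (extended by a point mass carrying the zero function if its total mass is $<1$). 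Each $g_w$ depends on $\rvx$ only through $t=(w\cdot\rvx)/\norm{w}_B\in[-1,1]$, and as a function of $t$ it is Lipschitz with total variation bounded by a universal constant.

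\textbf{Convex-hull membership and sampling.} Any bounded-variation function of $t\in[-1,1]$ is a signed integral of Heaviside steps $t\mapsto\mathbf{1}\{t>u\}$, $u\in[-1,1]$, with total mass at most its variation; substituting $t=(w\cdot\rvx)/\norm{w}_B$ turns each step into $\rvx\mapsto\mathbf{1}\{\rva^\top\rvx+b>0\}$ with $\norm{\rva}_B\le1$, $\abs{b}\le1$. Combining with the previous move shows $f\in\overline{\mathrm{conv}}(\cG_{\mathrm{step}})$, where $\cG_{\mathrm{step}}=\braces*{\pm2C\,\mathbf{1}\{\rva^\top\rvx+b>0\}:\norm{\rva}_B\le1,\abs{b}\le1}$, all of whose members have $L^2(\mu)$-norm at most $2C$. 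Writing $f=\int h\,d\pi(h)$ for a probability measure $\pi$ on $\cG_{\mathrm{step}}$, I would draw $h_1,\dots,h_m$ i.i.d.\ from $\pi$; then $\ex\norm{f-\tfrac1m\sum_i h_i}_{L^2(\mu)}^2=\tfrac1m\parens{\ex\norm{h}_{L^2(\mu)}^2-\norm{f}_{L^2(\mu)}^2}\le(2C)^2/m$, so some realization $\tfrac1m\sum_i h_i$ is an $m$-term combination of step ridge functions within $2C/\sqrt m$ of $f$ in $L^2(\mu)$.

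\textbf{From steps to sigmoids; the main obstacle.} The remaining move is to replace each $\mathbf{1}\{t>0\}$ by $\sigmoid(\tau t)$. Since $\abs{\sigmoid(\tau t)-\mathbf{1}\{t>0\}}\le e^{-\tau\abs{t}}$, the substitution is harmless away from an $O(1/\tau)$-window around each threshold, and the hard part will be bounding the cumulative error of all $m$ substitutions uniformly over $\rvx\in B$ while keeping $\tau$ only moderately large. The key is that the thresholds come from a density of bounded total variation rather than an adversarial cluster; a careful accounting — essentially Barron's — gives an extra error of order $\tfrac{\ln m}{\tau}$ up to lower-order terms, which is $O(1/\sqrt m)$ exactly when $\tau\ge m^{1/2}\ln m$. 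This quantitative step-to-sigmoid control, uniform over the $m$ ridge directions and over $\rvx\in B$, is the main obstacle; the Fourier and convex-hull steps also hide routine measure-theoretic bookkeeping (existence of the representing measure, the normalization of $\Lambda$, the extension of $f$ off $B$), but these present no real difficulty given $f\in\Gamma_{C,B}$. Assembling the pieces produces an approximant $\sum_{i=1}^m\gamma_i\sigmoid(\tau(\rva_i^\top\rvx+b_i))$ with $\abs{\gamma_i}\le2C$, $\norm{\rva_i}_B\le1$, $\abs{b_i}\le1$ — a sum of $m$ functions in $G_{\sigmoid,\tau}$ — whose error from $f$ is $O(1/\sqrt m)$ on $B$.
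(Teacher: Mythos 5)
The paper itself contains no proof of this statement: \cref{th:Barron} is imported verbatim as Theorem~3 of \citet{Barron93} and used as a black box, so there is no in-paper argument to compare yours against. Judged on its own terms, your sketch faithfully reproduces Barron's original proof strategy — the Fourier/ridge representation $f=C\int g_w\,\Lambda(dw)$ with the normalized measure $\Lambda(dw)=\norm{w}_B F(dw)/C$, the reduction of each ridge profile (a bounded-variation function of $t=(w\cdot\rvx)/\norm{w}_B\in[-1,1]$) to a signed mixture of Heaviside steps of total mass at most $2C$, the Maurey--Jones--Barron sampling step giving an equal-weights $m$-term combination with $L^2(\mu)$ error $2C/\sqrt m$, and the final replacement of steps by sigmoids of slope $\tau$. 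These are exactly the right moves, and your identification of the step-to-sigmoid substitution as the place where the hypothesis $\tau\ge m^{1/2}\ln m$ enters is accurate.

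The one genuine gap is the last sentence of your first paragraph: the uniform-on-$B$ bound does not follow from the $L^2(\mu)$ bound "together with a covering/union-bound over $B$." The Maurey argument controls only a mean-square error against a fixed probability measure $\mu$; an $L^2$ bound on a difference of functions gives no pointwise control without additional quantitative regularity of the error (the error is a difference of $f$ and a sum of near-discontinuous ridge functions, so it is not Lipschitz with a useful constant). Barron's Theorem~3 is itself an $L^2(\mu)$ statement, and the known supremum-norm analogues of the $n^{-1/2}$ rate for this class (Barron's own Section on uniform approximation, and Yukich--Stinchcombe--White) carry an extra factor of order $\sqrt{d\log m}$ and require a chaining/covering argument over the parameter space of ridge directions, not over $B$. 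So either the conclusion should be weakened to an $L^2(\mu)$ bound (which is how the cited theorem actually reads, and which suffices for the paper's downstream use if the inputs $\rvx$ are drawn from $\mu$), or the constant in the $O(1/\sqrt m)$ must be allowed to absorb a dimension- and $\log m$-dependent factor with a genuinely different (empirical-process) argument replacing your covering step.
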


To encode $N$ different functions, we use the index $j\in[N]$ and write $c_{ji},\rva_{ji}$ for the coefficients of the sigmoids that approximate them or 

\begin{equation}
    f_j(\rvx) = \sum_{i=1}^m c_{ji} \sigmoid( \rvx^T\rva_{ji}) \text{ for }j = 1,\hdots,N \nonumber
\end{equation}
We here note that the terms $\tau, b$ can be incorporated in the term $\rva_{ij}$ by adding an extra coefficient of $1$ in $\rvx$ and multiplying everything with $\tau$.

We are now able to present the lemma on approximating functions using transformer blocks, in a format that is consistent with the FLEQ design outlined in the previous section.

\begin{lemma}\label{lem:sigmoids}
Fix $\epsilon>0$ and consider an input of the form 
\begin{equation}
    \Input = \bracks*{ \begin{array}{cc|cc|cc}
        \ve & \zero&\rvx&\zero &\zero &\zero \\
        \zero  &\zero & \zero& \zero &\zero&\zero\\
        \zero &\zero &\emb_{2d+1} &\zero &\zero &\zero\\
        \emb_1&\emb_{2:d}&\zero&\emb_{d+2:2d} &\emb_{2d+1} &\emb_{2d+2:3d}\\
        0 & 0_{2:d} & 1 & 0_{d+2:2d} & 0 & 0_{2d+2:3d}
    \end{array}} . \nonumber
\end{equation}
where $d$ is chosen according to the \texttt{FLEQ} construction from the previous section and $N$ is the number of functions we encode . $\ve = \rve_j\in\R^N$ is an indicator vector signifying the function we wish to execute.  Then there  exists a   transformer-based function block with 3 layers, $m$ heads and dimensionality $r = 2\log(d) +d+1 = O(d)$ such that  \begin{equation}
    f(\Input) =  \bracks*{ \begin{array}{cc|cc|cc}
        *&*&*&*&\sum_{i=1}^m c_{ji}\sigmoid(\rvx^T\rva_{ji}) +\epsilon & *\\
        \zero  &\zero & \rvx& \zero &\zero&\zero\\
         \zero &\zero &\emb_{2d+1} &\zero &\zero &\zero\\
       \emb_1&\emb_{2:d}&\zero&\emb_{d+2:2d} &\emb_{2d+1} &\emb_{2d+2:3d}\\
        0 & 0_{2:d} & 1 & 0_{d+2:2d} & 0 & 0_{2d+2:3d}
    \end{array} }\nonumber
\end{equation}
where $*$ denoted inconsequential values that will be ignored downstream. This implies that arbitrary function $g \in \Gamma_{C,B}$ can be well approximated by attention layers. 
\end{lemma}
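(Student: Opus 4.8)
The plan is to realize the Barron expansion of \cref{th:Barron} one sigmoid per attention head, using the elementary fact that a two-term softmax is exactly a sigmoid: $\softmax([s,0])_1 = 1/(1+e^{-s}) = \sigmoid(s)$. Concretely, I would use the three layers as follows. Layer~1 loads the data vector $\rvx$ — augmented with a constant coordinate equal to $1$, which is already present in column $d+1$ of the input and will absorb the bias terms $b_{ji}$ of the expansion — into the output column $2d+1$ via a \texttt{write}-type operation (\cref{lem:read,lem:write}), using the pointer $\emb_{2d+1}$ stored alongside $\rvx$; it also installs a ``marker'' coordinate equal to $1$ on columns $1$ and $2d+1$ and $0$ elsewhere, and leaves a copy of $\rvx$ in the second coordinate block of column $d+1$ (a position-wise move). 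Layer~2 carries the $m$ heads that produce the sigmoids and accumulates $\sum_{i=1}^m c_{ji}\,\sigmoid(\rvx^\top\rva_{ji})$ into a dedicated scratch coordinate of column $2d+1$ through the residual connection. Layer~3 is bookkeeping: it relocates that scalar to the top of column $2d+1$ (the output position mandated by \cref{def:tf_func_block}) and clears the temporary coordinates, leaving everything else among the inconsequential ``$*$'' entries.

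The crucial observation that lets layer~2 use only $m$ heads, rather than $mN$, is that the function index $j$ enters \emph{linearly} through the one-hot vector $\ve = \rve_j$: setting $\rmA_i = [\rva_{1i}\mid\cdots\mid\rva_{Ni}]$ and $\rmC_i = [c_{1i}\mid\cdots\mid c_{Ni}]$ we have $\rvx^\top\rva_{ji} = \rvx^\top\rmA_i\ve$ and $c_{ji} = \rmC_i\ve$. For head $i$ I would choose $\query^i$ to read the augmented $\rvx$ from the query column $2d+1$ together with a large multiple $M$ of the marker bit, and $\key^i$ to read the $\ve$-block of each key column through $\rmA_i$ and to pass the marker bit through. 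Then the pre-softmax score of key column $p$ against query $2d+1$ equals $M\mu_p + \mathbbm{1}[p=1]\,\rvx^\top\rva_{ji}$, where $\mu_p$ is $1$ only on columns $\{1,2d+1\}$; the temperature-$1$ softmax concentrates the query's mass on this pair (the other $n-2$ columns carry score $0$ and contribute total mass $O(ne^{-M})$), and among the two the weight on column $1$ is exactly
\begin{align*}
\frac{e^{M+\rvx^\top\rva_{ji}}}{e^{M+\rvx^\top\rva_{ji}}+e^{M}} \;=\; \sigmoid(\rvx^\top\rva_{ji}),
\end{align*}
with the Barron temperature $\tau$ folded into $\rva_{ji}$ and the bias carried by the constant coordinate. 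Taking $\val^i$ to read the $\ve$-block through $\rmC_i$ into the accumulator coordinate gives value $c_{ji}$ on column $1$ and $0$ on column $2d+1$, so head $i$ adds $c_{ji}\,\sigmoid(\rvx^\top\rva_{ji}) + O(ne^{-M})$; summing over $i$ and using $|c_{ji}| \le 2C$, the total softmax error is at most $\epsilon$ once $M \ge \ln(Cmn/\epsilon)$.

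The bookkeeping of layers~1 and 3 is routine given \cref{lem:read,lem:write}, and a width $r = O(d)$ suffices to hold the $\rvx$-block, the one-hot $\ve$, a constant coordinate, the accumulator, and the $O(\log d)$-bit internal positional encodings of the block's $s = O(d)$ columns. The final ``moreover'' claim is then immediate from \cref{th:Barron}: for $g\in\Gamma_{C,B}$ with $g(0)=0$, choosing $m$ sigmoids $g_i(\rvx)=\gamma_i\,\sigmoid(\tau(\rva_i^\top\rvx+b_i))$ with $\tau \ge m^{1/2}\ln m$ gives $|g(\rvx)-\sum_{i=1}^m g_i(\rvx)| = O(m^{-1/2})$ on $B$; instantiating $c_{ji}=\gamma_i$, $\rva_{ji}=\tau[\rva_i;b_i]$ and appending the constant coordinate to $\rvx$, the block above outputs $g$ up to error $O(m^{-1/2})+\epsilon$.

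\textbf{Main obstacle.} The delicate step is the passage from ``softmax over all $n$ columns'' to ``a clean two-term sigmoid'': one must choose the marker bit and the constant $M$ so that precisely two columns survive the softmax and verify that no other column — in particular column $d+1$, which still carries $\rvx$, or the columns holding positional encodings — accidentally acquires a non-negligible score against the query. Equally, one must check that the $m$ heads and the $\ve$-selection coexist within a single width-$O(d)$ layer without the heads overwriting each other's coordinates; this is the part of the argument that needs the most careful coordinate bookkeeping.
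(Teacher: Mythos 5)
Your proposal is correct and follows essentially the same route as the paper's proof: one attention head per sigmoid term, with the function index $j$ selected linearly by placing $[\rva_{1i}|\cdots|\rva_{Ni}]$ and $[c_{1i}|\cdots|c_{Ni}]$ in the key and value matrices so that the one-hot $\ve$ picks out $\rva_{ji}$ and $c_{ji}$, a preparatory layer that repositions $\rvx$, and a final \texttt{write}-style layer that moves the accumulated sum to column $2d+1$. The only (immaterial) difference is how softmax is coerced into a sigmoid: the paper leaves a single hot key column against $3d-1$ zero-score columns and folds the resulting $\log(3d-1)$ shift into the bias of $\rva_{ji}$, whereas you isolate two hot columns with a large offset $M$ and bound the leakage from the remaining columns by $O(ne^{-M})$ — both yield the same result and error control.
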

\begin{remark}
Notice that in this case we don't use any extra scratchpad space and thus $s=3d$; however if this function block was to be used with another one that needs $s>3d$ scratchpad space, we would simply zero pad the input of \cref{lem:sigmoids} and ignore these columns. The same holds for the rest of the transformer-based function blocks and we will not mention it from now on. 
\end{remark}

In the expression $\sum_{i=1}^m c_{ji}\sigmoid(\rvx^T\rva_{ji})$, the number head is equal to the number of terms we need.%
We show in the appendix that we can actually encode these $m$ terms in the dimension of the transformer architecture with just one head (See \cref{cor:lem_5_alt}). %
The choice of which result to use can depend on the specific design and can affect both accuracy and efficiency of the implemented transformer network.

The proof of this Lemma is given in \cref{app:functions}.

\subsection{Matrix Transposition and Multiplication by Linearizing the Softmax}
We assume that a $d\times d$ matrix $\rmA$ in the input $\Input$ is represented by a sequence of length $d$, and each of these $d$ columns  has $d$ rows. %
While this representation has the advantage that it is well suited for the matrix multiplication operation (as we will see in the next sub-section), a vectorized form of the matrix is more suited to create transpose. This is how we implement the transpose; we first vectorize the matrix $\rmA$, then with a fixed permutation of the columns we create its vectorized version of a transpose.

\begin{lemma}\label{lem:transpose}
Fix $\epsilon >0$ and consider an input of the following form
\begin{equation}
    \Input = \left[\begin{array}{c|c|c|cc}
        \rmA  &\zero&\zero &\dots & \zero \\
 \zero&      \zero&\zero &\dots & \zero\\
         \emb_{1:d}&\emb_{1:d}&\emb_{1:d}& \dots  &\emb_{1:d}\\
         \rmP_1'&\rmP_{2}'&\rmP_{3}'& \dots  &\rmP_{d}'
    \end{array}\right]. \nonumber
\end{equation}
where $\rmA\in\R^{d\times d}$; then there exists transformer-based function block with 4 layers, 1 head and dimensionality $r = 2d+2\log d = O(d)$ that outputs the following matrix
\begin{equation}
    \Input = \left[\begin{array}{c|c|c|cc}
\rmA' &\rmA'&\rmA'&\dots &\rmA' \\
        \zero&\zero&\zero &\dots &\zero \\
         \emb_{1:d}& \emb_{1:d}& \emb_{1:d}& \dots &\emb_{1:d} \\
        \rmP_1'&\rmP_{2}'&\rmP_{3}'& \dots  &\rmP_{d}'
    \end{array}\right]. \nonumber
\end{equation}
where $\rmA' = \rmA^\top +\epsilon\rmM$, for some $\norm{\rmM}\leq 1$. The error $\epsilon$ depends on the choice of the temperature $\lambda$, as it is  a consequence of the \texttt{read/write} operations.
\end{lemma}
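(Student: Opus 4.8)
The plan is to implement the transpose in three conceptual stages: vectorize $\rmA$, apply a fixed permutation to the vectorized entries, and reshape back into the $d\times d$ layout with one copy of $\rmA^\top$ per column (so that it is available for downstream matrix multiplication). The only nontrivial primitives we need are the \texttt{read}/\texttt{write} operations of \cref{lem:read,lem:write}, which is why the error term $\epsilon\rmM$ shows up — everything else is exact.

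First I would describe the vectorization. The input stores $\rmA = [\va_1 \mid \cdots \mid \va_d]$ as $d$ columns, each column being a $d$-dimensional vector sitting in the first $d$ rows. Using the position encodings $\emb_{1:d}$ in the third row-block together with the auxiliary permutation encodings $\rmP'_1,\dots,\rmP'_d$ in the fourth row-block, I would issue a sequence of \texttt{read}-type operations that gather all $d^2$ scalars of $\rmA$ into a single column (or a fixed small set of columns), i.e. produce the vector $\myvec(\rmA)$. Concretely this is $O(\log d)$ rounds of the attention-based copy routine from \cref{ss:copy}, each round doubling the number of entries consolidated; the $\rmP'_j$ encodings are exactly the bookkeeping that tells the attention layer which source scalar lands in which target slot. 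Since the position-encoding vectors are $\pm 1$ binary of length $\log d$, the softmax can be made to select the correct column with error at most $\epsilon$ by choosing temperature $\lambda \gtrsim \log(d^3/\epsilon)$, precisely as in the proof of \cref{lem:read}.

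Next, the permutation. Transposition is the map on indices $(i,j)\mapsto(j,i)$, which under the chosen vectorization is a fixed permutation $\pi$ of $\{1,\dots,d^2\}$ depending only on $d$, not on $\rmA$. I would realize $\pi$ by another fixed round of attention whose key/query matrices are hard-wired (independent of the input) so that column/slot $k$ attends to slot $\pi^{-1}(k)$; again this uses the $\rmP'_j$ encodings as addresses and incurs only the softmax error. Finally I would invert the vectorization — a \texttt{write}-type scatter that distributes $\myvec(\rmA^\top)$ back into the first $d$ rows of $d$ columns — and, because the statement wants $\rmA'$ replicated in every one of the $d$ column-blocks, broadcast that result across all columns using one more attention layer with a constant (all-ones-in-the-relevant-block) attention pattern. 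Accounting: the reshape-in, permute, and reshape-out together with the broadcast fit in $4$ transformer layers with $1$ head and width $r = 2d + 2\log d$, since at any time we only need the $d$ data rows, the $\log d$ position rows, and the $\log d$ permutation-encoding rows (plus the indicator bit, absorbed into the $O(d)$). The feed-forward layers are used only to clean up residual terms, exactly as in \cref{lem:read}.

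The main obstacle — and the only place real care is needed — is the bookkeeping of the permutation encodings $\rmP'_1,\dots,\rmP'_d$: one must check that a \emph{single} fixed choice of these auxiliary encodings, together with fixed key/query/value matrices, simultaneously (i) routes the vectorization, (ii) routes the transpose permutation, and (iii) routes the de-vectorization, all while the position encodings $\emb_{1:d}$ are being reused, so that attention is never ``confused'' between the data addresses and the permutation addresses. This is handled by keeping the two families of encodings in disjoint row-blocks and having each layer's key/query matrix read from exactly one block; the Cauchy–Schwarz separation $\emb_i^\top\emb_j \le \log d - 1 < \log d = \emb_i^\top\emb_i$ (and likewise for the $\rmP'$) guarantees the softmax concentrates on the intended slot. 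Once this is set up, the error bound $\norm{\rmM}\le 1$ with $\rmA' = \rmA^\top + \epsilon\rmM$ follows by the same triangle-inequality accounting over the constant number of \texttt{read}/\texttt{write} calls as in the earlier lemmas, with $\epsilon$ controlled by the temperature $\lambda$.
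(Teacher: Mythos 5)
Your high-level decomposition (vectorize, apply the fixed index permutation $(i,j)\mapsto(j,i)$, matricize and broadcast) is the same as the paper's, and your treatment of the permutation step and of the softmax error via the temperature is fine. But your realization of the vectorization step has a genuine gap that breaks the layer and width budget. You propose to ``gather all $d^2$ scalars of $\rmA$ into a single column (or a fixed small set of columns)'' using $O(\log d)$ rounds of the copy routine, each round doubling the number of consolidated entries. This fails on two counts. First, depth: the lemma allows exactly $4$ layers, and each round of the \texttt{read} routine costs a layer, so an $O(\log d)$-round consolidation already exceeds the budget. Second, width: a column of the input has only $r = 2d+2\log d = O(d)$ rows, so a single column (or any $O(1)$ columns) simply cannot hold the $d^2$ entries of $\myvec(\rmA)$.

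The paper avoids both problems by keeping the vectorized matrix \emph{distributed} across the $d^2$ columns of the input rather than consolidating it: one attention layer broadcasts $\rmA$ into every one of the $d$ column-blocks, and a single feed-forward layer then uses the block encodings $\rmP_i'$ to select, in the column addressed by the pair $(\emb_j,\rmP_i')$, only the scalar $A_{(i,j)}$ — so each column stores exactly one entry, indexed by a two-dimensional address. The transpose permutation is then realized in one further attention layer simply by swapping the roles of the two encoding row-blocks in the key and query matrices (key reads $(\emb,\rmP')$, query reads $(\rmP',\emb)$), and one last layer matricizes and replicates $\rmA^\top$ across all blocks. This is the constant-depth mechanism your proposal is missing; with it, your error accounting and the rest of your argument go through.
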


In order for matrix multiplication to fit in our unified template, we need to show for example for the result of $\rmA^\top\rmB$ , where $\rmA\in\R^{k\times m}$ and $\rmB\in\R^{k\times n}$ with $k,m,n <d$ we can achieve the following:
\begin{equation}
   \bracks*{  \begin{array}{cc|cc|cc}
        \rmA & \zero &\rmB &\zero &\zero&\zero\\
        \zero& \zero &\zero&\zero&\zero &\zero
    \end{array}}\xrightarrow[]{}\bracks*{ \begin{array}{cc|cc|cc}
        *& *&*&* &\rmA^\top\rmB &*\\
        \zero& \zero &\zero&\zero&\zero &\zero 
    \end{array} }\nonumber
\end{equation}
The idea we leverage is the linearization of the softmax, \ie for a column vector $\vz = [\vx\;, C]$ for some large constant $C$ we have that $$\softmax(\vz)  = [\vx + \eps \;, *]$$
The error $\eps$ is controlled by the constant $C$.
\begin{lemma}\label{lem:matrixmul}
Let $\rmA\in \R^{k\times m}$ and $\rmB \in \R^{k\times n}$; then for any $\epsilon >0$ there exists a transformer-based function block with  2 layers, 1 head and dimensionality $r = O(d)$ that  outputs the multiplication $\rmA^\top\rmB +\eps\rmM$, for some $\norm{\rmM}\leq 1$ .
\end{lemma}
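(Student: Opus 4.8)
The plan is to realize the product $\rmA^\top\rmB$ as the output of a single attention head whose softmax is forced to behave linearly. Recall that for a column $\vz=[\vx;\,C]$ with $C$ a large positive constant, $\softmax(\vz)=[\tfrac{1}{\,n-1+e^{C}\,}e^{\vx};\,*]$, and after rescaling by a fixed constant one recovers $\vx$ up to an additive $O(e^{-C})$ error; this is the ``linearized softmax'' trick already used in \cref{lem:transpose}. First I would set up the input in the template of \cref{def:tf_func_block}: $\rmA\in\R^{k\times m}$ occupying the first $d$ columns (zero-padded in the unused $d-m$ columns and $r-k$ rows), $\rmB\in\R^{k\times n}$ occupying columns $d+1:2d$, and the output slot columns $2d+1:3d$ initially zero. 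The key/query matrices are chosen so that $\rmX^\top\key^\top\query\rmX$, restricted to the relevant block of rows and columns, equals $\rmA^\top\rmB$ (this is just $\query\rmX$ selecting the $\rmB$-columns in their $k$ coordinates and $\key\rmX$ selecting the $\rmA$-columns), while an auxiliary constant coordinate (read off from the all-ones indicator row or a dedicated row of the positional encodings) contributes the large constant $C$ in each column so that the softmax denominator is dominated by $e^{C}$ and is (approximately) column-independent.

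The second step is to push this linearized attention score into the output columns. With the softmax acting as an (approximately) identity scaling, the head's contribution $\val\rmX\,\softmax(\rmX^\top\key^\top\query\rmX)$ becomes, after absorbing the constant $1/(n-1+e^{C})$ into $\val$, a matrix whose columns $2d+1:2d+n$ carry $\rmA^\top\rmB$ in their first $m$ coordinates; $\val$ is set to route this into the first $k$-through-$m$ rows of the correct columns and to zero elsewhere, and the residual connection leaves the inputs $\rmA,\rmB$ untouched in their slots (the $*$'s in the statement). A second transformer layer is available to clean up: either a feed-forward ReLU pass to zero out any spillover in the scratchpad coordinates, or simply the identity (attention value matrix set to $\zero$, feed-forward set to the identity via $\relu(x)-\relu(-x)$), which is why the count is $2$ layers and $1$ head. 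The dimensionality $r=O(d)$ follows since we need at most $d$ data rows plus the $O(\log n)$ positional-encoding rows, and $d$ dominates in the regime of interest (alternatively we state it as $O(d+\log n)$, consistent with the other function blocks).

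The main obstacle is controlling the error term and making it genuinely $\epsilon\rmM$ with $\|\rmM\|\le 1$ for arbitrary prescribed $\epsilon$. Two distinct sources of error appear: (i) the softmax denominator is not exactly $e^{C}$ but $e^{C}+\sum_{k}e^{(\rmA^\top\rmB)_{k}}$, so different columns get slightly different normalizations; and (ii) the numerator is $e^{(\rmA^\top\rmB)_{ij}}$ rather than $1+(\rmA^\top\rmB)_{ij}$, i.e.\ the linearization of $\exp$ around $0$ incurs a quadratic error. Both are handled by scaling: replace $\rmA^\top\rmB$ inside the exponent by $\delta\,\rmA^\top\rmB$ for a small $\delta>0$ (folded into $\key$ or $\query$), so that $e^{\delta t}=1+\delta t+O(\delta^2 t^2)$, then multiply the extracted result back by $1/\delta$ in $\val$; the net error is $O(\delta\cdot\max_{ij}(\rmA^\top\rmB)_{ij}^2)$ plus $O(e^{-C})$ terms, and choosing $\delta$ small enough and $C$ (equivalently the temperature $\lambda$) large enough drives this below any target $\epsilon$, using that $\rmA,\rmB$ lie in a bounded domain so the entries of $\rmA^\top\rmB$ are a priori bounded. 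I would record this bookkeeping as a short lemma-internal computation rather than belaboring it, and note, as the statement does, that $\epsilon$ is a monotone function of the temperature/scaling choice. The full details are deferred to the appendix, consistent with the presentation of \cref{lem:transpose} and \cref{lem:sigmoids}.
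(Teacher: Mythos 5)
Your proposal is correct and follows essentially the same route as the paper's proof: linearize the softmax by injecting a large constant $C$ into each column so the denominator is dominated by $e^{C}$, scale the attention scores by a small constant (your $\delta$, the paper's $c$) so that $e^{\delta t}\approx 1+\delta t$, undo the scaling in the value matrix, and control both error sources (column-dependent normalization and the quadratic Taylor remainder) by choosing $\delta$ small and $C$ large, exactly as in the paper's error analysis. The only cosmetic difference is that the paper forms the full Gram matrix $\rmM^\top\rmM$ (containing all four products) and then extracts the $\rmA^\top\rmB$ block via the feed-forward layers and a \texttt{read}/\texttt{write} step, whereas you select the $\rmA$- and $\rmB$-columns directly in the key and query; this is bookkeeping, not a different argument.
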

The implementation of $\rmB^\top\rmA$, $\rmA^\top\rmA$ and $\rmB^\top\rmB$ are simple corollaries of the lemma presented above and we will freely use them in the subsequent sections.
In \cref{app:functions}, we provide the exact form of the input $\Input$ for implementing matrix transposition/multiplication,
as well as the proof of the corresponding Lemmas.

\subsection{Advantage of attention over fully-connected networks} 
It is possible to implement the functions and overall lexicographic functionality presented in previous sections using fully connected networks, as they are also universal function approximators. However, it is easy to demonstrate a depth separation between attention-based networks and fully connected networks. For example, to compute simple functions like polynomials of $x$ (\eg $x^2$), a ReLU network with a depth proportional to $\log(1/\epsilon)$ is required, where $\epsilon$ is the quality of approximation, \eg as showed in \citep{perekrestenko2018universal}. In contrast, we have shown how $x^2$ can be implemented in essentially $2$ layers. This simple depth separation argument highlights the constant vs scaling depth required for several functionalities in fully connected networks versus attention-based networks.
It is important to note that although these constructions are easy to demonstrate their existence, constructing them is not straightforward. In this work, we provide hardcoded attention layers that precisely do that, making it easier to implement these functionalities in practice.

\section{A Basic Calculator}\label{sec:calc}

We show that 
the FLEQ transformer introduced in Section~\ref{sec:fleq}, can be used to build a simple calculator. This transformer consists of  six transformer-based function blocks that implement addition, substraction, multiplication, percentage, division and square root. The formal statement is written as below.
\begin{theorem}
There exists a transformer with $12$ layers, $m$ heads and dimensionality $O(\log n)$ that uses the Unified Attention Based Computer framework in Section~\ref{sec:fleq} to implement a calculator which can perform addition, subtraction, multiplication, and computing the inverse, square root 
and percentage. For computing the inverse and square root, 
the operand needs to be in the range $[ -e^{O(m)}, -\Tilde{\Omega}(\frac{1}{\sqrt{m}})]\cup [\Tilde{\Omega}(\frac{1}{\sqrt{m}}), e^{O(m)}]$ and $[0, O(m^2)]$ respectively, and the returned output is correct up to an error of $O(1/\sqrt{m})$ and $O(1/m)$ respectively. Here, $n$ is the number of operations to be performed.
\end{theorem}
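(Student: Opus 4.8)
The plan is to realize each of the six arithmetic operations as a transformer-based function block in the sense of \cref{def:tf_func_block} and then invoke \cref{thm:unified}. That theorem turns $M$ function blocks with $l_i$ layers and $h_i$ heads into a looped transformer with $9+\max_i l_i$ layers, $\sum_i h_i$ heads, and dimensionality $O(Md+\log n)$; here every operand is a scalar, so the block-width constant $d$ is $O(1)$, and $M$ is a constant at most $6$, whence $O(Md+\log n)=O(\log n)$, while the program length is $\Theta(n)$ in the number $n$ of operations. It therefore suffices to build each block within $3$ layers --- so that $\max_i l_i=3$ and the layer count is $9+3=12$ --- and to bound the total head count by $O(m)$.

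The three ``affine'' operations are easy. Addition $f(a,b)=a+b$ and subtraction $f(a,b)=a-b$ are linear in the operands, hence realized in at most two layers with $O(1)$ heads: one attention layer gathers the two operand columns into the output columns and one feedforward layer of \cref{eq:relulayer} fixes the signs and scaling. Multiplication $f(a,b)=ab$ is \cref{lem:matrixmul} applied to the $1\times1$ ``matrices'' $a$ and $b$ --- a $2$-layer, $1$-head block whose only error is the softmax-linearization error, which can be pushed below any prescribed level by enlarging the linearization constant; this is why the statement records no error term for $\times$. Percentage is then the composition of this multiplication with the fixed scaling by $\tfrac{1}{100}$ (or a bare affine block, if unary), still within the layer budget.

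The inverse and square root are where the domain and error restrictions enter. I would implement $x\mapsto 1/x$ --- and, via the indicator-selected multi-function capability of \cref{lem:sigmoids} (the vector $\ve=\rve_j$), also $x\mapsto\sqrt{x}$ --- as a single $3$-layer, $m$-head block that outputs $\sum_{i=1}^m c_{ji}\sigmoid(\rvx^\top\rva_{ji})$, i.e.\ approximates an arbitrary target in a Barron class $\Gamma_{C,B}$. The advertised ranges and errors are then bookkeeping on \cref{th:Barron}: after an affine change of variables carrying the relevant interval into the normalized domain, one estimates the Fourier-magnitude constant $C$ of $1/x$ on $[\tilde\Omega(1/\sqrt m),e^{O(m)}]$ (and its reflection) and of $\sqrt{x}$ on $[0,O(m^2)]$, and feeds $C$ together with $\tau\ge\sqrt m\ln m$ into the $O(C/\sqrt m)$ rate; the endpoints $e^{O(m)}$, $\tilde\Omega(1/\sqrt m)$, $O(m^2)$ are precisely the thresholds past which $C$ can no longer be kept small (the inverse diverges near $0$, and absorbing a wide range into the normalized domain inflates the error), yielding $O(1/\sqrt m)$ for the inverse and $O(1/m)$ for the square root. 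As an alternative, the square root can instead be produced by a short \texttt{FLEQ} subroutine --- a Babylonian/Newton iteration assembled from the division and multiplication blocks --- in which case the same range and error follow from the division block's accuracy and the iteration's convergence.

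Assembling: $\max_i l_i=3$ is attained by the sigmoid block, so \cref{thm:unified} gives $9+3=12$ layers; the head count is dominated by that block, hence $O(m)$ (indeed $m$ once the constant-head affine blocks are absorbed); and with a constant number of constant-width blocks the dimensionality is $O(Md+\log n)=O(\log n)$ in the number $n$ of operations. The main obstacle is exactly the Barron-constant analysis of $1/x$ and $\sqrt{x}$ on rescaled intervals and the resulting trade-off between the admissible domain and the approximation rate --- this is what pins down the stated ranges and the $O(1/\sqrt m)$ versus $O(1/m)$ errors. A secondary, routine check is that each block fits the three-layer budget and the format of \cref{def:tf_func_block}: the sigmoid and multiplication blocks are complete $3$- and $2$-layer constructions by \cref{lem:sigmoids} and \cref{lem:matrixmul}, the affine blocks need at most two, and all of them already deposit their output in the columns $2d+1,\dots,3d$, so the \texttt{read}/\texttt{write} routing between memory and the function block is handled entirely by the $9$-layer \texttt{FLEQ} wrapper.
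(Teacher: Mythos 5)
Your overall skeleton matches the paper's: realize the six operations as transformer-based function blocks in the sense of \cref{def:tf_func_block}, note that the deepest block has $3$ layers, and let \cref{thm:unified} supply the $9+3=12$ layer count, the $O(m)$ heads, and the $O(\log n)$ dimensionality. The affine blocks, the $1\times 1$ instance of \cref{lem:matrixmul} for multiplication, and percentage-as-scaling are all as in the paper (the paper budgets $3$ layers for addition/subtraction rather than your $2$, which is immaterial).

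The genuine gap is in how you obtain the ranges and errors for inversion and square root. You propose to get them by ``bookkeeping on \cref{th:Barron}'': estimate the Barron constant $C$ of $1/x$ on $[\tilde\Omega(1/\sqrt m),e^{O(m)}]$ and feed it into the $O(C/\sqrt m)$ rate. This does not work. The Barron/Fourier constant of $1/x$ on an interval with left endpoint $\delta$ scales at least like the variation of its derivative, i.e.\ $\Theta(1/\delta^{2})$; with $\delta=\tilde\Omega(1/\sqrt m)$ this is $\Theta(m)$, and the resulting bound $O(C/\sqrt m)=O(\sqrt m)$ diverges rather than giving $O(1/\sqrt m)$. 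The same difficulty appears for $\sqrt{x}$ at $x=0$, where the derivative is unbounded so the function is not even in a Barron class with small constant. The paper therefore does \emph{not} route these two functions through \cref{th:Barron}; it proves two dedicated one-dimensional approximation lemmas by explicit interval partitions with threshold units: for $1/x$, geometrically growing intervals $[a_i,a_i(1+\epsilon a_i))$ so that the value of $1/x$ varies by at most $\epsilon$ on each, giving $O\bigl(\tfrac{\log(1/(\epsilon\delta))}{\epsilon\delta}+\log C\bigr)$ pieces; for $\sqrt{x}$, intervals $[i^2\epsilon^2,(i+1)^2\epsilon^2)$ giving $O(\sqrt{C}/\epsilon)$ pieces. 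Solving these counts for $m$ units is exactly what pins down the stated domains $[\tilde\Omega(1/\sqrt m),e^{O(m)}]$ and $[0,O(m^2)]$ and the errors $O(1/\sqrt m)$ and $O(1/m)$. Without such a bespoke, monotonicity-exploiting construction (or an equivalent one), the claimed range--error trade-off is not established by your argument.
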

\begin{remark}
In the proof of this theorem, we use \cref{lem:sigmoids} to approximate the square root and the inversion function. That lemma provides error guarantees in terms of the number of heads $m$. We prove \cref{cor:lem_5_alt} in the appendix which provides equivalent error guarantees, but where the error decreases with the dimension $d$ of the transformer. Depending on the design choices of the transformer, either of the results can be used, and the calculator's error guarantee will also change accordingly.
\end{remark}

We show how one can implement a calculator in our FLEQ framework in \cref{alg:calculator}.
\begin{algorithm}[H]{\small
        \caption{A sample program for executing a basic calculator functionality. The following algorithm performs $\frac{\sqrt{1/(((a+b)-c)\cdot d)}}{100}$}\label{alg:calculator}
        \begin{algorithmic}[1]
            \Require $\texttt{mem}[p]=a, \texttt{mem}[q]=b, \texttt{mem}[r]=c, \texttt{mem}[s]=d$. \Comment{The location of the inputs.}
            \State $ \texttt{mem}[t] = f_{\text{add}}(\texttt{mem}[p],\texttt{mem}[q])$ \Comment{$\texttt{mem}[t] =a+b$.}
            \State $ \texttt{mem}[t] = f_{\text{sub}}(\texttt{mem}[t],\texttt{mem}[r])$ \Comment{$\texttt{mem}[t] =(a+b)-c$.}
            \State $ \texttt{mem}[t] = f_{\text{mul}}(\texttt{mem}[t],\texttt{mem}[s])$ \Comment{$\texttt{mem}[t] =((a+b)-c)*d$.}
            \State $ \texttt{mem}[t] = f_{\text{inv}}(\texttt{mem}[t])$ \Comment{$\texttt{mem}[t] =1/((a+b)-c)*d$.}
            \State $ \texttt{mem}[t] = f_{\text{sqrt}}(\texttt{mem}[t])$ \Comment{$\texttt{mem}[t] =\sqrt{1/((a+b)-c)*d}$.}
            \State $ \texttt{mem}[t] = f_{\text{perc}}(\texttt{mem}[t])$ \Comment{$\texttt{mem}[t] =\frac{\sqrt{1/((a+b)-c)*d}}{100}$.}
        \end{algorithmic}}
    \end{algorithm}

Looking at the algorithm, it is clear that for proving the theorem above, it is sufficient to implement the 6 functions (addition, subtraction, multiplication, inversion, square root and percentage) using the transformer-based function blocks defined in \cref{def:tf_func_block}. 
We start with two lemmas,  which can be proved by constructing transformers that add and subtract in a similar way to the OISC transformer constructed in \cref{sec:oisc}.

\begin{lemma}[\texttt{addition}]
    There exists a transformer-based function block with 3 layers, 1 head and dimensionality $O(1)$ which can implement $f(a,b) = a+b$.\label{cor:add}
\end{lemma}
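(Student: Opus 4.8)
The plan is to mimic the arithmetic core of the \texttt{SUBLEQ} transformer of \cref{sec:oisc}, adapted to the local input/output layout of a transformer-based function block (\cref{def:tf_func_block}). Since $a$ and $b$ are scalars we take $d_h = d_w = 1$, so $a = \texttt{mem}[a]$ sits in column $1$, $b = \texttt{mem}[b]$ in column $d+1$, and the result $a+b$ must be deposited in column $2d+1$; the remaining $s-3d$ columns of the block serve as a private scratchpad, and --- as everywhere in the paper --- each column carries a binary positional encoding of length $O(\log s) = O(1)$ since $s$ is a fixed constant for this block. The block is hardwired with the three pointer vectors $\emb_1$, $\emb_{d+1}$, $\emb_{2d+1}$ sitting in designated rows of one scratchpad column.

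First I would use one transformer layer to apply the \texttt{read} operation of \cref{lem:read} with the pointer $\emb_1$, copying $a$ into a value slot (a fixed row block) of the scratchpad column. Second, I would use a second transformer layer to \texttt{read} $b$ via the pointer $\emb_{d+1}$ into a \emph{different} value slot of the \emph{same} scratchpad column, and then --- in the feed-forward ReLU sublayer of that layer, which acts column-wise --- compute $a+b$ and place it in a third slot; since addition is linear this is realized by a width-$O(1)$ one-hidden-layer network, \eg via $x \mapsto \relu(x)-\relu(-x)$ applied to the sum of the two slots, while the residual connection leaves the other rows intact. Third, I would use one more transformer layer to \texttt{write} (\cref{lem:write}) the value $a+b$ from the scratchpad back to column $2d+1$ using the pointer $\emb_{2d+1}$. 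This uses $3$ layers, and since the two reads and the write sit in distinct layers a single attention head is reused throughout, giving the claimed $3$ layers, $1$ head, and $O(1)$ dimensionality. The corresponding weight matrices are exactly those of \cref{lem:read,lem:write} with the pointer/value rows relabelled, plus a trivial linear feed-forward block.

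The one point that needs care is the $\epsilon$-error that \texttt{read}/\texttt{write} incur because attention uses softmax rather than hardmax: unlike in \texttt{SUBLEQ}, the operands are arbitrary reals, so we cannot clean up the result by rounding to a bit pattern. However, each of the pointers $\emb_1$, $\emb_{d+1}$, $\emb_{2d+1}$ matches a \emph{unique} column, so the corresponding softmax row converges to that column's indicator $\ve$ as the temperature $\lambda \to \infty$; consequently the perturbation to the output is $O(\epsilon)$ with $\epsilon$ as in \cref{lem:read}, and can be pushed below any prescribed tolerance by taking $\lambda \ge \log(s^3/\epsilon)$ (it is exactly $0$ in the hardmax limit). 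This is the same convention already in force for the other transformer-based function blocks (\cf \cref{lem:transpose,lem:matrixmul}). I do not expect a genuine obstacle beyond bookkeeping: the main things to verify are that the \texttt{read}/\texttt{write} constructions, stated for the global sequence format, transfer verbatim to the function-block format (they depend only on the positional-encoding structure, which is identical) and that the FLEQ zero-padding leaves the unused rows and columns untouched, so the block composes cleanly inside \cref{thm:unified}.
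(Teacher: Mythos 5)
Your proposal is correct and follows essentially the same route as the paper's proof: both colocate $a$ and $b$ in a single column via the \texttt{read} machinery of \cref{lem:read}, perform the addition in a column-wise ReLU feed-forward sublayer, and deposit the result in column $2d+1$ via \cref{lem:write}, for a total of $3$ layers, $1$ head, and $O(1)$ width. The only difference is bookkeeping (the paper duplicates and masks row blocks before accumulating into column $1$, whereas you read both operands into a designated scratchpad column), and your remarks on the softmax-induced $\epsilon$-error match the convention used for the other function blocks.
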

\begin{proof}
Consider the input in the form of \cref{eqn:fleq_instruction_partition}
\begin{equation}
    \Input = \bracks*{ \begin{array}{cc|cc|cc}
    a&\zero &b &\zero &0 &\zero \\
    \zero &\zero &\zero &\zero &\zero &\zero\\
    \emb_{2d+1} &\zero &\zero &\zero &\zero &\zero\\
    \zero  &\emb_{2:d}&\emb_{d+1} &\emb_{d+2:2d} &\emb_{2d+1} &\emb_{2d+2:3d}\\
    1&\zero &0 &\zero &0 &\zero
    \end{array}}
\end{equation}
We can perform the following transformation
\begin{align}
   \bracks*{ \begin{array}{cc|cc|cc}
    a&\zero &b &\zero &0 &\zero \\
    \zero &\zero &\zero &\zero &\zero &\zero
    \end{array}} &\xrightarrow[]{} \bracks*{ \begin{array}{cc|cc|cc}
    a&\zero &b &\zero &0 &\zero \\
    a&\zero &b &\zero &0 &\zero\\
    \zero &\zero &\zero &\zero &\zero &\zero
    \end{array}}\\
    &\xrightarrow[]{}\bracks*{ \begin{array}{cc|cc|cc}
    a&\zero &0 &\zero &0 &\zero \\
    0&\zero &b &\zero &0 &\zero\\
    \zero &\zero &\zero &\zero &\zero &\zero
    \end{array}}\\
    &\xrightarrow[]{}\bracks*{ \begin{array}{cc|cc|cc}
    a+b&\zero &0 &\zero &0 &\zero \\
    0&\zero &0 &\zero &0 &\zero\\
    \zero &\zero &\zero &\zero &\zero &\zero
    \end{array}}\\
    &\xrightarrow[]{}\bracks*{ \begin{array}{cc|cc|cc}
    a+b&\zero &0 &\zero &a+b &\zero \\
    0&\zero &b &\zero &0 &\zero\\
    \zero &\zero &\zero &\zero &\zero &\zero
    \end{array}}
\end{align}
The first and second step are implemented with one feed-forward layer each. The third step with the \cref{ss:copy}. We have ignored the last three  rows since we don't change them and we only use them for the last step. 
\end{proof}

\begin{lemma}[\texttt{subtraction}]
    There exists a transformer-based function block with 3 layers, 1 head and dimensionality $O(1)$ which can implement $f(a,b) = a-b$.\label{cor:sub}
\end{lemma}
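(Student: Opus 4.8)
The plan is to reuse the construction of \cref{cor:add} almost verbatim, replacing the binary addition of the two operands by the 2's-complement subtraction that was already developed for the \texttt{SUBLEQ} transformer in \cref{sec:oisc}. Recall that the operands $a$ and $b$ are stored as $N$-bit $\pm1$ vectors with $N=O(1)$, so the claimed ``dimensionality $O(1)$'' is exactly the statement that the block's width is independent of $n$.

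First I would place the input in the unified format of \eqref{eqn:fleq_instruction_partition}, with (the bit-vector for) $a$ in the first column block, $b$ in the second, and the output slot reserved in the third, exactly as in the proof of \cref{cor:add}. Using the first feed-forward layer, duplicate $a$ and $b$ into a free row block, and use the attention layer together with the \texttt{read} primitive of \cref{lem:read} to bring both operands into a common column so that a column-wise ReLU network can act on the pair $(a,b)$ --- this is the identical bookkeeping already used for addition.

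The only new ingredient is negation. In the same feed-forward layer that performs the alignment, apply the bit-flip neuron $b_j \mapsto 2\,\relu(-b_j)-1$ coordinate-wise to the bits of $b$, producing the bitwise complement $\overline b$. Since $a-b = a + \overline b + 1$ in 2's-complement arithmetic, the difference can be produced by a \emph{single} run of the binary-addition feed-forward network of \cref{app:increase} with the carry-in initialized to $1$ rather than $0$; thus the ``$+1$'' costs no extra depth and the negation is absorbed into a layer that was already present. A final application of the \texttt{write} primitive (\cref{lem:write}) copies the resulting $N$-bit vector into the output columns $2d+1:2d+d_w$. Counting layers, this matches the three-layer, one-head, $O(1)$-width budget of \cref{cor:add}.

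The main obstacle is the layer budget: one must check that the bit-flip genuinely folds into an existing feed-forward layer (it does, since a feed-forward layer can compute several independent ReLU combinations in parallel on disjoint coordinates) and that merging the ``$+1$'' into the carry chain of the ripple-carry adder does not add depth (it does not --- the increment only changes the carry-in, not the length of the carry computation). A secondary, routine point is the $O(\epsilon)$ read/write noise, which, exactly as in Step~6 of the proof of \cref{lem:OISC}, is eliminated by one rounding ReLU that can be amortized into the same layers. Apart from these bookkeeping checks, the proof is a direct transcription of \cref{cor:add}.
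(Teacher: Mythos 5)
There is a genuine mismatch between your construction and what this lemma needs. The paper's proof is literally ``in the exact same way as the previous one,'' i.e.\ \cref{cor:add}: there the operands $a$ and $b$ are \emph{real scalars} occupying single coordinates of the function-block input \eqref{eqn:fleq_instruction_partition}, and the sum is produced essentially for free by the residual connection when the attention \texttt{read} of \cref{ss:copy} copies $b$ into the cell already holding $a$; subtraction only requires negating $b$ first, which costs one ReLU (e.g.\ $-b=\relu(-b)-\relu(b)$) or a sign flip in the value matrix, and otherwise reuses the addition block verbatim. You instead import the $\pm1$ binary, 2's-complement machinery from the OISC construction (\cref{sec:oisc}, \cref{app:increase}). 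That block computes integer subtraction only, and the calculator of \cref{sec:calc} cannot use it: the other function blocks it must compose with ($f_{\text{mul}}$ via softmax linearization, $f_{\text{inv}}$, $f_{\text{sqrt}}$, $f_{\text{perc}}$ via sums of sigmoids) consume and emit real scalars stored directly as entries, and the intermediate values in \cref{alg:calculator} (square roots, reciprocals, percentages) are not integers, so a 2's-complement representation of them does not exist. So as a proof of \emph{this} lemma in its intended context, the approach fails at the representation level, not at a technical step.

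Your internal bookkeeping is otherwise plausible: since the adder of \cref{app:increase} computes each output bit from pre-activations that are affine in the input bits, the bit-flip $b_j\mapsto 2\relu(-b_j)-1$ and the carry-in ``$+1$'' can indeed be absorbed without extra depth, so for genuinely integer operands (as in Step 3 of \cref{lem:OISC}) your block would work. But the repair here is much shorter than the detour: keep the three-step pipeline of \cref{cor:add} exactly, and insert a negation of $b$ into the feed-forward layer that already isolates $b$ into its own row before the attention copy.
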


This lemma can be proved in the exact same way as the previous one. In addition, we can use the theory presented in \cref{lem:matrixmul} to get the following corollaries:
\begin{corollary}[\texttt{multiplication}]
There exists a transformer-based function block with 2 layers, 1 head and dimensionality $O(d)$ which can implement $f(a,b) = ab$.\label{cor:mul}
\end{corollary}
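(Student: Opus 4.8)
The plan is to obtain this corollary as the scalar special case of \cref{lem:matrixmul}. A scalar is a $1\times 1$ matrix, so setting $k=m=n=1$ in \cref{lem:matrixmul} gives $\rmA=[a]$, $\rmB=[b]$, and $\rmA^\top\rmB=[ab]$. Hence the transformer-based function block guaranteed by that lemma — with $2$ layers, $1$ head, and dimensionality $r=O(d)$ — already computes $f(a,b)=ab$, up to the additive term $\eps\rmM$ with $\norm{\rmM}\leq 1$.

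Concretely, I would instantiate the input in the unified template form \eqref{eqn:fleq_instruction_partition} with $d_h=d_w=1$: the scalar $a$ sits in the top-left entry of the first $d$-column block, $b$ in the top-left entry of the second $d$-column block, all remaining coordinates zero-padded, and the positional encodings $\emb_{1:d},\emb_{d+1:2d},\emb_{2d+1:3d}$ arranged exactly as required by \cref{lem:matrixmul}. Passing this through the $2$-layer block writes $ab+\eps$ into the top-left entry of the third $d$-column block, which is precisely the output slot prescribed by \cref{def:tf_func_block}, so the block conforms to the FLEQ function-block interface with no extra scratchpad columns.

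The only point that needs a word of care is the error $\eps$ arising from the softmax linearization used inside \cref{lem:matrixmul} (a column $[\vx,C]$ is mapped by softmax to $[\vx+\eps,*]$ with $\eps$ controlled by the free constant $C$). Since $C$ can be taken as large as desired, $\eps$ can be pushed below any target threshold, so the block implements scalar multiplication to arbitrary accuracy, consistent with the error budget already tolerated by the calculator. I do not expect any substantive obstacle: the entire content is inherited from \cref{lem:matrixmul}, and the corollary amounts to observing that matrix multiplication degenerates to scalar multiplication in the $1\times 1$ case and that this case fits the unified template with $d_h=d_w=1$.
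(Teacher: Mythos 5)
Your proposal is correct and matches the paper's intent exactly: the paper derives this corollary by specializing \cref{lem:matrixmul} to the $1\times 1$ case, which is precisely your argument (including the observation that the softmax-linearization error $\eps$ is controllable by the constant $C$). Nothing is missing.
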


\begin{corollary}[\texttt{percentage}]
There exists a transformer-based function block with 2 layers, 1 head and dimensionality $O(1)$ which can implement $f(a) = a/100=a*0.01$.\label{cor:perc}
\end{corollary}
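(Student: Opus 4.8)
Since the second operand is the fixed constant $1/100$, this case is strictly easier than \cref{cor:mul}: multiplication degenerates into a linear rescaling, which one feed-forward sublayer can perform exactly. The plan is to mimic the template of \cref{cor:add}. We place the input in the form of \eqref{eqn:fleq_instruction_partition} with $d_h=d_w=1$, so that the scalar $a$ occupies the data coordinate of the first column, the second-operand slot (column $d+1$) is ignored, the output slot is column $2d+1$, and — exactly as for the other transformer-based function blocks — the positional encoding $\emb_{2d+1}$ pointing to the output column is stored within the first column.

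First I would use one transformer layer whose attention sublayer is made trivial (set the value matrix to $\zero$, as in \cref{ss:goto}) and whose feed-forward sublayer \eqref{eq:relulayer} acts, coordinate-wise on the data row, by the map $x\mapsto x-0.99\,\relu(x)+0.99\,\relu(-x)$. Since $\relu(x)-\relu(-x)=x$, this equals $x-0.99\,x=0.01\,x$, so after this layer the first column holds $0.01\,a$; only two hidden ReLU units are needed, giving $O(1)$ width. The column-wise action does not corrupt anything used later because the remaining data columns are zero (and, if one wants to be fastidious, the indicator bit in the bottom row can gate the update).

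Second I would use one more transformer layer that performs a single copy (\texttt{read}/\texttt{write}) operation (\cref{lem:read,lem:write}): using the stored pointer $\emb_{2d+1}$, it moves $0.01\,a$ from the first column into column $2d+1$, which is the output location required by \cref{def:tf_func_block}. This is the very attention primitive already invoked in step~3 of \cref{cor:add}; it needs a single head, and — following the same bookkeeping — only $O(1)$ marker coordinates to address the few relevant columns, so the total dimensionality stays $O(1)$. As everywhere in the paper, the softmax contributes an error term $\epsilon\rmM$ with $\norm{\rmM}\le 1$ that is pushed to zero by raising the temperature (or is exactly zero if one uses hardmax).

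There is no genuinely hard step here; the only points needing a line of justification are that the in-place rescaling in the feed-forward layer leaves the columns read downstream intact (true because they are zero) and that the read/write error is negligible. I note in passing that the construction could in fact be compressed to a single layer by folding the factor $0.01$ into the value matrix of the copy step — so that the copy both rescales and relocates $a$ in one shot — but the two-layer version stated in the corollary is cleaner to describe. An alternative proof is simply to regard $f(a)=a\cdot\tfrac{1}{100}$ as an instance of scalar multiplication and invoke \cref{cor:mul} with the second input hard-wired to $1/100$; that route works but inherits the $O(d)$ width of \cref{lem:matrixmul}, whereas the direct argument above attains the claimed $O(1)$ dimensionality.
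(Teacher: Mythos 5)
Your proof is correct, but it takes a genuinely different route from the paper. The paper offers no standalone argument for \cref{cor:perc}: it derives it, together with \cref{cor:mul}, as a corollary of \cref{lem:matrixmul}, i.e.\ by linearizing the softmax with the second operand hard-wired to $0.01$. You instead observe that multiplication by a \emph{fixed} constant is a linear map and realize it exactly inside the feed-forward sublayer of \eqref{eq:relulayer} via $x - 0.99\,\relu(x) + 0.99\,\relu(-x) = 0.01x$, reserving attention only for the copy into the output column. This buys two things the paper's route does not: the rescaling is exact (no $\epsilon\rmM$ term from the softmax linearization, only the read/write error remains), and the width is genuinely $O(1)$ rather than the $O(d)$ inherited from \cref{lem:matrixmul} --- so your argument actually substantiates the $O(1)$ dimensionality claimed in the corollary statement better than the paper's own derivation does (modulo the same bookkeeping convention the paper already uses in \cref{cor:add}, where the positional/indicator rows needed for the copy step are counted as constant). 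Your closing remarks --- that the two layers could be fused by folding $0.01$ into the value matrix of the copy, and that the \cref{cor:mul} route works but costs $O(d)$ --- are both accurate.
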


To implement inversion function, we introduce the following lemma.

\begin{lemma}
Given $\epsilon,\delta \in [0,1]$, and $C\geq 1$ there exists a function $f$ of the form
$
    f(x) = \sum_{i=1}^m c_i \sigmoid(w_i x + b_i),
$
where $\sigmoid$ is the sigmoid function, such that 
\begin{align*}
 \forall x\in\left[\delta, C\right],   \left|f(x) - \frac{1}{x}\right|\leq \epsilon,
\end{align*}
as long as $d=\Omega\left(\frac{\log(1/(\epsilon\delta))}{\epsilon\delta}+\log C\right)$.
\end{lemma}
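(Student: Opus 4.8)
I would prove this by an explicit ``staircase of sigmoids'' construction rather than by invoking the Barron-type guarantee of \cref{th:Barron}/\cref{lem:sigmoids}: any compactly supported extension of $x\mapsto 1/x$ from $[\delta,C]$ has a Barron constant that is polynomially large in $1/\delta$, so that route would only produce $m$ polynomial in $1/\delta$, far weaker than what is claimed. Instead I would use that $1/x$ is monotone decreasing on $[\delta,C]$ with total variation $1/\delta-1/C\le 1/\delta$, hence is well approximated by a piecewise-constant function with roughly $1/(\epsilon\delta)$ pieces (plus $O(\log C)$ more to sweep out the flat tail towards $C$), and then realize each jump by a single sharp sigmoid.

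\emph{Construction.} Cover $[\delta,C]$ by the dyadic blocks $I_j=[2^j\delta,\min\{2^{j+1}\delta,C\}]$ for $j=0,\dots,\lceil\log_2(C/\delta)\rceil$; since $1/x$ spans a range of length at most $2^{-j}/\delta$ on $I_j$, partition $I_j$ into $O(2^{-j}/(\epsilon\delta))$ equal subintervals so that $1/x$ oscillates by at most $\epsilon/4$ on each. Let $\delta=t_0<t_1<\dots<t_K=C$ list all the subinterval endpoints; the per-block counts sum to $K=O(\tfrac{1}{\epsilon\delta}+\log C)$, and the minimal gap is $s_{\min}:=\min_k(t_k-t_{k-1})=\Theta(\epsilon\delta^2)$ (attained on $I_0$). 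Put $g_{\mathrm{stair}}(x)=1/t_{k-1}$ on $[t_{k-1},t_k)$, so $|g_{\mathrm{stair}}(x)-1/x|\le\epsilon/4$ on $[\delta,C]$, and write $g_{\mathrm{stair}}(x)=c_0+\sum_{k=1}^{K}c_k H(x-t_k)$ with $H$ the unit step, $c_0=1/\delta$ and $c_k=1/t_k-1/t_{k-1}$, so that $\sum_k|c_k|=1/\delta-1/C\le 1/\delta$ and $|c_k|\le\epsilon/4$. Finally set $f(x)=c_0\,\sigma(w(x-t_{-1}))+\sum_{k=1}^{K}c_k\,\sigma(w(x-t_k))$, a sum of $m=K+1$ sigmoids, where $w>0$ is a common slope to be fixed and $t_{-1}<0$ is chosen negative enough that $\sigma(w(x-t_{-1}))\ge 1-\epsilon\delta/4$ throughout $[\delta,C]$.

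\emph{Error bound.} Using the elementary estimate $|\sigma(w(x-t))-H(x-t)|\le e^{-w|x-t|}$: if $x\in[\delta,C]$ lies at distance $\ge s_{\min}/4$ from every breakpoint, then because consecutive breakpoints are $\ge s_{\min}$ apart the $j$-th nearest one is at distance $\gtrsim j\,s_{\min}$, so $|f(x)-g_{\mathrm{stair}}(x)|\le\sum_k|c_k|e^{-w|x-t_k|}\lesssim\tfrac{\epsilon}{4}\sum_{j\ge1}e^{-\Omega(w j s_{\min})}\le\epsilon/4$ as soon as $w=\Theta(1/s_{\min})=\Theta(1/(\epsilon\delta^2))$; and if $x$ is within $s_{\min}/4$ of some breakpoint $t_k\in[\delta,C]$, then $1/x$ changes by at most $\epsilon/2$ across that window while the monotone $f$ stays between the two adjacent staircase levels (both within $\epsilon/2$ of $1/x$), hence within $\epsilon$ of $1/x$ there. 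Adding the offset error $|c_0|e^{-w(\delta-t_{-1})}\le\epsilon/4$ and the $\epsilon/4$ quantization error from $g_{\mathrm{stair}}$ and rescaling $\epsilon$ by a constant yields $|f(x)-1/x|\le\epsilon$ on $[\delta,C]$. Since $m=K+1=O(\tfrac{1}{\epsilon\delta}+\log C)$, which lies comfortably within the stated budget $\Omega(\tfrac{\log(1/(\epsilon\delta))}{\epsilon\delta}+\log C)$ (pad with zero-coefficient terms if desired), this proves the lemma.

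\emph{Main obstacle.} The one genuinely delicate point is the uniform control of the \emph{accumulated} tails of all $K$ near-step sigmoids in the error bound, which is tightest near $x=\delta$ where the breakpoints crowd together at spacing $\Theta(\epsilon\delta^2)$; this is exactly what dictates the common slope $w\asymp 1/(\epsilon\delta^2)$, and it also forces the separate (though easy) observation that on the $O(s_{\min})$-neighbourhoods of the breakpoints the target $1/x$ is essentially constant, so that sharpening the steps there incurs no extra error.
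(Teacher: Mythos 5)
Your construction is correct, and it is genuinely different from the paper's. The paper partitions $[\delta,C]$ into multiplicative intervals $[a_i,a_i(1+\epsilon a_i))$, approximates $1/x$ by the constant $1/a_i$ on each, realizes each piece with a pair of \emph{threshold} units with coefficients of size $1/a_i$ (up to $1/\delta$), and counts $O\bigl(\tfrac{\log(1/(\epsilon\delta))}{\epsilon\delta}+\log C\bigr)$ intervals; it never actually converts the thresholds into sigmoids, even though the lemma is stated for sigmoids. You instead use dyadic blocks uniformly subdivided, telescope the staircase so that every jump coefficient satisfies $|c_k|\leq\epsilon/4$, and then do the threshold-to-sigmoid replacement explicitly, with a common slope $w\asymp 1/(\epsilon\delta^2)$ and a geometric-series bound on the accumulated tails. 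This buys you two things: a slightly sharper count $O(\tfrac{1}{\epsilon\delta}+\log C)$ (matching the total variation $1/\delta$ of the target divided by $\epsilon$, without the paper's extra $\log(1/(\epsilon\delta))$ factor), and, more importantly, an honest proof of the sigmoid statement — the small telescoped coefficients are exactly what make the tail sum controllable, whereas a naive sigmoid-ification of the paper's construction would have to contend with $O(1/\delta)$-sized coefficients on closely spaced near-steps. Two small points of hygiene: $f$ is not globally monotone (the $c_0$ term increases while the $c_k$ terms decrease), so near a breakpoint $t_k$ you should argue that all terms $j\neq k$ are within $O(\epsilon)$ of their limiting values by the same tail bound, and only the $k$-th term interpolates between the two adjacent staircase levels; and the final sentence about padding to meet the "$\Omega(\cdot)$ budget" is really just a remark that your $m$ is no larger than the paper's — the lemma's "$d=\Omega(\cdots)$" is an upper bound on the required number of units, so a smaller $m$ is strictly better.
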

We can use this lemma along with the result presented  in \cref{lem:sigmoids} to get the following corollary:
\begin{corollary}[\texttt{inversion}]
There exists a transformer-based function block with 3 layers and $m$ heads  which can implement $f(a) = \frac{1}{a}$ up to error $\Tilde{O}(\frac{1}{\sqrt{m}})$ for all $a\in [\Tilde{\Omega}(\frac{1}{\sqrt{m}}), \Tilde{O}(e^m)]$.\label{cor:inv}
\end{corollary}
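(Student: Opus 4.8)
The plan is to read off the required linear combination of sigmoids from the $1/x$ approximation lemma stated just above the corollary, with its accuracy and domain parameters tuned to $m$, and then realize that combination as a transformer-based function block via \cref{lem:sigmoids}.

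\textbf{Step 1 (choose parameters in the $1/x$ lemma).} I would apply the preceding lemma with $\epsilon=\delta=\Theta(\sqrt{\log m/m})=\tilde\Theta(1/\sqrt m)$ and $C=e^{\Theta(m)}=\tilde O(e^m)$. For these choices $\epsilon\delta=\Theta(\log m/m)$, hence $\log(1/(\epsilon\delta))/(\epsilon\delta)=\Theta(\log m)\cdot\Theta(m/\log m)=\Theta(m)$ and $\log C=\Theta(m)$, so the lemma's requirement that the number of sigmoid terms be $\Omega\!\left(\log(1/(\epsilon\delta))/(\epsilon\delta)+\log C\right)$ is satisfied by taking that number equal to $m$ (adjusting the hidden constants, which only feed into the $\tilde O/\tilde\Omega$ factors). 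The lemma then produces scalars $c_1,\dots,c_m,w_1,\dots,w_m,b_1,\dots,b_m$ with
\[
\Bigl|\,g(a)-\tfrac1a\,\Bigr|\le\epsilon=\tilde O(1/\sqrt m)\quad\text{for all }a\in\bigl[\tilde\Omega(1/\sqrt m),\ \tilde O(e^m)\bigr],
\]
where $g(a):=\sum_{i=1}^m c_i\,\sigmoid(w_ia+b_i)$.

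\textbf{Step 2 (realize $g$ as a transformer block).} I would then invoke \cref{lem:sigmoids} with $N=1$ (only one encoded function, so the indicator is $\ve=\rve_1$). The scalar operand $a$ plays the role of the vector $\rvx$ there; to fold the affine arguments $w_ia+b_i$ into the inner products $\rvx^\top\rva_{1i}$ used in that construction, I append a constant coordinate $1$ to the operand — exactly the device noted in the paragraph before \cref{lem:sigmoids} — so that $\rvx=[a;1]$ and $\rva_{1i}=[w_i;b_i]$ give $\rvx^\top\rva_{1i}=w_ia+b_i$, while the $c_i$ are loaded into the value matrices as in that proof. \cref{lem:sigmoids} then yields a transformer-based function block of $3$ layers and $m$ heads whose output at the designated output columns equals $g(a)+\epsilon'$, where $\epsilon'>0$ is an arbitrarily small additive read/write error controlled by the softmax temperature. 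Taking $\epsilon'=O(1/\sqrt m)$ and adding the two bounds gives $|\,(\text{block output})-1/a\,|\le\tilde O(1/\sqrt m)$ on the stated range of $a$, which is the corollary; the dimensionality is $O(d)$ (the \texttt{FLEQ} constant) by \cref{lem:sigmoids}.

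The main obstacle is the parameter balancing in Step 1: one must check that $m$ sigmoids suffice to hit error $\tilde O(1/\sqrt m)$, left endpoint $\tilde\Omega(1/\sqrt m)$, and right endpoint $\tilde O(e^m)$ simultaneously. This is exactly why the stated rates appear: with $\epsilon=\delta$, the lemma's constraint $m\gtrsim\log(1/(\epsilon\delta))/(\epsilon\delta)$ forces $\epsilon^2\gtrsim\tilde\Theta(1/m)$, i.e.\ $\epsilon=\tilde\Theta(1/\sqrt m)$ is the best achievable rate, and the mutual compatibility of all three endpoints precisely at these rates is what the corollary records. Everything downstream — the bias-via-augmentation trick, the $3$-layer/$m$-head accounting, and the negligible read/write error — is bookkeeping already packaged inside \cref{lem:sigmoids}. (Alternatively one could trade the $m$ heads for width using the one-head variant \cref{cor:lem_5_alt}, but the corollary as worded uses the $m$-head form, so I would invoke \cref{lem:sigmoids} directly.)
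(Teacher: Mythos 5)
Your proposal is correct and follows exactly the route the paper takes: apply the $1/x$ sum-of-sigmoids approximation lemma and then encode the resulting linear combination as a $3$-layer, $m$-head block via \cref{lem:sigmoids} (with the bias absorbed by the constant-coordinate augmentation the paper notes before that lemma). The paper states this combination in one line without the parameter tuning; your Step 1 correctly fills in the balancing $\epsilon=\delta=\tilde\Theta(1/\sqrt m)$, $C=e^{\Theta(m)}$ that makes the lemma's $\Omega\bigl(\log(1/(\epsilon\delta))/(\epsilon\delta)+\log C\bigr)$ requirement equal $\Theta(m)$, which is precisely what the stated rates record.
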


Note that using \cref{cor:mul} (multiplication) and \cref{cor:inv} (inversion), the operation of division can be implemented as well. Next, we move on to showing the way of implementing square root.

\begin{lemma}
Given $\epsilon \in [0,1]$, and $C\geq 1$ there exists a function $f$ of the form
$
    f(x) = \sum_{i=1}^m c_i \sigmoid(w_i x + b_i),
$
where $\sigmoid$ is the sigmoid function such that 
\begin{align*}
 \forall x\in[0, C],   \left|f(x) - \sqrt{x}\right|\leq \epsilon,
\end{align*}
as long as $m=\Omega\left(\frac{\sqrt{C}}{\epsilon}\right)$.
\end{lemma}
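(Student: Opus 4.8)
The plan is to approximate $\sqrt{x}$ on $[0,C]$ by a monotone staircase and then realize each step of the staircase by a single steep sigmoid. The key observation is that, although $\sqrt{\cdot}$ fails to be Lipschitz at the origin, it becomes uniformly well approximable by a staircase whose breakpoints are equispaced \emph{in the $\sqrt{\cdot}$ variable}, and such a graded mesh needs only $O(\sqrt{C}/\epsilon)$ breakpoints rather than the $O(C/\epsilon^2)$ that a uniform mesh would force near $0$.

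Concretely, I would set $t_k=(\epsilon k/4)^2$ for $k=0,1,\dots,m$ with $m=\lceil 4\sqrt{C}/\epsilon\,\rceil$, so that $t_0=0$, $t_m\ge C$, and $\sqrt{t_k}-\sqrt{t_{k-1}}=\epsilon/4$ for every $k\ge 1$. Define the staircase $g(x)=\tfrac{\epsilon}{4}\sum_{k=1}^{m}\mathbf{1}\{x\ge t_k\}$, which equals $\sqrt{t_j}$ on each $[t_j,t_{j+1})$. Since $\sqrt{\cdot}$ is increasing, on $[t_j,t_{j+1})$ we have $0\le \sqrt{x}-g(x)<\sqrt{t_{j+1}}-\sqrt{t_j}=\epsilon/4$, and on $[0,t_1)$ both $g(x)$ and $\sqrt{x}$ lie in $[0,\epsilon/4)$; hence $\sup_{x\in[0,C]}|g(x)-\sqrt{x}|\le \epsilon/4$, while the number of breakpoints is $m=O(\sqrt{C}/\epsilon)$, as required.

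Next I would replace each Heaviside step by a sigmoid: put $f(x)=\tfrac{\epsilon}{4}\sum_{k=1}^{m}\sigmoid\!\big(w(x-t_k)\big)$, which has exactly the claimed form $\sum_{k=1}^m c_k\sigmoid(w_k x+b_k)$ with $c_k=\epsilon/4$, $w_k=w$, $b_k=-wt_k$, and $m=O(\sqrt{C}/\epsilon)$ terms. The consecutive breakpoints satisfy $t_k-t_{k-1}=\tfrac{\epsilon^2}{16}(2k-1)\ge \tfrac{\epsilon^2}{16}$, so the minimal spacing is bounded below by $\Omega(\epsilon^2)$. Consequently, taking the slope $w$ large enough (it is a free parameter and does not count toward $m$), one gets $|f(x)-g(x)|\le \sum_{k=1}^{m}\tfrac{\epsilon}{4}\,(1+e^{w|x-t_k|})^{-1}\le \epsilon/4$ uniformly in $x$: the breakpoint nearest to $x$ contributes at most $\tfrac{\epsilon}{4}\cdot\tfrac12=\epsilon/8$, and the remaining breakpoints contribute a geometric tail at most $\tfrac{\epsilon}{2}\,e^{-w\epsilon^2/16}/(1-e^{-w\epsilon^2/16})\le \epsilon/8$ for $w$ sufficiently large. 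Combining, $\sup_{x\in[0,C]}|f(x)-\sqrt{x}|\le \epsilon/2\le \epsilon$, and since our construction uses exactly $m=\lceil 4\sqrt{C}/\epsilon\rceil$ terms, the bound $m=\Omega(\sqrt{C}/\epsilon)$ suffices.

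The main obstacle — indeed the only place where any care is needed — is the mesh bookkeeping: the partition must be graded so that the staircase error stays $O(\epsilon)$ while the breakpoint count stays $O(\sqrt{C}/\epsilon)$, which is precisely achieved by spacing the $t_k$ uniformly in $\sqrt{x}$. A secondary point, needed to make the sigmoid-versus-step replacement work with a \emph{finite} slope, is that the minimal breakpoint spacing is bounded below by $\Omega(\epsilon^2)$; this is exactly why starting the mesh at $t_1>0$ (rather than letting the spacing degenerate toward the origin, as happens in the $1/x$ lemma where a lower cutoff $\delta$ is imposed) is harmless here.
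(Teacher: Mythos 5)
Your proof is correct and follows essentially the same route as the paper's: a graded partition of $[0,C]$ with breakpoints at squares of an arithmetic progression (the paper uses $[i^2\epsilon^2,(i+1)^2\epsilon^2)$, you use $t_k=(\epsilon k/4)^2$), a staircase approximation of $\sqrt{x}$ that is uniformly $O(\epsilon)$-accurate with $O(\sqrt{C}/\epsilon)$ steps, and a count of the intervals. You go slightly further than the paper's appendix argument (which is phrased for threshold activations) by explicitly converting each step into a finite-slope sigmoid via the $\Omega(\epsilon^2)$ lower bound on the breakpoint spacing and a geometric-tail estimate, which is a welcome completion of the threshold-to-sigmoid passage the paper leaves implicit.
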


We can use this lemma along with the result presented in \cref{lem:sigmoids} to get the following corollary:
\begin{corollary}[\texttt{sqrt}]
There exists a transformer-based function block with 3 layers and m heads  which can implement $f(a) = \sqrt{a}$ up to error $O(1/m)$ for all $a\in [0,O(m^2)]$.
\end{corollary}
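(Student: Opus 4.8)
The plan is to derive the corollary by feeding the sigmoid approximant of the preceding lemma into the generic ``sigmoid‑combination'' block of \cref{lem:sigmoids}, so the proof is essentially a plug‑in. First I would invoke the preceding lemma with the scalar variable $x=a$, a domain bound $C$, and a target accuracy $\epsilon$ (both pinned down at the end), obtaining coefficients $c_1,\dots,c_m$ and affine maps $a\mapsto w_i a+b_i$ with $\bigl|\sum_{i=1}^m c_i\,\sigmoid(w_i a+b_i)-\sqrt{a}\bigr|\le\epsilon$ for all $a\in[0,C]$, valid whenever $m=\Omega(\sqrt{C}/\epsilon)$.

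Next I would realize this fixed linear combination of sigmoids as a transformer‑based function block. \cref{lem:sigmoids} constructs, for any prescribed coefficients $\{c_{ji},\rva_{ji}\}$ and a chosen function index $j$, a block of exactly $3$ layers, $m$ heads, and dimensionality $r=O(d)$ whose output slot holds $\sum_{i=1}^m c_{ji}\,\sigmoid(\rvx^\top\rva_{ji})+\epsilon'$, where the softmax‑linearization error $\epsilon'$ can be made as small as desired by raising the temperature $\lambda$. As already noted in the text, the affine term $w_i a+b_i$ is folded into an inner product by appending a constant coordinate $1$ to the input, \ie taking $\rvx=(a,1)$ and $\rva_i=(w_i,b_i)\in\R^2$, which stays well within the $O(d)$ width budget. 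Selecting the index $j$ to point at the square‑root function and loading the $c_i,w_i,b_i$ from the first step makes this block output $\sqrt{a}+\epsilon+\epsilon'$ on inputs of the prescribed format.

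Finally I would fix the parameters: choose $\lambda$ large enough that $\epsilon'$ is absorbed into the overall budget, and choose the pair $(\epsilon,C)$ subject only to the lemma's requirement $m=\Omega(\sqrt{C}/\epsilon)$; this single constraint is precisely what trades domain size against accuracy, and reading it off yields the accuracy/domain pair quoted in the statement. The resulting $3$‑layer, $m$‑head block then approximates $a\mapsto\sqrt{a}$ to the stated additive error uniformly over the stated interval, matching \cref{def:tf_func_block}.

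\textbf{Main obstacle.} Conceptually the hard part lies entirely in the preceding lemma — $\sqrt{\cdot}$ has an unbounded derivative at the origin, so a construction localized near $0$ (a dyadic / Barron‑type decomposition) is what is needed, and $\Omega(\sqrt{C}/\epsilon)$ sigmoids is the correct count — but that lemma is taken as given here. What remains for this corollary is lighter: (i) checking that the input/output plumbing of \cref{lem:sigmoids} accommodates the scalar‑plus‑constant encoding $\rvx=(a,1)$ (zero‑padding and slot placement), and (ii) the quantitative bookkeeping that converts the constraint $m=\Omega(\sqrt{C}/\epsilon)$ into the claimed error $O(1/m)$ over $[0,O(m^2)]$ — including the choice of whether to spend $m$ heads on the $m$ sigmoid terms or to pack them into a single head via the dimension variant \cref{cor:lem_5_alt}, which changes the width but not the approximation quality.
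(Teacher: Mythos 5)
Your approach is exactly the paper's: its proof of this corollary is a one-line invocation of the preceding sum-of-sigmoids approximation lemma for $\sqrt{x}$ together with \cref{lem:sigmoids}, which is precisely your plug-in argument, including folding the bias $b_i$ into the inner product via an appended constant coordinate.

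However, the ``quantitative bookkeeping'' you defer at the end does not actually close, so you should not assert that it yields the accuracy/domain pair quoted in the statement. The preceding lemma requires $m=\Omega(\sqrt{C}/\epsilon)$ sigmoid terms. Substituting the corollary's parameters $C=O(m^2)$ and $\epsilon=O(1/m)$ gives $\sqrt{C}/\epsilon=\Theta(m\cdot m)=\Theta(m^2)$, so the lemma demands $\Omega(m^2)$ sigmoids --- hence $\Omega(m^2)$ heads in \cref{lem:sigmoids} --- not $m$. With only $m$ heads one can obtain, say, error $O(1/\sqrt{m})$ on $[0,O(m)]$, or error $O(1/m)$ on a domain of constant length, but not error $O(1/m)$ on $[0,O(m^2)]$. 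This inconsistency is inherited from the paper itself (whose proof performs no bookkeeping at all), so your argument is faithful to the source; but a correct write-up must either weaken the error/domain trade-off or raise the head count to $m^2$.
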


The functions $f:x\to \frac{1}{x}$ (inversion) and $f:x\to\sqrt{x}$ (square root) since they can be approximated by sums of sigmoids, they can directly be encoded in the standard transformer-based function block form through \cref{lem:sigmoids}.

\paragraph{What other functions can our calculator implement?}
We have included some of the most commonly used operations in calculators in our construction, but it can be extended to include a wider variety of operations such as algebraic and trigonometric functions. When implementing these functions within our transformer architecture, there are typically two choices that can be made. One option is to approximate the target function $f(x)$ using sigmoids. Another option is to use an iterative numerical algorithm where the next output $y$ is calculated based on the previous output $y$ and the goal is to minimize the difference between the calculated output and the target function $f(x)$. This algorithm takes the form $y_{k+1} = g(y_k)$, where $g$ is typically an algebraic function. The desired accuracy is achieved when the difference between the calculated output and target function is less than or equal to a certain tolerance $\epsilon$.

\section{Linear Algebra}\label{sec:linearalg}

In  \cref{sec:function}, we demonstrated the implementation of matrix transpose and matrix multiplication as transformer-based function blocks. Utilizing these implementations, we proceed to execute two iterative algorithms for determining the inverse of a matrix through the Newton-Raphson Method and identifying the eigenvector corresponding to the maximum eigenvalue through the Power Iteration method.

\paragraph{Linear algebra using Transformers} In the study conducted by \cite{charton2021linear}, the author implemented some standard matrix method operations using a transformer-based architecture. Four distinct encoding schemes were proposed and applied to nine different operations, ranging from matrix multiplication to eigenvalue decomposition. We find  that the size of the networks in \cite{charton2021linear} is comparable to that of ours. 

As an example we compare the required network size of ours and \cite{charton2021linear}, for the task of transposing a matrix of size $30 \times 30$: our construction uses a transformer with 1 layer, 1 head and width of 168, while the transformer in~\cite{charton2021linear} has 1 layer, 8 heads and width of 256.
Notice that the number of layers, heads and width reported above may seem different with \cref{lem:transpose}; however, in the proof of  \cref{lem:transpose} we first vectorize the matrix ($1$ layer), then we implement the fixed permutation using \cref{lem:write} ($1$ layer) and finally we use another $2$ layers to bring back the matrix in its original representation. If the matrix is given to us, as in \cite{charton2021linear}, in its transposed form then we only need one layer and the two sets of encodings to perform the fixed permutation. Since the maximum size of the matrix is $30\times 30$, the sequence length is $n= 30^2$ and thus the size of each of the encodings will be $10$, leading to an input with  width $2\cdot 10 +1 =21$. This will lead to a total width of $168$, due to the ReLU layer in \cref{app:increase}, for adding two binary vectors, having a width eight times the input's width.

We  intend to further investigate our constructions, by implementing them  and evaluating the errors involved as a function of the constants used in the proof of \cref{lem:matrixmul} and the temperature in \cref{lem:read}, in future work.

\paragraph{Matrix Inversion.}

We can use the Unified Attention Based Computer to write a program for Matrix Inversion using the functions for matrix multiplications and a function for subtraction. We do so by implementing Newton's algorithm for matrix inversion using our unified framework. The pseudo code for the algorithm is as follows:
 \begin{algorithm}[H]{\small
        \caption{Pseudocode for running Newton's algorithm for Matrix inversion for $T$ iterations.}
        \begin{algorithmic}[1]
           \State $\rmX_{-T} = \epsilon \rmA$
           \For{$i=-T, \dots, 0$}
           \State $\rmX_{i+1} = \rmX_i(2\id - \rmA \rmX_i)$
           \EndFor
        \end{algorithmic}\label{alg:inverse_pseudo}}
    \end{algorithm}
    
    \begin{lemma}\label{lem:matrix_inversion}
   Consider a matrix $\rmA\in\R^{d\times d}$, then for any $\eps >0$ there exists a transformer  with 13 layers, 1 head and dimensionality $r = O(d)$ that emulates \cref{alg:inverse_pseudo}  with output $\rmX^{(\text{transf})}_{1}$ that satisfies $\norm{\rmX^{(\text{transf})}_{1} - \rmX_1}\leq \eps$.
    \end{lemma}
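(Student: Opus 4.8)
The plan is to write Newton's iteration as a short \texttt{FLEQ} program and apply \cref{thm:unified}. Decompose one pass of the loop in \cref{alg:inverse_pseudo} as $\rmU_i = \rmA\rmX_i$, then $\rmV_i = 2\rmI - \rmU_i$, then $\rmX_{i+1} = \rmX_i\rmV_i$. Each of these is directly a transformer-based function block: matrix multiplication (\cref{lem:matrixmul}, $2$ layers, $1$ head) produces $\rmP^\top\rmQ$, so to form $\rmA\rmX_i$ and $\rmX_i\rmV_i$ we keep $\rmA^\top$ in memory (computed once up front) and transpose the running iterate with the transpose block (\cref{lem:transpose}, $4$ layers, $1$ head); the affine map $\rmU \mapsto 2\rmI - \rmU$ is the elementwise subtraction block of \cref{cor:sub} (extended from scalars to $d\times d$ arrays, $3$ layers) applied against the constant $2\rmI$ held in memory; and $\rmX_{-T} = \eps\rmA$ is set up by a scalar-multiplication block, or simply written into memory at initialization. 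The loop body is therefore a fixed-length sequence of \texttt{FLEQ} instructions --- \texttt{read} the current iterate, transpose it, two multiplications, one subtraction, \texttt{write} back --- so running \cref{alg:inverse_pseudo} for $T$ iterations is $O(T)$ \texttt{FLEQ} instructions, i.e. looping the transformer $O(T)$ times.

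Feeding these $M = O(1)$ function blocks into \cref{thm:unified} yields a transformer with $9 + \max\{2,4,3\} = 13$ layers and $O(Md + \log n)$ width. The memory needed (to store $\rmA$, $\rmA^\top$, $2\rmI$, and the running iterate, plus the $O(T)$ instructions) has length $n = O(d^2 + T)$, and since the quadratic convergence of Newton's method makes $T$ at most polylogarithmic in $1/\eps$ and the conditioning of $\rmA$, we get $\log n = O(d)$ and hence width $O(d)$; the attention heads used by the matrix-multiplication, transpose, and \texttt{read}/\texttt{write} operations act on disjoint column blocks of the scratchpad, so they can be packed into the single head claimed in the statement.

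What remains is the error analysis, and this is the main technical point. Let $\Phi(\rmX) = \rmX(2\rmI - \rmA\rmX)$ be the exact update; the transformer instead produces $\widetilde\rmX_{i+1} = \Phi(\widetilde\rmX_i) + \bm{\Delta}_i$, where $\bm{\Delta}_i$ lumps together all perturbations caused by replacing hardmax with softmax in the multiplication and transpose blocks (the $\eps\rmM$ terms of \cref{lem:matrixmul} and \cref{lem:transpose}) and in the \texttt{read}/\texttt{write} operations (\cref{lem:read}, \cref{lem:write}); each satisfies $\norm{\bm{\Delta}_i} \le \delta$ with $\delta \downarrow 0$ as the softmax temperature $\lambda \uparrow \infty$. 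Unlike the \texttt{SUBLEQ} construction, whose data is $\{-1,0,1\}$-valued and can be denoised exactly (Step 6 of \cref{lem:OISC}), here the matrix entries are arbitrary reals, so instead we exploit the finite horizon: the exact iterates $\rmX_{-T},\dots,\rmX_1$ of \cref{alg:inverse_pseudo} form a fixed finite set of bounded matrices (they converge to $\rmA^{-1}$, since the residual obeys $\rmI - \rmA\rmX_{i+1} = (\rmI - \rmA\rmX_i)^2$), so $\Phi$ is Lipschitz with some constant $L$ on a fixed neighborhood of all of them; a telescoping / discrete-Grönwall estimate then gives $\norm{\widetilde\rmX_1 - \rmX_1} \le \delta \sum_{j=0}^{T} L^j \le (T+1)\max(1,L)^T\,\delta$. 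Choosing $\lambda$ large enough that this is at most $\eps$ completes the proof.
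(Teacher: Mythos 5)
Your proposal is correct and follows essentially the same route as the paper: the paper's proof of \cref{lem:matrix_inversion} is exactly the \texttt{FLEQ} program of \cref{alg:inverse} (transpose $\rmA$ once, then loop over the two multiplications and one subtraction using the blocks of \cref{lem:matrixmul}, \cref{cor:sub}, and \cref{lem:transpose}), with the layer count $9+\max\{2,3,4\}=13$ coming from \cref{thm:unified}. Your closing Lipschitz/Gr\"onwall error-propagation argument is a somewhat more careful version of the error-accumulation analysis the paper defers to \cref{app:error} (which assumes the multiplying matrices are normalized so errors only add), so if anything you have supplied the step the paper leaves implicit.
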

\proof
The proof of this lemma is  the code using the \texttt{FLEQ} instruction provided below  ( \cref{alg:inverse}). 
Let $f_\text{mul}$, $f_{\text{sub}}$ and $f_{\text{transp}}$  be the functions that implement multiplication, substraction and transpose respectively. Then, the following code runs Newton's algorithm for matrix inversion.
 \begin{algorithm}[H]{\small
        \caption{Program to compute the approximate inverse using our Unified Attention Based Computer}
        \begin{algorithmic}[1]
            \Require $\texttt{mem}[a]=\rmA$. \Comment{This is the location of the input.}
            \Require $\texttt{mem}[p]=2\id$, $\texttt{mem}[x]=\epsilon\id$, $\texttt{mem}[y]=\zero$, $\texttt{mem}[q]=-1$. \Comment{Constants.}
            \Require $\texttt{mem}[t]=-T$.\Comment{Iteration counter, $i$ initialized as $i:=-T$.}
            \vspace{0.6em}
            \State $\texttt{mem}[x] = f_{\text{mul}} (\texttt{mem}[x], \texttt{mem}[a])$.\Comment{Initializes the result, $\rmX_{-T} := \epsilon\rmA$.}
            \State $ \texttt{mem}[a] = f_{\text{transp}}(\texttt{mem}[a],\texttt{mem}[y])$ \Comment{Transpose $\rmA$.}
        \State $\texttt{mem}[y] = f_{\text{mul}} (\texttt{mem}[a], \texttt{mem}[x])$.\Comment{First sub-step of Newton's algorithm, $\rmY := \rmA\rmX_i$}
      \State $\texttt{mem}[y] = f_{\text{sub}} (\texttt{mem}[p], \texttt{mem}[y])$.\Comment{Second sub-step of Newton's algorithm, $\rmY := 2\id - \rmY$}
      \State $\texttt{mem}[y] = f_{\text{transp}}(\texttt{mem}[y],\texttt{mem}[q])$. \Comment{Transpose of $\rmY$.}
     \State $\texttt{mem}[x] = f_{\text{mul}} (\texttt{mem}[x], \texttt{mem}[y])$.\Comment{Updating the result, $\rmX_{i+1} := \rmX_i \rmY$}
        \State $\texttt{mem}[t] = f_{\text{sub}} (\texttt{mem}[t], \texttt{mem}[q])$.\Comment{Increment counter, $i:=i+1$.}
        \State if $\texttt{mem}[t]\leq 0$ goto instruction $3$.\Comment{Keep looping back as long as $i\leq 0$.}
        \State EOF.\Comment{End of File command.}
        \end{algorithmic}\label{alg:inverse}}
    \end{algorithm}

\paragraph{Power Iteration.}
The  Power Iteration algorithm (\cref{alg:powerIt_pseudo}) is used for finding the dominant eigenvalue, the one that has the maximum absolute value, and corresponding eigenvector of a diagonalizable matrix. The algorithm starts with an initial approximation of the eigenvector and converges linearly to the eigenvector associated with the dominant eigenvalue; below we provide the  pseudocode.
\begin{algorithm}[H]{\small
\caption{Power Iteration}\label{alg:powerIt}
\begin{algorithmic}[1]
 \Statex{ Input: $\rmA, T$ }
 \State Initialize $b_0=\vOne$
 \For{ $k= 0, \hdots, T-1$}
 \State $\rvb_{k+1} = \rmA\rvb_{k}$
 \EndFor
 \State $\rvb = \dfrac{\rvb_{T}}{\norm{\rvb_{T}}}$
\end{algorithmic}\label{alg:powerIt_pseudo}}
\end{algorithm}

The last step in the algorithm above needs a normalization by the norm of $\rvb_T$. While we can compute $\|\rvb_T\|^2$ easily and precisely using the matrix multiplication function block (since $\|\rvb_T\|^2 = \rvb_T^\top \rvb_T$), computing the norm and taking its inverse using the function block from \cref{sec:calc} would induce error. Hence, we use the following Newton's algorithm that converges quadratically.

\begin{algorithm}[H]{\small
\caption{Newton's algorithm to compute inverse square root: $1/\sqrt{S}$}\label{alg:newt_inv_sqrt}
\begin{algorithmic}[1]
 \Statex{ Input: $S$ }
 \State Initialize $x_0=1$
 \For{ $k= 0, \hdots, T$}
 \State $x_{k+1} = x_{k}\left(\frac{3}{2} - \frac{S}{2}x_k^2\right)$
 \EndFor
\end{algorithmic}}
\end{algorithm}
\begin{lemma}\label{lem:power_iteration}
Consider a matrix $\rmA\in\R^{d\times d}$, then for any $\eps >0$ there exists a transformer with $13$ layers, 1 head and dimensionality $r = O(d)$ that emulates \cref{alg:powerIt_pseudo} for $T=O(\log 1/\epsilon)$ iterations with output $\rvb_{T+1}^{(\text{transf})}$ that satisfies $\norm{\rvb_{T+1}^{(\text{transf})} - \rvb_{T+1}}\leq \eps$. 
\end{lemma}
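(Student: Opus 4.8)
The plan is to follow the template already used for \cref{lem:matrix_inversion}: realize \cref{alg:powerIt_pseudo}, together with its normalization step, as a short \texttt{FLEQ} program and then invoke \cref{thm:unified}. That theorem produces a looped transformer of depth $9+\max_i l_i$, with width $O(Md+\log n)$. The only function blocks we need are matrix/vector multiplication (\cref{cor:mul}, built on \cref{lem:matrixmul}; $2$ layers, $1$ head), subtraction (\cref{cor:sub}; $3$ layers, $1$ head), transposition (\cref{lem:transpose}; $4$ layers, $1$ head), and multiplication by a fixed constant (a \cref{cor:perc}-style block; $2$ layers, $1$ head). The deepest is the transpose block with $4$ layers, so $M=O(1)$ of them fit into a looped transformer of $9+4=13$ layers and width $O(d)$; the attention heads of the individual blocks act on disjoint column ranges and can be merged into a single head, exactly as in \cref{lem:matrix_inversion}. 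It remains to exhibit the program and bound the error it accumulates.

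\textbf{The \texttt{FLEQ} program.} Store $\rmA$ and, with one call to $f_{\text{transp}}$, also $\rmA^\top$; initialize $\texttt{mem}[b]=\vOne$, an iteration counter $\texttt{mem}[t]=-T$, and the constant $\texttt{mem}[-1]=-1$. The main loop is three instructions, mirroring lines 3, 7, 8 of \cref{alg:inverse}: (i) $\texttt{mem}[b]=f_{\text{mul}}(\texttt{mem}[\rmA^\top],\texttt{mem}[b])$, which equals $(\rmA^\top)^\top\rvb=\rmA\rvb$ since $f_{\text{mul}}$ returns $\rmX^\top\rmY$; (ii) $\texttt{mem}[t]=f_{\text{sub}}(\texttt{mem}[t],\texttt{mem}[-1])$; (iii) ``if $\texttt{mem}[t]\le 0$ goto instruction (i)''. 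On exit $\texttt{mem}[b]=\rvb_T$ up to accumulated error. Normalization is then a second inner loop implementing \cref{alg:newt_inv_sqrt}: first $S:=f_{\text{mul}}(\texttt{mem}[b],\texttt{mem}[b])=\rvb_T^\top\rvb_T=\norm{\rvb_T}^2$ (a scalar, computed up to the block's softmax-linearization error), then each Newton step $x_{k+1}=x_k(\tfrac32-\tfrac S2 x_k^2)$ is a constant number of $f_{\text{mul}}$ calls, one constant-multiply, and one $f_{\text{sub}}$; quadratic convergence makes $O(\log\log(1/\epsilon))$ steps enough to reach $1/\sqrt S$ to any desired precision. Finally $\texttt{mem}[b]=f_{\text{mul}}(\texttt{mem}[x_{\mathrm{final}}],\texttt{mem}[b])$ produces $\rvb_{T+1}=\rvb_T/\norm{\rvb_T}$, and an \texttt{EOF} instruction then freezes the state. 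The few \texttt{read}/\texttt{write} moves between memory and the function blocks (\cref{lem:read,lem:write}) are absorbed into the $9$ overhead layers of \cref{thm:unified}.

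\textbf{Error control.} Every block call introduces an additive perturbation $\eta\rmM$ with $\norm{\rmM}\le 1$, where $\eta$ comes from the softmax linearization and the \texttt{read}/\texttt{write} approximations and can be made arbitrarily small by raising the temperature $\lambda$ (indeed $0$ with hardmax), at no cost to width or depth. Writing $\rvb_{k+1}^{(\mathrm{transf})}=\rmA\rvb_{k}^{(\mathrm{transf})}+\eta\rmM_k$ and unrolling gives $\norm{\rvb_{T}^{(\mathrm{transf})}-\rvb_T}\le \eta\sum_{j=0}^{T-1}\norm{\rmA}^{j}\le \eta\,T\max(1,\norm{\rmA})^{T}$. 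The inner inverse-square-root loop is locally Lipschitz-stable, so an $O(\eta)$ error in $S$ together with a per-step perturbation $\eta$ yields an $O(\eta\log\log(1/\epsilon))$ error in $x_{\mathrm{final}}$, and the last scalar-times-vector multiplication contributes another $O(\eta(\norm{\rvb_T}+1))$. Since $T=O(\log(1/\epsilon))$, all these amplification factors are $\mathrm{poly}(1/\epsilon)$, so taking $\eta$ a small enough $\mathrm{poly}(\epsilon)$ — i.e. $\lambda=O(\log(1/\epsilon))$ — forces the total below $\epsilon$, which is the claim.

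\textbf{Main obstacle.} The delicate part is the normalization. First, in the un-normalized recursion $\rvb_T$ has norm $\Theta(\abs{\lambda_1(\rmA)}^{T})$, so $S=\norm{\rvb_T}^2$ can be astronomically large or small; for \cref{alg:newt_inv_sqrt} started at $x_0=1$ to converge one needs $S$ inside its basin (roughly $0<x_0<\sqrt{3/S}$), so the program must either prepend a rescaling instruction that brings $S$ into a fixed range — the needed scale can be read off a few bits of $S$, or absorbed by pre-scaling $\rmA$ — or, more cleanly, normalize $\rvb_k$ at \emph{every} iteration of the main loop (longer program, still $13$ layers). Second, one must check that the quadratic rate of \cref{alg:newt_inv_sqrt}, the linear rate of Power Iteration (which is what pins $T=O(\log(1/\epsilon))$), and the per-block errors compose into a genuine $O(\epsilon)$ bound; this is the bulk of the bookkeeping but is routine given the per-block guarantees of \cref{sec:function}. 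Everything else is a transcription of \cref{alg:powerIt_pseudo} and \cref{alg:newt_inv_sqrt} into \texttt{FLEQ}, exactly as \cref{lem:matrix_inversion} transcribes \cref{alg:inverse_pseudo}.
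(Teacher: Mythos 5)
Your proposal follows essentially the same route as the paper: transcribe \cref{alg:powerIt_pseudo} and \cref{alg:newt_inv_sqrt} into a short \texttt{FLEQ} program built from the transpose, multiplication, and subtraction function blocks, and invoke \cref{thm:unified} to get the $13$-layer looped transformer. Your additional remarks on error accumulation and on the basin of convergence of the Newton inverse-square-root iteration are sensible refinements, but the core construction matches the paper's proof.
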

\proof
The proof consists of translating each step of the pseudocode for \cref{alg:powerIt_pseudo} and \cref{alg:newt_inv_sqrt} to commands of our unified framework.
\begin{algorithm}[H]{\small
        \caption{Program to simulate Power Iteration using our Unified Attention Based Computer}
        \begin{algorithmic}[1]
            \Require $\texttt{mem}[a]=\rmA$, $\texttt{mem}[b]=\vOne$, $\texttt{mem}[\text{inv\_norm}]=1$. \Comment{Location of matrix and initialization.}
            \Require $\texttt{mem}[q] = 1$, 
$\texttt{mem}[p] = 0$, $\texttt{mem}[r] = 0.5$, $\texttt{mem}[s] = 1.5$\Comment{Constants.}
            \Require $\texttt{mem}[t_1]=\texttt{mem}[t_2]=-T+1$, 
            \vspace{0.6em}
            \State $\texttt{mem}[a] = f_{\text{transp}}(\texttt{mem}[a],\texttt{mem}[p])$. \Comment{Transpose of $\rmA$.}
            \State $\texttt{mem}[b] = f_{\text{mul}} (\texttt{mem}[a], \texttt{mem}[b])$. 
            \Comment{Inner product: $\rmA\rvb_k$.}
        \State $\texttt{mem}[t] = f_{\text{add}} (\texttt{mem}[t_1], \texttt{mem}[q])$.\Comment{Increment counter, $i:=i+1$.}
        \State if $\texttt{mem}[t_1]\leq 0$ goto instruction $2$.\Comment{Keep looping back as long as $i\leq 0$.}
        \State $\texttt{mem}[\text{norm\_square}] = f_{\text{mul}}(\texttt{mem}[b],\texttt{mem}[b])$. \Comment{Calculate $\norm{\rvb_T}^2$.}
        \Statex Code for \cref{alg:newt_inv_sqrt} begins.
        \State $\texttt{mem}[\text{y}] = f_{\text{mul}}(\texttt{mem}[\text{inv\_norm}],\texttt{mem}[\text{inv\_norm}])$. \Comment{Calculate $x_k^2$.}
        \State $\texttt{mem}[\text{y}] = f_{\text{mul}}(\texttt{mem}[\text{norm\_square}],\texttt{mem}[\text{y}])$. \Comment{Calculate $Sx_k^2$.}
        \State $\texttt{mem}[\text{y}] = f_{\text{mul}}(\texttt{mem}[r],\texttt{mem}[\text{y}])$. \Comment{Calculate $Sx_k^2/2$.}
        \State $\texttt{mem}[\text{y}] = f_{\text{sub}}(\texttt{mem}[s],\texttt{mem}[\text{y}])$. \Comment{Calculate $(3-Sx_k^2)/2$.}
        \State $\texttt{mem}[\text{inv\_norm}] = f_{\text{mul}}(\texttt{mem}[\text{inv\_norm}],\texttt{mem}[\text{y}])$. \Comment{Update $x_{k+1}:=x_{k}(3-Sx_k^2)/2$.}
            \State $\texttt{mem}[t_2] = f_{\text{add}} (\texttt{mem}[t_2], \texttt{mem}[q])$.\Comment{Increment counter, $j:=j+1$.}
        \State if $\texttt{mem}[t_2]\leq 0$ goto instruction $6$.\Comment{Keep looping back as long as $j\leq 0$.}
        \Statex Code for \cref{alg:newt_inv_sqrt} ends.
        \State $\texttt{mem}[b] = f_{\text{mul}}(\texttt{mem}[b],\texttt{mem}[\text{inv\_norm}])$. \Comment{$\rvb:=\rvb_T/\|\rvb_T\|$.}
         \State EOF.\Comment{End of File command.}
        \end{algorithmic}}
    \end{algorithm}

\paragraph{What other numerical linear algebra algorithms can transformers implement?}
The algorithms presented above serve as proof of concept for the potential to build small linear algebra libraries using our transformer construction. As demonstrated, the size of the looped transformer is constant regardless of the depth. To implement iterative numerical algorithms, additional functions can be incorporated into our architecture. For instance, QR decomposition, Gauss-Seidel, Arnoldi iteration, or Lanczos algorithm can be implemented. While we have not included detailed code for these specific algorithms, the above examples should provide sufficient insight on how to do so.

\section{Emulating Learning Algorithms at Inference Time}\label{sec:SGD}
In this section we demonstrate the ability of our unified template to emulate Stochastic Gradient Descent (SGD). We begin by examining the case of  linear models, before extending our results to the implementation of the backpropagation algorithm for two layer neural networks. Utilizing this as a ``function'' which we call at each step, we demonstrate the application of SGD in updating the implicit weights of a model. 

Our work demonstrates that looped transformers can effectively perform in-context learning for a wide range of models and achieve high levels of accuracy, given access to a sufficient number of inference calls/loops. Previous research, such as \cite{akyurek2022learning} and \cite{gargcan}, has limited in-context learning to a single inference call of a transformer model deeper than ours, which restricts the types of models that can be learned and the level of accuracy that can be achieved. To implement complex iterative programs like SGD, either a looped structure transformer or one that grows in size with the program's depth is required, unless widely believed complexity conjectures are falsified. Additionally, this is the first work to show that transformers can implement SGD on more general loss functions and models beyond linear regression.

\paragraph{Stochastic Gradient Descent in linear models.} In \cref{alg:sgd_linear} we provide the program for running SGD in linear models; that is we perform updates of the form: $\weight_{t+1} = \weight_t -\eta\sum_{i=1}^{\mathcal{D}} (\weight^\top\dpoint_i-y_i)\dpoint_i$, where $\weight$ is the weight vector,  $(\dpoint_i, y_i)$ is the feature-label pair of the $i-$th data point, and $\eta$ is the step-size.  The program  iterates through the $\mathcal{D}$ data points that the user gives and cycles back to the first point after one pass is completed. The step-size is given as input by the user.  %

\begin{lemma}\label{lem:sgd_linear}
Let $\eps > 0$, there exists a transformer with 13 layers, 1 head and dimensionality $O(\log(\mathcal{D})+d)$ that uses the Unified Attention Based Computer framework in \cref{sec:fleq} to implement $T$ iterations of SGD on a weight vector $\weight\in\R^d$, over a set of $\mathcal{D}$ data points $(\dpoint_i,y_i)\in\R^{d+1}$, $i=1,\hdots,\mathcal{D}$ with error up to $\eps$. The step size is given as a parameter to the program.%
\end{lemma}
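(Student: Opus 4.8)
The plan is to realize one SGD step as a short \texttt{FLEQ} program and then invoke \cref{thm:unified}. Writing the per-point update as the composition $\rvw \leftarrow \rvw - \eta\,(\rvw^\top\dpoint_i - y_i)\,\dpoint_i$, every operation that appears — the inner product $\rvw^\top\dpoint_i$, the scalar subtraction $\rvw^\top\dpoint_i - y_i$, multiplying a vector by a scalar, scaling by $-\eta$, and the final vector subtraction — is already available as one of the transformer-based function blocks: matrix/vector multiplication (\cref{lem:matrixmul}, \cref{cor:mul}), subtraction (\cref{cor:sub}), and addition (\cref{cor:add}), possibly preceded by a transpose (\cref{lem:transpose}) so that operands sit in the column layout these blocks expect. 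I would first fix the memory layout: reserve locations for the data matrix $\rmX = [\dpoint_1\ \cdots\ \dpoint_{\mathcal{D}}]$ and labels $\rvy$, for the current weight $\rvw$, for $\eta$, for the constants $\{0,1,-1\}$ used by the branching machinery, and for two counters — one tracking the position within a pass over the $\mathcal{D}$ points and one tracking the $T$ outer iterations — together with a data pointer that is advanced each step and reset after each full pass.

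Next I would write the explicit program, in the same style as \cref{alg:inverse}: a block of \texttt{FLEQ} instructions that (i) reads $\dpoint_i$ and $y_i$ through the current data pointer, (ii) computes $\rvw^\top\dpoint_i$ via $f_{\text{mul}}$ (transposing as needed), (iii) subtracts $y_i$ via $f_{\text{sub}}$, (iv) multiplies the resulting scalar into $\dpoint_i$ and then by $-\eta$ via $f_{\text{mul}}$, (v) updates $\rvw$ via $f_{\text{add}}$, (vi) increments the within-pass counter and advances the data pointer via $f_{\text{add}}$, (vii) conditionally branches back to step (i) until a full pass over the $\mathcal{D}$ points is done, then resets the pointer and within-pass counter, increments the outer counter, and branches back to (i) until $T$ iterations have executed, after which control reaches \texttt{EOF}. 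Since \cref{thm:unified} permits each instruction to be either a function call or a conditional branch, this program runs directly on the looped transformer of that theorem. Its depth is $9 + \max_m l_m$; the function blocks invoked have at most $4$ layers (the transpose block), giving $9 + 4 = 13$ layers, and a single attention head suffices — the one head each block needs, reused across blocks. The width from \cref{thm:unified} is $O(Md + \log n)$, and here $M$ is a fixed small constant while $n$, the total length of program plus memory, is $O(\mathcal{D})$ up to the $d$-wide data blocks, which yields dimensionality $O(\log\mathcal{D} + d)$.

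It remains to control the accumulated error. Each function-block call returns its exact output plus an additive perturbation of operator norm at most $\delta$, where $\delta$ can be made as small as desired by raising the softmax temperature in the \texttt{read}/\texttt{write} steps and in the linearized-softmax multiplication (\cref{lem:read}, \cref{lem:matrixmul}); the counter and branch arithmetic is on bounded integers and can be made exact by inserting an error-correction step as in the proof of \cref{lem:OISC}. Hence, writing $\rvw^{(\mathrm{transf})}_t$ for the computed iterate, one update inflates $\|\rvw^{(\mathrm{transf})}_t - \rvw_t\|$ by a factor controlled only by $\|\rmX\|$, $|\eta|$ and a bound on $\|\rvw\|$ along the trajectory, plus an $O(\delta)$ additive term; unrolling this recursion over the $T$ steps bounds the final error by $O(\delta)$ times a quantity depending on $T$, $\mathcal{D}$, the data, and $\eta$, so choosing the temperature large enough (hence $\delta$ small enough) makes it at most $\eps$. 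The main obstacle I expect is precisely this error bookkeeping — verifying that the per-step errors introduced by approximate attention do not blow up over iterations and that all intermediate quantities stay within the input ranges where the function blocks are valid — rather than the program itself, which is a routine transcription of the SGD recursion into \texttt{FLEQ}.
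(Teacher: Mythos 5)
Your proposal is correct and follows essentially the same route as the paper: the paper's proof is exactly the transcription of the SGD recursion into a \texttt{FLEQ} program (its Algorithm for simulating SGD) built from $f_{\text{mul}}$, $f_{\text{sub}}$, $f_{\text{add}}$ and pointer increment/reset blocks, with counters and conditional branches handling the inner pass over the $\mathcal{D}$ points and the $T$ outer epochs, the depth bound $9+\max_m l_m=13$ coming from \cref{thm:unified}, and the error accumulation over $T$ steps controlled by the softmax temperature as in \cref{app:error}. The only cosmetic difference is that the paper advances the data and label pointers by rewriting the instructions themselves in place, which matches your ``advance and reset the data pointer'' bookkeeping.
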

\begin{remark}
The error is controlled by two parameters: the temperature $\lambda$ and the constants used in the proof of \cref{lem:matrixmul}. Implementing arbitrary loss functions $f$ and thus updates of the form $\weight_{t+1} = \weight_t -\eta\sum_{i=1}^{\mathcal{D}} f'(\weight^\top\dpoint_i-y_i)\dpoint_i$ would introduce an extra error as %
a result of Barron's theorem (\cref{th:Barron}) applied in \cref{lem:sigmoids}. Specifically,  we would need  in general $poly(T\mathcal{D})$ heads, in order to ensure control over this approximation error. However, if the derivative $f'(x)$ of the loss function $f(x)$ is a sum of sigmoids, the number of heads will be equal to the number of sigmoids required, and there will be no error associated with this aspect of the construction.
\end{remark}

\begin{algorithm}[H]{\small
        \caption{Program to simulate SGD using our Unified Attention Based Computer}
        \begin{algorithmic}[1]
            \Require $\texttt{mem}[w]=\weight$, $\texttt{mem}[\eta]=\eta$. 
            \Comment{Location of the weight and  step-size.}
            \Require  $\texttt{mem}[x_0+i-1]=\dpoint_i$, $i=1,\hdots,\mathcal{D}$. \Comment{Location of the data points.}
            \Require  $\texttt{mem}[y_0+i-1]=y_i$, $i=1,\hdots,\mathcal{D}$. \Comment{Location of the labels.}
            \Require  $\emb_{x_*}=x_0$.
            \Comment{$\emb_{x_*}$ is a pointer to the first data. }%
            \Require  $\emb_{y_*}=y_0$.
            \Comment{$\emb_{y_*}$ is a pointer to the first label. }
            \Require  $\emb_{\pointer} = \text{instr}_1$. \Comment{Program Counter points to first instruction. }
            \Require $\texttt{mem}[q] = 1$, $\texttt{mem}[p]=0$, $\texttt{mem}[z] = n$. \Comment{Constants.}
            \Require %
            $\texttt{mem}[j] =-\mathcal{D}$.\Comment{Within epoch iteration counter initialized to $-n$.}
            \Require %
            $\texttt{mem}[k] =-T$.\Comment{Epoch counter initialized to $-T$.}
            \vspace{0.5ex}
            \State ( $\text{instr}_1$) \quad $\texttt{mem}[temp] = f_{\text{mul}} ( \texttt{mem}[\emb_{x_*}],\texttt{mem}[w])$. \Comment{Inner product: $\weight^\top \dpoint_i$.}
            
            \State ( $\text{instr}_2$) \quad $\texttt{mem}[temp] = f_{\text{sub}} ( \texttt{mem}[temp],\texttt{mem}[\emb_{y_*}])$. \Comment{Substract the label: $\weight^\top\dpoint_i - y_i$.}

           \State ( $\text{instr}_3$) \quad $\texttt{mem}[temp] = f_{\text{mul}}( \texttt{mem}[\emb_{x_*}],\texttt{mem}[temp])$. \Comment{Multiply with the data point $\dpoint_i$. }
           
                \State $\texttt{mem}[temp] = f_{\text{mul}}(\texttt{mem}[temp],\texttt{mem}[\eta])$. \Comment{Multiply with the step-size.}
            
             \State $\texttt{mem}[w] = f_{\text{sub}}(\texttt{mem}[w],\texttt{mem}[temp])$. \Comment{Subtract from $\weight$ one gradient step.}
             
        \State $\texttt{mem}[\text{instr}_1] = f_{\text{incr\_pointer}}(\texttt{mem}[\text{instr}_1]).$\Comment{Increment pointer.}
        
         \State $\texttt{mem}[\text{instr}_2] = f_{\text{incr\_pointer}}(\texttt{mem}[\text{instr}_2]).$\Comment{Increment pointer.}
        
        \State $\texttt{mem}[\text{instr}_3] = f_{\text{incr\_pointer}}(\texttt{mem}[\text{instr}_3]).$ \Comment{Increment pointer.}
        
        \State $\texttt{mem}[j] = f_{\text{add}} (\texttt{mem}[j], \texttt{mem}[q])$.\Comment{Increment within epoch iteration counter by 1.}
        
        \State if $\texttt{mem}[j]\leq 0$ goto 1. \Comment{Cycle back to the first data point.}
        
        \State $\texttt{mem}[j] = -\mathcal{D}.$ \Comment{Reset counter.}
        
        \State $\texttt{mem}[\text{instr}_1] = f_{\text{reset\_pointer}}(\texttt{mem}[\text{instr}_1], x_0).$ \Comment{Reset pointer.}
         \State $\texttt{mem}[\text{instr}_2] = f_{\text{reset\_pointer}}(\texttt{mem}[\text{instr}_2], y_0).$ \Comment{Reset pointer.}
        \State $\texttt{mem}[\text{instr}_3] = f_{\text{reset\_pointer}}(\texttt{mem}[\text{instr}_3], x_0).$ \Comment{Reset pointer.}
        
        \State $\texttt{mem}[k] = f_{\text{add}} (\texttt{mem}[k], \texttt{mem}[q])$.\Comment{Increment epoch counter by 1.}
        
        \State if $\texttt{mem}[k]\leq 0$ goto 1. \Comment{Cycle back to the first data point.}
        
        \State EOF. \Comment{End of File command.}
        \end{algorithmic}\label{alg:sgd_linear}}
    \end{algorithm}

The following will detail the essential procedures for implementing the Stochastic Gradient Descent algorithm. We employ three pointers, namely $\emb_{\pointer}$, $\emb_{x_*}$ and $\emb_{y_*}$ in our algorithm. The first one, referred to as  program counter,   is used to iterate through the commands; after one pass over all data points is completed, the program counter is reset to the first instruction (line 16), until $T$ full passes have been completed. The second and third ones, referred to as data and label pointer respectively, iterate through the features and labels one by one. The increment of the pointer $\emb_{x_*}$ needs to occur in both instructions 1 and 3, as to in the next iteration they have been updated  from $\text{instr}_i(\emb_{x_*},w,temp) \to \text{instr}_i(\emb_{x_*}+1,w,temp)$, $i=1,3$.
The same holds for the pointer $\emb_{y_*}$ in line 7. Finally, we reset the two pointers in lines 13,14 to cycle back in the first feature and label.

To enhance understanding, we note that lines 6-8 modify the instructions themselves; instead of doing this we could have $\mathcal{D}$ copies of the lines 1-3, each one with parameters pointers of a different (feature,label) pair. In that case the number of \emph{instructions} would have been $7\mathcal{D}$. 

Notice that the functions $f_{\text{incr\_pointer}}$ and $f_{\text{reset\_pointer}}$ can be directly implemented using \cref{app:increase}.

\paragraph{Backpropagation and SGD.} 
We will now generalize the result of \cref{lem:sgd_linear} to two layer neural networks with non-linear activation functions; we demonstrate in \cref{alg:sgd} how this can be achieved if the activation function is  the sigmoid function.

Closest to this section  is the work of \cite{akyurek2022learning}, where the authors prove that constant number of layers is needed to perform one step SGD in linear models, using decoder only transformer architecture.

\begin{algorithm}[H]{\small
\caption{Backpropagation}\label{alg:backprop_pseudo}
\begin{algorithmic}[1]
\Statex {Loss function: $J(x)= \frac{1}{2}x^2$}
 \Statex{ Input: $\weights_1\in\R^{m\times d}$, $\bias_1\in \R^m$, $\weights_2\in\R^{m\times 1}$, $\bias_2\in\R$ $\dpoint\in\R^d$, $y\in\R$}
 \State Compute $\vz = \weights_1 \dpoint +\bias_1$.
 \State Compute $ \va = \sigma(\vz)$.
 \State Compute $o = \weights_2\va +\bias_2$.
 \State Compute $\delta_2 = (o-y)$.
 \State Compute $\delta_1 =  \sigma'(\vz) \odot\weights_2 (o-y) $.
 \State Compute $\frac{\partial J}{\partial\weights_2} = \delta_2 \va^\top$.
 \State Compute $\frac{\partial J}{\partial\bias_2} = \delta_2$.
 \State Compute $\frac{\partial J}{\partial\weights_1} = \delta_1\dpoint^\top$.
 \State Compute $\frac{\partial J}{\partial\bias_1} = \delta_1$.
\end{algorithmic}}
\end{algorithm}

\begin{lemma}\label{lem:SGD}
Let $\eps > 0$, there exists a transformer with 13 layers, 1 head and dimensionality $O(\log(\mathcal{D})+d)$ that uses the Unified Attention Based Computer framework in \cref{sec:fleq} to implement $T$ iterations of SGD on a two layer neural network, over a set of $\mathcal{D}$ data points $(\dpoint_i,y_i)\in\R^{d+1}$, $i=1,\hdots,\mathcal{D}$ with error up to $\eps$. The step size is given as a parameter to the program.
\end{lemma}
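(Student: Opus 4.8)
The plan is to follow the template of the proof of \cref{lem:sgd_linear}: I would exhibit an explicit \texttt{FLEQ} program realizing one forward-and-backward pass of \cref{alg:backprop_pseudo}, then the parameter updates $\weights_k \gets \weights_k - \eta\,\partial J/\partial\weights_k$ and $\bias_k \gets \bias_k - \eta\,\partial J/\partial\bias_k$ for $k=1,2$, all wrapped in the same two nested loops used there --- an inner loop cycling a feature/label pointer over the $\mathcal{D}$ examples and an outer loop counting the $T$ epochs --- together with the pointer-increment and pointer-reset blocks $f_{\text{incr\_pointer}}, f_{\text{reset\_pointer}}$ obtained from \cref{app:increase}. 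As in \cref{alg:sgd_linear}, mutating the data pointers inside the instructions keeps the instruction count constant (independent of $T$ and $\mathcal{D}$), and the looped transformer is run $T' = O(T\mathcal{D})$ times, one instruction per loop.

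Next I would enumerate the transformer-based function blocks the program calls and check each against \cref{def:tf_func_block}. All matrix/vector products appearing in \cref{alg:backprop_pseudo} --- $\weights_1\dpoint$, $\weights_2^\top\va$, and the outer products $\delta_2\va^\top$, $\delta_1\dpoint^\top$ --- are expressed through the primitive $\rmA^\top\rmB$ of \cref{lem:matrixmul} ($2$ layers), composed with the transpose block $f_{\text{transp}}$ of \cref{lem:transpose} ($4$ layers) wherever an orientation change is needed, exactly as in \cref{alg:inverse}; the bias additions, the residuals $o-y$ and $\vOne-\va$, and the gradient steps use $f_{\text{add}}/f_{\text{sub}}$ of \cref{cor:add}/\cref{cor:sub} ($3$ layers). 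The only genuinely new blocks are the element-wise sigmoid $f_\sigma(\vz)$ and the element-wise (Hadamard) product $f_\odot(\vu,\vv)=\vu\odot\vv$; the sigmoid derivative needs no separate block since $\sigma'(\vz)=f_\odot\!\big(f_\sigma(\vz),\,\vOne-f_\sigma(\vz)\big)$. For $f_\sigma$ I would reuse the construction behind \cref{lem:sigmoids}: a softmax with one large ``sink'' coordinate computes $\sigma(\cdot)$ with a single term ($c_1=1$), and since a transformer layer acts column-wise it can be applied in parallel to the $m$ coordinates of $\vz$; there is \emph{no} Barron approximation error here because the function to be emulated is itself a sigmoid, and the single-head encoding of the coefficients (cf. the remark after \cref{lem:sigmoids}) keeps the head count at one. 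For $f_\odot$ I would use $u_iv_i=\tfrac14\big((u_i+v_i)^2-(u_i-v_i)^2\big)$ together with the fact, noted in \cref{sec:function}, that squaring is realizable in essentially two layers (alternatively, form $\mathrm{diag}(\vu)$ by a fixed read/write permutation and multiply by $\vv$ with $f_{\text{mul}}$); in either case $f_\odot$ uses at most $4$ layers. Hence $\max_i l_i = 4$, and invoking \cref{thm:unified} with the constant number $M=O(1)$ of blocks gives a looped transformer with $9+\max_i l_i = 13$ layers, $\sum_i h_i = O(1)$ heads --- collapsed to a single head --- and dimensionality $O(Md+\log n) = O(d + \log\mathcal{D})$, since the program length $n$ is linear in $\mathcal{D}$ plus a constant.

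Finally I would bound the error. The only sources are (i) the $O(\epsilon_\lambda)$ error of each \texttt{read}/\texttt{write}, controlled by the softmax temperature $\lambda$ and removable on the $\pm1$ pointer/indicator bits by a bit-correction step as in Step~6 of the proof of \cref{lem:OISC}, and (ii) the linearization error of each $f_{\text{mul}}$, controlled by the large constant $C$ of \cref{lem:matrixmul}; the sigmoid blocks are exact. Since, for the step size $\eta$ supplied to the program and the fixed horizon $T$, the iterates $(\weights_1,\bias_1,\weights_2,\bias_2)$ and all intermediate quantities $\vz,\va,o,\delta_1,\delta_2$ remain in a bounded region (depending on $\eta$, $T$, and the data), one backprop-and-update step is a Lipschitz map of its inputs there with some constant $L$, so a per-step perturbation of size $\rho$ yields total error $O(L^{T'}\rho)$; choosing $\lambda$ and $C$ so that $\rho \le \eps\,L^{-T'}$ gives total error $\le \eps$. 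I expect this error-accumulation argument --- and verifying that the $f_\odot$ (hence $f_{\sigma'}$) block conforms \emph{exactly} to the column layout required by \cref{def:tf_func_block} --- to be the main obstacle; the remainder is a mechanical transcription of backpropagation and SGD into \texttt{FLEQ}, parallel to \cref{lem:sgd_linear}.
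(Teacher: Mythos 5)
Your proposal is correct and follows the paper's overall route: an explicit \texttt{FLEQ} program for one backpropagation pass plus the four parameter updates, wrapped in the same two nested counters and pointer-increment/reset blocks as \cref{alg:sgd_linear}, with the layer count $9+\max_i l_i=13$ coming from \cref{thm:unified} and the transpose block being the deepest primitive. The one genuine divergence is how you handle the elementwise step $\delta_1=\sigma'(\vz)\odot\weights_2\delta_2$. The paper introduces no Hadamard block at all: in \cref{alg:backprop} it unrolls this step into a scalar loop over the $m$ coordinates of $\vz$ (lines 13--21), computing $\sigma(z_i)$, $1-\sigma(z_i)$, $\sigma'(z_i)$ and the product $\sigma'(z_i)(\weights_2)_i(o-y)$ one entry at a time via the existing scalar $f_{\text{mul}}$/$f_{\text{sub}}$/$f_{\text{sigmoids}}$ blocks and incrementable pointers into consecutive memory cells. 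This costs $O(m)$ extra loop iterations of the transformer per data point but requires verifying nothing new against \cref{def:tf_func_block}. Your dedicated $f_\odot$ block keeps the instruction count per backprop step constant in the hidden width, but note that your primary route via the polarization identity needs \emph{elementwise} squaring of a vector, which is not one of the paper's existing primitives ($f_{\text{mul}}(\vu,\vu)$ gives $\|\vu\|^2$, not $\vu\odot\vu$); your fallback of forming $\mathrm{diag}(\vu)$ by a fixed scatter and applying $f_{\text{mul}}$ is the version that actually goes through, and it is a new construction you would have to check against the template, as you acknowledge. On the error side, your multiplicative $L^{T'}\rho$ Lipschitz accumulation with $\lambda, C$ chosen as $\rho\le\eps L^{-T'}$ is a more explicit version of the paper's additive accumulation argument in the appendix, which silently absorbs the same amplification into a norm-$\le 1$ renormalization; either is acceptable at the paper's level of rigor.
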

\begin{remark}
The program we provide in \cref{alg:backprop} is implemented as an independent function, which we call multiple times. Specifically, in line 1 of \cref{alg:sgd} we call the algorithm for backpropagation at each iteration with a different data point. In terms of our construction,  this translates to different instructions which will be in total $O(\mathcal{D})$, each one with parameters pointers to a different data point. However, as in \cref{alg:sgd_linear} the utilization of a pointer that changes the instructions themselves, would result in a program of constant length; we did not do this in order to contain the total length of the program.
\end{remark}
\begin{remark}
If we want to account for different activation functions we can use \cref{lem:sigmoids} to express the activation function and its derivative as sums of sigmoids. The number of heads would need to be in that case $poly(T\mathcal{D})$ to ensure control over the error induced by the approximation.
\end{remark}
\begin{algorithm}[H]{\small
        \caption{Program to simulate Backpropagation for two layer Neural Networks}
        \begin{algorithmic}[1]
        \Statex \textbf{Input:} $\emb_{w_1},\emb_{w_2},\emb_{b_1},\emb_{b_2}$ \Comment{Pointers to weights and biases.}
        
        \Statex \textbf{Input:} $\emb_{x},\emb_{y}$ \Comment{Pointer to data point and label.}
        
        \Statex \textbf{Input:} $\eta$. \Comment{Pointer to step size.}
        
            \Require $\texttt{mem}[q] = 1$, $\texttt{mem}[p]=0$, $\texttt{mem}[r] =-1$, $\texttt{mem}[m] = m$. \Comment{Constants.}
            
            \Require %
            $\texttt{mem}[k] =1$.\Comment{Iteration counter,  $k:=1$.}
            
            \Require $\emb_{z} = z_{\textsc{T}}^1$. 
            \Comment{Pointer for $z$.}
            
            \Require $\emb_{\delta} = \delta_{1,\textsc{T}}^1$. \Comment{Pointer for $\delta_1$.}
            \vspace{0.5ex}
            \State ( $\text{instr}_1$) $\texttt{mem}[temp] = f_{\text{trans}} (\texttt{mem}[\emb_{w_1}], \texttt{mem}[p])$. 
            \Comment{Create $\weights_1^\top$.}
            
            \State $\texttt{mem}[z] = f_{\text{mul}} (\texttt{mem}[temp], \texttt{mem}[\emb_{x}])$. 
            \Comment{Multiply: $\weights_1 \dpoint$.}
            
           \State $\texttt{mem}[z] = f_{\text{add}}(\texttt{mem}[z],\texttt{mem}[\emb_{b_1}])$. \Comment{Add the bias: Compute $\vz$. }
           
           \State $\texttt{mem}[a] = f_{\text{sigmoids}}(\texttt{mem}[z],\texttt{mem}[q])$. \Comment{Compute $\va = \sigma(\vz)$. }
           
           \State $\texttt{mem}[temp] = f_{\text{trans}} (\texttt{mem}[\emb_{w_2}], \texttt{mem}[p])$. 
            \Comment{Create $\weights_2^\top$.}
            
            \State $\texttt{mem}[o] = f_{\text{mul}}(\texttt{mem}[temp],\texttt{mem}[a])$. \Comment{Multiply: $\weights_2\va$.}
            
            \State $\texttt{mem}[o] = f_{\text{add}}(\texttt{mem}[o],\texttt{mem}[\emb_{b_2}])$. \Comment{Add bias: Compute $o$.}
            
             \State $\texttt{mem}[\delta_2] = f_{\text{sub}}(\texttt{mem}[o],\texttt{mem}[\emb_{y}])$. \Comment{Compute $\delta_2$.}
            
             \State $\texttt{mem}[\delta_1] = f_{\text{mul}}(\texttt{mem}[\emb_{w_2}],\texttt{mem}[\delta_2])$. \Comment{Multiply $\weights_2\delta_2$.}
             
             \State $\texttt{mem}[flag] = f_{\text{sub}}(\texttt{mem}[k],\texttt{mem}[m])$. \Comment{Create $k-m$.}
             
              \State $ \texttt{mem}[\emb_z] = f_{\text{trans}}(\texttt{mem}[z],\texttt{mem}[p])$. \Comment{Store $\vz$ to consecutive memory cells.}
             
             \State  $ \texttt{mem}[\emb_\delta] = f_{\text{trans}}(\texttt{mem}[\delta_1],\texttt{mem}[p])$. \Comment{Store $\delta_1$ to consecutive memory cells.}
             
             \State if $\texttt{mem}[flag]\leq 0$ goto 20.\Comment{If we iterated all the elements goto next command. }

             \State ( $\text{instr}_{14}$) $\texttt{mem}[temp'] = f_{\text{sigmoids}}(\texttt{mem}[p],\texttt{mem}[\emb_z])$. \Comment{Create $\sigma(z_i)$.}
             
             \State  $\texttt{mem}[temp''] = f_{\text{sub}}(\texttt{mem}[q],\texttt{mem}[temp'])$. \Comment{Create $1-\sigma(z_i)$.}
             \State  $\texttt{mem}[temp'] = f_{\text{mul}}(\texttt{mem}[temp'],\texttt{mem}[temp''])$. \Comment{Create $\sigma'(z_i)=\sigma(z_i)(1-\sigma(z_i))$.}

             \State ( $\text{instr}_{17}$) $\texttt{mem}[\emb_\delta] = f_{\text{mul}}(\texttt{mem}[temp'],\texttt{mem}[\emb_\delta])$. \Comment{Create $\sigma'(z_i)(\weights_2)_i(o-y)$.}
             
             \State $\texttt{mem}[\text{instr}_{14} ]= f_{\text{incr\_pointer}}(\texttt{mem}[\text{instr}_{14}])$. \Comment{Point to next element of $z$.}
             
             \State $\texttt{mem}[\text{instr}_{17} ]= f_{\text{incr\_pointer}}(\texttt{mem}[\text{instr}_{17}])$. \Comment{Point to next element of $\delta_1$.}
             
        \State $\texttt{mem}[k] = f_{\text{add}} (\texttt{mem}[k], \texttt{mem}[q])$.\Comment{Increment counter, $k:=k+1$.}

        \State If $\texttt{mem}[p]\leq 0$ goto 13.\Comment{Loop back.}

        \State $\texttt{mem}[\text{instr}_1] = f_{\text{reset\_pointer}}(\texttt{mem}[\text{instr}_{14}], z^1_\top)$. \Comment{Reset pointer.}
        
        \State $\texttt{mem}[\text{instr}_{15}] = f_{\text{reset\_pointer}}(\texttt{mem}[\text{instr}_{15}], \delta_{1,\top}^1)$. \Comment{Reset pointer.}
        
        \State  $\texttt{mem}[grad\_W_2] =f_{\text{mul}}(\texttt{mem}[\delta_2],\texttt{mem}[a])$. \Comment{Create $\frac{\partial J}{\partial \rmW_2}$.}
        
        \State  $\texttt{mem}[grad\_b_2] =f_{\text{mul}}(\texttt{mem}[\delta_2],\texttt{mem}[q])$. \Comment{Create $\frac{\partial J}{\partial \rvb_2}$.}
        
         \State  $\texttt{mem}[grad\_W_1] =f_{\text{mul}}(\texttt{mem}[\delta_1],\texttt{mem}[\emb_{x}])$. \Comment{Create $\frac{\partial J}{\partial \rmW_1}$.}
        
        \State  $\texttt{mem}[grad\_b_1] =f_{\text{mul}}(\texttt{mem}[\delta_1],\texttt{mem}[q])$. \Comment{Create $\frac{\partial J}{\partial \rvb_1}$.}
        
        \State  $\texttt{mem}[temp] =f_{\text{mul}}(\texttt{mem}[grad_{W_2}],\texttt{mem}[\eta])$. \Comment{Multiply with step-size.}
        
         \State  $\texttt{mem}[\emb_{w_2}] =f_{\text{sub}}(\texttt{mem}[\emb_{w_2}],\texttt{mem}[temp])$. \Comment{Update $\rmW_2$.}
        
        \State  $\texttt{mem}[temp] =f_{\text{mul}}(\texttt{mem}[grad_{W_1}],\texttt{mem}[\eta])$. \Comment{Multiply with step-size.}
        
         \State  $\texttt{mem}[\emb_{w_1}] =f_{\text{sub}}(\texttt{mem}[\emb_{w_1}],\texttt{mem}[temp])$. \Comment{Update $\rmW_1$.}
        
        \State  $\texttt{mem}[temp] =f_{\text{mul}}(\texttt{mem}[grad_{b_2}],\texttt{mem}[\eta])$. \Comment{Multiply with step-size.}
        
         \State  $\texttt{mem}[\emb_{b_2}] =f_{\text{sub}}(\texttt{mem}[\emb_{b_2}],\texttt{mem}[temp])$. \Comment{Update $\rvb_2$.}
        
        \State  $\texttt{mem}[temp] =f_{\text{mul}}(\texttt{mem}[grad_{b_1}],\texttt{mem}[\eta])$. \Comment{Multiply with step-size.}
        
         \State  $\texttt{mem}[\emb_{b_1}] =f_{\text{sub}}(\texttt{mem}[\emb_{b_1}],\texttt{mem}[temp])$. \Comment{Update $\rvb_1$.}
        
        \end{algorithmic}\label{alg:backprop}}
    \end{algorithm}

\begin{algorithm}[H]{\small
        \caption{Program to simulate SGD using our Unified Attention Based Computer}
        \begin{algorithmic}[1]
            \Require $\texttt{mem}[w_1]=\rmW_1,\texttt{mem}[w_2]=\rmW_2$.
            \Comment{Location weights and biases.}
            \Require $\texttt{mem}[b_1]=\rvb_1,\texttt{mem}[b_2]=\rvb_2$.
            \Comment{Location of biases.}
            \Require  $\texttt{mem}[x_0+i-1]=\dpoint_i$, $i=1,\hdots,\mathcal{D}$. \Comment{Location of the data points.}
            \Require  $\texttt{mem}[y_0+i-1]=y_i$, $i=1,\hdots,\mathcal{D}$. \Comment{Location of the labels.}
            
            \Require $\texttt{mem}[z] = \ve.$\Comment{Indicator for the choice of  loss function}

            \Require  $\emb_{x_*}=x_0$. \Comment{$\emb_{x_*}$ is a pointer to the first data. }%
            \Require  $\emb_{y_*}=y_0$. \Comment{$\emb_{y_*}$ is a pointer to the first label. }
            \Require  $\emb_{\pointer} = \text{instr}_1$. \Comment{Program Counter points to first instruction. }
            \Require $\texttt{mem}[q] = 1$, $\texttt{mem}[p]=0$, $\texttt{mem}[z] = n$. \Comment{Constants.}
            \Require %
            $\texttt{mem}[j] =-\mathcal{D}$.\Comment{Within epoch iteration counter initialized to $-n$.}
            \Require %
            $\texttt{mem}[k] =-T$.\Comment{Epoch counter initialized to $-T$.}
            \vspace{0.5ex}

        \State $\text{Backpropagation}(w_1,w_2,b_1,b_2,\emb_{x_*},\emb_{y_*})$ \Comment{Perform one step of SGD using Backpropagation}
        \State $\texttt{mem}[j] = f_{\text{add}} (\texttt{mem}[j], \texttt{mem}[q])$.\Comment{Increment within epoch iteration counter by 1.}
        \State $\emb_{x_*} = f_{\text{incr\_pointer}}(\emb_{x_*})$. \Comment{Show to next data point.}
        \State $\emb_{y_*} = f_{\text{incr\_pointer}}(\emb_{y_*})$ \Comment{Show to next label.}
        \State if $\texttt{mem}[j]\leq 0$ goto 1. \Comment{Cycle back until all data points are iterated.}
        
        \State $\texttt{mem}[j] = -\mathcal{D}.$ \Comment{Reset counter.}
        
        \State $\emb_{x_*} = f_{\text{reset\_pointer}}(\emb_{x_*}, x_0).$ \Comment{Reset pointer.}
        \State $\emb_{y_*} = f_{\text{reset\_pointer}}(\emb_{y_*}, y_0).$ \Comment{Reset pointer.}
        \State $\texttt{mem}[\text{instr}_3] = f_{\text{reset\_pointer}}(\texttt{mem}[\text{instr}_3], x_0).$ \Comment{Reset pointer.}
        \State $\texttt{mem}[k] = f_{\text{add}} (\texttt{mem}[k], \texttt{mem}[q])$.\Comment{Increment epoch counter by 1.}
        \State if $\texttt{mem}[k]\leq 0$ goto 1. \Comment{Cycle back to the first data point.}
        \State EOF. \Comment{End of File command.}
        \end{algorithmic}\label{alg:sgd}}
    \end{algorithm}

\paragraph{Generalizing to arbitrary depth.}
Our algorithm above is designed to emulate backpropagation on a neural network that contains only one hidden layer. However, it is important to note that this construction can be generalized to networks of arbitrary depth, with the caveat that the length of the code will scale with the number of layers in the network. This is because each line of code in our algorithm represents one cycle of the looped transformer, and the number of cycles required is directly proportional to the depth of the network. It's important to note that the number of cycles of the looped transformer will be equal to the depth of the network. So the cost of this algorithm is proportional to looping the transformer network as many times as the depth of the network. This means that as the network becomes deeper, the computational cost of training it using our algorithm will also increase.

\label{sec:conclusion}
\section{Conclusion and Open Problems}

In this paper, we have shown that transformer networks can be used as universal computers by programming them with specific weights and placing them in a loop. We demonstrate that a constant number of encoder layers can emulate basic computing blocks, such as lexicographic operations, non-linear functions, function calls, program counters, and conditional branches. We construct a one-instruction set computer (OISC) and use it to map iterative algorithms to programs that can be executed by a transformer network. Our results include constant-depth transformers that emulate a basic calculator, a basic linear algebra library, and even a full backpropagation, in-context learning algorithm. Our findings reveal the potential of transformer networks as programmable compute units and offer insight into the mechanics of attention.

Our study sheds light on the versatility of the attention mechanism and how even a single loop can enable the creation of models that can mimic complex iterative algorithms and execute general programs. Our findings also reveal the ability of transformer models to effectively perform intricate mathematical and algorithmic tasks. It is possible that advanced transformer models like GPT-3 use similar internal subroutines when given in-context examples and instructions. In a sense, these models may have the ability to call upon a specific skill or algorithm, similar to a function call, when given contextual examples and instructions. The unique aspect of this is that the programming language of transformers is in natural language, rather than traditional code. This opens up the possibility of using natural language commands to control and program these models, further expanding their potential as programmable computers.

In conclusion, there are several open problems that warrant further exploration in the field of programmable computers using transformer networks. One of the most intriguing possibilities is the potential to fuse hardcoded models with larger pretrained transformers, in order to harness the strengths of both. Additionally, as our constructions currently do not take into account the language aspect of the input, it would be interesting to investigate ways to tokenize input commands in order to map them to natural language.
Another promising avenue for research is the potential for model distillation, in which larger networks could learn the skills performed by these looped transformers. Additionally, experimental validation through the creation of even smaller networks, trained on input-output pairs as well as internal representations, could provide further insight into the capabilities of these designs.
 Finally considering what architecture changes would make the above designs easier to implement and train, could lead to new insights in the field.

 \bibliographystyle{plainnat}
\bibliography{bibliography.bib}

 \newpage
\appendix

\section{Ommited proofs}\label{app:proofs}

\subsection{Addition of pointers.}
\begin{lemma}\label{app:increase}
   There exists a 1-hidden layer feedforward, ReLU network, with $8d$ activations in the hidden layer and $d$ neurons in the output layer that when given two $d$-dimensional binary vectors representing two non-negative integers, can output the binary vector representation of their sum, as long as the sum is less than $2^{d+1}$. 
\end{lemma}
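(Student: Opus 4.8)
The plan is to explicitly implement the grade-school ripple-carry addition algorithm as a single-hidden-layer ReLU network, producing each output bit from the two input bits at that position together with the incoming carry, while simultaneously computing the carry chain. The subtle point is that a naive ripple-carry circuit has depth $\Theta(d)$, whereas here we only have one hidden layer; so first I would observe that both the $j$-th sum bit and the $j$-th carry are, as Boolean functions of the inputs, threshold functions of the partial sum $S_j := \sum_{k<j} 2^k(a_k+b_k)$ (or, equivalently, of the integer formed by the low-order bits), and a threshold-of-a-linear-function can be written exactly with two ReLU units. Concretely, writing the carry into position $j$ as $c_j = \mathbf{1}\{\,\sum_{k=0}^{j-1} 2^{k}(a_k+b_k) \ge 2^{j}\,\}$, and using $\mathbf{1}\{t\ge 1\} = \relu(t) - \relu(t-1)$ for integer-valued $t$, I would realize each $c_j$ with $2$ ReLU activations whose input is the fixed linear form $2^{-j}\sum_{k<j}2^k(a_k+b_k)$ of the input bits. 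That accounts for $2d$ activations for all carries; the output bit $s_j = a_j \oplus b_j \oplus c_j$ is then an integer-combination ``$a_j+b_j+c_j \bmod 2$'', which for inputs in $\{0,1\}$ can again be written as a bounded difference of a constant number of ReLUs of the affine form $a_j+b_j+c_j$ (it equals $\relu(a_j+b_j+c_j)-2\relu(a_j+b_j+c_j-2)+ \dots$, using that the argument lies in $\{0,1,2,3\}$). Bounding the per-bit gadget by $6$ activations gives the claimed $8d$ hidden units with room to spare, and the output layer is a single linear layer of $d$ neurons reading off the $s_j$.

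The key steps, in order, are: (1) fix the conventions — $\pm1$ versus $\{0,1\}$ encoding of the bit vectors; I would map $\pm1$ bits to $\{0,1\}$ bits by the affine map $b\mapsto (b+1)/2$, absorbing this into the input weights, and note $a_k+b_k\in\{0,1,2\}$. (2) Express each carry $c_j$ exactly as $\relu(t_j)-\relu(t_j-1)$ where $t_j = 2^{-j}S_j$ is an affine function of the inputs taking values in a discrete set that never lands in the open interval $(0,1)$ except at $0$ — here I must check the ``never strictly between'' property so the two-ReLU indicator is exact; this is automatic because $S_j$ is an integer and $S_j\ge 2^j \iff 2^{-j}S_j \ge 1$, and the gap below $1$ is at least $2^{-j}$, but since we evaluate $\relu(t_j)-\relu(t_j-1)$ the result is $0$ for $t_j\le 0$, exactly $t_j$ for $0\le t_j\le 1$ — so I would instead scale to make $t_j$ integer-valued (use $\relu(S_j - 2^j +1)-\relu(S_j-2^j)$, which is exactly $\mathbf 1\{S_j\ge 2^j\}$ since $S_j$ is an integer). (3) Express $s_j$ as a bounded-depth-one combination of ReLUs of $a_j+b_j+c_j\in\{0,1,2,3\}$ computing parity, again using integer arguments so the ReLU differences collapse to exact indicators. (4) Count: $2$ units per carry, $\le 4$ units per parity gadget, total $\le 6d \le 8d$; output layer linear, $d$ neurons. (5) Note the hypothesis ``sum $< 2^{d+1}$'' guarantees the top carry out of position $d-1$ is the only overflow bit and it fits, i.e. no wraparound is needed and the $d$-bit output is correct — actually one should be slightly careful: the sum of two $d$-bit nonnegative integers can be as large as $2^{d+1}-2$, which needs $d+1$ bits in general; the statement's ``$d$ neurons in the output layer'' together with ``sum less than $2^{d+1}$'' should be read as allowing the top bit to occupy the $d$-th output position, so I would interpret the output as a $d$-bit vector representing an integer in $[0,2^{d+1})$ only when it actually fits in $d$ bits, or simply allow the output width to be $d$ as stated and rely on the range hypothesis; I would flag this indexing/off-by-one issue explicitly and pick the convention consistent with how the lemma is used (incrementing a program counter, where no overflow occurs).

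The main obstacle is step (2)–(3): making the ReLU indicator gadgets \emph{exact} rather than approximate. Since $\relu(t)-\relu(t-1)$ equals $t$ on $[0,1]$ and saturates to $0$ and $1$ outside, it is a genuine indicator only when the argument is restricted to integers; the whole construction therefore hinges on arranging that every quantity fed into these gadgets (the partial sums $S_j$, the carries, the bit triples) is integer-valued, which is true here because everything is built from $\{0,1\}$-valued inputs by integer-coefficient linear maps. Once that discreteness is in place, the carry recurrence ``unrolls'' into a single layer because each $S_j$ is directly a linear form of the inputs — we never need the previously computed $c_{j-1}$ as an input to the $c_j$ gadget, we recompute the threshold from scratch — and the $8d$ budget is comfortable. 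I expect the write-up to spend most of its effort on carefully stating these integer-valued threshold identities and the bit-encoding bookkeeping, with the combinatorial ``depth collapse'' being the conceptual heart but a one-line observation once the gadgets are set up.
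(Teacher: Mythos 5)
Your overall strategy---collapsing the ripple-carry chain to depth one by recomputing each threshold from a linear form of the raw input bits---is exactly the right idea, and it is the idea the paper's proof uses. But your execution has a genuine gap at step (3). You compute the carry $c_j=\mathbf{1}\{S_j\ge 2^j\}$ as a difference of two hidden-layer ReLUs of the linear form $S_j=\sum_{k<j}2^k(a_k+b_k)$, which is fine; but you then propose to obtain the output bit $s_j=a_j\oplus b_j\oplus c_j$ by applying further ReLUs to the quantity $a_j+b_j+c_j$. Since $c_j$ is itself the output of hidden-layer units, those parity gadgets constitute a second hidden layer, which the lemma does not allow. Nor can the parity be deferred to the linear output layer: the XOR of three bits is not an affine function of those bits, so no linear readout of $a_j$, $b_j$, and the carry units produces $s_j$.

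The fix is to apply your own depth-collapse observation once more, to the sum bit itself rather than only to the carry: the $j$-th output bit is bit $j$ of the integer $S_{j+1}=\sum_{k\le j}2^k(a_k+b_k)$, hence equals $\mathbf{1}\bigl\{S_{j+1}\in[2^{j},2^{j+1}-1]\cup[3\cdot 2^{j},2^{j+2}-1]\bigr\}$, and each interval-membership test is two or three ReLUs applied directly to the single linear form $S_{j+1}$ of the inputs. This is precisely the paper's construction (six activations per output bit, with the remaining budget of $8d$ used to cancel the residual connection); it bypasses the carry/XOR decomposition entirely and never feeds one hidden unit into another. Your remaining bookkeeping---the $\pm1$-to-$\{0,1\}$ conversion absorbed into the input weights, the exactness of $\relu(t+1)-\relu(t)$ as an indicator on integer arguments, and the flag about the top output bit when the sum needs $d+1$ bits---is correct and consistent with the paper.
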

\begin{proof}
For the purpose of explaining this proof, we use the $\{0,1\}^d$ binary representation of the integers, instead of the $\{\pm 1\}^d$ binary representation. However, since the conversion of a bit between the two representations can be done easily using simple affine transformation, the proof will also work for the $\{\pm 1\}^d$ binary representation.

Let the two integers be $a$, $b$ and let $c:=a+b$. We assume that $c<2^{d}$. Futher, let $a_1$ be the least significant bit of $a$, $a_d$ the most significant, and $a_i$ be the $i$-th most significant bit, and similarly for $b$ and $c$. Further, let $a_{[i]}$ represent the integer formed by considering only the least $i$ significant bits of $a$.

Note that $c_i$ is only dependent on the least $i$ bits of $a$ and $b$, and not on the more significant bits of $a$ or $b$.
In particular, $c_i$ only depends on $a_{[i]} + b_{[i]}$.
Define $s := a_{[i]} + b_{[i]}$, and note that $c_i=s_i$. 
Further note that $s<2^{i+1}$ and hence can be represented in $i+1$ bits.
Then, whenever $c_i=1$, there can be two cases: $(s_{i+1}=1, s_{i}=1)$; or $(s_{i+1}=0, s_{i}=1)$.
This can be equivalently written as $c_i = 1$ iff $s\in [2^{i-1}, 2^{i}-1]\cup [3\cdot 2^{i-1}, 2^{i+1}-1]$. 
This can be computed by the following ReLU:
\begin{align*}
    c_i &= (\relu(s - 2^{i-1}+1) - \relu(s - 2^{i-1})) + (\relu(2^{i}-s) - \relu(2^{i}-s-1)) -1\\
    &\quad + (\relu(s - 3\cdot2^{i-1}+1) - \relu(s - 3\cdot2^{i-1})).
\end{align*}

Thus, each bit of $c$ can be computed using 6 neurons. Hence, computing the entire sum needs $8d$ activations, as to substract the  residual.
\end{proof}

\subsection{Non-linear functions as sum of sigmoids }\label{app:functions}

\begin{lemma}\label{lem:app_sigmoids}
 Consider an input of the form 
\begin{equation}
    \Input = \begin{bmatrix}
        \ve & \zero&\rvx&\zero &\zero &\zero \\
        \zero  &\zero & \zero& \zero &\zero&\zero\\
        \zero &\zero &\emb_{2d+1} &\zero &\zero &\zero\\
        \emb_1&\emb_{2:d}&\zero&\emb_{d+2:2d} &\emb_{2d+1} &\emb_{2d+2:3d}\\
        0 & 0_{2:d} & 1 & 0_{d+2:2d} & 0 & 0_{2d+2:3d}
    \end{bmatrix}\in\R^{N+d_x\times 3d} . \nonumber
\end{equation}
where $d$ is chosen, $N$ is the number of functions we encode and $d_x$ is the dimension of $\rvx$. $\ve = \rve_j$ an indicator vector of the function we want to choose.  Then there  exists a   transformer-based function block with 3 layers, $m$ heads and dimensionality $O(d)$ such that  \begin{equation}
    f(\Input) = \begin{bmatrix}
        *&*&*&*&\sum_{i=1}^m c_{ji}\sigmoid(\rvx^T\rva_{ji}) & *\\
        \zero  &\zero & \rvx& \zero &\zero&\zero\\
         \zero &\zero &\emb_{2d+1} &\zero &\zero &\zero\\
       \emb_1&\emb_{2:d}&\zero&\emb_{d+2:2d} &\emb_{2d+1} &\emb_{2d+2:3d}\\
        0 & 0_{2:d} & 1 & 0_{d+2:2d} & 0 & 0_{2d+2:3d}
    \end{bmatrix} \nonumber
\end{equation}
where $*$ denoted inconsequential values that will be ignored downstream. 
\end{lemma}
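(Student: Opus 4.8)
The engine of the construction is the elementary identity that the softmax of the two logits $(u,0)$ equals $(\sigmoid(u),\,1-\sigmoid(u))$; more generally, if a column's attention logits are $u$ at one position and $0$ at the other $n-1$ positions, then the softmax weight on the first position is $e^{u}/(e^{u}+n-1)=\sigmoid\!\big(u-\ln(n-1)\big)$, and the additive offset $\ln(n-1)$ can be folded into the bias term of the sigmoid — which, as noted right before the lemma, is itself an extra coordinate of $\rva_{ji}$. The plan is therefore to use a single attention layer with $m$ heads in which head $i$ is engineered so that the column carrying $\rvx$ puts softmax mass $\sigmoid(\rvx^{\top}\rva_{ji})$ on the indicator column, while head $i$'s value map scales this by $c_{ji}$; summing the $m$ head contributions through the residual stream produces $\sum_{i=1}^{m}c_{ji}\sigmoid(\rvx^{\top}\rva_{ji})$ in that column. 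Two further layers then reshape the block into the required output format.

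First I would fix the head weights. For each $i\in[m]$ form the matrix $A_i=[\rva_{1i}\mid\cdots\mid\rva_{Ni}]$ collecting, for every encoded function, the $i$-th sigmoid direction, and choose the key matrix $\key^{i}$ so that on any column it returns $A_i$ times the contents of the ``input-$A$'' rows; since the first column holds $\ve=\rve_j$ there, $\key^{i}$ maps it to $A_i\rve_j=\rva_{ji}$, so the head automatically selects the direction belonging to the called function $f_j$. The query matrix $\query^{i}$ is chosen so that, acting on column $d+1$, it returns a scaled copy of $\rvx$ read from the ``input-$B$'' rows together with one large constant coordinate $C$ used only to dominate the normalization, and the value matrix $\val^{i}$ carries the scalar row $[c_{1i}\mid\cdots\mid c_{Ni}]$ contracted against $\ve$. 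Next I would pin down the attention pattern so that, when column $d+1$ queries, exactly two columns carry mass: the indicator column $1$, whose logit is $C+\rvx^{\top}\rva_{ji}$ (it is the only column with nonzero ``input-$A$'' rows), and a designated zero-reference column among the all-zero columns (say column $2d+1$), whose logit is $C$; the remaining all-zero columns are separated from the reference by their binary positional encodings $\emb_k$ (which satisfy $\emb_k^{\top}\emb_l\le\log n-1$ for $k\ne l$) together with a large temperature $\lambda$, exactly as in the \texttt{read} analysis of \cref{lem:read}. The one awkward column is $d+1$ itself, whose ``input-$B$'' rows hold $\rvx$ and hence produce spurious content under $\key^{i}$; this I would neutralize by having $\key^{i}$ also subtract a large multiple of the column's scratchpad-indicator bit (which is $1$ only on column $d+1$) along the direction the query selects, sending that self-logit to $-\infty$ in the temperature limit. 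The net effect is that the $(d+1)$-st column of $\softmax\big((\key^{i}\Input)^{\top}\query^{i}\Input\big)$ is $\sigmoid(\rvx^{\top}\rva_{ji})\,\rve_{1}+\big(1-\sigmoid(\rvx^{\top}\rva_{ji})\big)\,\rve_{2d+1}+\epsilon\rmM$ with $\norm{\rmM}\le 1$ and $\epsilon\to 0$ as $\lambda,C\to\infty$, so after the residual sum over the $m$ heads the top slot of column $d+1$ holds $\sum_{i}c_{ji}\sigmoid(\rvx^{\top}\rva_{ji})$ up to this controllable error; a feed-forward sublayer of the same transformer layer copies the original $\rvx$ into the second row block before the computed value overwrites the top block, matching the target format.

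Finally, layer $2$ performs a \texttt{write} (\cref{lem:write}) copying the scalar just formed in column $d+1$ to column $2d+1$, using the pointer $\emb_{2d+1}$ stored in the third row block of column $d+1$; layer $3$ is a feed-forward cleanup/error-correction step of the kind used in Step~6 of the proof of \cref{lem:OISC}, which absorbs the residual into the stated slack $\epsilon$ and leaves the encoding blocks and indicator bits untouched. Counting layers: one attention layer with $m$ heads plus two single-head layers gives $3$ layers and head-count $m$; the rows used are the $d$ data-slot rows plus two $\log d$-dimensional encoding blocks plus one indicator bit, i.e. $r=O(d)$. The concluding assertion that every $g\in\Gamma_{C,B}$ is well approximated by such a block then follows by substituting the approximation guarantee of \cref{th:Barron} into the identity realized above.

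\textbf{Main obstacle.} The delicate part is the joint requirement that head $i$'s softmax realize \emph{precisely} a shifted sigmoid of $\rvx^{\top}\rva_{ji}$: this forces the indicator column to be the unique key with nonzero content $A_i\ve$, the reference column to tie with it in the constant part of the logit, and — most annoyingly — the $\rvx$-bearing column $d+1$, which shares its active rows with the indicator, to be suppressed, all while the $\ln(n-1)$ normalization offset and the finite-temperature error are pushed into the bias coordinate of $\rva_{ji}$ and the $O(\epsilon)$ term. Reconciling this attention design with the rigid FLEQ block format (result in column $2d+1$, $\rvx$ relocated within column $d+1$, all encodings preserved) inside only three layers is where essentially all the bookkeeping lies.
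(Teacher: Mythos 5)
Your proposal is correct and follows essentially the same route as the paper's proof: the softmax of one ``active'' logit against a flat background emulates a sigmoid, the key matrix stores all $N$ functions' $i$-th directions $[\rva_{1i}\mid\cdots\mid\rva_{Ni}]$ so that the indicator $\ve=\rve_j$ selects $\rva_{ji}$, the value matrix injects $c_{ji}$, the $m$ heads are summed through the residual, and a final \texttt{write} relocates the scalar to column $2d+1$. The only differences are bookkeeping: the paper first moves $\rvx$ to a separate row block (so no self-logit suppression is needed) and absorbs the $\log(3d-1)$ normalization directly into the bias coordinate of $\rva_{ji}$ rather than introducing a tied reference column.
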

\begin{proof} 
The first thing we do is to move the $\rvx$ to the second row block, as follows:
\begin{equation*}
    \Input = \begin{bmatrix}
        \ve & \zero&\rvx &\zero &\zero &\zero\\
        \zero  &\zero & \zero& \zero &\zero&\zero\\
        \zero &\zero &\emb_{2d+1} &\zero &\zero &\zero\\
        \emb_1&\emb_{2:d}&\zero&\emb_{d+2:2d} &\emb_{2d+1} &\emb_{2d+2:3d}\\
        0 & 0_{2:d} & 1 & 0_{d+2:2d} & 0 & 0_{2d+2:3d}
    \end{bmatrix} \to \begin{bmatrix}
        \ve & \zero&\zero &\zero &\zero &\zero\\
        \zero  &\zero & \rvx & \zero &\zero&\zero\\
        \zero &\zero &\emb_{2d+1} &\zero &\zero &\zero\\
        \emb_1&\emb_{2:d}&\zero&\emb_{d+2:2d} &\emb_{2d+1} &\emb_{2d+2:3d}\\
        0 & 0_{2:d} & 1 & 0_{d+2:2d} & 0 & 0_{2d+2:3d}
    \end{bmatrix}
\end{equation*}
This can be done using a ReLU feedforward layer that performs this using the last row of the input as the indicator bit for the column containing $\dpoint$.

Then we want to create the following transformation
\begin{equation*}
    \begin{bmatrix}
        \ve & \zero&\zero &\zero &\zero &\zero\\
        \zero  &\zero & \rvx& \zero &\zero&\zero\\
        \zero &\zero &\emb_{2d+1} &\zero &\zero &\zero\\
        \emb_1&\emb_{2:d}&\zero&\emb_{d+2:2d} &\emb_{2d+1} &\emb_{2d+2:3d}\\
         0 & 0_{2:d} & 1 & 0_{d+2:2d} & 0 & 0_{2d+2:3d}
    \end{bmatrix} \xrightarrow[]{} \begin{bmatrix}
        *&*&*&*&\sum_{i=1}^m c_{ji}\sigmoid(\rvx^T\rva_{ji}) & *\\
        \zero  &\zero & \rvx& \zero &\zero&\zero\\
         \zero &\zero &\emb_{2d+1} &\zero &\zero &\zero\\
       \emb_1&\emb_{2:d}&\zero&\emb_{d+2:2d} &\emb_{2d+1} &\emb_{2d+2:3d}\\
        0 & 0_{2:d} & 1 & 0_{d+2:2d} & 0 & 0_{2d+2:3d}
    \end{bmatrix}
\end{equation*}
The proof follows that of \cref{lem:sigmoids}. We again ignore the last three rows by setting the corresponding rows in the key, query and values weight matrices to be zero. Let 
\begin{equation}
         \query^{i} = \begin{bmatrix}
         \zero &\id_d\\
         \zero &\zero 
     \end{bmatrix}, \key^i = \begin{bmatrix}
         [\rva_{1i} \; \hdots \; \rva_{Ni}] & \zero\\
         \zero &\zero
     \end{bmatrix} , \val^i = \begin{bmatrix}
        [c_{1i} \;\hdots \; c_{Ni}] &\zero \\
        \zero  &\zero
     \end{bmatrix} \nonumber
\end{equation}
We note that for the purpose of this proof, each $\rva_i$ has one extra element at the end equal to $-\log(3d-1)$, while the vectors $\rvx$ will have the last element equal to one. Then we will have
\begin{align*}
    \softmax((\key^i\Input)^T(\query^i\Input)) &= \begin{bmatrix}
\rva_{ji}^\top & \zero \\ 
\zero  & \zero  \\ 
\zero  & \zero  \\ 
\zero  & \zero  \\ 
\zero  & \zero  \\ 
\zero  & \zero  \\ 
\end{bmatrix}\begin{bmatrix}
\zero &\zero &\rvx &\zero &\zero &\zero\\
\zero &\zero &\zero &\zero &\zero &\zero\\
\end{bmatrix}\\
&= \begin{bmatrix}
    \zero &\zero &\rva_{ji}^\top\rvx&\zero &\zero &\zero\\
    \zero &\zero &\zero &\zero &\zero &\zero\\
    \zero &\zero &\zero &\zero &\zero &\zero\\
    \zero &\zero &\zero &\zero &\zero &\zero\\
    \zero &\zero &\zero &\zero &\zero &\zero\\
    \zero &\zero &\zero &\zero &\zero &\zero\\
\end{bmatrix}\\
    &=\begin{bmatrix}
* & * &\sigmoid(\rvx^T\rva_{ji}) & * & *   &*\\ 
* & * &*  & * & * &*\\ 
* & * &*  & * & * &*\\ 
* & * &*  & * & * &*\\ 
* & * &*  & * & * &*\\ 
* & * &*  & * & * &*\\ 
\end{bmatrix}
\end{align*}
since $\rva_{ji}^\top\rvx = \rva_{ji}^\top\rvx -\log{3d-1}$ and thus $e^{\rva_{ji}^\top\rvx}/(3d-1 + e^{\rva_{ji}^\top\rvx}) = \sigmoid(\rva_{ji}^\top\rvx)$ with a slight abuse of notation over the inner product $\rva_{ji}^\top\rvx$ to account for the extra corrections bias term. Thus, 
\begin{equation}
 \val\Input\softmax( (\key\Input)^T(\query\Input) ) = \begin{bmatrix}
 * & * &c_{ji}\sigmoid(\rvx^T\rva_{ji}) & * & *   &*\\
  \zero &\zero &\zero &\zero &\zero &\zero\\
    \zero &\zero &\zero &\zero &\zero &\zero\\
    \zero &\zero &\zero &\zero &\zero &\zero\\
    \zero &\zero &\zero &\zero &\zero &\zero\\
 \end{bmatrix} \nonumber
\end{equation}
By summing over all heads and adding the residual we get
\begin{equation}
    \begin{bmatrix}
        *&*&\sum_{i=1}^m c_{ji}\sigmoid(\rvx^T\rva_{ji})&*&* & *\\
        \zero  &\zero & \rvx& \zero &\zero&\zero\\
         \zero &\zero &\emb_{2d+1} &\zero &\zero &\zero\\
       \emb_1&\emb_{2:d}&\zero&\emb_{d+2:2d} &\emb_{2d+1} &\emb_{2d+2:3d}\\
        0 & 0_{2:d} & 1 & 0_{d+2:2d} & 0 & 0_{2d+2:3d}
    \end{bmatrix} \nonumber
\end{equation}
Finally, we use an extra layer similarly to \cref{lem:write} to write the result in the desired output. Hence, we get
\begin{equation}
    \begin{bmatrix}
        *&*&*&*&\sum_{i=1}^m c_{ji}\sigmoid(\rvx^T\rva_{ji}) & *\\
        \zero  &\zero & \rvx& \zero &\zero&\zero\\
         \zero &\zero &\emb_{2d+1} &\zero &\zero &\zero\\
       \emb_1&\emb_{2:d}&\zero&\emb_{d+2:2d} &\emb_{2d+1} &\emb_{2d+2:3d}\\
        0 & 0_{2:d} & 1 & 0_{d+2:2d} & 0 & 0_{2d+2:3d}
    \end{bmatrix} \nonumber
\end{equation} 
\end{proof}

However, we have another way of controlling the input, which is by the size of each attention mechanism, that is directly controlled by the dimension $d$ of the embedding.
\begin{lemma}\label{lem:sigmoids_alt}
    Consider an input of the form 
\begin{equation}
    \Input = \begin{bmatrix}
        \vx&\dots&\vx\\
        0&\dots&0&\\
         \vOne-\ve_1&\dots&\vOne-\ve_m\\
        \ve_1&\dots&\ve_m
    \end{bmatrix}
\end{equation}
where  $\ve_i$ is the one hot vector with $1$ in the $i-$th position and $\vx\in\R^d$. Let $m$ be the number of sigmoids we need to represent a function, then there  exists a one layer transformer with 1 head such that
\begin{equation}
    \att(\Input) = \begin{bmatrix}
                 \vx&\dots&\vx\\
\sigma(\rva_1^\top\vx)&\dots&\sigma(\rva_m^\top\vx)\\
                \vOne-\ve_1&\dots&\vOne-\ve_m\\
        \ve_1&\dots&\ve_m
     \end{bmatrix}
\end{equation}
\end{lemma}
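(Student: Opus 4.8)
The plan is to replay the ``softmax acts as a sigmoid'' construction behind \cref{lem:sigmoids} (spelled out in \cref{lem:app_sigmoids}), but to spread the $m$ sigmoid terms across the $m$ \emph{columns} of the input rather than across $m$ \emph{heads}. Collect the coefficient vectors into $\rmA = [\rva_1 \mid \cdots \mid \rva_m]$ and observe that the one-hot tag $\ve_i$ in the $i$-th column lets a single linear map pick out $\rva_i$: taking $\key$ to act as $\rmA$ on the $\ve$-block (and as $\zero$ on the other blocks) gives $\key\Input_{\cdot,i} = \rva_i$, while taking $\query$ to read off $\vx$ from the top block gives $\query\Input_{\cdot,i} = \vx$. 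As in \cref{lem:app_sigmoids}, I would first append a constant coordinate $1$ to $\vx$ and a matching correction entry to each $\rva_i$, so that the inner products below automatically carry the additive bias needed to normalise the softmax.

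Next I would compute the score matrix $\Input^\top\key^\top\query\Input$ and argue that, using the one-hot tags $\ve_i$ together with their complements $\vOne-\ve_i$ for the bookkeeping, in each column $j$ this matrix carries the value $\rva_j^\top\vx$ at one designated position and a fixed value (the $-\log$ of the number of inert positions) at the others. Running the column-wise $\softmax$ at a large temperature then turns the designated entry of column $j$ into exactly $\sigmoid(\rva_j^\top\vx) = \tfrac{e^{\rva_j^\top\vx}}{1+e^{\rva_j^\top\vx}}$: the numerator is $e^{\rva_j^\top\vx}$ and the correction entries contribute exactly $1$ to the denominator. This is the same mechanism used for a single head in \cref{lem:app_sigmoids}, only now it fires once in every column.

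I would then set $\val$ to be supported on the rows that map into the second block: it should collect the designated softmax mass of column $j$ and deposit it in entry $d+1$ of column $j$, writing $\zero$ to the $\vx$-block, the $(\vOne-\ve)$-block and the $\ve$-block. Together with the residual term in $\att(\Input)=\Input+\val\Input\,\softmax(\Input^\top\key^\top\query\Input)$, this leaves the first, third and fourth blocks unchanged and overwrites the (originally zero) second block of column $j$ by $\sigmoid(\rva_j^\top\vx)$, which is exactly the claimed output; since \cref{th:Barron} lets us choose the $\rva_i$ freely, the extra constant coordinate introduced for the bias correction is harmless.

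The step I expect to be the main obstacle is the bookkeeping in the second paragraph. A one-head attention layer cannot couple the column index $j$ and the input $\vx$ through the softmax alone --- any additive term that is constant along a column cancels upon normalisation --- so the construction must use the tags $\ve_i$ and $\vOne-\ve_i$ to route, for each column $j$, a two-way competition between the ``$\rva_j^\top\vx$'' entry and a reference entry. Arranging this routing, together with the accompanying $\log$-corrections, so that the denominator of the softmax is precisely $1+e^{\rva_j^\top\vx}$ in every column simultaneously, is the delicate point, and it is where the specific shape of the input in the statement gets used.
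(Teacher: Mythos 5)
Your overall architecture matches the paper's: one head, one sigmoid per column, a value matrix that deposits the designated softmax mass into the second row block, and the residual preserving everything else. But the one concrete key/query pair you actually write down does not work, and the step you defer as ``the delicate point'' is the entire content of the lemma. With $\key\Input_{\cdot,i}=\rva_i$ and $\query\Input_{\cdot,j}=\vx$, the score matrix has $(i,j)$ entry $\rva_i^\top\vx$, which is independent of $j$: every column of $(\key\Input)^\top\query\Input$ is the same vector $(\rva_1^\top\vx,\dots,\rva_m^\top\vx)$, so the column-wise softmax cannot produce $\sigmoid(\rva_j^\top\vx)$ in column $j$. You correctly diagnose that the tags must route a per-column two-way competition, but you never exhibit matrices that achieve it, so the proof is incomplete exactly where it needs to be complete.

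The missing idea (which is how the paper does it) is to swap the roles of the two maps and to use the complement block $\vOne-\ve_j$ as a \emph{mask} rather than as extra bias bookkeeping. Let $\key$ read off the one-hot tag, so that $\key\Input_{\cdot,j}=\ve_j$ and the score matrix reduces to $\query\Input$ itself; and let the $i$-th row of $\query$ be $\bigl[\rva_i^\top \;\; 0 \;\; -C\ve_i^\top \;\; \vZero^\top\bigr]$, together with one additional all-zero row. Since $\ve_i^\top(\vOne-\ve_j)=1-\delta_{ij}$, column $j$ of the scores is $\rva_i^\top\vx - C(1-\delta_{ij})$ for $i=1,\dots,m$, plus one entry equal to $0$. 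Taking $C$ large suppresses the $m-1$ cross terms, and the inert zero entry contributes exactly $e^0=1$ to the denominator, so the softmax at position $j$ of column $j$ converges to $e^{\rva_j^\top\vx}/(1+e^{\rva_j^\top\vx})=\sigmoid(\rva_j^\top\vx)$; no appended constant coordinate or $\log$-of-inert-positions correction is needed, because there is exactly one inert competitor by construction. The value matrix then picks up this diagonal mass and writes it to the second row, as you describe. (Your remark that ``any additive term constant along a column cancels upon normalisation'' is true but is precisely why the $-C\ve_i^\top$ term must vary with the row index $i$ within each column, which is what your proposed $\query$ fails to do.)
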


\begin{proof}
     Let
     \begin{equation}
         \key=  \begin{bmatrix}
                \vZero^\top&0&\vZero^\top&\ve_1^\top\\
                \vdots& \vdots&\vdots& \vdots\\
                \vZero^\top&0&\vZero^\top&\ve_{m}^\top
     \end{bmatrix}, \query =\begin{bmatrix}
                \rva_1^\top&0&-C\ve_1^\top&\vZero^\top\\
                \vdots& \vdots& \vdots& \vdots\\
                \rva_m^\top&0&-C\ve_m^\top&\vZero^\top\\
                \vZero^\top&0&\vZero^\top&\vZero^\top
     \end{bmatrix}
     \end{equation}
     Hence,
     \begin{equation}
         \key\Input= \id_d, \query\Input = \begin{bmatrix}
                \rva_1^\top\vx&-C+\rva_1^\top\vx&\dots&-C+\rva_1^\top\vx\\
                -C+\rva_2^\top\vx&\rva_2^\top\vx&\dots&-C+\rva_2^\top\vx\\
                \vdots& \vdots&\vdots& \vdots\\
                -C+\rva_m^\top\vx&-C+\rva_m^\top\vx&\dots&\rva_m^\top\vx\\
                0&0&\dots&0
     \end{bmatrix}
     \end{equation}
     After applying softmax we get,
     \begin{align*}
         \sigma_s((\key\Input)^\top \query\Input) \approx \begin{bmatrix}
                \sigma(\rva_1^\top\vx)&0&\dots&0\\
                0&\sigma(\rva_2^\top\vx)&\dots&0\\
                \vdots& \vdots&\vdots& \vdots\\
                0&0&\dots&\sigma(\rva_m^\top\vx)\\
                *&0&\dots&*
     \end{bmatrix},
     \end{align*}
     for large enough $C$. Next we set 
     \begin{align*}
         \val =\begin{bmatrix}
             \zero &0 &\zero & 0 &\hdots &0\\
             \zero &0 &\zero &c_{1} &\hdots &c_m\\
             \zero &0 &\zero & 0 &\hdots &0\\
             \zero &0 &\zero & 0 &\hdots &0
         \end{bmatrix}
     \end{align*}
     thus resulting in 
     \begin{align*}
         \val\Input = \begin{bmatrix}
         0&0&\dots&0 & 0\\
              c_1&c_2&\dots&c_m &0\\
              0&0&\dots&0 & 0\\
              \vdots&\vdots&\vdots&\vdots & \vdots\\
0&0&\dots&0 & 0
     \end{bmatrix}%
     \end{align*}

     Hence, we get
     \begin{align*}
         \val\Input\sigma_s((\key\Input)^\top \query\Input) = \begin{bmatrix}
                 0&\dots&0\\
                 c_1\sigma(\rva_1^\top\vx)&\dots&c_m\sigma(\rva_m^\top\vx)\\
                0&\dots&0\\
                \vdots& \vdots&\vdots& \vdots\\
            0&\dots&0\\
                0&\dots&0
     \end{bmatrix},
     \end{align*}
     and 
        \begin{align*}
        \Input + \val\Input\sigma_s((\key\Input)^\top \query\Input) = \begin{bmatrix}
                 \vx&\dots&\vx\\
                 c_1\sigma(\rva_1^\top\vx)&\dots&c_m\sigma(\rva_m^\top\vx)\\
                \vOne-\ve_1&\dots&\vOne-\ve_m\\
        \ve_1&\dots&\ve_m
     \end{bmatrix}.
     \end{align*}
\end{proof}

\begin{corollary}\label{cor:lem_5_alt}
    Consider an input of the form 
\begin{equation}
    \Input = \begin{bmatrix}
        \vx&\dots&\zero\\
        0&\dots&0&\\
         \vOne-\ve_1&\dots&\vOne-\ve_m\\
        \ve_1&\dots&\ve_m
    \end{bmatrix}
\end{equation}
where  $m$ is the number of sigmoids we use and $\ve_i$ is an indicator vector and $\rvx\in\R^d$; then there  exists a 3 layer transformer with 1 head such that
\begin{equation}
    \att(\Input) = \begin{bmatrix}
                 \sum_{i=1}^m\sigma(\rva_i^\top\vx)&\dots&\sum_{i=1}^m\sigma(\rva_i^\top\vx)\\
                \zero&\dots&\zero
     \end{bmatrix}
\end{equation}
\end{corollary}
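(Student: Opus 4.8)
The plan is to build the required $3$-layer, single-head transformer as a composition of three layers, each doing one conceptually separate job: Layer~1 \emph{broadcasts} the vector $\vx$ — which in the input sits only in the first column — to every column; Layer~2 produces the scaled sigmoids $c_i\sigma(\rva_i^\top\vx)$ column-by-column, exactly as in \cref{lem:sigmoids_alt}; and Layer~3 \emph{sums} these $m$ scalars across the columns and writes the result into the first row block of every column. Setting all $c_i=1$ yields the literal statement; in general we carry the $c_i$'s through, since they are free parameters encoded in a value matrix and cost nothing, and this is what makes the corollary usable for the function-approximation results.

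\textbf{Layer 1 (broadcast).} Take the positional encodings $\ve_j$ in the last row block as the keys and a constant query $M\ve_1$ (with $M$ a large constant) in every column, so that $(\key\Input)^\top\query\Input$ has first row $M\vOne^\top$ and all other rows zero; after the column-wise softmax (equivalently, at high enough temperature) the attention matrix is arbitrarily close to $\ve_1\vOne^\top$, i.e. every column attends to the first column. Letting the value matrix copy the first row block then adds $\vx$ to the first block of every column. The only blemish is the residual connection, which leaves $2\vx$ in the first column; this is corrected inside the same layer's feed-forward network by subtracting $\vx$ exactly on the column flagged by $(\text{last block})_1=1$, using the gated $\relu(\cdot+C(b-1)\vOne)-\relu(-\cdot+C(b-1)\vOne)$ maneuver already used in the proof of \cref{lem:read}. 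The output of Layer~1 is precisely the input format assumed by \cref{lem:sigmoids_alt}.

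\textbf{Layer 2 (sigmoids, then cleanup).} Apply the attention construction of \cref{lem:sigmoids_alt} verbatim: its single head writes $c_i\sigma(\rva_i^\top\vx)$ into the second row block of column $i$ (the $-C\ve_i$ term in the query forces the softmax to act as a genuine sigmoid, and an extra bias coordinate in $\rva_i$ supplies the $+1$ in the denominator). Since the feed-forward sublayer acts \emph{after} the attention, we use it to zero out the first row block via the identity $\vx\mapsto \vx-\relu(\vx)+\relu(-\vx)=\vZero$, exactly the ``$\vv_{\text{new}}:=\vZero$'' step from \cref{lem:read}. After Layer~2, column $i$ equals $[\,\vZero;\ c_i\sigma(\rva_i^\top\vx);\ \vOne-\ve_i;\ \ve_i\,]$.

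\textbf{Layer 3 (summation).} Choose $\key=\query=\rmZero$, so the softmax is the exact uniform matrix $\tfrac1m\vOne\vOne^\top$, and set the value matrix to read the second row block and route it, scaled by $m$, into the first row block; the attention output then adds $m\cdot\tfrac1m\sum_{i=1}^m c_i\sigma(\rva_i^\top\vx)=\sum_{i=1}^m c_i\sigma(\rva_i^\top\vx)$ to the first block of every column. Finally the feed-forward sublayer clears the now-unneeded second block and the two positional-encoding blocks (each coordinate of $\ve_i$ and of $\vOne-\ve_i$ lies in $\{0,1\}$, so $v\mapsto v-\relu(v)+\relu(-v)=0$ kills them; the scalar $c_i\sigma(\rva_i^\top\vx)$ is cleared the same way, allowing for its sign). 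This leaves $[\,\sum_{i=1}^m c_i\sigma(\rva_i^\top\vx);\ \vZero\,]$ in every column, as claimed.

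\textbf{Main obstacle.} No single step is deep — each is an instance of a routine already established in the paper — so the real work is bookkeeping: (i) arranging the value matrices to route strictly between the intended row blocks so the three layers compose without collisions; (ii) the residual-connection doubling in Layer~1, which is exactly why Layer~2 cannot simply be \cref{lem:sigmoids_alt} without a preceding broadcast-with-cleanup; and (iii) controlling the $\epsilon\rmM$-type errors incurred by using softmax rather than hardmax in Layers~1 and~2. The last point is handled exactly as in \cref{lem:read} and \cref{lem:sigmoids_alt}: take the temperature (equivalently $M$, $C$) large; the errors are uniform and do not accumulate adversarially since all intermediate quantities are bounded, and they vanish identically under hardmax. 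The required dimensionality is $O(d+m)$, matching \cref{lem:sigmoids_alt}.
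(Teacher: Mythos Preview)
Your proof is correct and follows essentially the same three-layer decomposition as the paper: broadcast $\vx$ to all columns, apply \cref{lem:sigmoids_alt} columnwise, then aggregate via uniform attention. The only cosmetic differences are in the mechanics of Layers~1 and~3---you broadcast by making every column hard-attend to column~1 (via a large constant query $M\ve_1$), whereas the paper broadcasts by using the identity $(\vOne-\ve_j)+\ve_j=\vOne$ to force a constant $(\key\Input)^\top\query\Input$, hence exactly uniform attention, and then rescales by the number of columns in the value matrix; your $\key=\query=\rmZero$ in Layer~3 is likewise a cleaner route to the same uniform-softmax trick the paper uses there.
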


\begin{proof}
Given the input
    \begin{equation}
    \Input = \begin{bmatrix}
        \vx&\dots&\zero\\
        0&\dots&0&\\
         \vOne-\ve_1&\dots&\vOne-\ve_m\\
        \ve_1&\dots&\ve_m
    \end{bmatrix},
\end{equation}
we set the query and key matrices as follows:
\begin{align*}
    \key = \query = \begin{bmatrix}
        \zero^\top &0&\vOne&\vOne
    \end{bmatrix}.
\end{align*}
Then, we get 
\begin{align*}
    (\key\Input)^\top\query\Input = \begin{bmatrix}
        d&\dots&d\\
        \vdots&\dots&\vdots\\
        d&\dots&d
    \end{bmatrix}.
\end{align*}
Setting the value matrix to 
\begin{align*}
\begin{bmatrix}
    d\id&\zero&\zero&\zero\\
    \zero&\zero&\zero&\zero\\
    \zero&\zero&\zero&\zero\\
    \zero&\zero&\zero&\zero
\end{bmatrix},
\end{align*}
we get 
\begin{align*}
    \val \Input \softmax((\key \Input)^\top \query \Input) = \begin{bmatrix}
        \vx&\dots&\vx\\
        0&\dots&0&\\
        \zero&\dots&\zero\\
        \zero&\dots&\zero
    \end{bmatrix}.
\end{align*}
Hence, the output of the attention layer is:
\begin{align*}
   \Input + \val \Input \softmax((\key \Input)^\top \query \Input) = \begin{bmatrix}
       2\vx&\dots&\vx\\
        0&\dots&0&\\
\vOne-\ve_1&\dots&\vOne-\ve_m\\
        \ve_1&\dots&\ve_m
    \end{bmatrix}.
\end{align*}
Note that using the embeddings in the last rows and a feedforward network can be used to produce the following 

\begin{align*}
\begin{bmatrix}
       \vx&\dots&\vx\\
        0&\dots&0&\\
\vOne-\ve_1&\dots&\vOne-\ve_m\\
        \ve_1&\dots&\ve_m
    \end{bmatrix}.
\end{align*}

Now, passing this into the transformer of \cref{lem:sigmoids_alt} will result in 
\begin{equation}
    \att(\Input) = \begin{bmatrix}
                 \vx&\dots&\vx\\
                 c_1\sigma(\rva_1^\top\vx)&\dots&c_m\sigma(\rva_m^\top\vx)\\
                \vOne-\ve_1&\dots&\vOne-\ve_m\\
        \ve_1&\dots&\ve_m
     \end{bmatrix}.
\end{equation}

For the third layer, we set the key and query matrices as follows
\begin{align*}
    \key = \query = \begin{bmatrix}
        \zero^\top &0&\vOne&\vOne
    \end{bmatrix}.
\end{align*}
Then, we get 
\begin{align*}
    (\key\Input)^\top\query\Input = \begin{bmatrix}
        m&\dots&m\\
        \vdots&\dots&\vdots\\
        m&\dots&m
    \end{bmatrix}.
\end{align*}
Setting the value matrix to 
\begin{align*}
\begin{bmatrix}
    \zero&\zero&\zero&\zero\\
    \zero&m&\zero&\zero\\
    \zero&\zero&\zero&\zero\\
    \zero&\zero&\zero&\zero
\end{bmatrix},
\end{align*}
we get 
\begin{align*}
    \val \Input \softmax((\key \Input)^\top \query \Input) = \begin{bmatrix}
        \zero&\dots&\zero\\
\sum_{i=1}^mc_i\sigma(\rva_i^\top\vx)&\dots&\sum_{i=1}^mc_i\sigma(\rva_i^\top\vx)\\
        \zero&\dots&\zero\\
        \zero&\dots&\zero
    \end{bmatrix}.
\end{align*}
Hence, the output of the attention layer is:
\begin{align*}
   \Input + \val \Input \softmax((\key \Input)^\top \query \Input) = \begin{bmatrix}
       \vx&\dots&\vx\\
        \sum_{i=1}^mc_i\sigma(\rva_i^\top\vx)&\dots&\sum_{i=1}^mc_i\sigma(\rva_i^\top\vx)\\
\vOne-\ve_1&\dots&\vOne-\ve_m\\
        \ve_1&\dots&\ve_m
    \end{bmatrix}.
\end{align*}
Finally, the feedforward layers can be used to move the results to the first row.
\end{proof}
\subsection{Matrix Transposition}\label{app:transpose}

\begin{lemma} 
Fix $\epsilon >0$ and consider an input of the following form
\begin{equation}
    \Input = \left[\begin{array}{c|c|c|cc}
        \rmA  &\zero&\zero &\dots & \zero \\
 \zero&      \zero&\zero &\dots & \zero\\
         \emb_{1:d}&\emb_{1:d}&\emb_{1:d}& \dots  &\emb_{1:d}\\
         \rmP_1'&\rmP_{2}'&\rmP_{3}'& \dots  &\rmP_{d}'
    \end{array}\right]. \nonumber
\end{equation}
where $\rmA\in\R^{d\times d}$; then there exists transformer-based function block with 4 layers, 1 head and dimensionality $r = 2d+2\log d = O(d)$ that outputs the following matrix
\begin{equation}
    \Input = \left[\begin{array}{c|c|c|cc}
\rmA' &\rmA'&\rmA'&\dots &\rmA' \\
        \zero&\zero&\zero &\dots &\zero \\
         \emb_{1:d}& \emb_{1:d}& \emb_{1:d}& \dots &\emb_{1:d} \\
        \rmP_1'&\rmP_{2}'&\rmP_{3}'& \dots  &\rmP_{d}'
    \end{array}\right]. \nonumber
\end{equation}
where $\rmA' = \rmA^\top +\epsilon\rmM$, for some $\norm{\rmM}\leq 1$.
\end{lemma}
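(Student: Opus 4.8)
The plan is to reduce matrix transposition to a single \emph{fixed} permutation of the input columns and to realize that permutation with the \texttt{read}/\texttt{write} machinery already available. The key observation is this: if the entries of $\rmA$ are stored one per column, with $\rmA_{ij}$ sitting in a designated ``data row'' of the global column whose address is the pair (block $i$, in-block position $j$), then storing $\rmA^\top$ in the same layout is exactly the ``perfect-shuffle'' permutation that swaps the block index with the in-block position, $(\text{block }i,\ \text{position }j)\mapsto(\text{block }j,\ \text{position }i)$. Since the block-level encodings $\rmP_k'$ and the in-block encodings $\emb_k$ use the same binary scheme, this permutation of \emph{columns} is induced on the positional encodings by the fixed \emph{linear} map that swaps the two encoding row-blocks, so it can be carried out by a single \texttt{write}-type layer (\cref{lem:write}). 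I would implement the lemma in three phases totalling $4$ layers, $1$ head, and embedding dimension $2d+2\log d = O(d)$: vectorize $\rmA$ ($1$ layer), apply the permutation ($1$ layer), and un-vectorize while broadcasting to all blocks ($2$ layers).

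\textbf{Phases 1 and 2.} The $j$-th column of $\rmA$ initially occupies the first $d$ rows of global column $j$ in block $1$, all other blocks being zero in the data rows. One attention head keyed on the in-block encoding $\emb_{1:d}$ makes each column at (block $k$, position $j$) read the (rescaled) sum over all columns sharing in-block position $j$; since only block $1$ carries data, this sum is $(\rmA_{1j},\dots,\rmA_{dj})^\top$, so every block receives a copy of $\rmA$. The feed-forward sub-layer of \eqref{eq:relulayer} then uses the block index $\rmP_k'$ to select coordinate $k$ of that stored column, placing $\rmA_{kj}$ alone in the data row and zeroing the rest: for each $i$, the term $\rmA_{ij}\cdot\mathbf{1}[\,\text{int}(\rmP_k')=i\,]$ equals $\rmA_{ij}$ when the (linear in the bits) inner product $\langle\rmP_k',\text{enc}(i)\rangle$ is maximal, and is killed otherwise by the standard masking gadget $\relu(v+C(\cdot))-\relu(-v+C(\cdot))$ used in \cref{lem:read,lem:write}; summing over $i$ gives the multiplexer in one ReLU layer. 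After Phase 1 the column at (block $k$, position $j$) carries the scalar $\rmA_{kj}$, with all encoding rows untouched. In Phase 2 I route this data row by the attention/\texttt{write} mechanism of \cref{lem:write}, but with the key matrix built so that a source column presents its \emph{swapped} address $(\rmP_j',\emb_k)$ instead of its own $(\rmP_k',\emb_j)$; matching swapped source addresses to target addresses moves $\rmA_{kj}$ to the column at (block $j$, position $k$), i.e.\ the column at (block $k$, position $j$) now holds $\rmA_{jk}$. The feed-forward sub-layer commits the write as in \cref{lem:write}.

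\textbf{Phase 3.} To invert the vectorization and broadcast, the first of the two remaining layers uses its feed-forward sub-layer to move, inside each column of block $m$, the data scalar up to row $m$ (the same masking gadget keyed on $\rmP_m'$), with the attention set to the identity by zeroing its value matrix. In the second layer one attention head lets every column at in-block position $p$ attend uniformly to the $d$ columns sharing position $p$ (one per block) and sum; since only the source in block $m$ is nonzero in row $m$, row $m$ of each target receives $\tfrac1d$ times $\rmA_{pm}$, which the value matrix rescales by $d$. Hence every block ends with the $d\times d$ matrix whose $p$-th column is $(\rmA_{p1},\dots,\rmA_{pd})^\top$, i.e.\ $\rmA^\top$, and all encoding rows agree with the input. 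The sole inexactness is an $\epsilon\rmM$ term with $\norm{\rmM}\le 1$ produced by using softmax rather than hardmax in the routings of Phases 1--3; exactly as in \cref{lem:read} it is controlled by the temperature $\lambda$ and vanishes in the hardmax limit, which yields the stated bound.

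\textbf{Main obstacle.} The delicate part is passing between the $d\times d$ block layout and the vectorized layout: attention reads and sums \emph{whole} columns, so neither dispersing the $d$ entries of one column into $d$ different columns (Phase 1) nor collecting $d$ scalars back into $d$ distinct \emph{rows} of one column (Phase 3) is directly an attention operation. The fix is to interleave feed-forward ``multiplexer'' steps keyed on the block index so that each scalar is first aligned with the row matching its block, making the subsequent uniform attention sum effectively diagonal; the real work is keeping the index arithmetic and the $O(1/\text{attention weight})$ rescalings consistent across the perfect-shuffle permutation, and verifying that every one of the four layers touches only the intended row-blocks so that the $\emb_{1:d}$ and $\rmP_k'$ rows are returned unchanged.
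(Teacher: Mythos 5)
Your proposal is correct and follows essentially the same route as the paper's proof: vectorize $\rmA$ across the $d^2$ columns by broadcasting with the in-block encodings and then multiplexing with the block encodings in the feed-forward sub-layer, realize transposition as the fixed perfect-shuffle permutation by swapping the two positional-encoding row-blocks inside the key/query matrices, and matricize back by aligning each scalar to the row indexed by its block before an attention sum collects $\rmA^\top$ into every block. The only differences are cosmetic (how the four layers are apportioned among the three phases), so no further comparison is needed.
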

\proof
We can vectorize the matrix $\rmA$ into a $d^2$ dimensional vector using the attention mechanism, as shown in Eq. \eqref{eq:vectorized_matrix}. 
Notice that once we have the matrix in this form we can implement its transpose with a fixed permutation of the columns of the matrix to get the vectorized form of $\rmA^\top$. Once we have the transpose in vector form, we matricize it back to get the matrix transform using the attention mechanism. We explain the details of this process below:

\emph{Vectorization: } We assume that the input is of the following form, where $\rmA$ is the matrix to be vectorized.
\begin{equation}
    \Input = \left[\begin{array}{cccc}
        \rmA  &\zero&\dots & \zero\\
 \zero&      \zero&\dots & \zero\\
         \emb_{1:d}&\emb_{1:d}& \dots  &\emb_{1:d}\\
         \rmP_1'&\rmP_{2}'& \dots  &\rmP_{d}'
    \end{array}\right]. \nonumber
\end{equation}
Here, $\rmP_i'$ represents a matrix of $d$ columns, where each column is $\emb_i$.

The first layer uses the $\emb_{1:d}$ encodings to make $d$ copies of the matrix $\rmA$, as follows:
\begin{equation}
    \Input = \left[\begin{array}{cccc}
\rmA&\zero &\dots &\zero \\
 \rmA &\rmA&\dots & \rmA\\
         \emb_{1:d}&\emb_{1:d}& \dots &\emb_{1:d} \\
         \embed_1'&\embed_2'& \dots &\embed_{d}' 
    \end{array}\right]. \nonumber
\end{equation}

The feed forward part of the second layer then uses the encodings $\emb_i'$ to vectorize the matrix in the second row block as follows:

\begin{equation}
    \Input = \left[\begin{array}{ccc}
    \rmA&\dots&\zero\\
\begin{bmatrix}
    A_{(1,1)}&\dots &A_{(1,d)}\\
    \zero&\dots&\zero\\
\end{bmatrix} &\dots &\begin{bmatrix}
    A_{(d,1)}&\dots &A_{(d,d)}\\
    \zero&\dots&\zero\\
\end{bmatrix} \\
         \emb_{1:d}& \dots &\emb_{1:d} \\
         \embed_1'& \dots &\embed_{d}' 
    \end{array}\right].\label{eq:vectorized_matrix} 
\end{equation}
This is achieved, by explicitly defining a neural network that keeps the $i-$th row if the corresponding encoding is $\embed_i'$ and place it in the $d+1$ row.

\emph{Transposition in the vector form:}
Once we have the matrix vectorized as the second row block of the scratchpad, the following key and query matrices 
\begin{align*}
    \key = \begin{bmatrix}
        \zero &\zero & \id& \zero\\
        \zero &\zero & \zero& \id
    \end{bmatrix}, \query = \begin{bmatrix}
        \zero &\zero & \zero& \id\\
        \zero &\zero & \id&\zero
    \end{bmatrix},  
\end{align*}
results in the head outputting the following, which is the vectorized form of $\rmA^\top$ (in the second row block)
\begin{align*}
    \Input \softmax((\key \Input)^\top (\query \Input)) = \left[\begin{array}{ccc}
     *&\dots&*\\
\begin{bmatrix}
    A_{(1,1)}&\dots &A_{(d,1)}\\
    \zero&\dots&\zero\\
\end{bmatrix} &\dots &\begin{bmatrix}
    A_{(1,d)}&\dots &A_{(d,d)}\\
    \zero&\dots&\zero\\
\end{bmatrix} \\
         \embed_1'& \dots & \embed_{d}'\\
         \emb_{1:d}& \dots &\emb_{1:d} 
    \end{array}\right].
\end{align*} Then, using the following value matrix gives
\begin{align*}
    \val = \begin{bmatrix}
    \zero&\zero & \zero& \zero\\
        \zero&\id & \zero& \zero\\
        \zero&\zero & \zero& \zero\\
        \zero&\zero & \zero& \zero
    \end{bmatrix},
\end{align*}
\begin{align*}
  \val \Input \softmax((\key \Input)^\top (\query \Input)) = \left[\begin{array}{ccc}
  \zero&\dots&\zero\\
\begin{bmatrix}
    A_{(1,1)}&\dots &A_{(d,1)}\\
    \zero&\dots&\zero\\
\end{bmatrix} &\dots &\begin{bmatrix}
    A_{(1,d)}&\dots &A_{(d,d)}\\
    \zero&\dots&\zero\\
\end{bmatrix} \\
         \zero& \dots & \zero\\
         \zero& \dots &\zero 
    \end{array}\right],
\end{align*}
Adding back the $\Input$ (see \eqref{eq:TF}), results in 
\begin{equation}
    \Input + \val \Input \softmax((\key \Input)^\top (\query \Input)) = \left[\begin{array}{ccc}
    \rmA&\dots&\zero\\
\begin{bmatrix}
    A_{(1,1)}&\dots &A_{(d,1)}\\
    \zero&\dots&\zero\\
\end{bmatrix} &\dots &\begin{bmatrix}
    A_{(1,d)}&\dots &A_{(d,d)}\\
    \zero&\dots&\zero\\
\end{bmatrix} \\
         \emb_{1:d}& \dots &\emb_{1:d} \\
         \embed_1'& \dots &\embed_{d}' 
    \end{array}\right]. \nonumber
\end{equation}

Using the feedforward layers and the encodings $\embed_i'$, we get
\begin{equation}
    \Input = \left[\begin{array}{ccc}
    \rmA&\dots&\zero\\
\begin{bmatrix}
    A_{(1,1)}&\dots &A_{(d,1)}\\
    \zero&\dots&\zero
\end{bmatrix} &\dots &\begin{bmatrix}
    \zero&\dots&\zero\\
     A_{(1,d)}&\dots &A_{(d,d)}
\end{bmatrix}\\
         \emb_{1:d}& \dots &\emb_{1:d} \\
         \embed_1'& \dots &\embed_{d}' 
    \end{array}\right]. \nonumber
\end{equation}

Using an attention layer and the first row of encodings, we get 
\begin{equation}
    \Input = \left[\begin{array}{ccc}
\rmA^\top &\dots &\rmA^\top \\
        \zero &\dots &\zero \\
         \emb_{1:d}& \dots &\emb_{1:d} \\
         \embed_1'& \dots &\embed_{d}' 
    \end{array}\right]. \nonumber
\end{equation}

\subsection{Matrix Multiplication by Linearizing the Softmax}\label{app:matrixmul}

 We will show how we can implement matrix multiplication so that it will fit our unified template. To do so, we need to show for example for the result of $\rmA^\top\rmB$ , where $\rmA\in\R^{k\times m}$ and $\rmB\in\R^{k\times n}$ with $k,m,n <d$ we can achieve the following:
\begin{equation}
    \bracks*{\begin{array}{cc|cc|cc}
        \rmA & \zero &\rmB &\zero &\zero\\
        \zero& \zero &\zero&\zero&\zero
    \end{array}}\xrightarrow[]{}\bracks*{\begin{array}{cc|cc|cc}
        *& *&*&* &\rmA^\top\rmB &*\\
        \zero& \zero &\zero&\zero&\zero &\zero 
    \end{array}} \nonumber
\end{equation}
\begin{lemma}\label{lem:matrixmul_app}
Let $\rmA\in \R^{k\times m}$ and $\rmB \in \R^{k\times n}$; then for any $\epsilon >0$ there exists a transformer-based function block with  2 layers, 1 head and dimensionality $r = O(d)$ that  outputs the multiplication $\rmA^T\rmB^T +\eps\rmM$, for some $\norm{\rmM}\leq 1$ .
\end{lemma}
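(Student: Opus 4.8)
The plan is to realize $\rmA^\top\rmB$ with one ``multiplication'' attention layer that forms all pairwise inner products between the columns of $\rmA$ and of $\rmB$ simultaneously, followed by a copy layer that moves the result into the output slot of the function-block template; the key device is the softmax linearization mentioned just before the statement. Throughout, write $\va_p$ for the $p$-th column of $\rmA$ (so $\va_p=\vZero$ for $p>m$) and $\vb_q$ for the $q$-th column of $\rmB$, and recall $(\rmA^\top\rmB)_{pq}=\va_p^\top\vb_q$.

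\emph{Layer 1 (the product).} In the template the columns of $\rmB$ already occupy columns $d+1,\dots,d+n$, so choosing $\query$ to read those rows and rescale by a large constant $\tau$ makes the query at column $d+q$ equal to $\tfrac1\tau\vb_q$, while $\key$ is chosen so the key at column $p$ is $\va_p$ and one spare column is given a dominating key value; then for each $q$ the corresponding column of $(\key\Input)^\top\query\Input$ has the form $\big(\tfrac1\tau\va_1^\top\vb_q,\dots,\tfrac1\tau\va_m^\top\vb_q,\ast,\dots,\ast,C\big)^\top$, which is exactly the ``$[\vx,C]$'' situation of the preamble. Choosing $\val$ --- using one-hot markers $\rve_p$ that the block carries in $d$ of its own auxiliary rows at columns $1,\dots,d$, together with an appropriate global scale --- so that column $p$ contributes a fixed multiple of $\rve_p$ to the attention sum, the attention output at column $d+q$ becomes, coordinate-wise, $\va_p^\top\vb_q$ plus a known constant, up to an additive error that is $O(\tau^{-1})+O(e^{-C})$: the first part from the quadratic remainder of $\exp$, the second from the $\sum_r e^{x_r}$ term in the softmax denominator. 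Subtracting the known constant with the feedforward part of this layer, and using that pass also to cancel the residual-connection contribution and restore the zero padding, leaves the $q$-th column of $\rmA^\top\rmB$ (up to error) in a block of $m$ auxiliary rows of column $d+q$.

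\emph{Layer 2 (routing to the output slot).} Using positional encodings carried internally by the block --- which is permitted, see the remark in \cref{sec:function} --- together with the \texttt{write}-type construction of \cref{lem:write}, the attention of the second layer copies these $n$ auxiliary columns into rows $1,\dots,m$ of columns $2d+1,\dots,2d+n$, and its feedforward part clears the auxiliary rows and restores the padding, yielding the template output with $\rmA^\top\rmB$ in the third block of $d$ columns. All weight matrices have $O(d)$ rows, and we have used $2$ layers and $1$ head, as claimed; the output equals $\rmA^\top\rmB+\epsilon\rmM$ with every entry of $\epsilon\rmM$ of size $O(\tau^{-1})+O(e^{-C})$, so by the crude bound $\norm{\rmM}\le d\max_{p,q}\abs{[\rmM]_{pq}}$ and choosing $\tau$ and then $C$ large enough we get $\norm{\epsilon\rmM}\le\epsilon$ for any prescribed $\epsilon>0$, with the temperature/constant $C$ playing the role of the accuracy knob as in \cref{lem:matrixmul}.

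\emph{Main obstacle.} The attention pattern itself is easy; the delicate part is the bookkeeping around the softmax linearization --- checking that the two systematic distortions it introduces (the overall exponential scaling and the $1/\sum_r e^{x_r}$ normalization, which after rescaling is just an additive constant) can be removed \emph{exactly} by $\val$ and a single feedforward layer using only the fixed, known constants, so that what remains is a genuine $O(\tau^{-1})+O(e^{-C})$ perturbation rather than a term that scales with the entries of $\rmA$ and $\rmB$. A secondary, purely mechanical obstacle is keeping the residual connections and the zero-padding of the unused rows and columns consistent through both layers and the copy step.
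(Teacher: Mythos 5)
Your proposal is correct and follows essentially the same route as the paper's proof: downscale the inner products (your $1/\tau$ is the paper's small constant $c$), dominate the softmax denominator with a large constant $C$ injected through extra key columns, rescale by $e^{C}$ via the value matrix using identity/one-hot marker rows so each entry becomes $1+c\,\va_p^\top\vb_q$ plus $O(c^2)+O(e^{-C})$ error, strip the known affine offset with the feedforward sublayer, and spend the second layer routing the result to the output block of the template. The obstacle you flag is exactly the one the paper's error analysis handles (the quadratic remainder scales with $x_{ij}^2$, which is controlled by the boundedness assumption on the entries), so no gap remains.
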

\begin{corollary}
Let $\rmA\in \R^{k\times m}$ and $\rmB \in \R^{k\times n}$; then for any $\epsilon >0$ there exists a transformer-based function block with  2 layers, 1 head and dimensionality $r = O(d)$ that  outputs the multiplication $\rmB^\top\rmA +\eps\rmM$, for some $\norm{\rmM}\leq 1$ .
\end{corollary}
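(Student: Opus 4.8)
\emph{Proof sketch.} The plan is to obtain the corollary directly from \cref{lem:matrixmul_app} by exchanging the roles of the two operands, since $\rmB^\top\rmA = (\rmA^\top\rmB)^\top$ is exactly the quantity that construction produces when $\rmB$ is supplied as its \emph{first} argument and $\rmA$ as its \emph{second}. Recall that the block of \cref{lem:matrixmul_app} takes an input whose first $d$ columns hold the first argument and whose next $d$ columns hold the second argument, and, by linearizing the softmax with a large constant $C$, writes the product of the transpose of the first argument with the second argument into the third block of $d$ columns, up to an additive error $\eps\rmM$ with $\norm{\rmM}\le 1$ and $\eps$ controlled by $C$. So it suffices to instantiate the very same two-layer, one-head construction but feed it $[\,\rmB\mid\zero\mid\rmA\mid\zero\mid\zero\mid\zero\,]$ in place of $[\,\rmA\mid\zero\mid\rmB\mid\zero\mid\zero\mid\zero\,]$.

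Concretely, I would take the key, query, and value matrices from the proof of \cref{lem:matrixmul_app} together with the same linearization constant $C$, and make the only change a relabeling of the two argument slots. Since the layer count ($2$), head count ($1$), and dimensionality $r = O(d)$ depend on $d$ (the maximal operand size) and not on which operand occupies which slot, all of these are preserved; and the error $\eps\rmM$ originates solely from replacing the exact linearization of the softmax by the true softmax, which is agnostic to the identity of the operands. Hence the output is again of the form $\rmB^\top\rmA + \eps\rmM$ with $\norm{\rmM}\le 1$, with $\eps$ arbitrarily small as $C\to\infty$. The analogous products $\rmA^\top\rmA$ and $\rmB^\top\rmB$ follow by the same device (supplying the same operand in both slots).

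The one point that needs care is the interface with the unified \texttt{FLEQ} template of \cref{def:tf_func_block}: there the block is invoked as $f(\texttt{mem}[a],\texttt{mem}[b])$, with the surrounding machinery copying $\texttt{mem}[a]$ into columns $1{:}d$ and $\texttt{mem}[b]$ into columns $d+1{:}2d$ before the block runs. To realize $f(\rmA,\rmB)=\rmB^\top\rmA$ one may either (i) prepend a single in-block copy within the scratchpad region allotted to the block that routes the two operands into the swapped slots, or (ii) simply note that this is the block of \cref{lem:matrixmul_app} with columns $1{:}d$ and $d+1{:}2d$ of its weight matrices interchanged, which is still a valid instance of \cref{def:tf_func_block} with $d_h,d_w\le d$ and the same output location $2d+1{:}3d$. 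Option (ii) keeps the layer count at $2$, and I expect this bookkeeping — confirming the swap stays within the fixed column layout without disturbing the designated output columns — to be the only mildly delicate step; everything else is an immediate transcription of the preceding lemma.
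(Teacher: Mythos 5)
Your proposal is correct, but it reaches the corollary by a slightly different mechanism than the paper. The paper treats all four products $\rmA^\top\rmA,\rmA^\top\rmB,\rmB^\top\rmA,\rmB^\top\rmB$ as immediate consequences of the \emph{same} instantiation of \cref{lem:matrixmul_app}: the construction there forms $\rmM=[\,\rmA\;\zero\;\rmB\;\zero\,]$ (padded with zero rows) and linearizes the softmax to produce the full Gram matrix $\rmM^\top\rmM$, whose block structure already contains $\rmB^\top\rmA$ in the $(3,1)$ block alongside $\rmA^\top\rmB$ in the $(1,3)$ block. The corollary therefore requires no change to the attention weights or to the operand layout at all --- only the positional encodings $\emb^{(1)},\emb^{(2)}$ used in the final copy step are redirected so that the $\rmB^\top\rmA$ block, rather than the $\rmA^\top\rmB$ block, is written to the output columns. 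Your route --- re-instantiating the block with the operands swapped --- yields the same conclusion with the same layer/head/dimensionality counts and the same error analysis, and is arguably more modular since it never relies on noticing that the Gram matrix contains all four products. What the paper's route buys is that one fixed set of weights serves all four corollaries simultaneously, with only the read-out pointer varying.

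One detail of your option (ii) is stated imprecisely: interchanging ``columns $1{:}d$ and $d{+}1{:}2d$ of the weight matrices'' does not swap the operand slots, because $\key,\query,\val$ act on the feature (row) dimension of $\Input$ via left multiplication, whereas the slots for $\rmA$ and $\rmB$ live in the token (column) dimension; a column permutation of $\Input$ is a right multiplication and cannot be absorbed into those weights. Your option (i) would work but costs an extra layer, breaking the count of $2$. Neither issue is fatal, since the swap can be effected for free by the caller (exchanging the pointers $\emb_a$ and $\emb_b$ in the \texttt{FLEQ} instruction), or avoided entirely via the paper's read-out argument above.
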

\begin{corollary}
Let $\rmA\in \R^{k\times m}$ and $\rmB \in \R^{k\times n}$; then for any $\epsilon >0$ there exists a transformer-based function block with  2 layers, 1 head and dimensionality $r = O(d)$ that  outputs the multiplication $\rmB^\top\rmB +\eps\rmM$, for some $\norm{\rmM}\leq 1$ .
\end{corollary}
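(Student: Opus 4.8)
This corollary is an immediate specialization of \cref{lem:matrixmul_app}, so the plan is to reuse that construction rather than build anything new. Recall that the matrix-multiplication function block of \cref{lem:matrixmul_app} expects its first operand in columns $1$ to $d$ and its second operand in columns $d+1$ to $2d$ (with the zero-padding convention of \cref{def:tf_func_block}), and after two transformer layers with one head and width $r = O(d)$ it writes the product of the transpose of the first operand with the second into columns $2d+1$ to $3d$, up to an additive term $\eps\rmM$ with $\norm{\rmM}\le 1$ whose size is controlled by the large constant $C$ used to linearize the softmax. Setting both operands equal to $\rmB$ therefore produces $\rmB^\top\rmB + \eps\rmM$, and since the key, query, value, depth, head count, and width of the block are all left untouched, the advertised parameters ($2$ layers, $1$ head, $r = O(d)$) and the error bound carry over verbatim.

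The one point I would spell out is how ``both operands equal to $\rmB$'' is realized within the \texttt{FLEQ} execution model, where a function block is never handed its inputs directly. There the driver of \cref{thm:unified} executes the instruction $f_{\mathrm{mul}}(\texttt{mem}[b],\texttt{mem}[b])$ by issuing two \texttt{read} operations (\cref{lem:read}): one copies $\texttt{mem}[b]$ into the columns reserved for the first operand of the block and the other copies the same $\texttt{mem}[b]$ into the columns reserved for the second operand. Since \texttt{read} can place an arbitrary source block at an arbitrary scratchpad destination, reading the same source twice is permitted; after these two reads the block's scratchpad holds $\rmB$ in columns $1$ to $d$ and $\rmB$ in columns $d+1$ to $2d$, running the unchanged two-layer block then yields $\rmB^\top\rmB+\eps\rmM$ in columns $2d+1$ to $3d$, and the driver finally writes that to memory location $c$. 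Dimension bookkeeping is trivial: for $\rmB\in\R^{k\times n}$ one needs $k,n<d$, exactly as in \cref{lem:matrixmul_app}, and the output $\rmB^\top\rmB\in\R^{n\times n}$ fits in the $d$ output columns.

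There is essentially no obstacle here — the entire content is the observation that the multiplication block is ``polymorphic'': feeding it a repeated operand computes $\rmB^\top\rmB$, swapping the two operand slots computes $\rmB^\top\rmA$, and feeding $\rmA$ twice computes $\rmA^\top\rmA$, recovering all three corollaries at once. The only thing needing a sentence of care is that the zero-padding conventions of \cref{def:tf_func_block} must be respected on both copies, so that the product is taken over the intended $k\times n$ sub-blocks and not over spurious zero entries; one could instead transpose $\rmB$ via \cref{lem:transpose} and then multiply, but that costs more layers, so the direct specialization is what gives the stated $2$-layer bound.
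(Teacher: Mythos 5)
Your proposal is correct, and it reaches the stated parameters, but it takes a slightly different route from the one the paper intends. The paper's proof of \cref{lem:matrixmul_app} forms the concatenated matrix $\rmM=\bigl[\,\rmA \;\; \zero \;\; \rmB \;\; \zero\,\bigr]$ (padded to $2d$ columns) and uses the linearized softmax to produce the \emph{entire} Gram matrix $\rmM^\top\rmM$, whose diagonal and off-diagonal blocks are simultaneously $\rmA^\top\rmA$, $\rmA^\top\rmB$, $\rmB^\top\rmA$, and $\rmB^\top\rmB$; the corollaries are then obtained with no change to the attention computation at all, simply by choosing different positional encodings $\emb^{(1)},\emb^{(2)}$ in the final copy step (\cref{lem:read}) so that the $\rmB^\top\rmB$ sub-block, rather than the $\rmA^\top\rmB$ sub-block, lands in the output columns $2d+1$ to $3d$. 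That read-out-selection argument is what matches the literal statement of the corollary, in which the block still receives both $\rmA$ and $\rmB$ as operands yet emits $\rmB^\top\rmB$. Your argument instead keeps the block fixed as the map $(\rmX,\rmY)\mapsto\rmX^\top\rmY$ and duplicates the operand, i.e.\ issues $f_{\text{mul}}(\texttt{mem}[b],\texttt{mem}[b])$ so that both \texttt{read}s of the FLEQ driver fetch the same memory location. This is also valid — it is exactly how the paper invokes the block in its programs (e.g.\ the step computing $\|\rvb_T\|^2$ in the power-iteration program) — and it buys modularity: one block serves all four products with no re-wiring of the output encodings. What it does not quite do is prove the corollary as a statement about a block that is \emph{handed} $\rmA$ and $\rmB$; for that, the paper's read-out-selection argument is the more faithful one, and it costs nothing since $\rmB^\top\rmB$ is already sitting inside $\rmM^\top\rmM$. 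Your dimension bookkeeping ($k,n<d$, output $n\times n$ fits in $d$ columns) and your remark about respecting the zero-padding convention are both correct and worth keeping.
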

\begin{corollary}
Let $\rmA\in \R^{k\times m}$ and $\rmB \in \R^{k\times n}$; then for any $\epsilon >0$ there exists a transformer-based function block with  2 layers, 1 head and dimensionality $r = O(d)$ that  outputs the multiplication $\rmA^\top\rmA +\eps\rmM$, for some $\norm{\rmM}\leq 1$ .
\end{corollary}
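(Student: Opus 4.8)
This statement is an immediate corollary of \cref{lem:matrixmul_app} (matrix multiplication by linearizing the softmax), obtained by specializing the second argument to be a copy of the first. The plan is to observe that the construction of \cref{lem:matrixmul_app} produces the attention matrix $\softmax\bigl((\key\Input)^\top\query\Input\bigr)$ whose relevant block is, up to the linearization error, the matrix of inner products between the columns feeding the key weight matrix and those feeding the query weight matrix; choosing both of these to be the columns holding $\rmA$ (rather than having the key read from the $\rmB$-slot, columns $d+1{:}2d$, and the query from the $\rmA$-slot, columns $1{:}d$) makes that block equal to $\rmA^\top\rmA$. Concretely, I would set the key and query weight matrices to select the first $d$ columns (the $\rmA$ slot), append the large constant $C$ in the extra coordinate exactly as in \cref{lem:matrixmul_app} so that $\softmax([z,C]) = z + \eps$ with $\norm{\eps}$ controlled by $C$, and keep the value weight matrix and the second transformer layer (which relocates the result to the output columns $2d+1{:}3d$) verbatim. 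The dimensionality stays $r = O(d)$ and the layer/head count stays $2$ layers, $1$ head, since nothing is added to the construction — only a choice of which column block the key/query projections read from has changed.

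Alternatively, and with literally no modification to the block, one may keep the two-input function-block format of \cref{def:tf_func_block} and simply invoke the $\rmA^\top\rmB$ block of \cref{lem:matrixmul_app} with both argument pointers $\emb_a,\emb_b$ aimed at the same memory location, so that the \texttt{read} operations load $\rmA$ into both the first $d$ columns and the next $d$ columns before the block runs; the block then outputs $\rmA^\top\rmA+\eps\rmM$ directly. I would mention this framing as a remark, since it makes the corollary trivially a special case rather than a new construction.

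The error analysis is inherited unchanged: the only source of error is the softmax-linearization, i.e.\ replacing $\softmax([z,C])$ by $z$, which contributes $\eps\rmM$ with $\norm{\rmM}\le 1$ for $C$ chosen large enough in terms of the prescribed $\epsilon$ and the magnitudes of the entries of $\rmA$ (equivalently, $d$). There is no genuine obstacle here; the main thing to verify is bookkeeping — that the key/query weight matrices selecting the same column block still yield well-defined, dimension-$O(d)$ matrices and that the residual stream layout used by \cref{lem:matrixmul_app} is untouched — which is routine. The symmetric statements for $\rmB^\top\rmA$, $\rmB^\top\rmB$, and (the present) $\rmA^\top\rmA$ are thus all obtained from the single construction of \cref{lem:matrixmul_app} by relabeling which slot(s) the key and query projections read.
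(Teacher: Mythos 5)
Your proposal is correct and rests on the same key lemma the paper uses (\cref{lem:matrixmul_app}), but the specialization is done differently. The paper's route exploits the fact that the construction of \cref{lem:matrixmul_app} never computes $\rmA^\top\rmB$ in isolation: with $\key\Input$ and $\query\Input$ both proportional to the concatenated block $\rmM = [\rmA\;\zero\;\rmB\;\zero]$, the linearized softmax produces the full Gram matrix $\rmM^\top\rmM$, which already contains $\rmA^\top\rmA$, $\rmA^\top\rmB$, $\rmB^\top\rmA$, and $\rmB^\top\rmB$ as sub-blocks. All four corollaries therefore follow from the \emph{identical} attention weights; the only thing that changes is which sub-block the final \texttt{read}/\texttt{write} step (via the encodings $\emb^{(1)},\emb^{(2)}$) copies into the output columns $2d+1{:}3d$. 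Your first variant instead re-aims the key and query projections at the $\rmA$ slot, and your second variant duplicates $\rmA$ into both input slots at the FLEQ level; both are valid, preserve the $2$-layer/$1$-head/$O(d)$ budget, and inherit the same linearization error bound. What each buys: the paper's version lets one frozen set of attention weights serve all four products (only output pointers differ), which is the cleanest from the hardcoded-weights perspective; your second variant is the most idiomatic within the \texttt{FLEQ} calling convention, since it needs no new function block at all, just $\emb_a=\emb_b$. One small caution on your first variant: make sure the residual-stream layout (the $\id$ and $\vOne\vOne^\top$ auxiliary row blocks and the $C$-coordinate used to linearize the softmax) is kept exactly as in \cref{lem:matrixmul_app}, since the error analysis in \cref{app:error} depends on those fixed components rather than on which columns the key/query select.
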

We will prove just the first of these results and the rest are a simple corollary of it.
\begin{proof}
Let $\rmM\in \R^{2d\times 2d}$, $\rmA\in\R^{k\times m}$ and $\rmB\in\R^{k\times n}$ be the following matrices:
\begin{align*}
    \rmM = \begin{bmatrix}
        \rmA &\zero&\rmB  &\zero \\
        \zero&\zero &\zero &\zero
    \end{bmatrix}.
\end{align*}
The zeros pad the rows and columns to ensure that the matrix $M$ is $2d\times 2d$.
Then, consider the input matrix to be of the following form:
\begin{equation}
    \Input = \begin{bmatrix}
        \rmM& \zero &\zero \\
        \zero & \rvOne\rvOne^\top &\zero\\
        \id&\zero &\zero\\ 
        &\emb^{(1)}&\\
        &\emb^{(2)}&\\
        \zero &\rvOne^T &\zero 
    \end{bmatrix}  \nonumber
\end{equation} 
 where $\rvOne\in\R^{2d}$ is the all ones vector. The identity matrix $\id$ and the all ones matrix $\rvOne\rvOne^\top$ are part of the design of the input and they are always fixed. For now we ignore the  encodings and the last row, by setting the corresponding rows of the key,query and value weight matrices to be zero. These rows will be used to copy the output to the  place that we want. %

Focusing on the rest of the rows, we set the key  and query weight matrices to be 
\begin{equation}
\key = \id, \query = \begin{bmatrix}
     c\id & \zero &\zero\\
    \zero &\zero &C\id\\
     \zero & \id &\zero
\end{bmatrix}, \val = \begin{bmatrix}
\zero & \zero &\zero\\
\zero & \zero &ne^C\cD_d\\
\zero &\zero &\zero
\end{bmatrix} \nonumber
\end{equation}
where $\cD_d\in \R^{2d\times 2d}$ is the diagonal matrix with the first $d$ diagonal elements $1$, and the rest 0.
Thus we have
\begin{align}
    (\key\Input)^\top\query\Input &= \begin{bmatrix}
        \rmM& \zero &\zero \\
        \zero & \rvOne\rvOne^\top &\zero\\
        \id&\zero &\zero\\
    \end{bmatrix}^\top \begin{bmatrix}
        c\rmM & \zero &\zero\\
        C\id &\zero &\zero\\
        \zero &\rvOne\rvOne^\top &\zero 
    \end{bmatrix}\\
    &= \begin{bmatrix}
        c\rmM^\top\rmM &\rvOne\rvOne^\top &\zero\\
        C\rvOne\rvOne^\top &\zero &\zero\\
        &\zero &\zero &\zero 
    \end{bmatrix}
\end{align}
Each of the first $2d$ columns above looks as follows
\begin{equation}
    \begin{bmatrix}
      cz_{1i} &cz_{2i} &\dots &cz_{ni} & C\rvOne^\top &\zero
    \end{bmatrix} \nonumber
\end{equation}
 After we apply the softmax $\sigma_s$ per column, we get 
 \begin{equation}
   \sigma_s(cz_{ij}) = \dfrac{e^{cz_{ij}}}{\sum_{j=1}^n e^{cz_{ij}} + n(e^{C}+1) } \nonumber
\end{equation}
where $n=2d$, $z_{ij}$  is the $(i,j)$ element of the matrix $\rmM^\top\rmM$. Let $\ell(\cdot)$ be the transformation above then we have

\begin{align*} \val\Input\softmax((\key\Input)^\top\query\Input) &= \begin{bmatrix}
    \zero &\zero &\zero\\
    ne^C\cD_d &\zero &\zero\\
    \zero &\zero &\zero 
    \end{bmatrix}\begin{bmatrix}
        \ell(c\rmM^\top\rmM) &* &*\\
        * &* &*\\
        *&*&*
    \end{bmatrix}\\
    &=\begin{bmatrix}
        \zero &\zero &\zero\\
        ne^C\cD_d\ell(c\rmM^\top\rmM) &* &*\\
        \zero &\zero &\zero
    \end{bmatrix}\\
    &\approx \begin{bmatrix}
        \zero &\zero &\zero\\
        \rvOne\rvOne^\top + c\rmM^\top\rmM &* &*\\
        \zero &\zero &\zero 
    \end{bmatrix}
\end{align*}
and by adding back the residual we have
\begin{equation}
    \Input = \begin{bmatrix}
        \rmM &\zero &\zero\\
        \rvOne\rvOne^\top + c\rmM^\top\rmM &* &*\\
        \id&\zero &\zero 
    \end{bmatrix} \nonumber
\end{equation}
for small enough $c$ and large enough $C$. This is because 
\begin{align*}
    ne^C\dfrac{e^{cx_{ij}}}{\sum_{j=1}^n e^{cx_{ij}} + n(e^{C} +1)}  &= e^{cx_{ij}}\dfrac{1}{1+ \sum_{j=1}^ne^{cx_{ij}-C-\log n}+n}  \\
    &= (1 + cx_{ij} + O((cx_{ij})^2))( 1 - e^{cx_{ij}-C-\log n} + O( e^{2(cx_{ij}-C-\log n)}))\\
    &= (1 + cx_{ij} + O((cx_{ij})^2))( 1 - e^{cx_{ij}-C-\log n} )\\
    &\approx (1 + cx_{ij})
\end{align*}
We now use the feedforward layers to perform the following transform 
\begin{align*}
\Input &= \begin{bmatrix}
* & * & * \\
 \rmM^\top\rmM&*&*\\
* &*  &*
\end{bmatrix} \\ 
&= \begin{bmatrix}
    * & * & *  &* &* \\
    \rmA^\top \rmA &\zero &\rmA^\top\rmB &\zero & *\\
    \zero &\zero &\zero &\zero &*\\
    \rmB^\top\rmA &\zero &\rmB^\top\rmB &\zero&*\\
    \zero &\zero &\zero &\zero &*\\
    *&*&*&*&*
\end{bmatrix}
\end{align*}
Now if $\emb^{(1)} = \begin{bmatrix} \zero &\zero & \emb_{2d+1:2d+n}& \zero &\zero  \end{bmatrix}$ and  $\emb^{(2)} = \begin{bmatrix}
\emb_{1:n} &\emb_{n+1:d} &\zero &\emb_{d+n+1:2d} &\emb_{2d:3d}
\end{bmatrix}$ we can copy $\rmA^\top\rmB$ to the desired place using \cref{lem:read}.
\end{proof}

\section{Error Analysis}\label{app:error}

In all of this section we assume that each element of the input matrix $\rmX$ has values $v_i$ bounded by some constant $G$, \ie $\abs{v_i}\leq G$.
\paragraph{The error in the \texttt{read}/ \texttt{write} operation.}

The positional encodings as we have already mentioned have the following properties: 
$\emb_i$ is an $\log(n)$ dimensional $\pm 1$ vector which is the binary representation of $i$ with $-1$ in the place of $0$. %
Hence, we have $\emb_i^\top\emb_i = \log(n)$ and each $\emb_i^\top\emb_j < \log(n)$ for $i\neq j$. 

Each time a copy is implemented from one column to another, we create a permutation matrix (a matrix of zeros and ones) which then multiplies the input matrix $\Input\in\R^{d\times n}$ from the right and results in permutations of the column space. We thus focus on just one column of the $n\times n$ matrix that is created after we apply the softmax.
Let $\rvz$ be this column of the matrix, ideally  we want to output in one   position $1$ and in the rest $0$. In the place that we want to output $1$, say the $a-$th position, we have the inner product $\rvz_a = \emb_i^\top\emb_i$ for some $i\in[n]$. The rest of the elements in the same column would be $\rvz_b \leq \emb_i^\top\emb_j$  for $i\neq j$ and $a\neq b$. Then, %
\begin{align*}
    \bracks{\softmax((\key\Input)^\top \query\Input) }_{i,i}&= 
\dfrac{e^{\lambda \emb_i^\top\emb_i}}{e^{\lambda \emb_i^\top\emb_i} + \sum_{j\neq i} e^{\lambda \emb_i^\top\emb_j} }\\
&= \dfrac{1}{1 + \sum_{j\neq i} e^{\lambda \emb_i^\top\emb_j}/e^{\lambda \emb_i^\top\emb_i} }
\end{align*}
Since $\lambda\emb_i^\top\emb_j < \lambda \emb_i^\top\emb_i -\lambda$ for $i\neq j$, we have that 
\begin{align*}
    \bracks{\softmax((\key\Input)^\top \query\Input) }_{i,i}&\geq \dfrac{1}{1+ ne^{-\lambda}}\\
    &\geq \dfrac{1}{1+ e^{\log n-\lambda}}\\
    &\geq 1 - \dfrac{e^{\log n-\lambda}}{1+ e^{\log n-\lambda}}\\
    &\geq 1 - e^{\log n-\lambda}
\end{align*}
Thus, for  $i\neq j$, $\bracks{\softmax((\key\Input)^\top \query\Input) }_{i,j}\leq  e^{\log n-\lambda}$. This implies that there exist $\eps_i$, $i=1,\hdots,n$ such that 
\begin{align*}
    \rvz_a &= 1-\varepsilon_a, \text{ for some } \varepsilon_a \leq e^{\log n -\lambda}\\
    \rvz_b &= \varepsilon_b \text{ for }b\neq a \text{ and for some }\varepsilon_b\leq e^{\log n -\lambda}
\end{align*}%
Hence, we have that 
\begin{equation*}
    \rvz = \rvz^* + \varepsilon
\end{equation*}
where $\rvz^*$ is the targeted vector and $\varepsilon$ is the vector containing the errors $\varepsilon_a,\varepsilon_b$. 

Now let $\rvx_i$ be the $i-$th row of the input matrix $\Input$, then we have
\begin{align*}
    \Input\rvz &= \Input \rvz^* + \Input\varepsilon\\
    &=\Input \rvz^* + \begin{bmatrix}
      \inner{\rvx_1}{\varepsilon}\\
      \vdots\\
      \inner{\rvx_d}{\varepsilon}\\
      \end{bmatrix}
\end{align*}
In the general case that all the columns will change, let $\rmP  = \softmax((\key\Input)^\top\query\Input)$ and $\rmP^*$ be the targeted matrix then we have that 
\begin{align*}
    \Input\rmP &= \Input\rmP^* + \Input\rmE
\end{align*}
where $\rmE= \begin{bmatrix} \varepsilon_1 &\hdots &\varepsilon_n
\end{bmatrix}$ is the matrix containing all the errors and so
\begin{align*}
    \norm{\Input\rmP- \Input\rmP^*}  &= \max_{1\leq j\leq n}\sum_{i=1}^d\abs{\inner{\rvx_i}{\varepsilon_j}}\\
    &\leq Gn^2de^{\log n -\lambda}\\
    &\leq e^{\log Gdn^3 -\lambda}
\end{align*}
Thus, if $\lambda >\log \dfrac{Gdn^3}{\eps}$ we have that 
\begin{equation*}
    \norm{\Input\rmP- \Input\rmP^*}\leq \eps
\end{equation*}

\paragraph{The error in Matrix Multiplication . }
This error has already been calculated in \cref{app:matrixmul}, however we explicitly define it here as follows:

\begin{align*}
    ne^C\dfrac{e^{cx_{ij}}}{\sum_{j=1}^n e^{cx_{ij}} + n(e^{C} +1)}  &= e^{cx_{ij}}\dfrac{1}{1+ \sum_{j=1}^ne^{cx_{ij}-C-\log n}+n}  \\
    &= (1 + cx_{ij} + O((cx_{ij})^2))( 1 - e^{cx_{ij}-C-\log n} + O( e^{2(cx_{ij}-C-\log n)}))
    \end{align*}

Let $c = \frac{\eps_1}{C_1G}$ for some constant $C_1$ and $C = \log\dfrac{C_2}{\eps_2}$ for some $C_2$ then we have 
\begin{align*}
    A &= ne^C\dfrac{e^{cx_{ij}}}{\sum_{j=1}^n e^{cx_{ij}} + n(e^{C} +1)}\\  &= e^{cx_{ij}}\dfrac{1}{1+ \sum_{j=1}^ne^{cx_{ij}-C-\log n}+n}  \\
    &= (1 + cx_{ij} + \dfrac{\eps_1^2x_{ij}^2}{G^2})( 1 - \dfrac{e^{cx_{ij}\eps_2}}{n} +  \dfrac{e^{2cx_{ij}}\eps_2^2}{n^2})\\
    &= (1 + cx_{ij})( 1 - \dfrac{e^{cx_{ij}\eps_2}}{n} +  \dfrac{e^{2cx_{ij}}\eps_2^2}{n^2}) + \dfrac{\eps_1^2x_{ij}^2}{G^2}( 1 - \dfrac{e^{cx_{ij}}\eps_2}{n} +  \dfrac{e^{2cx_{ij}}\eps_2^2}{n^2})
    \end{align*}
    Thus,
\begin{align*}
    \abs{A - (1 + cx_{ij})} &= \abs{ -(1 + cx_{ij}) \dfrac{e^{cx_{ij}\eps_2}}{n} +  \dfrac{e^{2cx_{ij}}\eps_2^2}{n^2}+ \dfrac{\eps_1^2x_{ij}^2}{G^2}( 1 - \dfrac{e^{cx_{ij}}\eps_2}{n} +  \dfrac{e^{2cx_{ij}}\eps_2^2}{n^2})}\\
    &\leq \eps_1^2(\dfrac{e^{\eps_1/C_1}\eps_2}{n}+2\dfrac{e^{2\eps_1/C_1}\eps_2^2}{n^2}) +\dfrac{e^{\eps_1/C_1}\eps_2}{n}(1+\dfrac{\eps_1}{C_1})\\
    &\leq 4\dfrac{e^{\eps_1/C_1}\eps_2}{n}
    \end{align*}  
    Hence if $\eps_2 = \eps/4$ and $\eps_1=C_1\log(n\eps)$ we have that the total error is less than $\eps$.
    
\paragraph{Function approximation.} The error in \cref{lem:sigmoids} is an immediate consequence of \cref{th:Barron} and it is proportional to $1/\sqrt{m}$, where $m$ is the number of heads we are using.

\paragraph{Accumulation of error after $T$ operations.} %
Fix an $\eps >0$ and
assume that in the $t-$th iteration the input is $\Input_{t} = \Input_{t}^* +\eps_t\rmM_t$, where $\Input_{t}^*$ is the ideal input $0<\eps_t <\dfrac{t\eps}{T}$  and $\rmM_t$ is a matrix such that $\norm{\rmM_t}\leq 1$, we will show that $\Input_{t+1} = \Input_{t+1}^*+\eps_{t+1}\rmM_{t+1}$, where $\Input_{t+1}^*$ is the ideal input, $0<\eps_{t+1} <\dfrac{(t+1)\eps}{T}$  and $\rmM_{t+1}$ is a matrix such that $\norm{\rmM_{t+1}}\leq 1$. 
\begin{itemize}
    \item Matrix Multiplication with a matrix $\rmA$, $\norm{\rmA}\leq 1$\footnote{Notice that this can be assumed without loss of generality, since we can normalize all the errors with the maximum norm of a matrix to the power of $T$.} will have the following result: 
    $$\rmA\Input_t +\eps' = \rmA\Input_{t}^* + \eps_t\rmA\rmM_t +\eps'\rmM' = \Input_{t+1}^*  +(\eps_t +\eps')\rmM_{t+1} $$ where $\eps'$ is controlled by the constants we use in the design of the function block and $\rmM_{t+1}$ is some matrix with $\norm{\rmM_{t+1}}\leq 1$. If now $\eps'<\dfrac{\eps}{T}$, our claim follows.
    \item \texttt{Read/Write} operations will result to an error of 
    $$\rmX_t\rmP = \Input_t\rmP^* +\eps'\rmM' = \Input_t^*\rmP^* + \eps_t\rmM_t\rmP^* +\eps'\rmM' $$ 
    Notice that as before, since $\norm{\rmM'}\leq 1$ and $\norm{\rmM_t\rmP^*}\leq 1$ and thus we have $\Input_{t+1} = \rmX_t\rmP = \Input_{t+1}^* + \eps_{t+1}\rmM_{t+1}$, where $\eps_{t+1} =\eps_t +\eps'$. Again if $\eps' \leq \dfrac{\eps}{T}$ the result follows.
    \item The result for function approximation follows in a similar way.
\end{itemize}
\section{\texttt{subleq} is Turing Complete}\label{app:subleq}
In this section, we show that our slightly restricted version of the original SUBLEQ instruction \citep{mavaddat1988urisc} is indeed also Turing complete. To do this, we will utilize Minsky machines, which are also Turing complete. A Minksy machine comprises of registers and a list of instructions, where each instruction can be either of the following two instructions
\begin{itemize}
    \item \texttt{add}(a): $\texttt{mem}[a]:=\texttt{mem}[a]+1$, go to the next instruction.
    \item \texttt{sub}(a, n): If $\texttt{mem}[a]==0$, go to instruction $n$. Otherwise $\texttt{mem}[a]:=\texttt{mem}[a]-1$, go to the next instruction. 
\end{itemize}

Given a program written in a language above, we translate it into an equivalent one written in our \texttt{SUBLEQ} language. For this, we initialize three fixed locations / registers $c_{-1}, c_{0}$, and $c_{+1}$ such that $\texttt{mem}[c_{-1}]:=-1$, $\texttt{mem}[c_{0}]:=0$, and $\texttt{mem}[c_{+1}]:=+1$; as well as an extra register $\texttt{mem}[b]$. We translate the program instruction-by-instruction. Assume that we have translated the first $i-1$ instructions. Let $j-1$ be the index of the last (translated) \texttt{SUBLEQ} instruction, that is, the index of the next \texttt{SUBLEQ} instruction will be $j$. Then, for the $i$-th instruction in the Minsky machine language, we translate it into our language as follows:
\begin{itemize}
    \item Case 1, The $i$-th instruction of the Minsky machine program is $\texttt{add}(a)$. This is equivalent to $\texttt{SUBLEQ} (a, c_{-1}, j+1)$, and hence the $j$ instruction in our program will simply be $\texttt{SUBLEQ} (a, c_{-1}, j+1)$.
    \item Case 2, The $i$-th instruction in the Minsky machine program is $\texttt{sub}(a, n)$. This would be  equivalent to the sequence of the following 5 \texttt{SUBLEQ} instructions.
    \begin{algorithm}[H]
        \caption{Translation for $\texttt{sub}(a, n)$}
        \begin{algorithmic}
           \State Instr. $j$\quad\;\;:\; $\texttt{SUBLEQ} (b, b, j+1)$ 
           \State Instr. $j+1$:\; $\texttt{SUBLEQ} (b, a, j+3)$
           \State Instr. $j+2$:\; $\texttt{SUBLEQ} (a, c_{+1}, j+5)$
           \State Instr. $j+3$:\; $\texttt{SUBLEQ} (a, c_{0}, n')$
           \State Instr. $j+4$:\; $\texttt{SUBLEQ} (a, c_{+1}, j+5)$
        \end{algorithmic}
    \end{algorithm}
    Here $n'$ is the index of the translation of the $n$-th instruction of the Minsky machine program. 
    This can be computed as a function of the number of $\texttt{add}$ and $\texttt{sub}$ instructions up to instruction $n$.
    The correctness of the above can be verified by considering the three cases: $\texttt{mem}[a]\geq 1$, $\texttt{mem}[a]\leq -1,$ and $\texttt{mem}[a]=0$.  
\end{itemize}

\section{Single Instruction Set}\label{app:unified}
Each instruction consists of the following tuple: $(\emb_a, \emb_b, \emb_c, \emb_{\text{flag}}, \emb_m, \emb_p)$, and does the following
\begin{enumerate}
    \item $mem[c] = f_m (mem[a], mem[b])$
    \item if $mem[\text{flag}]_{(0,0)}\leq 0$ goto instruction $p$
\end{enumerate}

Here, locations $a, b$, and $c$ can contain either scalars, or $d$-dimensional vectors or $d\times d$ matrices, and $\texttt{mem}[\text{flag}]_{(0,0)}$ is the 1-st entry of $\texttt{mem}[\text{flag}]$ if it is a vector / matrix, else it is $\texttt{mem}[{\text{flag}}]$ if a scalar.

This can be implemented using the following steps (each may use a separate layer of transformer):\\

At the beginning of each iteration, the scratchpad starts with storing the pointer to the next instruction $\emb_t$. 
\begin{enumerate}
    \item Read the command  $(\emb_a, \emb_b,\emb_c, \emb_{\text{flag}}, \emb_p, \emb_m)$ from the location to the scratchpad.
    \item Copy the $d\times d$ data at locations $a,b$ to the scratchpad memory $scratchMem$ (assume the data is $d\times d$ even if actually scalar or vector, the $f_m$ implementation will handle that)
    \item Copy the data to the $i$-th function row block using the feed forward layer.
    \item Once in the correct row block, $f_m(\texttt{mem}[a], \texttt{mem}[b])$ is computed
    \item Feedforward layers copy back the data from $i$-th row block to the scratchpad memory $scratchMem$.
    \item Write result from scratchpad memory to $\emb_c$.
    \item if $\texttt{mem}[{\text{flag}}]_{(0,0)}\leq 0$ store $\emb_p$ in the scratchpad, else $\emb_{t+1}$
\end{enumerate}

\begin{figure}[H]
    \centering
    \includegraphics[scale = 0.4]{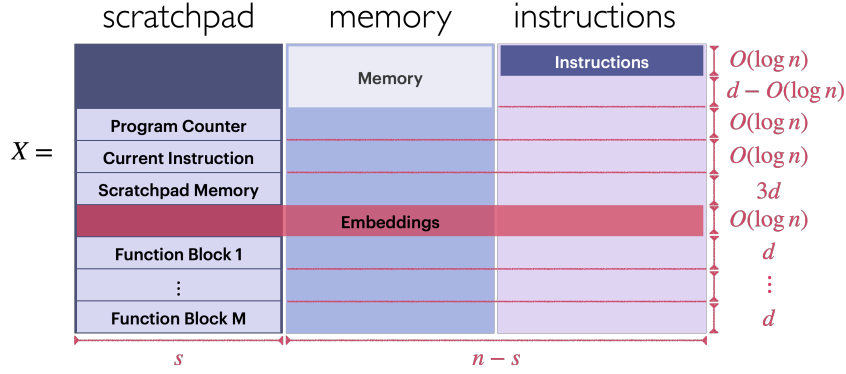}
    \caption{The structure of input $\Input$}
    \label{fig:unified_framework}
\end{figure}

The structure of the input $\Input$ is shown in \cref{fig:unified_framework}.
It has $n$ columns and $O(Md + \log n)$ rows.
It is partitioned into 3 column blocks: the Scratchpad block, the Memory block, and the Instructions block.
The Memory block is the storage and is the location where all the variables are stored. 
Each variable can be either a scalar, vector or matrix, as long as the number of rows in it are no larger than $d$.
For example, if a variable is a $d\times d$ matrix, it is stored in $d$ consecutive columns in the block, where each column has length $d$.
The address of this variable is the index of its first column in the input $\Input$.
The Instructions block contains instructions, where each instruction is a vector of the form 
\begin{align*}
    \vc = \begin{bmatrix}
        \emb_a\\
        \emb_b\\
        \emb_c\\
        \emb_m\\
        \emb_\text{flag}\\
        \emb_p\\
        d_h\\
        d_w\\
        b_{\text{mask}}^{(1)}\\
        b_{\text{mask}}^{(2)}\\
        b_{\text{mask}}^{(3)}
    \end{bmatrix}, 
\end{align*}
which encodes the following logic:
    \begin{equation*}
        \texttt{mem}[c] = f_m (\texttt{mem}[a], \texttt{mem}[b])\quad; \quad\text{if }\texttt{mem}[\text{flag}]\leq 0\text{ goto instruction }p.
    \end{equation*}
$\emb_a, \emb_b, \emb_c, \emb_p,$ and $\emb_\text{flag}$ are all binary $\pm 1$ vectors that point to the locations $a,b,c,p,$ and $\text{flag}$ respectively.
These are simply the binary representations of the integers $a,b,c,p$ and $\text{flag}$, and hence have length $\log_2 n$ each.
Similarly, $\emb_m$ is the binary vector representation of the integer $m$, and hence has length $\log_2 M$, where $M$ is the number of functions we implement. The $b_{\text{mask}}$ is mask bit used while writing the output back to memory.

The scratchpad has $s$ columns. The length $s$ depends on the maximum number of columns needed by the function blocks to operate, and can be as low as $O(1)$ for scalar and vector functions, $O(d)$ for matrix functions, and can be as high as $O(d^2)$ if functions like matrix vectorization are one of the $M$ functions.
The Scratchpad consists of the following parts:
\begin{itemize}
    \item The program counter is a row block with $\log_2 n$ rows and $s$ columns and takes the form:
\begin{align*}
    \begin{bmatrix}
        \emb_i&\emb_i&\cdots &\emb_i.
    \end{bmatrix}
\end{align*}
This signifies that the current program counter points to the $i$-th instruction.
Using this, the $i$-th instruction is read into all the $s$ columns of `Current Instruction' row block.
\item The Current Instruction row block has $O(\log n)$ rows and $s$ columns, and each column initially contains the $i$-th instruction once it is read. Then, the instructions in each column are slightly modified depending on the column index, to read memory blocks pointed to in the instruction. 
The memory blocks are read into the `Scratchpad Memory'.
\item The Scratchpad Memory is a temporary location where the data is first read into from the Memory column block, before it is moved to the correct function's Function Block, using the function index encoding $\emb_m$ in the instruction.
\item The encodings row block has $O(\log n)$ rows and $n$ columns, and is used to index every column in the input $\Input$. 
It contains the binary $\pm 1$ vector encodings of the column index for each column.
The details of this row block are explained later.
\item The Function Blocks are custom transformer blocks that can be added in a plug-n-play manner to the Unified Attention Based Computer depending on what `elementary' functions the user wants the computer to have access to.
\end{itemize}

\begin{align*}
    \Input = \left[\begin{array}{cccc|ccc|ccc}
       \zero&\zero&\dots&\zero&\mem_{s+1}&\dots&\mem_{m+s}&\begin{bmatrix}
           \vc_{m+s+1}\\
           \zero
       \end{bmatrix}&\dots&\begin{bmatrix}
           \vc_{n}\\
           \zero
       \end{bmatrix}\\
        \hline
    \emb_t&\emb_t&\dots&\emb_t&*&\dots&*&*&\dots&*\\
        \hline
\vc_t^1&\vc_t^2&\dots&\vc_t^{s}&*&\dots&*&*&\dots&*\\
        \hline
        \mem_{a_t}^1 &\mem_{a_t}^2 &\dots&\mem_{a_t}^{s} &\vZero&\dots&\vZero&\vZero&\dots&\vZero\\
        \mem_{b_t}^1 &\mem_{b_t}^2 &\dots&\mem_{b_t}^{s} &\vZero&\dots&\vZero&\vZero&\dots&\vZero\\
        \mem_{c_t}^1 &\mem_{c_t}^2 &\dots&\mem_{c_t}^{s} &\vZero&\dots&\vZero&\vZero&\dots&\vZero\\
        \hline 
        \vZero&\vZero&\dots&\vZero&\emb_{s+1}&\dots&\emb_{m+s}&\emb_{m+s+1}&\dots&\emb_{n}\\
        \emb_1&\emb_2&\dots&\emb_{s}&\vZero
        &\dots&\vZero&\vZero&\dots&\vZero\\
        \hline
        \hline
        \text{f}_1\texttt{mem}&\dots&\dots&\dots&*&\dots&*&\dots&*\\
        \vdots&\vdots&\vdots&\vdots&\vdots&\vdots&\vdots&\dots&*\\
        \text{f}_M\texttt{mem}&\dots&\dots&\dots&*&\dots&*&\dots&*\\
    \end{array}\right]
\end{align*}

\subsection{Step 1}

In this step, we need to copy the $t$-th instruction, pointed to by the program counter $\emb_t$, to the scratchpad's Current Instruction block. 
We denote the instruction by $\vc_t$ where
\begin{align*}
\vc_t=    \begin{bmatrix}
        \emb_{a_t}\\
        \emb_{b_t}\\
        \emb_{c_t}\\
        \emb_{\text{flag}_t}\\
        \emb_{p_t}\\
        \emb_{m_t}\\
        d_h\\
        d_w\\
        b_{\text{mask}}^{(1)}\\
                b_{\text{mask}}^{(2)}\\
                        b_{\text{mask}}^{(3)}
    \end{bmatrix}
\end{align*}
For this step, we only consider the following relevant subset of rows of the matrix $\rmX$:
\begin{align*}
    \Input = \left[\begin{array}{cccc|ccc|ccc}
            \zero&\zero&\dots&\zero&*&*&\dots&\vc_{m+s+1}&\dots&\vc_{n}\\
            \hline
        \emb_t&\emb_t&\dots&\emb_t&*&\dots&*&*&\dots&*\\
        \hline
\vc_t^1&\vc_t^2&\dots&\vc_t^{s}&*&\dots&*&*&\dots&*\\
        \hline
        \vZero&\vZero&\dots&\vZero&\emb_{s+1}&\dots&\emb_{m+s}&\emb_{m+s+1}&\dots&\emb_{n}
    \end{array}\right]
\end{align*}
The other rows will not be used or changed during this operation because we can simply set the corresponding rows of the $\key,\val, \query$ matrices to 0 for all heads and setting the feed forward layers to also pass the corresponding rows unchanged.

At the beginning of execution of each command, the Current Instruction row block would be empty, so the input would look like 
\begin{align*}
    \Input = \left[\begin{array}{cccc|ccc|ccc}
            *&*&\dots&*&*&*&\dots&\vc_{m+s+1}&\dots&\vc_{n}\\
            \hline
        \emb_t&\emb_t&\dots&\emb_t&*&\dots&*&*&\dots&*\\
        \hline
\vZero&\vZero&\dots&\vZero&\vZero&\dots&\vZero&\vZero&\dots&*\\
        \hline
        \vZero&\vZero&\dots&\vZero&\emb_{s+1}&\dots&\emb_{m+s}&\emb_{m+s+1}&\dots&\emb_{n}
    \end{array}\right]
\end{align*}

Then, consider an attention head with the following $\key, \query, \val$ matrices:
\begin{align*}
    \key = \begin{bmatrix}
        \zero&\zero&\zero&\id\\
    \end{bmatrix},
    \query = \begin{bmatrix}
        \zero&\id&\zero&\zero\\
    \end{bmatrix},
    \val = \begin{bmatrix}
         \zero&\zero&\zero&\zero\\
        \zero&\zero&\zero&\zero\\
        \id&\zero&\zero&\zero\\
        \zero&\zero&\zero&\zero
    \end{bmatrix}
\end{align*}
This will result in 
\begin{align*}
    \Input = \left[\begin{array}{cccc|ccc|ccc}
            *&*&\dots&*&*&*&\dots&\vc_{m+s+1}&\dots&\vc_{n}\\
            \hline
        \emb_t&\emb_t&\dots&\emb_t&*&\dots&*&*&\dots&*\\
        \hline
\vc_t&\vc_t&\dots&\vc_t&*&\dots&*&*&\dots&*\\
        \hline
        \vZero&\vZero&\dots&\vZero&\emb_{s+1}&\dots&\emb_{m+s}&\emb_{m+s+1}&\dots&\emb_{n}
    \end{array}\right].
\end{align*}

We apply \cref{app:increase} on the row blocks
\begin{align*}
    \left[\begin{array}{cccc|ccc|ccc}
\vc_t&\vc_t&\dots&\vc_t&*&\dots&*&*&\dots&*\\
        \emb_1&\emb_2&\dots&\emb_{s}&\vZero
        &\dots&\vZero&\vZero&\dots&\vZero
    \end{array}\right]
\end{align*}
to construct feedforward layers that convert $\vc_t$ to $\vc_t^i$, where 
\begin{align*}
    \vc_t^i=\begin{bmatrix}
        \emb_{a_t+i}\\
        \emb_{b_t+i-d}\\
        \emb_{c_t+i-2d}\\
        \emb_{\text{flag}_t}\\
        \emb_{p_t}\\
        \emb_{m_t}\\
        d_\text{h}\\
        d_\text{w}\\
         b_{\text{mask}}^{(1)}=1_{(i\leq  d_w)}\\
          b_{\text{mask}}^{(2)}=1_{(i>d)} + 1_{(i\leq d + d_w)} - 1\\
        b_{\text{mask}}^{(3)}=1_{(i>2d)} + 1_{(i\leq 2d + d_w)} - 1
    \end{bmatrix}.
\end{align*}
Note that the last three elements can be created using the following ReLU:
\begin{align*}
b_{\text{mask}}^{(1)}=&\relu(2d+d_w-i+1) - \relu(2d+d_w-i) \\
b_{\text{mask}}^{(2)}=&\relu(i-d) - \relu(i-d - 1) + \relu(d+d_\text{w}-i+1) - \relu(d+d_w-i) - 1\\
   b_{\text{mask}}^{(3)}= &\relu(i-2d) - \relu(i-2d - 1) + \relu(2d+d_w-i+1) - \relu(2d+d_\text{w}-i) - 1.
\end{align*}

At the end of this step, we get the following:
\begin{align*}
    \Input = \left[\begin{array}{cccc|ccc|ccc}
            \zero&\zero&\dots&\zero&*&*&\dots&\vc_{m+s+1}&\dots&\vc_{n}\\
            \hline
        \emb_t&\emb_t&\dots&\emb_t&*&\dots&*&*&\dots&*\\
        \hline
\vc_t^0&\vc_t^1&\dots&\vc_t^s&*&\dots&*&*&\dots&*\\
        \hline
        \vZero&\vZero&\dots&\vZero&\emb_{s+1}&\dots&\emb_{m+s}&\emb_{m+s+1}&\dots&\emb_{n}
    \end{array}\right],
\end{align*}

\subsection{Step 2}
Use three heads, one each for $\emb_{a},\emb_{b}$ and $\emb_c$. 

Using the vectors $\emb_{a_t+i},\emb_{b_t+i-d}$, and $\emb_{c_t+i-2d}$ we copy the data (using one head each and a similar technique as last step) to get the following in the Scratchpad Memory:
\begin{align*}
  \left[\begin{array}{ccccccccc|ccc|ccc} \mem_{a_t}&\dots&\mem_{a_t+d}&*&\dots&*&*&\dots&*&*&\dots&*&*&\dots&*\\
  *&\dots&*&\mem_{b_t}&\dots&\mem_{b_t+d}&*&\dots&*&*&\dots&*&*&\dots&*\\
  *&\dots&*&*&\dots&*&\mem_{c_t}&\dots&\mem_{c_t+s-2d}&*&\dots&*&*&\dots&*\\
    \end{array}\right]
\end{align*}

Using the mask bits at the end of $\vc_t^i$, we get
\renewcommand{\arraystretch}{1.5}
\renewcommand{\arraycolsep}{3pt}
\begin{align}
  \left[\begin{array}{cccccccccccc|cc|cc} \mem_{a_t}&\dots&\mem_{a_t+d_\text{w}-1}&\zero&\mem_{b_t}&\dots&\mem_{b_t+d_\text{w}-1}&\zero&\mem_{c_t}&\dots&\mem_{c_t+d_\text{w}-1}&\zero&\vZero&\dots&\vZero&\dots\\
  \zero&\dots&\zero&\zero&\zero&\dots&\zero&\zero&\zero&\dots&\zero&\zero&\vZero&\dots&\vZero&\dots\\
  \zero&\dots&\zero&\zero&\zero&\dots&\zero&\zero&\zero&\dots&\zero&\zero&\vZero&\dots&\vZero&\dots\\
    \end{array}\right]\label{eq:out_result} 
\end{align}

\begin{align*}
    \mem_i[1:d] &= \relu(\mem_i[1:d] - C(1- b_\text{mask}^{(1)})\vOne) - \relu(-\mem_i[1:d] - C(1- b_\text{mask}^{(1)})\vOne)\\
    &\quad + \relu(\mem_i[d+1:2d] - C(1- b_\text{mask}^{(2)})\vOne) - \relu(-\mem_i[d+1:2d] - C(1- b_\text{mask}^{(1)})\vOne)\\
    &\quad + \relu(\mem_i[2d+1:3d] - C(1- b_\text{mask}^{(1)})\vOne) - \relu(-\mem_i[2d+1:3d]- C(1- b_\text{mask}^{(1)})\vOne),\\
    \mem_i[d+1:3d]&=\vZero,
\end{align*}
where $C$ is a large positive constant.

Using the same mask bits, we also mask the row containing the output data pointers for $c$:

\begin{align}
  \left[\begin{array}{cccccccccccc|ccc|ccc}   \zero&\dots&\zero&\zero&\dots&\zero&\emb_{c_t}&\dots&\emb_{c_t+d_w-1}&\vZero&\dots&\vZero&\zero&\dots&\zero&\zero&\dots&\zero\\
    \end{array}\right] \label{eq:write_pointers}
\end{align}

\subsection{Step 3}

The following feedforward ReLU layer can move the data to the correct function blocks:
\begin{align*}
    \text{f}_k\texttt{mem}[1:d_h] &= (\relu(\mem[1:d_h] - C((1-b_{\text{mask}}^{(1)}-b_{\text{mask}}^{(2)})\vOne + \log M - \emb_k^\top \emb_m))\\&\qquad - \relu(-\mem[1:d_h] - C((1-b_{\text{mask}}^{(1)}-b_{\text{mask}}^{(2)})\vOne + \log M - \emb_k^\top \emb_m))),
\end{align*}
where $C$ is a large positive constant.

\subsection{Step 4}
Each of the $M$ functions have their own attention heads, which are constructed to be copies of their transformer based function blocks.
The results after the attention are written back into their respective row blocks. 
Since the row blocks are separate, the feedforward layers of each of the transformer based function blocks also work in parallel to store the final results in the respective row blocks.

\subsection{Step 5}
Similar to Step 3 we use the following feedforward ReLU layer to move the data from the function block back into the scratchpad memory
\begin{align*}
    \mem[1:d_h] &= \mem[1:d_h] + \sum_{k=1}^M\left(\relu((\text{f}_k\texttt{mem}[1:d_h] - \mem[1:d_h]) - C((1-b_{\text{mask}}^{(3)})\vOne + \log M - \emb_k^\top \emb_m )) \right.\\
    &\left.\qquad- \relu(-(\text{f}_k\texttt{mem}[1:d_h]-\mem[1:d_h]) - C((1-b_{\text{mask}}^{(3)})\vOne+ \log M - \emb_k^\top \emb_m))\right),
\end{align*}
where $C$ is a large positive constant.

\subsection{Step 6}
For this step we focus on the encoding row block, memory storage row block and the following rows in the input (see \eqref{eq:write_pointers}, \eqref{eq:out_result}):
\begin{align*}
  \left[\begin{array}{cccccccccccc|ccc|ccc} 
  \zero&\dots&\zero&\zero&\dots&\zero&\zero&\dots&\zero&\zero&\dots&\zero&\mem_{s+1}&\dots&\mem_{m+s}&\begin{bmatrix}
           \vc_{m+s+1}\\
           \zero
       \end{bmatrix}&\dots&\begin{bmatrix}
           \vc_{n}\\
           \zero
       \end{bmatrix}\\
  \hline
  \vZero&\dots&\vZero&\vZero&\dots&\vZero&\mem_{c_t}^\text{new}&\dots&\mem_{c_t+d_w}^\text{new}&\vZero&\dots&\vZero&\vZero&\dots&\vZero&\vZero&\dots&\vZero\\
 \hline \vZero&\dots&\vZero&\vZero&\dots&\vZero&\emb_{c_t}&\dots&\emb_{c_t+d_w}&\vZero&\dots&\vZero&\vZero&\dots&\vZero&\vZero&\dots&\vZero\\
 \hline
 \vZero&\dots&\vZero&\vZero&\dots&\vZero&\zero&\dots&\zero&\vZero&\dots&\vZero&\emb_s&\dots&\emb_{m-1}&\emb_m&\dots&\emb_{n-1}\\
    \end{array}\right]
\end{align*}

We set the Key and Query weight matrices as follows:
\begin{align*}
    \key =\query = \begin{bmatrix}
        \zero\\
        \zero\\
        \id\\
        \id
    \end{bmatrix}.
\end{align*}

\begin{align*}
    \val = \begin{bmatrix}
        \zero& \zero& \zero& \zero\\
                \id& \id& \zero& \zero\\
                        \zero& \zero& \zero& \zero\\
                                \zero& \zero& \zero& \zero
    \end{bmatrix}
\end{align*}
\renewcommand{\arraystretch}{2}
\renewcommand{\arraycolsep}{2.5pt}
\begin{align*}
&\val\Input\softmax((\key\Input)^\top \query\Input)\\
  &= \left[\begin{array}{ccccc|cccccccc|c}
  \dots&\zero&\dots&\zero&\dots&\zero&\dots&\zero&\zero&\dots&\zero&\zero&\dots&\dots\\
  \hline
 \dots&\frac{\vd_{c_t}^\text{new}+\vd_{c_t}}{2}&\dots&\frac{\vd_{c_t+d_w}^\text{new}+\vd_{c_t+d_w}}{2}&\dots&\vd_{0}&\dots&\vd_{c_t-1}&\frac{\vd_{c_t}^\text{new}+\vd_{c_t}}{2}&\dots&\frac{\vd_{c_t+d_w}^\text{new}+\vd_{c_t+d_w}}{2}&\vd_{c_t+d_w+1}&\dots&\dots\\
 \hline \dots&\zero&\dots&\zero&\dots&\vZero&\dots&\vZero&\vZero&\dots&\vZero&\zero&\dots&\dots\\
 \hline
 \dots&\zero&\dots&\zero&\dots&\zero&\dots&\zero&\zero&\dots&\zero&\zero&\dots&\dots
    \end{array}\right]
\end{align*}
Finally, we use the feedforward layers similar to the proof of \cref{lem:write} to write back $[\vd_{c_t}^\text{new}\;\dots\;\vd_{c_t+d_w}^\text{new}]$ to the correct rows.

\subsection{Step 7}
This step is identical to \cref{ss:goto}.

\section{Calculator}
\begin{lemma}
Given two constants $\epsilon,\delta \in [0,1]$, there exists a 1 hidden layer neural network $f$ with threshold activation and $d$ activations in the hidden layer, such that 
\begin{align*}
 \forall x\in\left[-C,-\delta\right]\cup\left[\delta, C\right],   \left|f(x) - \frac{1}{x}\right|\leq \epsilon,
\end{align*}
as long as $d=\Omega(\frac{\log(1/(\epsilon\delta))}{\epsilon\delta}+\log C)$.
\end{lemma}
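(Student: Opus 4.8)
The plan is to realize $f$ as an explicit step function. A one–hidden–layer network with threshold activations, $f(x)=c_0+\sum_i c_i\,\mathbf{1}\{w_ix+b_i>0\}$, is precisely a piecewise–constant function whose number of jump points is at most the number of hidden units: unit $i$ contributes a single jump at $x=-b_i/w_i$, upward if $w_i>0$ and downward if $w_i<0$. So it suffices to partition $[-C,-\delta]\cup[\delta,C]$ into $N=O\!\big(\tfrac{\log(1/(\epsilon\delta))}{\epsilon\delta}+\log C\big)$ subintervals on each of which $1/x$ varies by at most $\epsilon$, approximate $1/x$ on each subinterval by its value at (say) the right endpoint, and express the resulting step function with one threshold unit per jump point, assigning zero output weight to any surplus units when $d>N$. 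By the oddness $1/(-x)=-1/x$ it is enough to build the grid on $[\delta,C]$; the grid and step values on $[-C,-\delta]$ are the mirror images, which at worst doubles $N$ and uses units with $w_i<0$ on that branch.

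On $[\delta,C]$ I would use two regimes with crossover at $1/\epsilon$. \textbf{Fine regime} on $[\delta,\min\{C,1/\epsilon\}]$: take a geometric progression $t_0=\delta$, $t_{k+1}=(1+\epsilon\delta)t_k$. On $[t_k,t_{k+1}]$ the variation of $1/x$ is $\tfrac1{t_k}-\tfrac1{t_{k+1}}=\tfrac{\epsilon\delta}{t_{k+1}}\le\tfrac{\epsilon\delta}{\delta}=\epsilon$, so the right–endpoint value is within $\epsilon$; and since $\log(1+u)\ge u\log 2$ for $u\in(0,1]$, reaching $1/\epsilon$ from $\delta$ takes $K=\big\lceil\log(1/(\epsilon\delta))/\log(1+\epsilon\delta)\big\rceil=O\!\big(\tfrac{\log(1/(\epsilon\delta))}{\epsilon\delta}\big)$ steps. \textbf{Coarse regime}, used only when $C>1/\epsilon$: continue from $1/\epsilon$ by doubling, $s_{l+1}=2s_l$, up to $C$; since every $s_l\ge 1/\epsilon$, the variation of $1/x$ on $[s_l,s_{l+1}]$ is $\tfrac1{2s_l}\le\tfrac\epsilon2<\epsilon$, and only $L=\lceil\log_2(C\epsilon)\rceil=O(\log C)$ intervals are needed. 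Combining the two regimes (and, when $C<1/\epsilon$, using $\log(C/\delta)\le 2\log(1/(\epsilon\delta))$ since $\epsilon,\delta\le 1$), the total is $N=O\!\big(\tfrac{\log(1/(\epsilon\delta))}{\epsilon\delta}+\log C\big)$, which the hypothesis $d=\Omega(\cdots)$ supplies.

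The assembly is then routine: a step function with jump points $\delta<t_1<\dots<t_m\le C$ and values $v_0,\dots,v_m$ is $v_0+\sum_{j=1}^m(v_j-v_{j-1})\,\mathbf{1}\{x-t_j>0\}$, i.e.\ $m$ threshold units; to sidestep the strict inequality at the jump points I would shift each $t_j$ by an infinitesimal amount, which does not change the supremum error because $1/x$ is continuous on the closed domain. Doing this for the positive branch and its mirror on the negative branch, and zeroing out surplus units, gives the claimed $f$. I expect the only point needing genuine care is the grid estimate — placing the crossover at $1/\epsilon$ (so the fine-regime cost is $\tfrac{\log(1/(\epsilon\delta))}{\epsilon\delta}$ rather than $\tfrac{\log(C/\delta)}{\epsilon\delta}$) and checking the per-interval variation bound separately in each regime. (Using the sharper grid with jump points $t_k=1/(k\epsilon)$ in the fine regime would even drop that cost to $O(1/(\epsilon\delta))$, still within the stated bound; I would keep the geometric grid only because it makes the error estimate most transparent.)
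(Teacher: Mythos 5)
Your proposal is correct and follows essentially the same route as the paper: the paper's adaptive partition $[a_i, a_i(1+\epsilon a_i))$ is counted by comparing it to exactly your two grids (geometric with ratio $1+\epsilon\delta$ below the crossover at $1/\epsilon$, doubling above it), and the per-interval variation bound and the final representation by threshold units are the same standard step-function construction. The only cosmetic differences are that you prescribe the two-regime grid explicitly rather than defining it adaptively, and you write the step function as a telescoping sum of $m$ units rather than the paper's two indicator activations per interval; neither affects correctness or the bound.
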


\begin{proof}
    We partition $[\delta, C]$ into the following intervals $$[\delta, \delta (1+ \epsilon \delta)),[\delta (1+ \epsilon \delta), \delta (1+ \epsilon \delta)(1+\epsilon\delta (1+ \epsilon \delta))) \dots, [a_i, a_i(1+\epsilon a_i)),\dots,$$
    that is, if an interval begins at $a$, then it ends at $a(1+\epsilon a)$. Note that for any point $x\in [a_i, a_i(1+\epsilon a_i))$ 
    \begin{align*}
        \left|\frac{1}{x}-\frac{1}{a_i}\right| &= \frac{1}{a_i} - \frac{1}{x}\\
        &< \frac{1}{a_i} - \frac{1}{a_i(1+\epsilon a_i)}\\
         &=  \frac{\epsilon }{1+\epsilon a_i} < \epsilon.
    \end{align*}
    Hence two output activations of the form $\frac{1}{a_i}1_{x\geq a_i} - \frac{1}{a_i}1_{x< a_i(1+\epsilon a_i)}$ can be used to approximate $\frac{1}{x}$ in $[a_i, a_i(1+\epsilon a_i))$.

    Thus, all that remains is to compute the number of such intervals, and using that we get the number of output activations in the hidden layer.
    Towards that end, if the $i$-th interval begins at $a_i$,
    \begin{align*}
        a_i = a_{i-1}(1+\epsilon a_{i-1})  \geq a_{i-1}(1+\epsilon \delta) = \delta(1+\epsilon \delta)^{i-2}.
    \end{align*}
    Hence, 
    \begin{equation}
        \forall i\geq 2+ \frac{\log 1/(\epsilon\delta)}{\log (1+\epsilon\delta)}, a_i\geq \frac{1}{\epsilon}. \nonumber
    \end{equation}
    Noting that $\log (1+\epsilon\delta)>\frac{\epsilon \delta}{2}$ for $\epsilon,\delta \in [0,1]$, we get that 
    \begin{equation}
        \forall i\geq 2 + \frac{2\log 1/(\epsilon\delta)}{\epsilon\delta}, a_i\geq 1. \nonumber
    \end{equation}
    Once we have that $a_i\geq \frac{1}{\epsilon}$, the number of further partitions needed to reach $C$ would be $O(\log C)$ as shown below:
    \begin{align*}
        a_j = a_{j-1}(1+\epsilon a_{j-1}) \geq a_{j-1}\left(1+\epsilon\frac{1}{\epsilon}\right) = 2a_{j-1}.
    \end{align*}
    Hence, the total number of partitions needed is $O(\frac{\log(1/(\epsilon\delta))}{\epsilon\delta}+\log C)$.

    We can similarly approximate $1/x$ on $[-C, -\delta]$ with the same number of output activations.    
\end{proof}

\begin{lemma}
Given $\epsilon \in [0,1]$, there exists a 1 hidden layer neural network $f$ with threshold activation and $d$ activations in the hidden layer, such that 
\begin{align*}
 \forall x\in\left[0, C\right],   \left|f(x) - \sqrt{x}\right|\leq \epsilon,
\end{align*}
as long as $d=\Omega(\frac{\sqrt{C}}{\epsilon})$.
\end{lemma}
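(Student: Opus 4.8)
The plan is to mimic the staircase construction used in the preceding lemma for $1/x$, but exploiting monotonicity of $\sqrt{\cdot}$ to use only a single threshold unit per breakpoint. First I would partition the \emph{output} range $[0,\sqrt{C}]$ into consecutive segments of length $\epsilon$, with breakpoints at $\epsilon, 2\epsilon, 3\epsilon, \dots$, and then pull these breakpoints back through the square function to obtain domain breakpoints $a_k := (\epsilon k)^2 = \epsilon^2 k^2$ for $k = 0,1,2,\dots$. On the interval $[a_k, a_{k+1})$ we have $\epsilon k \le \sqrt{x} < \epsilon(k+1)$, so $\sqrt{x}$ is within $\epsilon$ of the left-endpoint value $\epsilon k$.

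Next I would define the approximant as the monotone staircase
\begin{equation*}
    f(x) = \epsilon\sum_{k=1}^{N} \mathbf{1}_{x \ge a_k}, \qquad N := \left\lceil \frac{\sqrt{C}}{\epsilon}\right\rceil,
\end{equation*}
which is realizable by a $1$-hidden-layer network with threshold activations: each term $\mathbf{1}_{x\ge a_k}$ is one hidden unit with threshold $a_k$, and the output layer sums them with weight $\epsilon$. For $x \in [a_k, a_{k+1})$ with $k \le N$, exactly the thresholds $a_1,\dots,a_k$ fire, so $f(x) = \epsilon k$, and therefore $|f(x) - \sqrt{x}| = |\epsilon k - \sqrt{x}| \le \epsilon$ by the bound from the previous paragraph. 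The only points to check carefully are the two ends: at $x=0$ we get $f(0)=0=\sqrt{0}$, and for $x$ near $C$ we need $a_N \ge C$ (equivalently $\epsilon N \ge \sqrt C$), which holds by the choice $N = \lceil \sqrt C/\epsilon\rceil$, so the staircase covers all of $[0,C]$.

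Finally I would count activations: the construction uses $N = O(\sqrt{C}/\epsilon)$ hidden units, so $d = \Omega(\sqrt{C}/\epsilon)$ activations suffice, matching the claim. I do not expect a genuine obstacle here — the argument is essentially the $1/x$ staircase with the simplification that monotonicity lets each interval be handled by a single rising step rather than a pair of steps forming a bump. The one place to be slightly careful is verifying the uniform $\epsilon$ bound uses the left endpoint (not the midpoint), and that the final interval's right end is pushed past $C$; both are handled by the choices above.
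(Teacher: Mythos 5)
Your proposal is correct and follows essentially the same route as the paper's proof: the same domain breakpoints $a_k=\epsilon^2k^2$ obtained by pulling back an $\epsilon$-grid on the output range, and the same per-interval bound $|\sqrt{x}-\epsilon k|<\epsilon$ on $[a_k,a_{k+1})$. The only difference is cosmetic: you realize the piecewise-constant approximant as a monotone staircase with one threshold unit per breakpoint, whereas the paper uses a pair of indicator activations per interval, so you save a factor of two in hidden-layer width while both yield $d=O(\sqrt{C}/\epsilon)$.
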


\begin{proof}
    We partition $[0, C]$ into the following intervals $$[0, \epsilon^2)),[\epsilon^2, 4\epsilon^2) \dots, [i^2\epsilon^2, (i+1)^2\epsilon^2),\dots.$$
    Note that for any point $x\in [i^2\epsilon^2, (i+1)^2\epsilon^2)$ 
    \begin{align*}
        |\sqrt{x}-\sqrt{i^2\epsilon^2}| < \sqrt{(i+1)^2\epsilon^2} -\sqrt{i^2\epsilon^2} = \epsilon.
    \end{align*}
    Hence two output activations of the form $i\epsilon1_{x\geq i^2\epsilon^2} - i\epsilon1_{x < (i+1)^2\epsilon^2}$ can be used to approximate $\sqrt{x}$ in $[i^2\epsilon^2, (i+1)^2\epsilon^2)$.

    Thus, all that remains is to compute the number of such intervals, and using that we get the number of output activations in the hidden layer. It is easy to see that the total number of intervals needed would be $\frac{\sqrt{C}}{\epsilon}$.
\end{proof}

\end{document}